\def\draft{0}
\let\mathbb\varmathbb
\crefname{lemma}{Lemma}{Lemmas}
\crefname{fact}{Fact}{Facts}
\crefname{theorem}{Theorem}{Theorems}
\crefname{corollary}{Corollary}{Corollaries}
\crefname{claim}{Claim}{Claims}
\crefname{example}{Example}{Examples}
\crefname{algorithm}{Algorithm}{Algorithms}
\crefname{problem}{Problem}{Problems}
\crefname{definition}{Definition}{Definitions}
\crefname{exercise}{Exercise}{Exercises}
\newtheorem{theorem}{Theorem}[section]
\newtheorem*{theorem*}{Theorem}
\newtheorem{lemma}[theorem]{Lemma}
\newtheorem*{lemma*}{Lemma}
\newtheorem{fact}[theorem]{Fact}
\newtheorem*{fact*}{Fact}
\newtheorem*{proposition*}{Proposition}
\newtheorem{corollary}[theorem]{Corollary}
\newtheorem*{corollary*}{Corollary}
\newtheorem*{hypothesis*}{Hypothesis}
\newtheorem{conjecture}[theorem]{Conjecture}
\newtheorem*{conjecture*}{Conjecture}
\theoremstyle{definition}
\newtheorem{definition}[theorem]{Definition}
\newtheorem*{definition*}{Definition}
\newtheorem*{construction*}{Construction}
\newtheorem*{example*}{Example}
\newtheorem*{question*}{Question}
\newtheorem{algorithm}[theorem]{Algorithm}
\newtheorem*{algorithm*}{Algorithm}
\newtheorem*{assumption*}{Assumption}
\newtheorem*{problem*}{Problem}
\newtheorem*{openquestion*}{Open Question}
\theoremstyle{remark}
\newtheorem*{claim*}{Claim}
\newtheorem*{remark*}{Remark}
\newtheorem*{observation*}{Observation}
\let\originalleft\left
\let\originalright\right
\renewcommand{\left}{\mathopen{}\mathclose\bgroup\originalleft}
\renewcommand{\right}{\aftergroup\egroup\originalright}
\let\latexparagraph\paragraph
\RenewDocumentCommand{\paragraph}{som}{%
  \IfBooleanTF{#1}
    {\latexparagraph*{#3}}
    {\IfNoValueTF{#2}
       {\latexparagraph{\maybe@addperiod{#3}}}
       {\latexparagraph[#2]{\maybe@addperiod{#3}}}%
  }%
}
\newcommand{\maybe@addperiod}[1]{%
  #1\@addpunct{.}%
}
\newcommand{\paren}[1]{(#1)}
\newcommand{\Paren}[1]{\left(#1\right)}
\newcommand{\Brac}[1]{\left[#1\right]}
\newcommand{\iverson}[1]{\llbracket#1\rrbracket}
\newcommand{\abs}[1]{\lvert#1\rvert}
\newcommand{\Abs}[1]{\left\lvert#1\right\rvert}
\newcommand{\card}[1]{\lvert#1\rvert}
\newcommand{\set}[1]{\{#1\}}
\newcommand{\Set}[1]{\left\{#1\right\}}
\newcommand{\norm}[1]{\lVert#1\rVert}
\newcommand{\Norm}[1]{\left\lVert#1\right\rVert}
\newcommand{\Normt}[1]{\Norm{#1}_2}
\newcommand{\Snormt}[1]{\Norm{#1}^2_2}
\newcommand{\Snorm}[1]{\Norm{#1}^2}
\newcommand{\normo}[1]{\norm{#1}_1}
\newcommand{\Normo}[1]{\Norm{#1}_1}
\newcommand{\normi}[1]{\norm{#1}_\infty}
\newcommand{\Normi}[1]{\Norm{#1}_\infty}
\newcommand{\normcol}[1]{\lVert#1\rVert_{2\to\infty}}
\newcommand{\iprod}[1]{\langle#1\rangle}
\newcommand{\Iprod}[1]{\left\langle#1\right\rangle}
\newcommand{\Esymb}{\mathbb{E}}
\newcommand{\Psymb}{\mathbb{P}}
\DeclareMathOperator*{\E}{\Esymb}
\DeclareMathOperator*{\ProbOp}{\Psymb}
\renewcommand{\Pr}{\ProbOp}
\newcommand{\tensor}{\otimes}
\newcommand{\sge}{\succeq}
\newcommand{\sle}{\preceq}
\renewcommand{\ij}{{ij}}
\newcommand{\defeq}{\stackrel{\mathrm{def}}=}
\newcommand{\from}{\colon}
\newcommand{\mper}{\,.}
\newcommand\bdot\bullet
\DeclareMathOperator{\Tr}{Tr}
\DeclareMathOperator{\OPT}{OPT}
\DeclareMathOperator{\poly}{poly}
\DeclareMathOperator{\polylog}{polylog}
\DeclareMathOperator{\supp}{supp}
\DeclareMathOperator{\rank}{rank}
\newcommand{\Erdos}{Erd\H{o}s\xspace}
\newcommand{\Renyi}{R\'enyi\xspace}
\newcommand{\N}{\mathbb N}
\newcommand{\R}{\mathbb R}
\newcommand{\problemmacro}[1]{\texorpdfstring{\textup{\textsc{#1}}}{#1}\xspace}
\newcommand{\maxcut}{\problemmacro{max cut}}
\newcommand{\maxbisection}{\problemmacro{max bisection}}
\newcommand{\twocsp}{\problemmacro{2-csp}}
\newcommand{\cA}{\mathcal A}
\newcommand{\cB}{\mathcal B}
\newcommand{\cD}{\mathcal D}
\newcommand{\cE}{\mathcal E}
\newcommand{\cI}{\mathcal I}
\newcommand{\cM}{\mathcal M}
\newcommand{\cO}{\mathcal O}
\newcommand{\cS}{\mathcal S}
\newcommand{\cY}{\mathcal Y}
\newcommand{\cZ}{\mathcal Z}
\newcommand{\bbP}{\mathbb P}
\renewcommand{\leq}{\leqslant}
\renewcommand{\le}{\leqslant}
\renewcommand{\geq}{\geqslant}
\renewcommand{\ge}{\geqslant}
\let\epsilon=\varepsilon
\numberwithin{equation}{section}
\newcommand\MYcurrentlabel{xxx}
\newcommand{\MYstore}[2]{%
  \global\expandafter \def \csname MYMEMORY #1 \endcsname{#2}%
}
\newcommand{\MYload}[1]{%
  \csname MYMEMORY #1 \endcsname%
}
\newcommand{\MYnewlabel}[1]{%
  \renewcommand\MYcurrentlabel{#1}%
  \MYoldlabel{#1}%
}
\newcommand{\MYdummylabel}[1]{}
\newcommand{\torestate}[1]{%
  \let\MYoldlabel\label%
  \let\label\MYnewlabel%
  #1%
  \MYstore{\MYcurrentlabel}{#1}%
  \let\label\MYoldlabel%
}
\newcommand{\restatedef}[1]{%
  \let\MYoldlabel\label
  \let\label\MYdummylabel
  \begin{definition*}[Restatement of \cref{#1}]
    \MYload{#1}
  \end{definition*}
  \let\label\MYoldlabel
}
\newcommand{\restatetheorem}[1]{%
  \let\MYoldlabel\label
  \let\label\MYdummylabel
  \begin{theorem*}[Restatement of \cref{#1}]
    \MYload{#1}
  \end{theorem*}
  \let\label\MYoldlabel
}
\newcommand{\restatelemma}[1]{%
  \let\MYoldlabel\label
  \let\label\MYdummylabel
  \begin{lemma*}[Restatement of \cref{#1}]
    \MYload{#1}
  \end{lemma*}
  \let\label\MYoldlabel
}
\newcommand{\restateprop}[1]{%
  \let\MYoldlabel\label
  \let\label\MYdummylabel
  \begin{proposition*}[Restatement of \cref{#1}]
    \MYload{#1}
  \end{proposition*}
  \let\label\MYoldlabel
}
\newcommand{\restatefact}[1]{%
  \let\MYoldlabel\label
  \let\label\MYdummylabel
  \begin{fact*}[Restatement of \cref{#1}]
    \MYload{#1}
  \end{fact*}
  \let\label\MYoldlabel
}
\newcommand{\restate}[1]{%
  \let\MYoldlabel\label
  \let\label\MYdummylabel
  \MYload{#1}
  \let\label\MYoldlabel
}
\newcommand{\e}{\epsilon}
\newcommand{\eps}{\epsilon}
\newcommand*{\Id}{\mathrm{I}}
\newcommand*{\normf}[1]{\norm{#1}_{\mathrm{F}}}
\newcommand*{\Normf}[1]{\Norm{#1}_{\mathrm{F}}}
\newenvironment{algorithmbox}{\begin{mdframed}[nobreak=true]\begin{algorithm}}{\end{algorithm}\end{mdframed}}
\newcommand{\normn}[1]{\norm{#1}_\textnormal{nuc}}
\renewcommand{\normo}[1]{\norm{#1}_{\textnormal{1}}}
\renewcommand{\Normo}[1]{\Norm{#1}_{\textnormal{1}}}
\renewcommand{\normi}[1]{\norm{#1}_{\max}}
\renewcommand{\Normi}[1]{\Norm{#1}_{\max}}
\providecommand{\Lap}{\text{Lap}}
\newcommand*{\transpose}[1]{{#1}{}^{\mkern-1.5mu\mathsf{T}}}
\newcommand*{\dyad}[1]{#1#1{}^{\mkern-1.5mu\mathsf{T}}}
\newcommand{\1}{{\mathbb{1}}}
\definecolor{niceish}{HTML}{74b807} 
\newcommand{\siuon}[1]{\textcolor{teal}{[Siu On: #1]}}
\newcommand{\lucas}[1]{\textcolor{violet}{[Lucas: #1]}}
\newcommand{\tom}[1]{\textcolor{WildStrawberry}{[Tommaso: #1]}}
\newcommand{\gleb}[1]{\textcolor{WildStrawberry}{[Gleb: #1]}}
\newcommand{\siuon}[1]{}
\newcommand{\lucas}[1]{}
\newcommand{\gleb}[1]{}
\newcommand{\tom}[1]{}
\newcommand{\om}{\om}
\DeclareMathOperator*{\Cov}{Cov}
\newcommand{\mul}{\textnormal{\textsc{mul}}\xspace}
\newcommand{\simul}{\textnormal{\textsc{mul}}\xspace}
\newcommand{\valI}[2]{\textnormal{Val}_{#1}\Paren{#2}}
\newcommand{\coh}[2]{\mu_{#1}\paren{#2}}
\newcommand{\GC}{\textnormal{GC}}
\newcommand{\LC}{\textnormal{LC}}
\newcommand{\simiid}{\stackrel{\text{iid}}\sim}
\begin{document}

\title{Tight Differentially Private PCA via Matrix Coherence \gleb{IF YOU SEE IT, IT IS A DRAFT VERSION.}
}

\author{
  Tommaso d'Orsi%
  \thanks{Bocconi University, Italy.}
  \and
  Gleb Novikov%
  \thanks{Lucerne School of Computer Science and Information Technology, Switzerland.}
}

\date{}

\pagestyle{empty}
\maketitle
\thispagestyle{empty} 
\begin{abstract}
We revisit the task of computing the span of the top \( r \) singular vectors \( u_1, \ldots, u_r \) of a matrix under differential privacy.  
We show that a simple and efficient algorithm—based on singular value decomposition and standard perturbation mechanisms—returns a private rank-\( r \) approximation whose error depends only on the \emph{rank-\( r \) coherence} of \( u_1, \ldots, u_r \) and the spectral gap \( \sigma_r - \sigma_{r+1} \). This resolves a question posed by Hardt and Roth~\cite{hardt2013beyond}.  
Our estimator outperforms the state of the art—significantly so in some regimes. In particular, we show that in the dense setting, it achieves the same guarantees for single-spike PCA in the Wishart model as those attained by {optimal non-private algorithms}, whereas prior private algorithms failed to do so.

In addition, we prove that (rank-\( r \)) coherence does not increase under Gaussian perturbations. This implies that any estimator based on the Gaussian mechanism—including ours—preserves the coherence of the input. We conjecture that similar behavior holds for other structured models, including planted problems in graphs. 

We also explore applications of coherence to graph problems. In particular, we present a differentially private algorithm for Max-Cut and other constraint satisfaction problems under low coherence assumptions.

\end{abstract}

\clearpage
\microtypesetup{protrusion=false}
\tableofcontents{}
\microtypesetup{protrusion=true}
\clearpage

\pagestyle{plain}
\setcounter{page}{1}

\section{Introduction}\label{sec:introduction}
For a matrix $M\in\R^{n\times  m},$ consider the basic task of finding its $r$ left (or right) leading singular vectors $U_{(r)} \in \R^{n\times r}$.
Because of its ubiquity in data mining applications, there has been ongoing effort to efficiently construct accurate yet sanitized versions of $\hat{U}_{(r)}$ that do not reveal sensitive information about $M$. The privacy notion considered is typically that of $(\eps,\delta)$-differential privacy \cite{dwork2006calibrating}, which we also adopt here with respect to matrices that differ by at most $1$ in a single entry.\footnote{In fact, we use a more general notion of adjacency, 
\cref{def:matrix-adjacency}
though this distinction is inconsequential for the scope of this discussion.}.

A large body of work has focused on the problem of computing a private low-rank approximation of a matrix~\cite{blum2005practical, blocki2012johnson, dwork2006calibrating, dwork2014analyze, hardt2012beating, upadhyay2018price, mangoubi2022re, mangoubi2023private, mangoubi2025private, he2025differentially}, largely motivated by the fact that in many applications, the top singular vectors are significantly more important than the rest.
A common example arises in graph partitioning, where the goal is to privately compute a cut; since cuts can be expressed as quadratic forms over the adjacency matrix, this naturally motivates the need to estimate the top singular vectors~\cite{blocki2012johnson, gupta2012iterative, borgs2015private, borgs2018revealing, arora2019differentially, mohamed2022differentially, dalirrooyfard2023nearly, chandra2024differentially, chen2023private, chen2024private}.
A fruitful line of work has pursued this direction~\cite{chaudhuri2012near, dwork2014analyze, hardt2013beyond, kapralov2013differentially, hardt2014noisy, gonem2018smooth, singhal2021privately, liu2022dp, nicolas2024differentially}, often arriving at a seemingly discouraging conclusion: in the worst case, the utility error must inherently depend on the ambient dimension.

 Notably, a sequence of works \cite{hardt2013beyond, hardt2014noisy, balcan2016improved, nicolas2024differentially} showed that utility guarantees need not degrade with the ambient dimension of the data, but instead depend only on the \emph{coherence} of the input matrix $M$. The coherence  $\mu(M)$ of a matrix $M$ takes values between $1$ and $\max\set{n,m}$ and, roughly speaking, measures the sparsity of its singular vectors. Since real-world matrices typically exhibit low coherence, the concerted message of these results is that the worst-case scenario is rare --arising only from peculiar matrices-- and that one can typically expect a \emph{dimension-free} accuracy bound.

To distinguish this notion of coherence from other definitions, we henceforth refer to it as \emph{basic coherence}. 

\begin{definition}[Basic coherence]\label{def:basic-coherence}
    The basic coherence of a matrix $M\in \R^{n\times m}$ with singular value decomposition $\sum_{i=1}^{\rank(M)} \sigma_i u_i \transpose{v_i}$ is 
    \begin{align*}
        \bar{\mu}(M):=\max\Set{n\cdot \max_{i\in [n]} \Normi{u_i}^2, \; m\cdot \max_{i\in [m]} \Normi{v_i}^2}.
    \end{align*}
    where $\Normi{\cdot}$ denotes the largest entry in absolute value.
\end{definition}
\noindent The differentially private algorithm with the best known utility guarantees for low-coherence matrices is from \cite{hardt2014noisy}. To state their result --and to discuss utility more generally-- we introduce a standard notion of closeness of subspaces of (possibly) different dimensions:
\begin{definition}[Closeness of subspaces]
Let $r, r'\in [n]$ such that $r' \ge r$.
Let $S_1, S_2 \subset \R^n$ be vector subspaces of $\R^n$ of dimensions $r$ and $r'$ respectively. Let $U_1 \in \R^{n\times r}$ be a matrix whose columns form an orthonormal basis in $S_1$, and let $P_2$ be an orthogonal projector onto $S_2$. The closeness of $S_2$ to $S_1$ is
\[
\Norm{\Paren{\Id_n - P_2}U_1}\,.
\]
\end{definition}

An equivalent geometric definition of the closeness is as follows: The maximum distance from unit vectors in $S_1$ to $S_2$.  This equivalent formulation shows that the definition does not depend on the choice of basis in $S_1$.
When $\dim(S_1)=\dim(S_2)$, the closeness is equal to the sine of the angle between the subspaces.
Now we are ready to state the main result of \cite{hardt2014noisy}.

\begin{theorem}[\cite{hardt2014noisy}\footnote{\cite{balcan2016improved} showed that the algorithm \cite{hardt2014noisy} achieves slightly better guarantees for the restriced case of positive semidefinite matrices.}]\label{thm:hardt-price}
Let $M\in\R^{n\times m}$, and let $r, r'\in [\rank(M)]$ such that $r'\ge r$. Let $U_{(r)} \in \R^{n\times r}$ be the matrix whose columns are the top $r$ left singular vectors of $M$.
    There exists an efficient, $(\eps,\delta)$-differentially private algorithm that, given $M, r'$  returns a projector $\hat{\mathbf P}_{(r')}\in \R^{n\times n}$ onto an $r'$-dimensional space such that, with probability $0.99$,
    \begin{align*}
       \Norm{\Paren{\Id_n - \hat{\mathbf P}_{(r')}} U_{(r)}}
       \leq O\Paren{\frac{\sqrt{r' \cdot \bar{\mu}(M) \cdot \log \paren{n+m}}}{\sigma_r - \sigma_{r+1}} 
       \cdot \sqrt{L\log L}
       \cdot \frac{\sqrt{r'}}{\sqrt{r'}-\sqrt{r-1}}\cdot\frac{\sqrt{\log(1/\delta)}}{\eps}
       }\,,
   \end{align*}
    where $L = \frac{\sigma_r \log \paren{n+m} }{\sigma_r - \sigma_{r+1}}$\,.
\end{theorem}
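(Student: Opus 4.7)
The plan is to invoke the \emph{noisy (subspace) power method} of Hardt-Roth and Hardt-Price. Initialize $Z_0 \in \R^{m\times r'}$ by Gram-Schmidt orthonormalization of a random $m\times r'$ Gaussian matrix, and for $t = 1,\dots, L = \Theta\Paren{\tfrac{\sigma_r \log(n+m)}{\sigma_r - \sigma_{r+1}}}$ perform the alternating updates
\begin{align*}
Y_t \;\gets\; \mathrm{QR}\Paren{M Z_{t-1} + G_t^{(Y)}}\,,\qquad Z_t \;\gets\; \mathrm{QR}\Paren{M^\top Y_t + G_t^{(Z)}}\,,
\end{align*}
with each $G_t^{(\cdot)}$ an i.i.d.\ centered Gaussian matrix at a scale calibrated below. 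Return $\hat{\mathbf P}_{(r')} := Y_L Y_L^\top$, the projector onto the span of the final left iterate.

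\textbf{Privacy.} For adjacent $M, M'$ differing in a single entry $(i,j)$ by at most $1$, the Frobenius-norm sensitivity of the map $M \mapsto M Z_{t-1}$ is $\Norm{e_j^\top Z_{t-1}}_2$, and analogously for $M^\top Y_t$. We therefore choose Gaussian noise with per-entry standard deviation
\[
s_t \;=\; \Theta\Paren{\sqrt{\tfrac{r'\bar\mu(M)\log(n+m)}{\min(n,m)}} \cdot \tfrac{\sqrt{L\log L\,\log(1/\delta)}}{\eps}}
\]
(truncating the row-$\ell_2$ norms of the iterates at the a priori bound $\sqrt{r'\bar\mu(M)/\min(n,m)}$ before release). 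Each round is then $(\eps_0,\delta_0)$-DP by the Gaussian mechanism, and advanced composition of the $2L$ rounds lifts this to the overall $(\eps,\delta)$-guarantee; this is what produces the $\sqrt{L\log L}\cdot\sqrt{\log(1/\delta)}/\eps$ factor of \cref{thm:hardt-price}.

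\textbf{Utility.} Invoke the Hardt-Price convergence theorem for noisy subspace iteration: if at every round $\Normop{G_t}\leq \gamma(\sigma_r - \sigma_{r+1})$ and $\Normop{U_{(r)}^\top G_t}\leq \gamma(\sigma_r-\sigma_{r+1})\cdot\tfrac{\sqrt{r'}-\sqrt{r-1}}{\sqrt{r'}}$ for a sufficiently small $\gamma$, then after $L$ iterations
\begin{align*}
\Norm{\Paren{\Id_n - Y_L Y_L^\top}\,U_{(r)}} \;\leq\; O\Paren{\gamma \cdot \tfrac{\sqrt{r'}}{\sqrt{r'}-\sqrt{r-1}}}\,.
\end{align*}
Standard Gaussian concentration yields $\Normop{G_t^{(Y)}} = O(s_t \sqrt{n})$; substituting the value of $s_t$ from the privacy paragraph produces the error bound of \cref{thm:hardt-price}.

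\textbf{Main obstacle: coherence preservation across iterates.} The privacy calibration above presupposes the coherence invariant $n\cdot\max_i \Norm{e_i^\top Y_t}_2^2 = O\paren{r'\bar\mu(M)\log(n+m)}$ and its analogue for $Z_t$ across all $L$ rounds, whereas \cref{def:basic-coherence} only asserts the bound for the exact top-$r$ singular vectors of $M$. Establishing this invariant inductively is the technical heart of the argument. Decompose $Y_t = U_{(r)} A_t + U_{(r)}^{\perp} B_t$: the spectral gap $\sigma_r - \sigma_{r+1}$ ensures $\Normop{B_t}$ decays geometrically so that the parallel part dominates and inherits the coherence bound from $U_{(r)}$. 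Two complementary facts make the induction close: (i)~Gaussian noise $G_t$ has row $\ell_2$-norms of order $\sqrt{r'\log(n+m)}\cdot s_t$ with high probability (this is the source of the extra $\log(n+m)$ factor), and (ii)~QR-orthonormalization does not inflate row norms appreciably when $Y_t$ remains well conditioned---again an easy consequence of the spectral gap. Combining (i)--(ii) with the geometric decay propagates the coherence bound through all $L$ iterations, closing the induction and delivering exactly the factors of \cref{thm:hardt-price}.
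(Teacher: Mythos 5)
This theorem is cited from Hardt and Price~\cite{hardt2014noisy} and is not proved in the paper at hand; it is stated only as a point of comparison that the paper's own \cref{thm:main} improves upon. There is therefore no "paper's own proof" to compare your attempt against. That said, your reconstruction does follow the correct strategy of the cited work: noisy subspace power iteration with $L = \Theta(\sigma_r\log(n+m)/(\sigma_r-\sigma_{r+1}))$ rounds, Gaussian perturbation whose per-round scale is fixed by the truncated row norms of the iterates plus advanced composition (producing the $\sqrt{L\log L}\cdot\sqrt{\log(1/\delta)}/\eps$ factor), and the Hardt--Price noisy-power-method convergence bound (producing the $\sqrt{r'}/(\sqrt{r'}-\sqrt{r-1})$ factor). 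The obstacle you flag---that the privacy calibration presupposes a coherence invariant on $Y_t, Z_t$ for all $t$ while \cref{def:basic-coherence} only gives it for the singular vectors of $M$---is indeed the technical crux of~\cite{hardt2014noisy}.

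One caution about your sketch of closing that induction. You treat the coherence of $Y_t$ via the split $Y_t = U_{(r)} A_t + U_{(r)}^\perp B_t$ and argue that geometric decay of $\Normop{B_t}$ lets the parallel part dominate. But after adding the Gaussian noise $G_t$, the iterate is no longer confined to the column space of $M$, so "$U_{(r)}^\perp$" here includes kernel directions whose rows are not controlled by $\bar\mu(M)$; the only handle on those rows is the randomness of $G_t$ (your fact~(i)) and the spectral-gap-driven decay of their contribution. Moreover the induction must be run with the truncation in force (so privacy holds unconditionally) while simultaneously arguing the truncation is, with high probability, never active (so utility is unaffected); that two-sided bookkeeping, together with the conditioning bound in~(ii) needed to show QR does not inflate row norms, is where most of the work in~\cite{hardt2014noisy} actually lies and is more delicate than your one-paragraph summary suggests. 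As a high-level reconstruction of the cited argument, however, your plan is the right one.
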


In other words, the algorithm from \cref{thm:hardt-price} outputs a projector onto an $r'$-dimensional subspace of $\R^n$ that is close to the $r$-dimensional subspace spanned by the $r$ leading singular vectors of $M$.
Importantly, the subspace closeness depends on $r,r'$ and the coherence of the input matrix, but almost does not depend on the ambient dimension.
Despite this remarkable property, the algorithm has several fundamental limitations.
First, it requires \textit{all} singular vectors of $M$ to be dense, despite aiming to privatize only the first $r.$  
This  limitation can be  significant. 
For instance, the matrix $M:=\dyad{\mathbbm{1}_n}+\tfrac{1}{n}\Id_n$ (where ${\mathbbm{1}_n}$ is the vector with all entries equal to $1$) has an incoherent spike we might be interested in to find, but it has $\bar{\mu}(M)\geq \Omega(n)$, and hence \cref{thm:hardt-price} does not provide any non-trivial guarantees.
Similarly, while the adjacency matrix of an \Erdos-\Renyi graph with average degree $\tfrac{n}{2}$ satisfies $\bar{\mu}\leq O(\polylog n)$ with high probability, adding an isolated clique of size $O(1)$ suffices to raise the basic coherence to $\Omega(n).$
Second, the approximation with $r'=r$ is worse than the approximation with $r'=2r$ by a factor $O(r)$. 
For many natural applications one may be interested in approximating the singular space by a space of \textit{exactly} the same dimension.
Finally, the approximation error depends not only on the gap  $\sigma_r-\sigma_{r+1}$, but also on the ratio $\sigma_r / \paren{\sigma_r - \sigma_{r+1}},$ which may be disproportionally larger.
This scenario can arise in many natural problems, as we discuss in more detail later.


In fact, in \cite{hardt2013beyond} the authors themselves conjectured that the first limitation could be overcome. In this work, we answer that question in the affirmative, showing that a significantly weaker notion of coherence\footnote{The notion of coherence that we use is even weaker than the one  conjectured in \cite{hardt2013beyond}.} suffices to obtain strong utility guarantees. 
Furthermore, our algorithm --which is based on different ideas-- avoids all of the other limitations discussed above.

\subsection{Main result}
To state our results we consider a more general notion of matrix coherence introduced in \cite{candes2012exact}.

\begin{definition}[$r$-Coherence]\label{def:coherence}
    Let $M\in \R^{n\times n}$ be a matrix, and let $M = \sum_{i=1}^{\rank(M)} \sigma_{i}u_iv_i^\top$ be its singular value decomposition such that $\sigma_1 \ge \ldots \ge \sigma_{\rank(M)}$. Let $r \in [\rank(M)]$. The \emph{rank-$r$ coherence} of $M$ is
    \[
    \coh{r}{M} := \max\Set{\frac{n}{r}\Normi{\sum_{i=1}^{r} u_iu_i^\top},\; \frac{m}{r}\Normi{\sum_{i=1}^{r} v_iv_i^\top}} = \max\Set{\frac{n}{r}\normi{P_{(r)}}, \;\frac{m}{r}\normi{Q_{(r)}}}\,,
    \]
    where $P_{(r)}$ and $Q_{(r)}$ are the orthogonal projectors onto the spaces spanned by $r$ leading left and right singular vectors of $M$ respectively.
\end{definition}
\noindent
By design $\coh{r}{M}$ takes values between $1$ and $\max\set{n,m}/r$ and satisfies $\coh{r}{M}\leq \bar{\mu}(M)$ for all $r$. 
Importantly, in the context of the examples from the previous paragraph, \cref{def:coherence} correctly captures the fact that the best low-rank approximation of the matrix has low coherence.
Specifically, for the first example $M=\dyad{\mathbbm{1}_n}+\tfrac{1}{n}\Id_n$--where $\bar{\mu}(M)\geq \Omega(n)$, we have $\coh{1}{M}\leq O(1).$ 
For the adjacency matrix of the aforementioned random graph with a small isolated clique, $\coh{r}{M}\leq O(\polylog n)$ for all $r \lesssim n$ with high probability.

Under this definition, we obtain the following result. 
\begin{theorem}[Private singular subspace estimator]\label{thm:main}
Let $M\in\R^{n\times m}$ and $r\in [\rank(M)]$.
Let $U_{(r)} \in \R^{n\times r}$ be the matrix whose columns are the top $r$ left singular vectors of $M$.
    There exists an efficient, $(\eps,\delta)$-differentially private algorithm that, given $M, r$  returns a projector $\hat{\mathbf P}_{(r)}\in \R^{n\times n}$ onto an $r$-dimensional space such that, with probability $0.99$,
    \begin{align*}
       \Norm{\Paren{\Id_n - \hat{\mathbf P}_{(r)}} U_{(r)}}
       \leq O\Paren{\frac{\sqrt{r \cdot \mu_r(M) + \log(1/\delta)} }{\sigma_r - \sigma_{r+1}} 
       \cdot \frac{\sqrt{\log(1/\delta)}}{\eps}}\,.
   \end{align*}
\end{theorem}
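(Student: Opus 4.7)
\medskip

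\noindent\textbf{Proof plan.}

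\emph{Algorithm and privacy.} The plan is to apply the Gaussian mechanism: release $\tilde M = M + N$, where $N$ has i.i.d.\ Gaussian entries of variance $\sigma^2 = \Theta(\log(1/\delta)/\epsilon^2)$ calibrated to the entrywise sensitivity of matrix-adjacency, and then output the projector $\hat{\mathbf P}_{(r)}$ onto the top-$r$ left singular subspace of $\tilde M$ computed by SVD. Privacy is immediate from the standard Gaussian-mechanism analysis.

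\emph{Utility setup.} A direct Wedin/Davis--Kahan bound would give $\Norm{(\Id-\hat{\mathbf P}_{(r)})U_{(r)}}_{\mathrm{op}} \leq O\Paren{\sigma\sqrt{n+m}/(\sigma_r-\sigma_{r+1})}$, which is dimension-dependent and much too weak. To shave $\sqrt{n+m}$ down to $\sqrt{r\mu_r(M)+\log(1/\delta)}$, I would exploit the spectral identity $\hat{\mathbf P}^\perp_{(r)}\tilde M \hat{\mathbf Q}_{(r)} = 0$ (where $\hat{\mathbf Q}_{(r)}$ is the right-singular analog of $\hat{\mathbf P}_{(r)}$, or after symmetrizing $M$ via its Hermitian dilation) to derive a self-consistent perturbation inequality
\[
(\sigma_r - \sigma_{r+1})\cdot\Norm{(\Id-\hat{\mathbf P}_{(r)})U_{(r)}}_\mathrm{op} \;\leq\; O\Paren{\Norm{\hat{\mathbf P}^\perp_{(r)} N \hat{\mathbf Q}_{(r)}}_\mathrm{op}} + \textnormal{(higher-order corrections)},
\]
with the higher-order corrections absorbed via a bootstrap seeded by a crude Wedin estimate.

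\emph{Coherence engine.} Coherence enters by controlling the noise term on the right. The key standalone ingredient is the \emph{coherence-preservation} lemma advertised in the abstract: with high probability over $N$, the top-$r$ projectors of $\tilde M$ satisfy $\mu_r(\tilde M) \leq O(\mu_r(M))$. I would prove this via an entrywise eigenvector perturbation argument---such as a leave-one-out decoupling or a contour-integral expansion of the spectral projector---showing that dense Gaussian noise cannot create localized spikes in the top singular vectors of the signal. With the coherence of $\hat{\mathbf P}_{(r)}$ and $\hat{\mathbf Q}_{(r)}$ established, one then bounds $\Norm{\hat{\mathbf P}^\perp_{(r)} N \hat{\mathbf Q}_{(r)}}_\mathrm{op}$ by $O\Paren{\sigma\sqrt{r\mu_r(M)+\log(1/\delta)}}$ via a chaining or $\epsilon$-net argument over rank-$r$ projectors of coherence at most $O(\mu_r(M))$: the metric entropy of this manifold scales with $r\mu_r(M)$ rather than with the ambient dimension, giving exactly the reduction needed.

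\emph{Main obstacle.} The most delicate step is the coherence-restricted noise bound. Operator-norm bounds on a generic Gaussian matrix scale with $\sqrt{n+m}$, so extracting the much sharper $\sqrt{r\mu_r+\log(1/\delta)}$ requires carefully exploiting that both projectors sandwiching $N$ lie in a low-complexity class. Establishing the coherence-preservation lemma itself is the second major technical hurdle, since one must show that dense random noise does not destabilize the individual coordinates of the top singular vectors---this is the step that closes the loop between the algorithmic output and the coherence assumption on the input.
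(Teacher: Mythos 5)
Your first step—applying the Gaussian mechanism to $M$ itself—is where the proposal diverges irreparably from the paper and, more importantly, where it breaks. The paper does not perturb $M$; it perturbs the \emph{orthogonal projector} $P_{(r)}$ onto the top-$r$ singular subspace, then takes the top-$r$ subspace of the noisy projector. The reason this distinction is fatal is exactly the dimension-dependence you flag at the start of the utility section: adding entrywise Gaussian noise of scale $\sigma = \Theta\paren{\sqrt{\log(1/\delta)}/\eps}$ to $M$ produces a noise matrix $N$ with $\norm{N} = \Theta\paren{\sigma\sqrt{n+m}}$, and no subsequent argument can remove that $\sqrt{n+m}$. Your proposed rescue—controlling $\norm{\hat{\mathbf P}^\perp_{(r)} N \hat{\mathbf Q}_{(r)}}$ via coherence of $\hat{\mathbf P}_{(r)}$ and $\hat{\mathbf Q}_{(r)}$—does not work because $\hat{\mathbf P}^\perp_{(r)}$ projects onto an $(n-r)$-dimensional space. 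Even when $\hat{\mathbf Q}_{(r)}$ is perfectly incoherent, for any fixed orthonormal $\hat U_\perp \in \R^{n\times(n-r)}$ and $\hat V \in \R^{m\times r}$ the matrix $\hat U_\perp^\top N \hat V$ is (by rotational invariance of the Gaussian) an $(n-r)\times r$ Gaussian with i.i.d.\ $N(0,\sigma^2)$ entries, so its operator norm is $\Theta\paren{\sigma\sqrt{n}}$. Low coherence means \emph{delocalized}, not low-complexity: the set of coherence-$O(\mu_r)$ rank-$r$ projectors is a full-dimensional open subset of the Grassmannian, so its metric entropy still scales with $r(n-r)$, not with $r\mu_r$. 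A concrete failure: take $M = \dyad{(\mathbbm{1}_n/\sqrt n)}$, $r=1$, so $\sigma_1-\sigma_2=1$ and $\mu_1(M)=1$. The claimed bound is $\tilde O(\sqrt{\log(1/\delta)}/\eps)$, dimension-free; but $\norm{N}\approx 2\sigma\sqrt n \gg 1 = \sigma_1$, so the top singular vector of $M+N$ is essentially uncorrelated with $\mathbbm 1_n$, and your estimator fails completely.

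What the paper does instead: it observes (via Wedin's theorem and a H\"older argument using $\normf{EU_{(r)}} \le \sqrt{\normo{E^\top E}}\cdot\sqrt{\normi{P_{(r)}}}$) that the \emph{projector} $P_{(r)}$ has Frobenius-norm sensitivity $O\paren{\Delta\sqrt{r\mu_r(M)/n}/(\sigma_r-\sigma_{r+1})}$. Calibrating the entrywise Gaussian noise on $P_{(r)}$ to this sensitivity produces a noise matrix $\mathbf G$ with $\norm{\mathbf G} = O\paren{\rho_1\sqrt n} = O\paren{\frac{\Delta\sqrt{r\mu_r}}{\sigma_r-\sigma_{r+1}}\cdot\frac{\sqrt{\log(1/\delta)}}{\eps}}$, which is dimension-free. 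Since $P_{(r)}$ itself has spectral gap exactly $1$, a single application of Davis--Kahan to $P_{(r)}+\mathbf G$ gives $\norm{(\Id_n-\hat{\mathbf P}_{(r)})U_{(r)}}\le O(\norm{\mathbf G})$, with no bootstrap needed. This then requires privately estimating the spectral gap (directly, via its $2\Delta$ sensitivity) and the coherence (via its $\log$, since the coherence only has a multiplicative-sensitivity bound, \cref{lem:coherence-sensitivity}), and stitching these together with a propose--test--release composition (\cref{lem:dp-comp}); your plan omits these steps because your noise scale does not depend on the private quantities. Finally, the coherence-preservation lemma (\cref{thm:coherence-gaussian}) is indeed proved in the paper, but it is used only to argue that the output $\hat{\mathbf P}_{(r)}$ itself remains incoherent---a downstream property for applications---not as the engine of the utility bound you envisioned.
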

In addition to relying on a weaker notion of coherence, our algorithm offers several improvements over \cite{hardt2014noisy}. First, it returns the projector onto a space of dimension \emph{exactly} $r$. If one is interested in privately estimating the $r$-dimensional singular space of $M$ using subspaces of exactly the same dimension, \cref{thm:main} improves over \cite{hardt2014noisy} by a factor $O(r).$

Second, in the standard regime $\delta \ge 1/\poly(n)$, regardless of the singular values or the coherence of the input, the error of the estimator from \cite{hardt2014noisy} is always at least a $\sqrt{\log n}$ factor larger than that of our estimator. Furthermore, the term $r \cdot \mu_r(M) $ almost always dominates the term $\log(1/\delta)$. Indeed, if $\;r \ge \Omega(\log(n))$, it clearly dominates. For$\;r < o(\log(n))\;$(e.g.$\;r=1$), even incoherent random matrices satisfy$\;\mu_r(M) \ge \Omega(\log(n))\;$with high probability, and hence even for such matrices (and, of course, for matrices with higher coherence) this term also dominates. Hence in almost all regimes our bound is at least a $\log n$ factor larger than that of the estimator of \cite{hardt2014noisy}. 

Third, \cref{thm:main} depends only on the spectral gap $\sigma_r - \sigma_{r+1}$, and not on the ratio $\sigma_r / \paren{\sigma_r - \sigma_{r+1}}$.  
A concrete example where this distinction becomes crucial is the classical problem of single-spike PCA in the Wishart model, which has been extensively studied in the literature (e.g., \cite{johnstone2001distribution, johnstone2009consistency, berthet2013optimal, deshpande2016sparse, d2020sparse, novikov2023sparse}). In this model, $M \in \R^{n\times m}$ is a matrix whose columns are iid samples from the Gaussian distribution with spiked covariance. Concretely, $M$ can be represented as follows:
\[
M = \sqrt{\beta} \cdot u \mathbf{g}^\top + \mathbf{W},
\]
where $u \in \mathbb{R}^n$ is a unit signal vector, $\mathbf{g} \sim N(0,1)^m$, and $\mathbf{W} \sim N(0,1)^{n \times m}$ are independent. We assume that $u$ is delocalized (i.e., its entries are at most $\tilde{O}(\sqrt{1/n})$), which ensures that $M$ is incoherent.

It is well known that in the large-sample regime $m \gg n$, if $\beta = C\sqrt{n/m}$ with a sufficiently large constant $C$, the top left singular vector of $M$ is highly correlated with $u$ with high probability. Moreover, it can be shown\footnote{We formally prove this in \cref{sec:wishart}.} 
the spectral gap is $\sigma_1 - \sigma_2 = \Theta(\beta \sqrt{m}) = \Theta(\sqrt{n})$, and the coherence satisfies $\coh{1}{M} \le \tilde{O}(1)$. Since a typical entry of $M$ is $\Theta(1)$, our notion of adjacent inputs is adequate in this setting.

Hence, \cref{thm:main} yields an error of $\tilde{O}(1/\sqrt{n})$. In particular, for delocalized signals, our private algorithm succeeds in exactly the same regime as classical (non-private) PCA. Furthermore, if $\beta \lesssim \sqrt{n/m}$, recovering $u$ becomes information-theoretically impossible.

In contrast, the algorithm from~\cite{hardt2014noisy} yields error
\[
\tilde{O}\left(\sqrt{\frac{\sigma_1}{\sigma_1 - \sigma_2}} \cdot \frac{1}{\sigma_1 - \sigma_2} \right) = \tilde{O}\left(\sqrt{\frac{\sqrt{m}}{\sqrt{n}}} \cdot \frac{1}{\sqrt{n}} \right) = \tilde{O}\left(\left(\frac{m}{n^3}\right)^{1/4}\right),
\]
which is significantly worse. In particular, when $m \gg n^3$, the output of their estimator is not correlated with $u$, so the algorithm from~\cite{hardt2014noisy} fails to solve the problem in the large-sample regime—even for delocalized signals, where the input matrix is incoherent.

Another important consequence of the fact that \cref{thm:main} depends only on the spectral gap --and not on $\sigma_r$-- is the \emph{shift invariance} of the estimator.
Specifically, let $M\in \R^{n\times n}$ be symmetric. Given a differentially private upper bound $b$ on the spectral norm $\norm{M}$ (which has low sensitivity due to the triangle inequality and can thus be privatized using standard mechanisms), we can always work with the positive definite matrix $M + b\cdot \Id_n$. 
As  the $r$ leading \emph{eigenvectors} of $M$ are the leading \emph{singular vectors} of $M + b\cdot \Id_n$, the utility  guarantees of \cref{thm:main} extend directly to eigenvector estimation.
\begin{corollary}[Private eigenspace estimator]
    Let $M\in\R^{n\times n}$ be a symmetric matrix with eigenvalues $\lambda_1 \ge \lambda_2 \ge\cdots \ge \lambda_n$, and let $r\in [n]$. Let $U_{(r)} \in \R^{n\times r}$ be the matrix whose columns are the top $r$ eigenvectors of $M$.
    There exists an efficient, $(\eps,\delta)$-differentially private algorithm that, given $M, r$  returns a projector $\hat{\mathbf P}_{(r)}\in \R^{n\times n}$ onto an $r$-dimensional space such that, with probability $0.99$,
    \begin{align*}
       \Norm{\Paren{\Id_n - \hat{\mathbf P}_{(r)}} U_{(r)}}
       \leq O\Paren{\frac{\sqrt{r \cdot \mu_r(M) + \log(1/\delta) } }{\lambda_r - \lambda_{r+1}} 
       \cdot \frac{\sqrt{\log(1/\delta)}}{\eps}}\,.
   \end{align*}
\end{corollary}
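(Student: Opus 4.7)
The plan is to realize the algorithm exactly as sketched in the paragraph preceding the corollary: reduce the eigenspace problem to the singular-subspace problem of \cref{thm:main} by privately shifting $M$ to a positive semidefinite matrix. Concretely, split the privacy budget in half, say $(\eps/2,\delta/2)$ for each of two stages.

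\textbf{Stage 1 (privatizing a spectral-norm bound).} Compute $\hat{b} := \Norm{M} + Z + t$, where $Z$ is a single Gaussian with standard deviation $O(\sqrt{\log(1/\delta)}/\eps)$ and $t = \Theta(\sqrt{\log(1/\delta)}/\eps)$ is a deterministic safety margin chosen so that $\hat{b} \ge \Norm{M}$ with probability at least $0.999$. Since under our adjacency notion $M\mapsto\Norm{M}$ has $\ell_2$-sensitivity at most $O(1)$, the Gaussian mechanism makes $\hat b$ an $(\eps/2,\delta/2)$-differentially private function of $M$, while also yielding $\hat{b} \le \Norm{M} + O(\sqrt{\log(1/\delta)}/\eps)$ with high probability.

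\textbf{Stage 2 (reduction to \cref{thm:main}).} Form $M' := M + \hat{b}\cdot \Id_n$. On the event $\hat{b} \ge \Norm{M}$, every eigenvalue of $M'$ equals $\lambda_i + \hat b \ge 0$, so $M'$ is PSD and its singular values are exactly $\lambda_1+\hat b \ge \cdots \ge \lambda_n+\hat b$ in the same order as the eigenvalues of $M$. Consequently the top $r$ left (and right) singular vectors of $M'$ coincide with the columns of $U_{(r)}$, the singular-value gap satisfies
\[
\sigma_r(M') - \sigma_{r+1}(M') \;=\; (\lambda_r+\hat b) - (\lambda_{r+1}+\hat b) \;=\; \lambda_r - \lambda_{r+1},
\]
and the projector onto the top-$r$ left (and right) singular subspace of $M'$ is precisely the projector onto the top-$r$ eigenspace of $M$, so that $\mu_r(M') = \mu_r(M)$ in the sense used by the corollary. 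Now run the algorithm of \cref{thm:main} with parameters $(\eps/2,\delta/2)$ on $M'$ and return the resulting rank-$r$ projector $\hat{\matP}_{(r)}$. For any fixed value of $\hat b$, adjacent inputs $M_1,M_2$ yield adjacent inputs $M_1+\hat b\Id$, $M_2+\hat b\Id$ (they differ in the same entry by the same amount), so this stage is $(\eps/2,\delta/2)$-DP in $M$ conditionally on $\hat b$.

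\textbf{Combining the two stages.} Basic composition of the two $(\eps/2,\delta/2)$-DP mechanisms yields overall $(\eps,\delta)$-differential privacy of the pair $(\hat b, \hat{\matP}_{(r)})$, and returning only $\hat{\matP}_{(r)}$ is post-processing. The utility bound of \cref{thm:main} applied to $M'$ reads
\[
\Norm{(\Id_n - \hat{\matP}_{(r)})\,U_{(r)}} \;\le\; O\!\left(\frac{\sqrt{r\cdot \mu_r(M') + \log(1/\delta)}}{\sigma_r(M') - \sigma_{r+1}(M')}\cdot \frac{\sqrt{\log(1/\delta)}}{\eps}\right),
\]
and by the identifications above this is exactly the bound in the corollary (with constants absorbed into the $O(\cdot)$ to account for the halved privacy budget). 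A union bound over the $0.999$-probability event $\hat b \ge \Norm{M}$ and the $0.99$-probability success event of \cref{thm:main} gives the claimed $0.99$ success probability.

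\textbf{Expected main obstacle.} There is no genuine technical hurdle: the only mildly delicate point is making the coherence identity $\mu_r(M')=\mu_r(M)$ rigorous, which requires verifying that after the shift the ordering of singular values matches the ordering of eigenvalues (so that the top-$r$ singular subspace of $M'$ really is the top-$r$ eigenspace of $M$). This is handled by conditioning on $\hat b \ge \Norm{M}$ in Stage~1.
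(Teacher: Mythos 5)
Your proof is correct and realizes precisely the approach the paper sketches in the paragraph immediately preceding the corollary (privately bound $\norm{M}$ with a safety margin, shift $M$ by that bound to make it PSD, then invoke \cref{thm:main} on the shifted matrix, using that the shift changes neither the gap nor the top-$r$ eigenspace). The only point worth flagging is one you already half-noticed: as written in \cref{def:coherence}, $\mu_r(M)$ is defined via the top-$r$ \emph{singular} vectors, which for a symmetric $M$ with large negative eigenvalues need not span the same subspace as the top-$r$ \emph{eigenvectors}; the quantity that actually appears in your bound (and that the corollary must intend) is the coherence of the top-$r$ eigenspace, i.e.\ $\mu_r(M+\hat b\,\Id_n)$. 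You handle this correctly by declaring $\mu_r(M') = \mu_r(M)$ ``in the sense used by the corollary,'' which is the right reading of a slightly overloaded notation in the paper.
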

In particular, if $\lambda_1 \gg \lambda_2$, we can accurately and privately estimate the leading eigenvector $v_1$ (up to sign), even when $\sigma_r \gg \abs{\lambda_1}$ for $r \ge \Omega(n)$. Note that since the error of the estimator from \cite{hardt2014noisy} includes a multiplicative factor of $\sigma_r$, it is \emph{not} shift invariant:  adding $b \cdot \Id_n$ to $M$ will proportionally increase the error of their estimator.

Finally, we emphasize that the algorithm underlying \cref{thm:main} consists of a sequence of elementary operations, such as computing the top-$r$ singular vectors and singular values and adding noise entry-wise. As such, we believe it may be of immediate practical interest.
Moreover, we note that our result holds under a more general notion of adjacency 
(see \cref{def:matrix-adjacency}) 
than the one introduced in the beginning of the paper, and the other notions of adjacency used for this problem in prior work \cite{hardt2013beyond,hardt2014noisy,balcan2016improved,nicolas2024differentially}.

\paragraph{Coherence of our estimator.} For certain applications the coherence of the estimator might be important. Therefore, it is desirable that coherence does not increase significantly after privatizing the eigenvectors. We show that $\coh{r}{\hat{\mathbf P}_{(r)}} \le O\Paren{\coh{r}{M} + {\frac{\log \paren{n+m}}{r}}}$ with high probability, where $\hat{\mathbf P}_{(r)}$ is our estimator from \cref{thm:main}. Note that even for highly incoherent matrices, for example, random Gaussian matrices, the coherence is $\Theta(\log n)$, and hence for such matrices the coherence of our estimator can be larger at most by a constant factor than the coherence of the input.
We remark that the estimator $\hat{P}_{\text{HP}}$ from \cite{hardt2014noisy} has $\bar{\mu}(\hat{P}_{\text{HP}})\le \bar{\mu}(M)\cdot \log \paren{n+m}$, which is in most cases by a $\log$ factor larger than the guarantees of our estimator. Note also that their bound is only valid for \emph{basic} coherence.

\subsection{Coherence of graphs under random perturbations}\label{sec:conjecture}
To show the  bound on the coherence of our estimator, we prove the following statement:
\footnote{See \cref{thm:coherence-gaussian} for the formal version.}
if $A \in \mathbb{R}^{n \times m}$ is a rank-$r$ matrix and $\mathbf{W} \in \mathbb{R}^{n \times m}$ has i.i.d.\ standard Gaussian entries, then with high probability
\[
\coh{r}{A + \mathbf{W}} \le O\Paren{\coh{r}{A} + \frac{\log(n + m)}{r}}.
\]
While our proof heavily relies on properties specific to the Gaussian distribution (rotational symmetry) we believe that similar coherence bounds should hold for other random matrices with comparable structural features.
To formalize this belief that coherence should be stable under random perturbations,  we conjecture that adding a $G(n, p)$ graph to a low-rank graph does not significantly increase the coherence of the resulting adjacency matrix. Concretely, we propose the following:

\begin{conjecture}\label{conj:coherence-simple}
    Let $K$ be a graph whose adjacency matrix $A_K$ has rank $r$, 
    and let $\mathbf{G} \sim G(n,p)$ with $1/2 \ge p \ge \polylog(n)$.
    Let $\hat{\mathbf{K}}$ denote the union of $K$ and $\mathbf{G}$. Then, with high probability,
    \[
    \coh{r}{A_{\hat{\mathbf{K}}}} \le O\Paren{\coh{r}{A_K}} + \polylog(n),
    \]
    where $A_{\hat{\mathbf{K}}}$ is the adjacency matrix of $\hat{\mathbf{K}}$.
\end{conjecture}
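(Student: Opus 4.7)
The plan is to mirror the structure of the Gaussian-perturbation coherence bound sketched just above the conjecture, replacing the rotational-symmetry argument (specific to Gaussian noise) with an entrywise eigenvector perturbation analysis tailored to centered Bernoulli noise. The starting point is a decomposition of $A_{\hat{\mathbf K}}$ into a low-rank signal and a centered noise. Since $A_K$ and $A_\mathbf{G}$ have $0$/$1$ entries, the edge-union adjacency matrix satisfies $A_{\hat{\mathbf K}} = A_K + A_\mathbf{G} - A_K \circ A_\mathbf{G}$ (with $\circ$ the Hadamard product). Writing $A_\mathbf{G} = pJ + \mathbf W$, where $J$ is the (off-diagonal) all-ones matrix and $\mathbf W$ is symmetric and centered with independent upper-triangular entries of variance $p(1-p)$, one obtains
\[
A_{\hat{\mathbf K}} \;=\; \underbrace{(1-p)A_K + pJ}_{=:\, B} \;+\; \underbrace{\mathbf W \circ (J - A_K)}_{=:\,\mathbf N}\,,
\]
where $\rank(B) \le r+1$ and $\mathbf N$ is centered with entrywise variance at most $p$. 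Standard matrix concentration gives $\Normop{\mathbf N} \le O(\sqrt{pn})$ with high probability whenever $pn \ge \polylog(n)$.

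The coherence of the signal $B$ is easy to control: its top-$(r{+}1)$ singular subspace is contained in the span of the top-$r$ singular subspace of $A_K$ together with the all-ones direction $\mathbbm{1}_n/\sqrt{n}$ contributed by $pJ$. The latter direction contributes at most $O(1/r)$ to the rank-$(r{+}1)$ coherence, yielding $\coh{r+1}{B} \le O\!\Paren{\coh{r}{A_K}} + O(1)$. The decisive step is to transfer this bound from $B$ to $B + \mathbf N$. Let $P$ and $\hat P$ denote the orthogonal projectors onto the top-$r$ singular subspaces of $B$ and of $A_{\hat{\mathbf K}}$ respectively. Assuming a spectral gap $\sigma_r(B) - \sigma_{r+1}(B) \gtrsim \Normop{\mathbf N}\cdot \polylog(n)$, an entrywise (``leave-one-out'') perturbation argument adapted to matrices of the form low-rank plus independent noise should yield
\[
\Normcol{\hat P - P} \;\le\; \polylog(n) \cdot \frac{\Normop{\mathbf N}\,\sqrt{\coh{r+1}{B}/n}}{\sigma_r(B) - \sigma_{r+1}(B)}\,.
\]
Combining this with the elementary bound $\normi{\hat P} \le \Normcol{\hat P}^2 \le 2\Paren{\Normcol{P}^2 + \Normcol{\hat P - P}^2}$ and the identity $\coh{r}{A_{\hat{\mathbf K}}} = \tfrac{n}{r}\normi{\hat P}$ (valid for symmetric matrices) then produces the target bound $\coh{r}{A_{\hat{\mathbf K}}} \le O\!\Paren{\coh{r}{A_K}} + \polylog(n)$.

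The main obstacle is unambiguously the transfer step. Leave-one-out methods apply to random matrices with independent entries, which $\mathbf N$ has, but the variance profile is \emph{inhomogeneous}: $\mathbf N$ vanishes on $E(K)$, so the analysis must be robust to the structured zero pattern forced by $K$ and the usual concentration estimates for incoherent eigenvectors must be adapted accordingly. A second, more serious concern is that the required spectral gap $\sigma_r(B) - \sigma_{r+1}(B)$ can degenerate whenever some singular value of $A_K$ happens to coincide with the ``planted'' eigenvalue $pn$ contributed by $J$; in such degenerate cases one likely has to track the top-$(r{+}1)$ eigenspace as a single stable block, separate out the all-ones direction within it, and control coherence of each piece individually. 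I expect these two technical difficulties are the principal reason the statement is posed as a conjecture rather than a theorem.
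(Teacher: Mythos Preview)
The paper does not prove this statement: it is explicitly posed as a conjecture (\cref{conj:coherence-simple}) and left open. The authors remark that their Gaussian argument (\cref{thm:coherence-gaussian}) relies on rotational invariance and that ``it is not clear how to extend it to more structured or discrete settings,'' and they point out that even the base case $K=\emptyset$ (pure $G(n,p)$) required a sequence of nontrivial papers. So there is no proof in the paper for you to be compared against; your final paragraph correctly reads the situation.

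Your outline is a reasonable heuristic plan---replacing rotational symmetry by a leave-one-out/entrywise perturbation argument is the natural thing to try---and the two obstacles you isolate (inhomogeneous variance pattern forced by $E(K)$, and potential collapse of the gap $\sigma_r(B)-\sigma_{r+1}(B)$) are real. One technical slip worth flagging: the off-diagonal all-ones matrix you call $J$ is $\mathbbm{1}\mathbbm{1}^\top - \Id_n$, which has full rank, not rank one, so $B=(1-p)A_K+pJ$ is \emph{not} rank $r+1$ as written. You can repair this by taking $J=\mathbbm{1}\mathbbm{1}^\top$ and absorbing the stray $-p\Id_n$ into the noise (it contributes $O(p)$ to $\normop{\mathbf N}$, which is negligible against $\sqrt{pn}$); the column-space argument for $\coh{r+1}{B}$ then goes through. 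More substantively, even granting a leave-one-out bound, your displayed inequality for $\normcol{\hat P - P}$ still presupposes a usable spectral gap for $B$, and the conjecture imposes no gap assumption on $A_K$ whatsoever; handling the gapless case is not a side issue but the heart of the difficulty, and your sketch does not indicate how to get around it.
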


We remark that even for the Erd\H{o}s--R\'enyi model $G(n,p)$, proving  bounds on coherence has required multiple papers and nontrivial techniques \cite{dekel2011eigenvectors, tao2010random, erdHos2013spectral}, and the precise bound is conjectured to be $\log(n)$ (see, e.g.,  \cite{vu2015random}), and, to the best of our knowledge, remains an open problem. Currently, it is known \cite{erdHos2013spectral} that for all $r$, the adjacency matrix $A_{\mathbf{G}}$ of a random graph $\mathbf{G} \sim G(n,p)$ satisfies
\[
\coh{r}{A_{\mathbf{G}}} \le {\polylog(n)},
\]
as long as $p \ge \polylog(n)$. This suggests that \cref{conj:coherence-simple} may be difficult to prove in general. On the other hand, the existing incoherence bound for $G(n,p)$ may offer a promising starting point for establishing the conjecture, if it holds.
For many applications, it is convenient to consider the normalized adjacency matrix of a graph. (see \cref{sec:preliminaries})
We therefore formulate a corresponding conjecture for the normalized adjacency matrix.

\begin{conjecture}\label{conj:coherence-normalized}
    Let $K$ be a graph whose {normalized} adjacency matrix $\bar{A}_K$ has rank $r$, and let $\mathbf{G} \sim G(n,p)$ with $1/2 \ge p \ge \polylog(n)$. Let $\hat{\mathbf{K}}$ denote the union of $K$ and $\mathbf{G}$. Then with high probability,
    \[
    \coh{r}{\bar{A}_{\hat{\mathbf{K}}}} \le O\Paren{\coh{r}{\bar{A}_K}} + \polylog(n)\,,
    \]
    where $\bar{A}_{\hat{\mathbf{K}}}$ is the normalized adjacency matrix of $\hat{\mathbf{K}}$.
\end{conjecture}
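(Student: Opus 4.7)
The plan is to reduce the normalized conjecture to the unnormalized \cref{conj:coherence-simple} plus a careful analysis of how the random degree matrix interacts with the spectral structure. The key observation is that the normalization $\hat D_{\hat{\mathbf K}}^{-1/2}$ concentrates around a deterministic diagonal matrix, so the problem essentially splits into (i) a coherence bound for an unnormalized random perturbation and (ii) a diagonal reweighting that is close to identity up to a low-dimensional distortion.

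First, I would establish degree concentration. By standard Chernoff bounds, with probability $1 - n^{-\omega(1)}$ one has $d_{\hat{\mathbf{K}}}(v) = d_K(v) + np \pm O(\sqrt{np\log n})$ uniformly in $v$. Set $D^\star := \diag(d_K + np\cdot \mathbf{1})$; then $\hat D_{\hat{\mathbf{K}}}^{-1/2} = (D^\star)^{-1/2}(\Id + \mathbf{\Delta})$ for a diagonal $\mathbf{\Delta}$ with $\Normi{\mathbf{\Delta}} \le \tilde O(1/\sqrt{np})$. Write $A_{\mathbf G} = p(J-\Id) + \mathbf W$ where $\mathbf W$ has independent centered entries with $\norm{\mathbf W} \le \tilde O(\sqrt{np})$. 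Substituting gives the decomposition
\[
\bar A_{\hat{\mathbf{K}}} \;=\; M_0 \;+\; E_{\mathrm{noise}} \;+\; E_{\mathrm{diag}},
\]
where $M_0 := (D^\star)^{-1/2}\bigl(A_K + p(J-\Id)\bigr)(D^\star)^{-1/2}$ is a deterministic matrix of rank at most $r+2$, $E_{\mathrm{noise}} := (D^\star)^{-1/2}\mathbf W(D^\star)^{-1/2}$ has spectral norm $\tilde O(1/\sqrt{np})$, and $E_{\mathrm{diag}}$ collects lower-order terms from $\mathbf{\Delta}$.

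The next step is to argue that $M_0$ itself has coherence $O(\coh{r}{\bar A_K}) + \polylog(n)$. Since $K$ has rank $r$, the top eigenspace of $M_0$ decomposes as the span of $(D^\star)^{-1/2}$-rescaled singular vectors of $A_K$ together with the near-uniform direction $(D^\star)^{1/2}\mathbf 1/\norm{(D^\star)^{1/2}\mathbf 1}$ coming from $p(J-\Id)$. The rescaling factor $(D^\star)^{-1/2}$ is elementwise $\Theta(1/\sqrt{np})$ up to multiplicative factors controlled by $\coh{r}{\bar A_K}$ (this is the step where the hypothesis on $\bar A_K$—rather than $A_K$—is used), so the projector onto the top-$r$ eigenspace of $M_0$ inherits the coherence of $\bar A_K$ up to a constant, plus an additive $\polylog(n)/r$ from the uniform direction.

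Finally, and this is the main obstacle, we must translate the spectral closeness of $\bar A_{\hat{\mathbf K}}$ to $M_0$ into an \emph{entrywise} bound on the eigenprojectors. Davis--Kahan suffices to show that the top-$r$ eigenspaces are $o(1)$-close in operator norm provided the spectral gap of $M_0$ is at least $\tilde\Omega(1/\sqrt{np})$, but coherence is an $\ell_\infty$ statistic and this is exactly where the analysis of random graph eigenvectors becomes hard (and why \cref{conj:coherence-simple} is open). I would attempt a leave-one-out / resolvent argument in the style of Abbe--Fan--Wang--Zhong and the Erdős--Knowles--Yau--Yin local law for sparse random graphs, expanding $P_{(r)}(\bar A_{\hat{\mathbf K}}) - P_{(r)}(M_0)$ via the Neumann series of the resolvent at a contour enclosing the top $r$ eigenvalues, and bounding each entry using independence of a single row of $\mathbf W$ from the corresponding leave-one-out spectral decomposition. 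The technical heart is to show that the $2\to\infty$ norm of the resolvent perturbation satisfies $\normcol{(\bar A_{\hat{\mathbf K}} - z)^{-1} - (M_0 - z)^{-1}} \le \polylog(n)/\sqrt{n}$ for $z$ on the contour, which would yield the claimed coherence bound after reading off $\normi{P_{(r)}(\bar A_{\hat{\mathbf K}})} \le \normi{P_{(r)}(M_0)} + \polylog(n)/n$ and multiplying by $n/r$. Because this entrywise control seems to require at least the technology behind the $G(n,p)$ delocalization bounds—which themselves currently only yield $\polylog(n)$ rather than the conjectured $\log(n)$—I expect this last step to be the real bottleneck, and a fully tight bound likely demands a dedicated local law for graphs of the form $K + \mathbf G$.
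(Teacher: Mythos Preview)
The statement you are attempting to prove is \cref{conj:coherence-normalized}, which the paper explicitly presents as a \emph{conjecture}, not a theorem. The paper provides no proof: the surrounding discussion stresses that even the special case $K=\emptyset$ (coherence of bare $G(n,p)$) required the Erd\H{o}s--Knowles--Yau--Yin local law machinery, and the authors state that ``\cref{conj:coherence-simple} may be difficult to prove in general.'' Your proposal is therefore not to be compared against any argument in the paper; it is a research plan for an open problem.

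As such a plan, your outline is reasonable in its broad strokes---the decomposition $\bar A_{\hat{\mathbf K}} = M_0 + E_{\mathrm{noise}} + E_{\mathrm{diag}}$ and the identification of the $\ell_\infty$/leave-one-out step as the real bottleneck are both sensible---but you should be aware that the plan contains at least two genuine gaps beyond the one you already flag. First, your reduction to \cref{conj:coherence-simple} is not a reduction at all: that conjecture is equally open, so invoking it buys nothing. Second, your claim that the coherence of $M_0$ is $O(\coh{r}{\bar A_K}) + \polylog(n)$ is asserted rather than argued. The top-$r$ singular space of $M_0 = (D^\star)^{-1/2}(A_K + p(J-\Id))(D^\star)^{-1/2}$ is not simply the $(D^\star)^{-1/2}$-rescaling of the singular space of $\bar A_K$, because the normalizing diagonals $D_K^{-1/2}$ and $(D^\star)^{-1/2}$ differ, and the rank-one term $p(J-\Id)$ can mix nontrivially with the span of $A_K$. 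Controlling the coherence of $M_0$ in terms of $\coh{r}{\bar A_K}$ already requires an entrywise eigenvector perturbation argument of the same flavor as the final step, so the difficulty does not cleanly factor the way your outline suggests. Your honest acknowledgment that the last step ``likely demands a dedicated local law for graphs of the form $K+\mathbf G$'' is exactly right, and is why the paper leaves this as a conjecture.
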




\subsection{Differentially private CSP solvers}
To motivate our conjectures, we demonstrate that low coherence can have meaningful algorithmic consequences in the context of privacy. We introduce novel differentially private algorithms for $2$-CSPs. We defer the general statement to \cref{sec:applications} and present here the special case of \maxcut under edge-differential privacy (see \cref{sec:preliminaries} for the definition). In the \maxcut problem, the goal is to find a bipartition of the vertex set that maximizes the number of edges crossing the cut in a given graph \( G \). Let $\sigma_{r}$ denote the $r$-th largest singular value of the normalized adjacency matrix of $G.$  We prove the following theorem:
\begin{theorem}[Edge-DP \maxcut for low coherence graphs, simplified]\label{thm:dp-max-cut-easy}
    Let $C>0$ be a universal constant.
    There exists an $(\eps,\delta)$-DP algorithm that, given a graph $G$, and an integer $r>0,$ with high probability returns a bipartition such that the number of cut edges is at least
    \begin{align*}
        (0.99-\sigma_{r+1})\cdot\OPT
    \end{align*}
    whenever $G$ has 
    \begin{align}\label{eq:intro-requirements}
        d_{\min}\geq C\Paren{\frac{\sqrt{\log(1/\delta)}}{\eps}\cdot \frac{\sqrt{r\cdot\mu_r+\log n}}{(\sigma_r -\sigma_{r+1})}},\qquad \sigma_r\geq 0.01.
    \end{align}
    Moreover, the algorithm runs in randomized time $n^{ O\Paren{1}}\cdot \exp\Set{O\Paren{{r}}
}.$
\end{theorem}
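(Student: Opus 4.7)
The plan is to leverage Theorem~1.5 to privately compute the top-$r$ singular subspace of the normalized adjacency matrix $\bar A := D^{-1/2} A D^{-1/2}$ of $G$, and then exhaustively search for a good cut within that low-dimensional subspace. The search is purely post-processing on the private projector, so all privacy loss arises from the subspace estimation step.

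For privacy and subspace accuracy, note that under edge adjacency, swapping one edge changes each entry of $\bar A$ by at most $O(1/d_{\min})$, so the rescaled matrix $d_{\min}\cdot \bar A$ has entrywise sensitivity $O(1)$. I would apply Theorem~1.5 to $d_{\min}\cdot \bar A$ with rank $r$; this returns an $(\eps,\delta)$-differentially private projector $\hat{\mathbf P}_{(r)}$ onto an $r$-dimensional subspace. The utility guarantee scales as $\tfrac{1}{d_{\min}(\sigma_r-\sigma_{r+1})}\cdot \sqrt{r\mu_r+\log(1/\delta)}\cdot\tfrac{\sqrt{\log(1/\delta)}}{\eps}$, so the lower bound on $d_{\min}$ in \eqref{eq:intro-requirements} forces the subspace distance $\Norm{(\Id-\hat{\mathbf P}_{(r)})U_{(r)}}$ below any target constant $\alpha$ for a suitably large $C$.

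The enumeration step is then: fix a constant-accuracy net $\mathcal N$ of the unit sphere in the range of $\hat{\mathbf P}_{(r)}$, of size $\exp(O(r))$; for each $w\in\mathcal N$ produce a candidate bipartition by sign-rounding (e.g.\ $x(w)_i=\sign((D^{-1/2}w)_i)$), and output the best candidate. This gives the overall runtime $n^{O(1)}\exp(O(r))$. For the utility analysis, set $y=D^{1/2}x$ so that $\mathrm{cut}(x)=\tfrac{|E|}{2}(1-\tilde y^\top \bar A\tilde y)$ with $\tilde y=y/\Norm{y}$, reducing Max-Cut to minimizing the normalized quadratic form $\tilde y^\top \bar A\tilde y$ over bipartition vectors. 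Writing $\bar A = P_{(r)}\bar A P_{(r)}+E_r$ with $\Normop{E_r}=\sigma_{r+1}$, the rank-$r$ truncation costs at most $\sigma_{r+1}$ in the quadratic form, while replacing $P_{(r)}$ by the private $\hat{\mathbf P}_{(r)}$ costs $O(\alpha)$. Combined with the trivial lower bound $\OPT\ge |E|/2$ and the assumption $\sigma_r\ge 0.01$, one recovers a cut of value at least $(0.99-\sigma_{r+1})\OPT$.

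The main obstacle is the rounding step, since the optimal bipartition vector $\tilde y^*$ is \emph{not} itself in the top-$r$ subspace. One must argue (a) that its component orthogonal to the top-$r$ subspace contributes at most $\sigma_{r+1}$ to $\tilde y^\top \bar A\tilde y$; (b) that enumerating inside the approximate range of $\hat{\mathbf P}_{(r)}$ still captures a vector close to the projection of the lifted optimum, using only the subspace closeness guarantee from Theorem~1.5; and (c) that the sign rounding from a continuous candidate back to $\{\pm1\}^n$ loses no more than the slack budgeted into the $0.99$ constant. The coherence assumption enters the argument only through the utility bound of Theorem~1.5, and quantitatively assembling (a)--(c) is the core of the proof.
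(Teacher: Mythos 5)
Your proposal diverges from the paper's approach at the crucial rounding step, and I believe the divergence introduces a genuine gap rather than just an unfilled-in technicality.

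The paper does \emph{not} project onto a low-dimensional subspace and sign-round. It (i) privatizes the degree matrix, (ii) uses \cref{thm:main-coherence} (via \cref{thm:threshold-rank}) to produce a private rank-$r$ approximation $\hat{\mathbf A}$ of $\bar A$ that is spectrally close to $\bar A_{(r)}$ and has controlled threshold rank, (iii) projects $\hat{\mathbf A}$ onto nonnegative matrices to form a genuine synthetic graph, and (iv) runs the Barak--Raghavendra--Steurer global-correlation rounding (\cref{thm:max-cut}, built on \cref{lem:driving-down-global-correlation,lem:local-to-global-incoherence,lem:rounding-error}) on that synthetic graph. The $(1-O(\sigma_{r+1}+\gamma))\OPT$ guarantee comes entirely from the SOS machinery: low threshold rank plus low global correlation implies low local correlation, which makes \emph{independent rounding} of the pseudo-distribution nearly lossless. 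That is the mechanism that yields an additive-in-$\sigma_{r+1}$ bound.

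Your step (c) is not a slack-absorbing technicality; it is the place where the argument fails. Sign-rounding a unit vector $w$ in (an approximate) top-$r$ eigenspace of $\bar A$ into a bipartition $x(w)=\sign(D^{-1/2}w)$ is spectral partitioning, and spectral partitioning suffers the usual Cheeger-type quadratic loss: a vector $w$ with $w^\top\bar A w = -1+\eps$ only guarantees a cut of size $\bigl(1-O(\sqrt{\eps})\bigr)\tfrac{|E|}{2}$, not $\bigl(1-O(\eps)\bigr)\tfrac{|E|}{2}$. Enumerating an $\exp(O(r))$-size net does not escape this, because \emph{every} net point is a continuous vector and \emph{every} net point is sign-rounded through the same lossy map; the net only helps you find the best $w$ with small quadratic form, which is exactly the input to spectral rounding. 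Your steps (a) and (b) are fine: the truncation error is $\sigma_{r+1}$ and the private projector is close enough that the net contains a near-optimizer of the truncated quadratic form. But getting from that near-optimizer back to a $\{\pm1\}^n$ vector with the claimed $(0.99-\sigma_{r+1})\OPT$ accuracy needs a rounding scheme that beats Cheeger, and that is precisely what the SOS conditioning argument is for. Without some replacement for it (conditioning, correlation rounding, or a problem-specific argument exploiting incoherence much more heavily than you sketch), I do not see how to close (c).

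Two smaller issues. First, $d_{\min}$ is itself a private quantity; you cannot calibrate noise to $1/d_{\min}$ directly. The paper handles this by first privately estimating the minimum degree (step 1 of \cref{alg:threshold-rank}), and halting if it is too small, inside a propose-test-release style composition (\cref{lem:dp-comp}). Second, the adjacency notion used in \cref{thm:main} is $\sqrt{\normo{EE^\top}}\le\Delta$ (\cref{def:matrix-adjacency}), not entrywise $\ell_\infty$; \cref{fact:distance-normalized-adjacency-matrices} is where the paper computes the correct $\Delta$ for normalized adjacency matrices of edge-adjacent graphs, and it is proportional to $1/d_{\min}$ but via an $\ell_1$-type bound, not an entrywise one.
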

\cref{thm:dp-max-cut-easy} states that, whenever the minimum degree is sufficiently large relative to the coherence of the graph, it is possible to efficiently and privately recover the maximum cut up to accuracy $(0.99-\sigma_{r+1}).$ 
Observe that one can always satisfy this minimum degree requirement by adding a random graph of comparable expected degree on top of the input. 
Since the maximum cut always has value at least  $\card{E}/2,$ this modification does not significantly affect the maximum cut, provided the average degree is at least a constant factor larger. 
Under \cref{conj:coherence-normalized}, this perturbation cannot significantly change the coherence of the input. 
We further remark that the approximation factor --as well as the requirement on $\sigma_{r}$-- can be improved at the cost of increased running time and a stronger minimum degree assumption. 

A  remarkable sequence of works \cite{blocki2012johnson,gupta2012iterative,arora2019differentially, eliavs2020differentially, liu2024optimal} has introduced  polynomial-time differentially private algorithms that, given a graph $G$, return a synthetic graph $G'$ in which every cut is preserved up to an additive error of $O(\sqrt{\card{E}\cdot n}\cdot\log^2 (n)/\eps).$
This additive error  becomes negligible when $\card{E}\geq \omega\Paren{n\log^4 n}.$
On the other hand, for the same reason,  for sparser graphs these algorithms provide \emph{no} guarantee on the relationship between the maximum cut of the synthetic graph and that of the original graph $G.$ 
Assuming \cref{conj:coherence-normalized}, \cref{thm:dp-max-cut-easy} would allow us to go beyond this limitation on graphs with coherence $O(\log n).$

\subsection{Organization}
The rest of the paper is organized as follows. In \cref{sec:techniques} we present the main technical ideas behind our results. \cref{sec:preliminaries} introduces the notation used throughout subsequent sections. \cref{sec:future-work} contains a discussion of related open problems. In \cref{sec:privatization-mechanism} we present the proof of \cref{thm:main}. In \cref{sec:threshold-rank} we introduce a mechanism to privatize the normalized adjacency matrix of a graph. \tom{Reminder: check if you change the organization}In \cref{sec:correlation-rounding} we extend the global correlation rounding framework, making it amenable to differential privacy.  Finally in \cref{sec:applications} we combine the results of the previous sections to prove \cref{thm:dp-max-cut-easy} and its generalizations to $2$-CSPs. The appendices contains deferred proofs, discussions and background that flesh out the exposition.

\section{Techniques}\label{sec:techniques}

In this section, we present the main ideas behind our results. Throughout the remainder of the paper (unless stated otherwise), we assume that \( M \in \mathbb{R}^{n \times n} \) is \emph{symmetric}. The general case of rectangular matrices can be reduced to this setting using standard techniques. Specifically, given a non-symmetric matrix \( B \in \mathbb{R}^{m \times n} \), we apply the standard symmetrization trick by embedding it into a larger symmetric matrix
\[
A = \begin{bmatrix}
0 & B \\
B^\top & 0
\end{bmatrix} \in \mathbb{R}^{(m+n) \times (m+n)}.
\]
This transformation preserves the key structural properties relevant to our analysis; see~\cite{hardt2013beyond} for further details.

\paragraph{Approximating the top singular space} The algorithm behind \cref{thm:main} is based on a direct intuition: since we want to privately estimate the projector $P_{(r)}$ onto the span of the top $r$ singular vectors of $M$, we may consider applying the Gaussian mechanism on the projector. While the resulting matrix $\hat{\mathbf Y}_{(r)} = P_{(r)} + \mathbf W$ is far from a projector, we can hope that the projector $\hat{\mathbf P}_{(r)}$ onto the space spanned by its top $r$ singular vectors is sufficiently close to $P_{(r)}$. 
Let us determine the appropriate noise scale.
To do so, we need to analyze the sensitivity of the projector. Let $P'_{(r)}$ be the projector onto the space of $r$ top singular vectors of $M' = M + E$ such that $E = e_{i} e_j^\top + e_j e_i^\top$ for $i \neq j$ (We use this simple form of $E$ for illustration; the full analysis extends to far more general perturbations.)
For this, we invoke Wedin’s theorem\footnote{Wedin's theorem is an analogue of the well-known Davis–Kahan theorem for singular value decomposition.}:
\[
\normf{P'_{(r)} - P_{(r)}} \le \frac{2\normf{EU_{(r)}}}{\sigma_r(M)-\sigma_{r+1}(M+E)}\,,
\]
where $U_{(r)}\in \R^{n\times r}$ is the matrix whose columns are the top $r$ singular vectors of $M$.
By Weyl's theorem, $\sigma_{r+1}(M+E) \le \sigma_{r+1}(M) + 1$, so if $\sigma_{r}(M) - \sigma_{r+1}(M) \ge 2$, we obtain
\[
\normf{P'_{(r)} - P_{(r)}} \le \frac{4\normf{EU_{(r)}}}{\sigma_r(M)-\sigma_{r+1}(M)}\,.
\]
Since $E = e_{i} e_j^\top + e_j e_i^\top$, it follows that  $\normf{EU_{(r)}} \le 2\normcol{U_{(r)}}$, where $\normcol{U_{(r)}}$ denotes the maximum row norm of $U_{(r)}$. Note that $\coh{r}{M} = \frac{n}{k}\normcol{U_{(r)}}^2$. Hence, substituting into the bound, we obtain:
\[
\normf{P'_{(r)} - P_{(r)}} \le \frac{8\sqrt{r \coh{r}{M}}}{\sqrt{n}\paren{\sigma_r(M)-\sigma_{r+1}(M)}}\,.
\]

Now, if $\sigma_r(M)-\sigma_{r+1}(M)$ and $\coh{r}{M}$ were \emph{not} private, we could apply the Gaussian mechanism with standard deviation $O\Paren{\frac{\sqrt{r \coh{r}{M}}}{\sqrt{n}\paren{\sigma_r(M)-\sigma_{r+1}(M)}} \cdot \frac{\sqrt{\log(1/\delta)}}{\eps}}$ thus yielding  an $(\eps, \delta)$-differentially private algorithm. 
 By applying Wedin’s theorem in spectral norm and standard concentration bounds for Gaussian matrices, we would recover the desired guarantee from \cref{thm:main} with high probability.

Unfortunately, both the spectral gap and the coherence are part of the input and therefore cannot be used without first privatizing them.
To make the algorithm fully private, we therefore turn to finding good private estimators for $\sigma_r(M)-\sigma_{r+1}(M)$ and $\coh{r}{M}$. 
Estimating the singular value gap $\sigma_r(M)-\sigma_{r+1}(M)$ is straightforward: its sensitivity is bounded by $2$ by Wedin's theorem, so we can apply the Gaussian mechanism with appropriate scaling to obtain a private estimator.
Since the value of this gap ultimately impacts the quality of the output, we can privately verify whether it is sufficiently large, and return $\bot$ otherwise.

Privatizing $\coh{r}{M}$ proves  more challenging. First, note that in general, coherence exhibits unpredictable behavior under matrix addition.
 For instance, a random Gaussian matrix with i.i.d. entries has small coherence (on the order of $\log n$), and flipping the sign of a diagonal entry does not significantly change it. 
However, subtracting one such matrix from another yields a matrix with a single nonzero entry, resulting in maximal coherence.
Fortunately, it is possible to show that the coherence values of adjacent inputs cannot differ significantly. Concretely, we show that
   \[
   \coh{r}{M + E} \le \Paren{1 + O\Paren{\frac{1}{\sigma_{r} - \sigma_{r+1}}}}\coh{r}{M}\,.
   \]
Note that with this multiplicative bound, one cannot directly apply the Gaussian mechanism to $\coh{r}{M}$ as its sensitivity depends on the value of $\coh{r}{M}$ itself and can be very large for certain inputs.
Nevertheless, this bound implies that $\log\paren{\coh{r}{M}}$ has sensitivity at most $\log\paren{1 + O\paren{1/(\sigma_{r} - \sigma_{r+1}}}$.
Applying the Gaussian mechanism with scale $O\Paren{\log\Paren{1 + O\Paren{\frac{1}{\sigma_{r} - \sigma_{r+1}}}}\cdot \frac{\sqrt{\log(1/\delta)}}{\eps}}$ to $\log\paren{\coh{r}{M}}$, yields a private estimator $\bm \ell$ of $\log\paren{\coh{r}{M}}$. That is, $\exp\paren{\bm \ell}$ serves as a private estimator of $\coh{r}{M}$ itself. 
At first glance, the error of this estimator may appear problematic, since both $\e$ and $\sqrt{\log\paren{1/\delta}}$ appear in the exponent. 
However, recall that we required $\sigma_{r} - \sigma_{r+1}$ to be small. 
In particular, this implies that
\[
\log\Paren{1 + O\Paren{\frac{1}{\sigma_{r} - \sigma_{r+1}}}}\cdot \frac{\sqrt{\log(1/\delta)}}{\eps} 
\le 
O\Paren{\frac{1}{\sigma_{r} - \sigma_{r+1}}}\cdot \frac{\sqrt{\log(1/\delta)}}{\eps} 
\le 
\log\Paren{1 +O\Paren{\frac{\sqrt{\log(1/\delta)}}{\eps \paren{\sigma_{r} - \sigma_{r+1}} }}}\,.
\]
The value $O(\sqrt{\log(1/\delta)}/\eps \paren{\sigma_{r} - \sigma_{r+1}})$ is small --otherwise the  procedure for privatizing $\sigma_{r} - \sigma_{r+1}$ would have returned $\bot$. Hence, after exponentiating, we get 
$\tfrac{1}{2}\coh{r}{M} \le \exp\paren{\bm \ell} \le 2\coh{r}{M}$ with high probability. 

Finally, observe that the standard composition theorem does not work here, since if the inner algorithms fail, then the privacy of the outer algorithm (Gaussian mechanism) is not guaranteed. Fortunately, a variant of the Propose–Test–Release paradigm \cite{dwork2009differential} shows that the composition is still differentially private if we include the failure probabilities in the privacy budget (this results in the $\log(1/\delta)$ term added to $r\cdot \coh{r}{M}$ in our error bound). 

\paragraph{Coherence of our estimator.}
As mentioned earlier, bounding the coherence of adjacency matrices is a challenging task—even for purely random graphs—and becomes significantly harder for graphs with more intricate structure, such as random graphs with planted cliques or bicliques. Fortunately, a continuous analogue of this problem is more tractable: namely, the model $A + \mathbf{W}$, where $A \in \mathbb{R}^{n \times n}$ is a rank-$r$ matrix and $\mathbf{W} \sim N(0,1)^{n \times n}$. In this setting, we can derive a strong upper bound on the growth of coherence. Specifically, we show that
\[
\coh{r}{A + \mathbf{W}} \le O\Paren{\coh{r}{A} + \frac{\log(n)}{r}}
\]
with high probability.
We now briefly describe the main idea behind our analysis. It can be shown that for any matrix $E \in \mathbb{R}^{n \times n}$,
\[
\coh{r}{A + E} \le O\Paren{\coh{r}{A} + \frac{n}{r} \normcol{(\Id_n - P_{(r)}) \hat{U}_{(r)}}^2},
\]
where $P_{(r)}$ is the projector onto the column space of $A$, $\hat{U}_{(r)}$ is the matrix of the top $r$ left singular vectors of $A + E$, and $\normcol{B}$ denotes the maximum $\ell_2$ norm of the rows of $B$. Thus, it suffices to bound $\normcol{(\Id_n - P_{(r)}) \hat{U}_{(r)}}$ when $E = \mathbf{W}$.

To this end, we leverage the rotational symmetry of the Gaussian matrix. Consider a random rotation matrix $\mathbf{R} \in \mathbb{R}^{(n - r) \times (n - r)}$ acting on the orthogonal complement of the column space of $A$ and independent of $\mathbf{W}$. The distribution of $A + \mathbf{W}$ remains unchanged under this transformation, and hence so does the distribution of $\hat{U}_{(r)}$. In other words, $\hat{U}_{(r)}$ and $\mathbf{R} \hat{U}_{(r)}$ are identically distributed. Since $\mathbf{R}$ and $\hat{U}_{(r)}$ are independent, we may condition on $\hat{U}_{(r)}$ and treat it as fixed in the subsequent analysis. Therefore, it suffices to bound
\[
\normcol{(\Id_n - P_{(r)}) \mathbf{R} \hat{U}_{(r)}}.
\]
We show that the rows of $(\Id_n - P_{(r)}) \mathbf{R} \hat{U}_{(r)}$ satisfy strong Hanson–Wright-type concentration bounds, which yield sharp high-probability control of their norms.

In particular, when $A$ is the projector onto the top-$r$ singular subspace of an input matrix $M \in \mathbb{R}^{n \times n}$, our result implies that the estimator produced by our algorithm (as stated in \cref{thm:main}) is incoherent, provided that $M$ itself is incoherent. This holds because the estimator is obtained via the Gaussian mechanism.

Since this analysis relies heavily on the rotational symmetry of Gaussian matrices, it is not clear how to extend it to more structured or discrete settings—such as planted graph problems. Nonetheless, we conjecture that an analogous statement should hold in those cases as well.

\subsection{Private CSP solvers via differentially private PCA}
Our starting point towards \cref{thm:dp-max-cut-easy} --and its extensions to \twocsp and \maxbisection-- is the classic algorithm of Barak, Raghavendra and Steurer \cite{barak2011rounding} based on the sum-of-squares framework (we direct the unfamiliar reader to \cref{sec:sos-background}).
Given a graph $G$ with adjacency matrix $A$, normalized adjacency matrix $\bar{A}$ and maximum cut of value $\OPT,$ this allows one to find a cut with value $\OPT(1-\eta)$ in time $n^{O(1)}\cdot\exp\set{O(\simul_{\eta}(\bar{A})/\eta^2)}$ where $n$ is the number of vertices and $\simul_{\eta}(\bar{A})$ is the number of singular values of $\bar{A}$ larger than $\eta.$\footnote{More precisely, the running time of the algorithm in\cite{barak2011rounding} is parametrized by the number of eigenvalues larger than $\eta.$ For consistency, throughout the paper we limit our discussion to singular values. Nevertheless, the running time of our differentially private CSP solvers can also be parametrized by the number of eigenvalues larger than $\eta.$}

To make this algorithm differentially private, our plan is as follows: (1) construct a private estimate $\hat{\mathbf{A}}$ of the normalized adjacency matrix of $G;$ (2) construct a synthetic graph $\hat{\mathbf{G}}$ from $\hat{\mathbf{A}};$ and (3) run the aforementioned non-private algorithm on this synthetic graph.
For this strategy to succeed, we need  the normalized adjacency matrix of the resulting synthetic graph $\bar{A}(\hat{\mathbf{G}})$ to satisfy $\simul_{\eta}(\bar{A}(\hat{\mathbf{G}})\approx\simul_\eta(\bar{A})$ and the graph itself to remain similar to the original one in the sense: $\Norm{A(\hat{\mathbf{G}})-A}\leq \gamma$, for some small $\gamma\in(0,1)$.
Indeed, the first property ensures the running time remains $n^{O(1)}\exp\set{\simul_{\eta}(\bar{A})/\eta^2},$ and the second that the optimal cut in $G$ remains close to the optimal cut in $\hat{\mathbf{G}}$ as
\begin{align*}
    \forall x\in\set{-1,1}^n,\qquad \tfrac{1}{n}\abs{\iprod{x,Ax}-\iprod{x,A(\hat{\mathbf{G}})x}}\leq  \gamma.
\end{align*}

To address these requirements, we apply \cref{thm:main} to the normalized adjacency matrix of $G.$  However, note that the sensitivity of $\bar{A}$ is proportional to $1/d_{\min}(G),$ and thus varies accross adjacent inputs. 
To work around this, we privatize the degree  matrix $D$ of $G$ by adding Gaussian noise with variance $\log(1/\delta)/\eps^2,$ yielding $\hat{\mathbf{ D}}$.
If the degrees are at least of order $\tau:=\sqrt{\log (n)\log (1/\delta)}/\eps,$ then this perturbation will only alters them by at most a constant factor, in the sense that:
\begin{align}\label{eq:techniques-infty}
    \normi{D-\hat{\mathbf{D}}}\leq O(\sqrt{\log (n)\log (1/\delta)}/\eps) \leq \tfrac{1}{2}\normi{D}.
\end{align}
This, in turn, allows us to estimate the sensitivity of $\bar{A}$ and  construct a privatization mechanism based on that estimate. Moreover, if $d_{\min}(G)\geq\tau/\gamma'$ for some small $\gamma',$ then \cref{thm:main} produces a rank-$r$ matrix $\hat{\mathbf{A}}'$ satisfying
$\Norm{\hat{\mathbf{A}}' - \bar{A}}\leq \sigma_{r+1} + \gamma',$ which implies by Weyl's inequality that $\simul_{\eta-\gamma'}(\hat{\mathbf{A}}')\leq \simul_\eta(\bar{A}).$ That is, roughly the required bounds.

While this represents significant progress, the matrix $\hat{\mathbf{A}}'$ may contain negative entries and thus cannot be directly used to construct a graph.
Let $\hat{\mathbf{A}}$ be the projection of $\hat{\mathbf{A}}'$ onto the intersection of the spectral norm ball of radius $\Norm{\hat{\mathbf{A}}'}+\sigma_{r+1} + \gamma'$ and the set of matrices with non-negative entries, which contains $\bar{A}$ by construction. Then, by triangle inequality,
\begin{align}\label{eq:techniques-spectral}
    \Norm{\hat{\mathbf{A}}'-\bar{A}}\leq \Norm{\hat{\mathbf{A}}'-\hat{\mathbf{A}}}+\Norm{\hat{\mathbf{A}}-\bar{A}}\leq 2\Norm{\hat{\mathbf{A}}-\bar{A}}\leq (\sigma_{r+1} + \gamma').
\end{align}
Because $\Norm{\hat{\mathbf{A}}'-\bar{A}}$ is small, it follows that with high probability the maximum cut of $G$ and the graph $\hat{\mathbf{G}},$ whose adjacency matrix  is $\hat{\mathbf{D}}^{1/2}\hat{\mathbf{A}}\hat{\mathbf{D}}^{1/2},$ are closely related. Specifically, for any $x\in\set{\pm 1}^n,$ we have
\begin{align*}
    \iprod{x, (A-\hat{\mathbf{D}}^{1/2}\hat{\mathbf{A}}\hat{\mathbf{D}}^{1/2})x} &=\iprod{x, (D^{1/2}\bar{A}D^{1/2} -(D^{1/2}-D^{1/2} + \hat{\mathbf{D}}^{1/2})\hat{\mathbf{A}}(D^{1/2}-D^{1/2} + \hat{\mathbf{D}}^{1/2}))x}\\
    &\leq \iprod{x, D^{1/2}(A-\hat{\mathbf{A}})D^{1/2}x} + \iprod{x,(\hat{\mathbf{D}}^{1/2}-D^{1/2})\hat{\mathbf{A}}(\hat{\mathbf{D}}^{1/2}-D^{1/2})x}
\end{align*}
Here the first term is small by \cref{eq:techniques-spectral} and the second by \cref{eq:techniques-infty}. 

\section{Future work}\label{sec:future-work}
Apart from the challenging graph-related problems discussed in \cref{sec:conjecture}, there are a few other interesting unresolved directions.
\cite{balcan2016improved} showed that the algorithm of Hardt and Price~\cite{hardt2014noisy} enjoys stronger guarantees than those of \cref{thm:hardt-price} when \( M \in \mathbb{R}^{n \times n} \) is positive semidefinite. Specifically, in the error bound, the denominator \( \sigma_r - \sigma_{r+1} \) can be replaced by \( \sigma_r - \sigma_{r'+1} \). It would be interesting to investigate whether a similar improvement is possible in the general case—or at least in the PSD case—while still maintaining the strong guarantees of our \cref{thm:main}.

A second direction concerns an even weaker notion of coherence studied in~\cite{candes2012exact}. This version exploits the fact that, for random asymmetric matrices, the left and right singular vectors are uncorrelated. Our approach—as well as any other method based on symmetrization—fails to capture this phenomenon. It would be interesting to explore whether the guarantees of \cref{thm:main} can be extended to this weaker notion of coherence.

\section{Preliminaries}\label{sec:preliminaries}
We denote random variables in \textbf{boldface}.  For simplicity of the exposition, we ignore bit complexity issues and assume all quantities to be polynomially bounded in the ambient dimension. We write $\Tilde{O}$ to hide polylogarithmic factors. 
For a vector $v\in\R^{n}$ we write $\Norm{v}$ for its Euclidean norm and $\Normo{v}$ for its $\ell_1$-norm. We write $\mathbb{1}_n$ for the $n$-dimensional all ones vector and $\Id_n$ for the $n$-by-$n$ identity matrix.
For a matrix $A\in\R^{n\times n}$, let 
be its $\sigma_1(A)\geq\ldots\geq \sigma_n(A)$ its singular values. We write $A\sge 0$ to denote that the matrix is positive semidefinite. 
We denote by $\Norm{A}$ the spectral norm of $A$, by $\Normf{A}$ the Frobenius norm, by\footnote{Sometimes the notation $\normo{A}$ is used for another matrix norm. In this paper this notation always means the sum of absolute values of all entries of $A$.} $\Normo{A}=\sum_\ij \Abs{A_\ij}$, by $\Normi{A}=\max_{\ij}\Abs{A_\ij}$, and by $\normcol{A}$ the maximal $\ell_2$ norm of the rows of $A$.
We let $A_{(k)}$ be the rank-$k$ matrix minimizing $\Normf{A-A_{(k)}}.$
We define $D(A)\in\R^{n\times n}$ to be the diagonal matrix with entries $D(A)_{ii}=\Normo{A_i}.$ We write $D(A)^{-1/2}\in \R^{n\times n}$ for  the matrix entries
\begin{align*}
    D(A)^{-1/2}_{ii}=
    \begin{cases}
        0&\text{ if }D(A)_{ii}=0,\\
        1/\sqrt{D(A)_{ii}}& \text{otherwise.}    
    \end{cases}
\end{align*}
We write $\bar{A}$ for the matrix $D(A)^{-1/2}AD(A)^{-1/2}.$

\begin{definition}[$\tau$-threshold  rank]
    For a  matrix $A\in\R^{n\times n}$, the $\tau$-threshold rank $\simul_{\tau}(A)$ is the number of singular values of value at least $\tau.$
\end{definition}

We denote by $N(\mu,\Sigma)$ the multivariate Gaussian distribution with mean $\mu\in\R^n$ and covariance $\Sigma\in\R^{n\times n}.$ We say that an event holds with high probability if it holds with probability $1-o_n(1).$ We do not specify the subscripts when the context is clear.
A weighted graph is a triplet $G=(V,E,w)$ where $w:V\times V\to\R_{\ge 0}$ is the weight function. We denote by $w(G)$ the total weight of the edges in $G.$
For  graph $G$, the degree of vertex $i\in V(G)$, denoted by $d_G(i)$ is the sum of the weights of its edges. We define $d_{\min}(G)=\min_{v\in V(G)}d_G(v).$ 
We write $A(G)\in\R^{n\times n}$ for the  adjacency matrix of $G$ and  $D(G)\in\R^{n\times n}$ for the diagonal matrix with entries $D_{ii}=d_G(i).$
We denote the neighborhood of $i\in V(G)$ by $N_G(i).$
The normalized adjacency matrix of $G$ is then $\Bar{A}:=D^{-1/2}AD^{-1/2}.$
Note that by construction $\Norm{\bar{A}}\leq 1.$ 
We often refer to the singular values/vectors of $\bar{A}$ as the singular values/vectors of $G$ and denote them by 
$\sigma_1(G),\ldots,\sigma_n(G).$ 
We write $\coh{r}{G}$ for the coherence of its normalized adjacency matrix.  
We consider both simple graphs, as well as graphs with non-negative edge weights and self-loops. 

\paragraph{Differential privacy} We use the following definitions of adjacency:
\begin{definition}[Matrix adjacency]\label{def:matrix-adjacency}
    Two matrices $A,A'\in \R^{n\times n}$ are $\Delta$-adjacent if 
    \[
    \sqrt{\normo{EE^\top}} :=\sqrt{\sum\limits_{1\le i,j\le n}\abs{\paren{EE^\top}_{ij}}}\le \Delta\,,
    \]
    where $E = A' - A$. This definition generalizes the standard notion one entry adjacency. See \cref{sec:adjacency} for comparison of different notions of matrix adjacency in the context of private low rank approximation.
\end{definition}
\begin{definition}[Graph adjacency]\label{def:graph-adjacency}
    Two $n$-vertices graphs $G,G'$ are adjacent if they differ in at most one edge. 
\end{definition}
\noindent Note that this implies the corresponding adjacency matrix are $2$-adjacent. We introduce standard differential privacy definitions and mechanisms in \cref{sec:dp-background} and the necessary sum-of-squares background in \cref{sec:sos-background}.

\section{Differentialy private low rank matrix estimation}\label{sec:privatization-mechanism}

In this section we prove \cref{thm:main}. We restate a more general version in this section.

\begin{theorem}\label{thm:main-coherence}
    Let $M \in \R^{n\times n}$ be a symmetric matrix, and $p \le \delta/10$. There exists an efficient, $(\eps,\delta)$-differentially private algorithm (with respect to \cref{def:matrix-adjacency} of $\Delta$-adjacency) that, given $M\in\R^{n\times n},r, \Delta$  returns a rank-$r$ symmetric matrix $\hat{\mathbf{M}}_{(r)}\in \R^{n\times n}$ such that with probability at least $1-p-2^{-n}$,
    \begin{align*}
        \Norm{M-\hat{\mathbf{M}}_{(r)}}\leq \sigma_{r+1}+O\Paren{\sigma_1\cdot\frac{\Delta\sqrt{r \cdot \coh{r}{M} + \log\paren{1/p} }}{\sigma_r-\sigma_{r+1}} \cdot \frac{\sqrt{\log(1/\delta)}}{\eps}}\,,
    \end{align*}
    and 
    \[
           \Norm{\paren{\Id_n - \hat{\mathbf P}_{(r)}} U_{(r)}}
       \leq O\Paren{\frac{\Delta\sqrt{r \cdot \mu_r(M) + \log\paren{1/p} } }{\sigma_r - \sigma_{r+1}} 
       \cdot \frac{\sqrt{\log(1/\delta)}}{\eps}}\,,
    \]
    where $\hat{\mathbf P}_{(r)}$ is the projector onto the column span of $\hat{\mathbf{M}}_{(r)}$, and $U_{(r)}$ is a matrix whose columns are $r$ leading singular vectors of $M$. In addition, 
    \[
     { \coh{r}{\hat{\mathbf{M}}_{(r)}}} 
    \le O\Paren{{\coh{r}{M}} +{\frac{\log\paren{n/p}}{r}} } \,.
    \]
    
    Furthermore, there exist absolute constants $C, C'$, such that if
    \[
    \frac{\Delta\sqrt{r \cdot \coh{r}{M} + \log\paren{1/p} }}{\sigma_r-\sigma_{r+1}} \cdot \frac{\sqrt{\log(1/\delta)}}{\eps} \leq 1/C'\,,
    \]
    then
        \[
    { \coh{r}{\hat{\mathbf{M}}_{(r)}}}  \ge \frac{1}{C}\sqrt{\coh{r}{M}} - 
    C{\frac{\log\paren{n/p}}{r}}\,.
    \]
\end{theorem}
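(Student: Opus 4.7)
The plan is to build the algorithm in three layered stages, following the intuition sketched in Section~\ref{sec:techniques}. First, I would privately release a lower bound $\hat{\mathbf g}$ on the spectral gap $\sigma_r - \sigma_{r+1}$: by Weyl's theorem this quantity has additive sensitivity $O(\Delta)$, so a Gaussian mechanism of scale $O(\Delta \sqrt{\log(1/\delta)}/\eps)$ suffices, and I would abort (return $\bot$ or a trivial output) whenever the private estimate is not safely above the target error. Second, I would privately release $\hat{\bm \mu}$, an estimate of the rank-$r$ coherence $\coh{r}{M}$. The key sensitivity lemma here is that $\coh{r}{M+E} \le \Paren{1+O(\Delta/(\sigma_r-\sigma_{r+1}))}\coh{r}{M}$, which I would prove from Wedin's theorem applied to $P_{(r)}$ plus the triangle inequality in the column norm. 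This gives additive sensitivity $O(\Delta/\hat{\mathbf g})$ for $\log \coh{r}{M}$, so a Gaussian mechanism in log-scale yields a multiplicatively-accurate private estimate provided $\hat{\mathbf g}$ passed the test. Third, I would release $\hat{\mathbf Y} = P_{(r)} + \mathbf W$, where $\mathbf W$ is Gaussian with entrywise standard deviation $\sigma \asymp \frac{\Delta\sqrt{r\hat{\bm\mu}}}{\sqrt n\, \hat{\mathbf g}}\cdot\frac{\sqrt{\log(1/\delta)}}{\eps}$, which is calibrated to the Wedin bound $\bignormf{P_{(r)}' - P_{(r)}} \le O(\Delta\sqrt{r\coh{r}{M}/n}/(\sigma_r-\sigma_{r+1}))$. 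The final estimator $\hat{\mathbf M}_{(r)}$ is the rank-$r$ truncation of $\hat{\mathbf Y}$ rescaled to match a privately released estimate of $\sigma_1,\dotsc,\sigma_r$.

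\paragraph{Privacy and utility.}
Privacy follows by composition through a Propose--Test--Release style argument \cite{dwork2009differential}: on the event that the gap and coherence estimates are accurate (which holds except with probability $p$), the calibrated Gaussian mechanism applied to the projector is $(\eps,\delta)$-DP, and on the complementary event the failure probability is absorbed into $\delta$. This is where the $\log(1/p)$ term enters the error bound additively. For utility, I would concentrate $\Bignorm{\mathbf W U_{(r)}} \lesssim \sigma \cdot (\sqrt r + \sqrt{\log(1/p)})$ by a standard Gaussian operator-norm bound on the rectangular slice, and then plug into Wedin's theorem to obtain both the spectral bound $\bignorm{M - \hat{\mathbf M}_{(r)}} \le \sigma_{r+1} + O(\sigma_1\cdot(\text{Wedin term}))$ and the subspace closeness bound on $\bignorm{(\Id_n-\hat{\mathbf P}_{(r)})U_{(r)}}$.

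\paragraph{Coherence preservation.}
The hardest piece is the coherence guarantee for $\hat{\mathbf M}_{(r)}$. I would first prove a deterministic lemma of the form
\[
\coh{r}{P_{(r)} + E} \le O\Paren{\coh{r}{M}} + O\Paren{\tfrac{n}{r}\bignormcol{(\Id_n-P_{(r)})\hat{\mathbf U}_{(r)}}^2}\,,
\]
reducing the task to bounding the column (row) norm of the orthogonal component of $\hat{\mathbf U}_{(r)}$ when $E=\mathbf W$ is Gaussian. For that I would use rotational invariance: let $\mathbf R$ be a Haar-random rotation on the orthogonal complement of $\Span(U_{(r)})$, independent of $\mathbf W$; then $P_{(r)}+\mathbf W$ is invariant in law under conjugation by $\mathrm{diag}(\Id_r,\mathbf R)$, and hence $\hat{\mathbf U}_{(r)} \stackrel{d}{=} \mathrm{diag}(\Id_r,\mathbf R)\hat{\mathbf U}_{(r)}$. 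Conditioning on $\hat{\mathbf U}_{(r)}$ and applying a Hanson--Wright-type bound to each row of $(\Id_n-P_{(r)})\mathbf R\hat{\mathbf U}_{(r)}$ (which is a quadratic form in the Haar entries of $\mathbf R$) produces a row-norm concentration of order $\sqrt{r/n} + \sqrt{\log(n/p)/n}$. Squaring and scaling by $n/r$ yields the additive $\log(n/p)/r$ term in the coherence bound. The lower bound on $\coh{r}{\hat{\mathbf M}_{(r)}}$ in the last display would follow by a matching anti-concentration of the same quadratic form.

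\paragraph{Main obstacles.}
I expect two steps to require the most care. The first is the multiplicative sensitivity bound for coherence: translating the Wedin bound on $\bignormf{P_{(r)}'-P_{(r)}}$ into an $\normcol{\cdot}$-type bound is delicate because $\normcol{\cdot}$ is not controlled by Frobenius norm in general; one has to exploit that the perturbation $E$ has rank bounded in terms of $\Delta$ and use the explicit structure $EU_{(r)}$. The second obstacle is the Propose--Test--Release composition with the Gaussian mechanism whose scale itself depends on private estimates, because if the inner tests fail, the outer mechanism is no longer calibrated; handling this cleanly requires absorbing the failure probability into $\delta$ and verifying that the noise scale $\sigma$ is only ever used after passing the tests. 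Once those two ingredients are in place, the remaining calculations---Wedin, Gaussian operator-norm concentration, and the Hanson--Wright argument for rows---are standard.
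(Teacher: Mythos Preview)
Your plan follows the paper's architecture closely: private gap estimate, private coherence estimate via log-scale Gaussian mechanism, Gaussian mechanism on the projector $P_{(r)}$, and then the coherence-preservation argument via rotational invariance and Hanson--Wright. Two points deserve correction.

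\paragraph{The low-rank estimator.} Your proposed construction of $\hat{\mathbf M}_{(r)}$ as ``the rank-$r$ truncation of $\hat{\mathbf Y}$ rescaled to match a privately released estimate of $\sigma_1,\dotsc,\sigma_r$'' does not work. The top-$r$ singular vectors of $\hat{\mathbf Y}=P_{(r)}+\mathbf W$ approximate the \emph{subspace} $\Span(U_{(r)})$, but since $P_{(r)}$ has all top singular values equal to $1$, the ordering of those singular vectors is determined entirely by the noise $\mathbf W$ and carries no information about which direction corresponds to which $\sigma_i$. So pairing them with private estimates $\hat\sigma_1,\dotsc,\hat\sigma_r$ in any fixed order can produce an estimator that is $\Theta(\sigma_1)$ away from $M$ in spectral norm even when the subspace is recovered perfectly. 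The paper instead takes $\hat{\mathbf U}$ (any orthonormal basis for the column span of $\hat{\mathbf P}$, already private by post-processing), computes the $r\times r$ matrix $\mathbf L=\hat{\mathbf U}^\top M\hat{\mathbf U}$, observes that $\mathbf L$ has Frobenius sensitivity at most $\Delta$, adds $r\times r$ symmetric Gaussian noise $\mathbf W$ of scale $O(\Delta\sqrt{\log(1/\delta)}/\eps)$, and outputs $\hat{\mathbf M}_{(r)}=\hat{\mathbf U}(\mathbf L+\mathbf W)\hat{\mathbf U}^\top$. This step uses $M$ a second time and is where the $\sigma_1$ factor in the spectral error bound arises, via $\|M-\hat{\mathbf P}M\hat{\mathbf P}\|\le\sigma_{r+1}+O(\sigma_1\|\hat{\mathbf P}-P\|)$.

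\paragraph{The coherence lower bound.} You propose anti-concentration of the Hanson--Wright quadratic form. The paper does not need this: the same deterministic triangle inequality you use for the upper bound,
\[
\Bigabs{\sqrt{\normi{\hat{\mathbf P}}}-\sqrt{\normi{P}}}\le\|\mathbf E\|\sqrt{\normi{P}}+\normcol{(\Id_n-P)\hat{\mathbf U}},
\]
already gives a lower bound on $\normi{\hat{\mathbf P}}$ once $\|\mathbf E\|\le 0.99$ (which is exactly the hypothesis in the ``furthermore'' clause). The high-probability upper bound on $\normcol{(\Id_n-P)\hat{\mathbf U}}$ then suffices.

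\paragraph{A smaller remark.} Your first stated obstacle---controlling $\normcol{\cdot}$ by the Wedin Frobenius bound---is resolved more simply than you anticipate: the paper bounds $\normcol{(\Id-UU^\top)U'}\le\|(\Id-UU^\top)U'\|$ by the operator norm, applies Wedin in operator norm with numerator $\normf{EU}$, and then uses H\"older $\normf{EU}^2=\iprod{UU^\top,E^\top E}\le\normo{E^\top E}\cdot\normi{P}\le\Delta^2\cdot r\coh{r}{M}/n$. No rank structure of $E$ is used.
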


We will prove this theorem in several steps. First, we need to estimate the spectral gap.

\subsection{Differentially private spectral gap estimation}

By Weyl's theorem, the sensitivity of the spectral gap is bounded by $2\Delta$, and hence the gap can be estimated via Gaussian mechanism. Conceretely, we use the following algorithm:

\begin{algorithmbox}\label{alg:private-gap}
    \textsc{PrivateGapEstimator}\\
    \textbf{Input:} $\e,\delta$,  failure probability $0 < p < 1/2$, symmetric matrix $M \in \mathbb{R}^{n \times n}$, $r\in [n]$, $\Delta > 0$.\\
    \textbf{Hyperparameter:} Absolute constant $C'$.\\
    \textbf{Output:} Gap estimator $\hat{\bm\gamma}$.
    \begin{enumerate}
    \item Compute $\sigma_r$ and $\sigma_{r+1}$.
        \item $\hat{\bm \gamma} \gets \sigma_r - \sigma_{r+1} + \mathbf g$, 
        where $ \mathbf g\sim N\Paren{0, C'\cdot \frac{\Delta^2{\log(1/\delta)}}{\eps^2}}$.
        \item If 
        $\hat{\bm \gamma} < C'\cdot \frac{\Delta\sqrt{\log(1/\delta)}}{\eps}\cdot\sqrt{\log\paren{1/p}}$,
        return $\bot$.
        \item return $\hat{\bm \gamma}$.
    \end{enumerate}
\end{algorithmbox}

\begin{lemma}\label{lem:private-gap}
If $C'$ is large enough, then
    \cref{alg:private-gap} is $(\eps,\delta)$-differentially private, and 
    if 
    \[
    \frac{\Delta}{\sigma_r-\sigma_{r+1}} \cdot \frac{\sqrt{{\log\paren{1/p} }\cdot \log(1/\delta)}}{\eps} \le 1/C'\,,
    \]
    then
    with probability $1-p$ its output $\hat{\bm \gamma}$ satisfies
    \[
    \frac{1}{2} \Paren{\sigma_r - \sigma_{r+1}} \le \hat{\bm \gamma} \le 2\Paren{\sigma_r - \sigma_{r+1}}\,.
    \]
\end{lemma}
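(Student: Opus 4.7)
The proof decomposes naturally into two parts: a privacy argument and a utility argument. For privacy, the plan is to show that the map $M \mapsto \sigma_r(M) - \sigma_{r+1}(M)$ has $\ell_2$-sensitivity at most $2\Delta$ under \cref{def:matrix-adjacency}, and then invoke the standard Gaussian mechanism together with the fact that thresholding against a fixed constant is post-processing. For utility, the plan is to combine a standard Gaussian tail bound with the hypothesis on $\Delta / (\sigma_r - \sigma_{r+1})$ to show that the noise magnitude is bounded by half of the true gap with probability $1-p$.

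For the sensitivity bound, let $M' = M + E$ with $\sqrt{\normo{EE^{\top}}} \le \Delta$. Since $\norm{EE^{\top}}_{\mathrm{op}} \le \normf{EE^{\top}} \le \normo{EE^{\top}} \le \Delta^2$, we get $\normop{E} \le \Delta$. By Weyl's inequality, $\abs{\sigma_i(M) - \sigma_i(M')} \le \normop{E} \le \Delta$ for each $i$, so
\[
\Abs{\Paren{\sigma_r(M) - \sigma_{r+1}(M)} - \Paren{\sigma_r(M') - \sigma_{r+1}(M')}} \le 2\Delta.
\]
Choosing the constant $C'$ large enough (any $C' \ge c_0$ for a universal $c_0$ stemming from the Gaussian mechanism with sensitivity $2\Delta$ suffices), the noisy release $\sigma_r - \sigma_{r+1} + \mathbf g$ is $(\eps,\delta)$-DP. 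The final thresholding step is a deterministic function of the noisy release and so does not affect privacy.

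For utility, standard sub-Gaussian tail bounds give that with probability at least $1-p$,
\[
\abs{\mathbf g} \le C_1 \sqrt{C'} \cdot \frac{\Delta \sqrt{\log(1/\delta) \log(1/p)}}{\eps}
\]
for some absolute constant $C_1$. Under the hypothesis of the lemma, the right-hand side is at most $C_1/\sqrt{C'} \cdot (\sigma_r - \sigma_{r+1})$, which is bounded by $(\sigma_r - \sigma_{r+1})/2$ once $C'$ is chosen sufficiently large (e.g., $C' \ge 4 C_1^2$). The two-sided bound $\tfrac{1}{2}(\sigma_r - \sigma_{r+1}) \le \hat{\bm\gamma} \le 2(\sigma_r - \sigma_{r+1})$ then follows immediately. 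One also needs to verify that on this good event $\hat{\bm\gamma}$ exceeds the threshold so that $\bot$ is not returned; this is immediate because $\hat{\bm\gamma} \ge \tfrac{1}{2}(\sigma_r-\sigma_{r+1})$ is, by the same hypothesis, at least a constant multiple of $\Delta\sqrt{\log(1/\delta)\log(1/p)}/\eps$, and the threshold constant in the algorithm is chosen small enough relative to $C'$ so that this inequality is strict.

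There is no substantive obstacle in this proof; it is essentially a textbook application of the Gaussian mechanism combined with Weyl's inequality. The only point requiring care is to coordinate the constants between the noise scale, the threshold in Step 3, and the hypothesis $\le 1/C'$, which can always be arranged by taking $C'$ sufficiently large (and, if needed, replacing the threshold constant by a suitable fraction of $C'$).
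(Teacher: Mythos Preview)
Your proposal is correct and follows essentially the same approach as the paper's proof: Weyl's inequality for the sensitivity bound, the Gaussian mechanism for privacy, and a Gaussian tail bound for utility. Your version is in fact more careful than the paper's, which does not explicitly verify that $\bot$ is avoided on the good event nor spell out the operator-norm bound $\normop{E}\le\Delta$.
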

\begin{proof}
    By Weyl's theorem \cref{thm:weyl}, $\sigma_r - \sigma_{r+1}$ has $\ell_2$ sensitivity $2\Delta$. Hence by \cref{thm:gaussian-mechanism}, the algorithm is $(\eps,\delta)$-differentially private. Since with probability $1-p$, 
    \[
    \abs{\mathbf g} \le O\Paren{\sqrt{C'} \cdot \frac{\Delta\sqrt{\log(1/\delta)}}{\eps}\cdot\sqrt{\log\paren{1/p}}}\,,
    \]
    we get the desired bound.
\end{proof}

\subsection{Differentially private coherence estimation}

Unlike the spectral gap, the coherence does not necessarily have good enough $\ell_2$ sensitivity. First, we show that it can only change by a factor close to $1$:

\begin{lemma}[Sensitivity of Coherence]\label{lem:coherence-sensitivity}
    Let $M \in \R^{n\times n}$ be a symmetric matrix, and let $E$ satisfy $\sqrt{\sum_{ij}\abs{\paren{EE^\top}_{ij}}}\le \Delta$.
    If $\sigma_r-\sigma_{r+1} > 2\Delta$, then 
   \[
   \coh{r}{M + E} \le \Paren{1 + O\Paren{\frac{\Delta}{\sigma_{r} - \sigma_{r+1}}}}\coh{r}{M}\,.
   \]
\end{lemma}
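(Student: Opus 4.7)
My plan is to control $W := (\Id_n - P_{(r)}) U'_{(r)}$ --- the ``transverse'' part of the perturbed top-$r$ singular vectors --- and to close the argument with a self-consistent inequality on the coherence. Write $U'_{(r)} = U_{(r)} R + W$ with $R = U_{(r)}^\top U'_{(r)}$; since $\Norm{R} \le 1$, and since $\Normi{P} = \max_i P_{ii}$ for any PSD projector, the claim reduces to showing
\[
\max_i (P'_{(r)})_{ii} \,\le\, \bigl(\sqrt{m} + \max_i \Norm{W^\top e_i}\bigr)^2 \,\le\, (1 + O(\alpha))\, m,
\]
where $m := \max_j (P_{(r)})_{jj}$ and $\alpha := \Delta/(\sigma_r - \sigma_{r+1})$. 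Hence the whole task reduces to a row-wise bound on $W$.

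Next I would derive a Sylvester equation for $W$. Since $M$ is symmetric it commutes with $P_{(r)}$, and combining this with $(M + E) U'_{(r)} = U'_{(r)} \Lambda'_{(r)}$ yields
\[
M_\perp W - W \Lambda'_{(r)} \,=\, -(\Id_n - P_{(r)}) E U'_{(r)}, \qquad M_\perp := (\Id_n - P_{(r)}) M (\Id_n - P_{(r)}).
\]
By Weyl's inequality the spectra of $M_\perp|_{\mathrm{Im}(\Id_n - P_{(r)})}$ and of $\Lambda'_{(r)}$ are separated by at least $(\sigma_r - \sigma_{r+1}) - \Delta$. Expanding $W$ in the eigenbasis of $M_\perp$ and applying Cauchy--Schwarz entry by entry then gives
\[
\Norm{W^\top e_i}^2 \,\le\, \frac{\bigl(1 - (P_{(r)})_{ii}\bigr) \cdot \Normf{E U'_{(r)}}^2}{\bigl((\sigma_r - \sigma_{r+1}) - \Delta\bigr)^2},
\]
where the factor $1 - (P_{(r)})_{ii}$ arises from $\Norm{(\Id_n - P_{(r)}) e_i}^2$.

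The crucial closing step is a bootstrap through the coherence of $M + E$ itself. Since $E$ is symmetric, $\Normf{E U'_{(r)}}^2 = \mathrm{tr}(P'_{(r)} E^2)$, and H\"older's inequality $|\mathrm{tr}(AB)| \le \Normi{A}\,\Normo{B}$ applied with $A = P'_{(r)}$ and $B = E^2$, combined with the adjacency hypothesis $\Normo{E^2} = \Normo{EE^\top} \le \Delta^2$, yields
\[
\Normf{E U'_{(r)}}^2 \,\le\, \coh{r}{M+E} \cdot \frac{r}{n} \cdot \Delta^2.
\]
Substituting back and converting everything into coherences via $\coh{r}{\cdot} = (n/r)\Normi{\cdot}$ produces the self-referential inequality
\[
\coh{r}{M+E} \,\le\, \coh{r}{M} + 4\alpha \sqrt{\coh{r}{M}\,\coh{r}{M+E}} + 4\alpha^2\, \coh{r}{M+E},
\]
which, viewed as a quadratic in $\sqrt{\coh{r}{M+E}/\coh{r}{M}}$, solves to $\coh{r}{M+E} \le \coh{r}{M}/(1 - 2\alpha)^2 = (1 + O(\alpha))\coh{r}{M}$.

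The main obstacle is that a direct Davis--Kahan / Wedin bound of the form $\Norm{W^\top e_i} \le \Norm{P'_{(r)} - P_{(r)}} \lesssim \alpha$ is short by a factor of $\sqrt{m}$, which is fatal when $m = \Theta(r/n)$ (i.e., when the coherence is close to $1$). Recovering this $\sqrt{m}$ requires (i) working with the Sylvester equation so that the effective perturbation driving $W$ is $(\Id_n - P_{(r)}) E U'_{(r)}$ rather than $E$ itself, and (ii) bounding $\Normf{E U'_{(r)}}^2$ through $\coh{r}{M+E}$ via the H\"older trace inequality --- which is what introduces $\coh{r}{M+E}$ on both sides and forces the self-consistent quadratic.
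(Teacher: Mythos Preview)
Your argument is correct, but it is considerably more elaborate than the paper's and the extra machinery is unnecessary. The paper's proof is three lines: write $\sqrt{\coh{r}{M+E}} = \sqrt{n/r}\,\normcol{U'} \le \sqrt{n/r}\,\normcol{U} + \sqrt{n/r}\,\normcol{(\Id-P)U'}$, bound the second term by the \emph{spectral} norm $\norm{(\Id-P)U'} = \norm{(\Id-P')U}$, and invoke Wedin's theorem in the form $\norm{(\Id-P')U} \le \tfrac{4\normf{EU}}{\sigma_r-\sigma_{r+1}}$ with the \emph{unperturbed} $U$ in the numerator. H\"older then gives $\normf{EU}^2 = \iprod{P,E^\top E} \le \normi{P}\,\normo{E^\top E} \le \tfrac{r}{n}\coh{r}{M}\,\Delta^2$, so $\norm{W} \le 4\alpha\sqrt{m}$ outright and $\sqrt{\coh{r}{M+E}} \le (1+4\alpha)\sqrt{\coh{r}{M}}$ with no bootstrap.

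The difference from your route is precisely which $U$ sits next to $E$. Your Sylvester equation expresses $W$ through $EU'_{(r)}$, and H\"older against $P'_{(r)}$ then plants $\coh{r}{M+E}$ on the right-hand side, forcing the self-consistent quadratic. Wedin already hands you $EU_{(r)}$, so H\"older against $P_{(r)}$ gives $\coh{r}{M}$ directly. Consequently your ``main obstacle'' is illusory: the missing $\sqrt{m}$ is recovered not by the row-wise Sylvester analysis nor by the $(1-(P_{(r)})_{ii})$ factor (which is $\le 1$ and plays no role), but simply by using the $\normf{EU}$ form of Wedin rather than the cruder $\norm{E}$ form. Your proof buys nothing additional over the paper's; it just arrives at the same bound after a detour.
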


We prove this lemma in \cref{sec:deferred-proofs-coherence}.

\cref{lem:coherence-sensitivity} allows us to use Gaussian mechanism on logarithm of the coherence. Concretely, we use the following algorithm:

\begin{algorithmbox}\label{alg:private-coherence}
    \textsc{PrivateCoherenceEstimator}\\
    \textbf{Input:} $\e,\delta$, failure probability $0 < p < 1/2$, symmetric matrix $M \in \mathbb{R}^{n \times n}$, $r\in [n]$, $\Delta > 0$.\\
    \textbf{Hyperparameter:} Absolute constant $C'$.\\
    \textbf{Output:} Coherence estimator $\hat{\bm \mu}$.
    \begin{enumerate}
        \item $\hat{\bm \gamma} \gets \textsc{PrivateGapEstimator}(\e/2,\delta/2, p/2, M, r, \Delta)$. If $\hat{\bm \gamma} = \bot$, return $\bot$.
        \item Compute $\coh{r}{M}$.
        \item $\bm \ell \gets \log\Paren{\coh{r}{M}} + \mathbf w$, 
        where $\mathbf w \sim N\Paren{0, C'\cdot \frac{\Delta^2{\log(1/\delta)}}{\hat{\bm \gamma}^2\eps^2}}$
        \item return $\exp\paren{\bm \ell}$.
    \end{enumerate}
\end{algorithmbox}

\begin{lemma}\label{lem:private-coherence}
If $C'$ is large enough, then
    \cref{alg:private-coherence} is $(\eps,\delta)$-differentially private, and if 
    \[
    \frac{\Delta}{\sigma_r-\sigma_{r+1}} \cdot \frac{\sqrt{{\log\paren{1/p} }\cdot \log(1/\delta)}}{\eps} \le 1/C'\,,
    \]
    then with probability $1-p$ its output $\hat{\bm \mu}$ satisfies
        \[
    \frac{1}{2}{\coh{r}{M}}
    \le {\hat{\bm \mu}} 
    \le 2{\coh{r}{M}}\,.
    \]
\end{lemma}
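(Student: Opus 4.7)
The plan is to prove the lemma by composing \cref{alg:private-gap} with the Gaussian noise step on $\log\coh{r}{M}$, using a Propose-Test-Release style twist to handle the data-dependent noise scale in step~3. Utility will follow from the high-probability accuracy of $\hat{\bm\gamma}$ established in \cref{lem:private-gap} together with Gaussian tail bounds, while privacy will follow from \cref{lem:coherence-sensitivity} combined with a conditioning argument on a good event for the gap noise $\mathbf g$.

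For utility, applying \cref{lem:private-gap} with parameters $(\eps/2,\delta/2,p/2)$ gives, with probability at least $1-p/2$, that $\hat{\bm\gamma}\in[(\sigma_r-\sigma_{r+1})/2,\,2(\sigma_r-\sigma_{r+1})]$, so the noise scale in step~3 is at most $O(\Delta\sqrt{\log(1/\delta)}/((\sigma_r-\sigma_{r+1})\eps))$. A Gaussian tail bound then shows that, except on a further event of probability $p/2$, $|\mathbf w|\le O(\sqrt{C'\log(1/p)\log(1/\delta)}\cdot\Delta/((\sigma_r-\sigma_{r+1})\eps))$, which by the lemma's hypothesis is at most $O(1/\sqrt{C'})\le \log 2$ for $C'$ a sufficiently large absolute constant. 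Exponentiating then yields $\exp(\bm\ell)=\coh{r}{M}\cdot\exp(\mathbf w)\in[\coh{r}{M}/2,\,2\coh{r}{M}]$.

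For privacy, step~1 is $(\eps/2,\delta/2)$-DP by \cref{lem:private-gap}. To analyse step~3, I will invoke \cref{lem:coherence-sensitivity} to bound the $\ell_2$-sensitivity of $\log\coh{r}{M}$ by $O(\Delta/(\sigma_r-\sigma_{r+1}))$ under the lemma's gap hypothesis. Let $T$ denote the event $|\mathbf g|\le \tfrac12 C'\Delta\sqrt{\log(1/\delta)\log(1/p)}/\eps$; assuming the usual $p\le O(\delta)$ (implicit via $p\le\delta/10$ in \cref{thm:main-coherence}), Gaussian concentration with $C'$ large enough yields $\Pr[T^c]\le\delta/10$. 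On $T$, whenever step~1 does not return $\bot$, the threshold in \cref{alg:private-gap} forces $\sigma_r-\sigma_{r+1}\gtrsim\hat{\bm\gamma}$, so the noise scale in step~3 is at least $\Omega(\sqrt{C'}\Delta\sqrt{\log(1/\delta)}/((\sigma_r-\sigma_{r+1})\eps))$—enough for the Gaussian mechanism to be $(\eps/2,\delta/4)$-DP for $C'$ sufficiently large. A standard PTR argument absorbs $\Pr[T^c]$ into the privacy budget, so step~3 is $(\eps/2,\delta/2)$-DP unconditionally; composition with step~1 yields the desired $(\eps,\delta)$-DP.

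The hard part will be the formal handling of the data-dependent noise scale $\sqrt{C'}\Delta\sqrt{\log(1/\delta)}/(\hat{\bm\gamma}\eps)$ in step~3: a naive Gaussian mechanism analysis would require $\hat{\bm\gamma}$ to be a valid sensitivity upper bound simultaneously for both adjacent inputs, yet $\hat{\bm\gamma}$ is itself a noisy function of the data. The PTR-style conditioning above resolves this by using the step~1 threshold to \emph{certify} on the good event $T$ that $\hat{\bm\gamma}$ is comparable to the true gap (and thus dominates the true sensitivity), while absorbing the failure probability on $T^c$ into $\delta$ via the assumption $p\le\delta/10$.
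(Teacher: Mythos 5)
Your proposal is correct and follows essentially the same route as the paper: bound the log-sensitivity of the coherence via \cref{lem:coherence-sensitivity}, certify that the data-dependent noise scale is large enough on a high-probability event coming from the threshold in \cref{alg:private-gap}, and absorb the failure event into the privacy budget via \cref{lem:dp-comp}. Your write-up makes the Propose–Test–Release conditioning on the good event $T$ for $\mathbf g$ explicit where the paper's proof is terser, and you correctly flag the implicit requirement $p \le O(\delta)$ (stated only in the enclosing \cref{thm:main-coherence}, not in the lemma itself) that is needed for $\Pr[T^c]$ to fit inside the $\delta$-budget with an absolute constant $C'$.
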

\begin{proof}
    By \cref{lem:coherence-sensitivity}, $\bm \ell$ has sensitivity $\log\Paren{1 + O\Paren{\frac{\Delta}{\sigma_{r} - \sigma_{r+1}}}} \le O\Paren{\frac{\Delta}{\sigma_{r} - \sigma_{r+1}}}$. By \cref{lem:private-gap}, with probability $1-p/2$, 
        \[
    \frac{1}{2} \Paren{\sigma_r - \sigma_{r+1}} \le \hat{\bm \gamma} \le 2\Paren{\sigma_r - \sigma_{r+1}}\,,
    \]
    hence the sensitivity of $\bm \ell$ is bounded by $O\Paren{{\Delta}/{ \hat{\bm \gamma}}}$, and
    by \cref{thm:gaussian-mechanism}, the algorithm is $(\eps,\delta)$-differentially private. 
    Since with probability $1-p/2$
    \[
    \abs{\mathbf w} \le O\Paren{\sqrt{C'} \cdot \frac{\Delta\sqrt{\log(1/\delta)}}{\hat{\bm \gamma}\eps}\cdot\sqrt{\log\paren{1/p}}}\le \frac{1}{\sqrt{C'}} \le \abs{\log\Paren{1/2}}\,,
    \]
    using \cref{lem:dp-comp}, we get the desired bound.
\end{proof}

\subsection{Differentially private projector estimation}
In this section we privitely estimate the projector onto the space spanned by leading singular vectors of $M$. We use Gaussian mechanism and private estimations of the parameters (the spectral gap and the coherence). By \cref{lem:dp-comp}, the resulting estimator is private.

First let us show that under our assumptions, projectors onto singular spaces have small sensitivity.

\begin{lemma}[Sensitivity of Projectors]\label{lem:projector-sensitivity}
     Let $M,M' \in \R^{n\times n}$ be symmetric matrices, and let $E = M-M'$ satisfy $\sqrt{\sum_{ij}\abs{\paren{EE^\top}_{ij}}}\le \Delta$.
     Let $P\in \R^{d\times d}$ and $P'\in \R^{d\times d}$ be the orthogonal projectors onto leading $r$-dimensional singular spaces of $M$ and $M'$ respectively. If $\sigma_r-\sigma_{r+1} > 2\Delta$, then
    \[
    \normf{P-P'} \le \frac{4\Delta\sqrt{r \coh{r}{M}/n}}{\sigma_r - \sigma_{r+1}}\,.
    \]
\end{lemma}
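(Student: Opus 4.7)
The plan is a direct perturbation-theoretic argument that exploits the definition of $\mu_r(M)$ through the matrix $P = U_{(r)}U_{(r)}^\top$. I will apply Wedin's theorem (Davis--Kahan suffices since $M$ is symmetric) and then bound the numerator by viewing it as an inner product between $EE^\top$ and $P$, which is exactly the shape the hypothesis $\sqrt{\normo{EE^\top}}\le\Delta$ is designed to exploit.

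\textbf{Step 1: Bound $\norm{E}$.} First I would observe that the adjacency condition controls the Frobenius norm of $E$:
\[
\norm{E}^2 \le \normf{E}^2 = \Tr(EE^\top) = \sum_{i=1}^n (EE^\top)_{ii} \le \sum_{ij}\abs{(EE^\top)_{ij}} \le \Delta^2.
\]
Hence $\norm{E}\le\Delta$, and by Weyl's theorem $\sigma_{r+1}(M')\le \sigma_{r+1}(M)+\Delta$. Together with the hypothesis $\sigma_r-\sigma_{r+1}>2\Delta$, this gives the effective gap
\[
\sigma_r(M)-\sigma_{r+1}(M') \ge \tfrac12\paren{\sigma_r-\sigma_{r+1}}.
\]

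\textbf{Step 2: Invoke Wedin/Davis--Kahan.} Applying the sine-$\Theta$ theorem (in Frobenius norm) to $M$ and $M'=M-E$ with the spectral gap from Step 1,
\[
\normf{P-P'} \le \frac{2\normf{E\,U_{(r)}}}{\sigma_r(M)-\sigma_{r+1}(M')} \le \frac{4\normf{E\,U_{(r)}}}{\sigma_r(M)-\sigma_{r+1}(M)}.
\]

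\textbf{Step 3: Bound the numerator via coherence.} This is the key step. Rewrite
\[
\normf{E\,U_{(r)}}^2 = \Tr\paren{U_{(r)}^\top E^\top E\, U_{(r)}} = \Tr\paren{EE^\top\cdot U_{(r)} U_{(r)}^\top} = \iprod{EE^\top, P},
\]
and then use the Hölder-type bound
\[
\abs{\iprod{EE^\top, P}} \le \normi{P}\cdot \normo{EE^\top} \le \normi{P}\cdot \Delta^2.
\]
By \cref{def:coherence}, $\normi{P}\le r\coh{r}{M}/n$, so
\[
\normf{E\,U_{(r)}} \le \Delta\sqrt{r\coh{r}{M}/n}.
\]

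\textbf{Step 4: Combine.} Plugging the bound from Step 3 into Step 2 yields the claim
\[
\normf{P-P'} \le \frac{4\Delta\sqrt{r\coh{r}{M}/n}}{\sigma_r-\sigma_{r+1}}.
\]

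There is no real obstacle here: the proof is essentially a two-line perturbation argument once one recognizes that the odd-looking sensitivity condition $\sqrt{\normo{EE^\top}}\le\Delta$ is tailored precisely so that $\normf{EU_{(r)}}^2=\iprod{EE^\top,P}$ can be estimated by the $\ell_\infty$-norm of the projector $P$, which in turn is controlled by the rank-$r$ coherence. The only mildly delicate point is that we must first use the hypothesis once to extract $\norm{E}\le\Delta$ (so Weyl gives a usable denominator), and then again as an $\ell_1/\ell_\infty$ pairing for the numerator.
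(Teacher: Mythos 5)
Your proof is correct and follows essentially the same route as the paper's: reduce to bounding $\normf{EU_{(r)}}$ via Wedin's theorem, then observe that $\normf{EU_{(r)}}^2 = \iprod{EE^\top, P}$ and apply Hölder against the $\ell_\infty/\ell_1$ pairing, which is exactly what the adjacency hypothesis $\sqrt{\normo{EE^\top}}\le\Delta$ and the definition of $\coh{r}{M}$ are built for. The only difference is cosmetic: you unpack the Weyl step explicitly in Step 1, while the paper absorbs it into the packaged perturbation fact (\cref{fact:perturbation}).
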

\begin{proof}
    By \cref{fact:perturbation},
    \[
     \normf{P-P'} \le \frac{4\normf{E U}}{\sigma_r - \sigma_{r+1}} 
    \]
    Bu H\"older's inequality,
    \[
     \normf{E U} = \sqrt{\Iprod{EU, EU}}
     = \sqrt{\Iprod{UU^\top, E^\top E}}
     \le \sqrt{\normo{E^\top E}} \cdot \sqrt{\normi{P}}\,,
    \]
    hence we get the desired bound.

\end{proof}

    Hence if we apply the Gaussian mechanism to the projector, we get a private matrix. Our private estimator is the projector onto the space of leading singular vectors of the resulting matrix. Concretely, we use the following algorithm:

\begin{algorithmbox}\label{alg:private-projector}
    \textsc{PrivateProjectorEstimator}\\
    \textbf{Input:} $\e,\delta$, failure probability $0 < p < 1/2$, symmetric matrix $M \in \mathbb{R}^{n \times n}$, $r\in [n]$, $\Delta > 0$.\\
    \textbf{Hyperparameter:} absolute constant $C$, projector $\mathbf R$ onto a random $r$-dimensional subspaces for default output.\\
    \textbf{Output:} Projector $\hat{\mathbf P}$.
    \begin{enumerate}
        \item $\hat{\bm \gamma} \gets \textsc{PrivateGapEstimator}(\eps/4, \delta/4, p/4, M, r, \Delta)$. If $\hat{\bm \gamma} =\bot$, return $\mathbf R$.
        \item $\hat{\bm \mu} \gets \textsc{PrivateCoherenceEstimator}(\eps/4, \delta/4, p/4, M, r, \Delta, \hat{\bm \gamma})$. If $\hat{\bm \mu} =\bot$, return $\mathbf R$ .
        \item Compute the projector $P$ onto the space spanned by the top $r$ singular vectors of $M$.
        \item $\mathbf S \gets P + \mathbf G$,
        where $\mathbf G \sim N\Paren{0, C \cdot \frac{\Delta\sqrt{r \hat{{\bm \mu}}}}{\sqrt{n}\hat{\bm \gamma}} \cdot \frac{\sqrt{\log(1/\delta)}}{\eps}}^{n\times n}$. 
        \item Return the projector $\hat{\mathbf P}$ onto the space spanned by the top $r$ left singular vectors of $\mathbf S$.
    \end{enumerate}
\end{algorithmbox}

\begin{lemma}\label{lem:private-projector}
    If $C$ is large enough, then
    \cref{alg:private-projector} is $(\eps,\delta)$-differentially private, and
    with probability $1-p$ its output $\hat{\mathbf P}$ satisfies
    \[
           \Norm{\paren{\Id_n - \hat{\mathbf P}} U_{(r)}}
       \leq O\Paren{\frac{\Delta\sqrt{r \cdot \mu_r(M) + \log\paren{1/p}} }{\sigma_r - \sigma_{r+1}} 
       \cdot \frac{\sqrt{\log(1/\delta)}}{\eps}}\,,
    \]
    where $U_{(r)}$ is a matrix whose columns are $r$ leading singular vectors of $M$.
\end{lemma}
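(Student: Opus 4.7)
The plan is to separate the argument into (i) a privacy part that composes the three differentially private subroutines of \cref{alg:private-projector} via a PTR-style analysis, and (ii) an accuracy part that applies Weyl and Wedin to the perturbation $\mathbf{S}=P+\mathbf{G}$.

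For privacy, steps~1 and~2 are $(\eps/4,\delta/4)$-DP by \cref{lem:private-gap} and \cref{lem:private-coherence}, and step~4 is a Gaussian mechanism whose noise scale depends on the private outputs $\hat{\bm\gamma},\hat{\bm\mu}$. On the event that both inner estimators succeed, i.e.\ $\hat{\bm\gamma}\ge(\sigma_r-\sigma_{r+1})/2$ and $\hat{\bm\mu}\le 2\coh{r}{M}$, which holds with probability at least $1-p/2$, the projector sensitivity bound of \cref{lem:projector-sensitivity} yields $\Normf{P-P'}\le O(\Delta\sqrt{r\hat{\bm\mu}/n}/\hat{\bm\gamma})$, exactly matching the Gaussian-mechanism scaling. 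Consequently step~4 is conditionally $(\eps/2,\delta/2)$-DP by \cref{thm:gaussian-mechanism}. Whenever an inner estimator returns $\bot$, the algorithm outputs the data-independent projector $\mathbf R$, which is trivially DP. I would invoke the PTR-style composition of \cref{lem:dp-comp} to combine these pieces, charging the failure probability to $\delta$, and obtain that the overall algorithm is $(\eps,\delta)$-DP.

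For accuracy, I would union-bound the $1-p/2$ event above with two Gaussian-matrix concentration events: $\Norm{\mathbf G}\le O(\sigma\sqrt{n})$ and $\Norm{\mathbf G U_{(r)}}\le O(\sigma(\sqrt{n}+\sqrt{r}+\sqrt{\log(1/p)}))$, each failing with probability at most $p/4$, where $\sigma=C\cdot\tfrac{\Delta\sqrt{r\hat{\bm\mu}}}{\sqrt{n}\hat{\bm\gamma}}\cdot\tfrac{\sqrt{\log(1/\delta)}}{\eps}$. Since $P=U_{(r)}U_{(r)}^{\top}$ has spectrum $\{1,\dots,1,0,\dots,0\}$, Weyl forces $\sigma_r(\mathbf S)-\sigma_{r+1}(\mathbf S)\ge 1/2$ whenever $\Norm{\mathbf G}\le 1/4$, and Wedin's $\sin\Theta$ theorem in operator norm then gives
\[
\Norm{(\Id_n-\hat{\mathbf P})U_{(r)}} \le O\!\Paren{\Norm{\mathbf G U_{(r)}}} \le O\!\Paren{\tfrac{\Delta\sqrt{r\hat{\bm\mu}+\log(1/p)}}{\hat{\bm\gamma}}\cdot\tfrac{\sqrt{\log(1/\delta)}}{\eps}}\,,
\]
after folding the three terms $\sqrt{n},\sqrt{r},\sqrt{\log(1/p)}$ into the single $\sqrt{r\hat{\bm\mu}+\log(1/p)}$ factor (using $r\le n$ and Cauchy–Schwarz). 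Substituting $\hat{\bm\mu}\le 2\coh{r}{M}$ and $\hat{\bm\gamma}\ge(\sigma_r-\sigma_{r+1})/2$ produces the target bound. In the remaining regime where $\Norm{\mathbf G}>1/4$ or an inner estimator returned $\bot$, the parameter regime forces the target bound to be $\Omega(1)$, so any projector (including $\mathbf R$) vacuously satisfies it.

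The main obstacle is the privacy step, because the noise in step~4 is scaled by the \emph{private} quantities $\hat{\bm\gamma}$ and $\hat{\bm\mu}$; on two adjacent inputs the noise distributions are not identical, so the textbook Gaussian-mechanism analysis does not apply directly. This is exactly what \cref{lem:dp-comp} is designed for, using the $\bot$ branch as the test–release safety valve. The delicate point is verifying that the multiplicative guarantees of \cref{lem:private-gap} and \cref{lem:private-coherence} are strong enough that, on the good event, the data-dependent noise scale truly dominates the worst-case sensitivity given by \cref{lem:projector-sensitivity}; everything else reduces to standard Gaussian concentration and singular-subspace perturbation.
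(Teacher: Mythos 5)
Your proposal is correct and follows essentially the same route as the paper: privacy via the PTR-style composition of \cref{lem:dp-comp} with \cref{lem:projector-sensitivity} supplying the data-dependent sensitivity, and accuracy via \cref{fact:perturbation} (Wedin/Weyl) plus Gaussian spectral-norm concentration, after first disposing of the regime where the target bound is $\Omega(1)$. The only cosmetic difference is that you bound the perturbation by $\Norm{\mathbf G U_{(r)}}$ with a separate Gaussian-rectangle concentration event, whereas the paper uses the simpler $\Norm{\mathbf G}\le O(\rho_1\sqrt{n})$; both yield the same rate after substituting the noise scale and the good-event bounds on $\hat{\bm\gamma},\hat{\bm\mu}$.
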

\begin{proof}
Note that 
    if
    \[
    \frac{\Delta}{\sigma_r-\sigma_{r+1}} \cdot \frac{\sqrt{{\log\paren{1/p} }\cdot \log(1/\delta)}}{\eps} \ge 1/C'\,,
    \]
    then the error bound is true. Hence further we assume that this value is smaller than $1/C'$.
    By \cref{lem:private-gap} and \cref{lem:private-coherence}, with probability $1-p/2$, $\hat{\bm \gamma}$ and $\hat{\bm \mu}$ differ from the true values by factor at most $2$. 
    By \cref{lem:projector-sensitivity}, the sensitivity of $P$ is at most $s=O\Paren{\frac{\Delta\sqrt{r \hat{{\bm \mu}} }}{\sqrt{n}\hat{\bm \gamma}}}$. 
    Hence by \cref{lem:dp-comp}, if we use Gaussian mechanism $\mathbf G$ with scale $\rho_1 \gtrsim s\cdot \frac{\sqrt{\log(1/\delta)}}{\eps}$, we get an $(\e,\delta)$-private output matrix $P + \mathbf G$. Let $\hat{\mathbf P}$ be the orthogonal projector onto the space spanned by leading $r$ left singular vectors of $P + \mathbf G$. 
    By \cref{fact:perturbation} and the concentration of the spectral norm of Gaussian matrices, with probability at least $1-2^{-n}-p/2$,
    \[
    \Norm{\paren{\Id_n - \hat{\mathbf P}} U_{(r)}} \le \norm{\mathbf G} \le O\paren{\rho_1\sqrt{n}}\,.
    \]
    Plugging the value of $\rho_1$ into this expression, we get the desired bound.
\end{proof}


\subsection{Differentially private low rank estimation}

For the low rank estimation, we use the following algorithm:

\begin{algorithmbox}\label{alg:private-low-rank}
    \textsc{PrivateLowRankEstimator}\\
    \textbf{Input:} $\e,\delta$, failure probability $0 < p < 1/2$, symmetric matrix $M \in \mathbb{R}^{n \times n}$, $r\in [n]$, $\Delta > 0$.\\
    \textbf{Hyperparameter:} absolute constant $C$.\\
    \textbf{Output:} Rank $r$ symmetric matrix $\hat{\mathbf M}_{(r)}$.
    \begin{enumerate}
        \item $\hat{\mathbf P} \gets \textsc{PrivateProjectorEstimator}(\eps/2, \delta/2, p/2, M, r, \Delta)$. 
        \item Compute $\hat{\mathbf U}\in \R^{d\times r}$ 
        such that $\hat{\mathbf P}=\hat{\mathbf U}\hat{\mathbf U}^\top$ 
        and $\hat{\mathbf U}^\top\hat{\mathbf U}= \Id_r$.
        \item $\mathbf L \gets \hat{\mathbf U}^\top M  \hat{\mathbf U} \in \R^{r\times r}$.
         \item $\mathbf S \gets \mathbf L +\mathbf W$, where $\mathbf W = \mathbf W^\top$, 
         $\mathbf W_{ij}\simiid \Paren{C\cdot \frac{\Delta^2{\log(1/\delta)}}{\eps^2}}$ for $i\ge j$.
        \item Return $\hat{\mathbf{M}}_{(r)} = \hat{\mathbf U}\mathbf S \hat{\mathbf U}^\top$.
    \end{enumerate}
\end{algorithmbox}

\begin{lemma}\label{lem:private-low-rank}
        If $C$ is large enough, then
    \cref{alg:private-projector} is $(\eps,\delta)$-differentially private, and
    with probability $1-p$ its output $\hat{\mathbf{M}}_{(r)}$ satisfies
    \[
        \Norm{M-\hat{\mathbf{M}}_{(r)}}\leq \sigma_{r+1}+O\Paren{\sigma_1\cdot\frac{\sqrt{r \cdot \coh{r}{M} + \log\paren{1/p} }}{\sigma_r-\sigma_{r+1}} \cdot \frac{\Delta\sqrt{\log(1/\delta)}}{\eps}}\,,
    \]
    where $U_{(r)}$ is a matrix whose columns are $r$ leading singular vectors of $M$.
\end{lemma}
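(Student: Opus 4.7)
The plan is to prove privacy by adaptive composition of the projector step with a Gaussian mechanism applied to the compressed matrix $\hat{\mathbf U}^\top M \hat{\mathbf U}$, and to prove utility by decomposing $M - \hat{\mathbf M}_{(r)}$ into a ``projector error'' and a ``noise'' term that I would bound separately.

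\textbf{Privacy.} First I would invoke \cref{lem:private-projector} to conclude that releasing $\hat{\mathbf P}$ (equivalently $\hat{\mathbf U}$ up to an $r\times r$ orthogonal change of basis) is $(\eps/2,\delta/2)$-DP. Conditional on this output the algorithm forms $\mathbf L = \hat{\mathbf U}^\top M \hat{\mathbf U}$ and releases $\mathbf L + \mathbf W$. To apply \cref{thm:gaussian-mechanism} I need the Frobenius sensitivity of $M \mapsto \hat{\mathbf U}^\top M \hat{\mathbf U}$ for fixed $\hat{\mathbf U}$. Writing $E = M' - M$ with $\sqrt{\normo{EE^\top}} \le \Delta$ and using that $\hat{\mathbf U}$ has orthonormal columns,
\[
\normf{\hat{\mathbf U}^\top E \hat{\mathbf U}}^2 = \Tr(\hat{\mathbf P} E \hat{\mathbf P} E^\top) \le \Tr(E^\top E) = \normf{E}^2 \le \normo{EE^\top} \le \Delta^2.
\]
Hence the flattened upper triangle of the symmetric matrix $\mathbf L$ has $\ell_2$ sensitivity at most $\Delta$, so the prescribed Gaussian noise is $(\eps/2,\delta/2)$-DP. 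Adaptive composition (\cref{lem:dp-comp}) yields $(\eps,\delta)$-DP overall.

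\textbf{Utility.} By \cref{lem:private-projector}, with probability at least $1-p/2$
\[
\eta := \norm{(\Id_n - \hat{\mathbf P}) U_{(r)}} \le O\Paren{\frac{\Delta \sqrt{r \coh{r}{M} + \log(1/p)}}{\sigma_r - \sigma_{r+1}} \cdot \frac{\sqrt{\log(1/\delta)}}{\eps}}.
\]
Since $\hat{\mathbf M}_{(r)} = \hat{\mathbf P} M \hat{\mathbf P} + \hat{\mathbf U}\mathbf W\hat{\mathbf U}^\top$, I split $\norm{M - \hat{\mathbf M}_{(r)}} \le \norm{M - \hat{\mathbf P} M \hat{\mathbf P}} + \norm{\mathbf W}$. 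Expanding $M$ into the four $\hat{\mathbf P}/(\Id - \hat{\mathbf P})$ blocks gives
\[
\norm{M - \hat{\mathbf P} M \hat{\mathbf P}} \le 2\norm{(\Id - \hat{\mathbf P}) M \hat{\mathbf P}} + \norm{(\Id - \hat{\mathbf P}) M (\Id - \hat{\mathbf P})}.
\]
Because $M$ is symmetric, it commutes with $P_{(r)}$, so $M = P_{(r)} M P_{(r)} + (\Id - P_{(r)}) M (\Id - P_{(r)})$ with the second summand of spectral norm exactly $\sigma_{r+1}$. Substituting this decomposition into each block and using $\norm{(\Id - \hat{\mathbf P}) P_{(r)}} = \norm{(\Id - P_{(r)}) \hat{\mathbf P}} = \eta$ (symmetry of $\sin\Theta$ for equidimensional subspaces), each resulting cross-term is bounded by $\eta \sigma_1$, $\eta \sigma_{r+1}$, or $\eta^2 \sigma_1$, giving $\norm{M - \hat{\mathbf P} M \hat{\mathbf P}} \le \sigma_{r+1} + O(\eta \sigma_1)$. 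Standard operator-norm concentration for the symmetric $r \times r$ Gaussian $\mathbf W$ gives $\norm{\mathbf W} \le O(\Delta (\sqrt{r} + \sqrt{\log(1/p)}) \sqrt{\log(1/\delta)}/\eps)$ with probability $1-p/2$, which is dominated by $\sigma_1 \eta$ since $\coh{r}{M} \ge 1$ and $\sigma_1 \ge \sigma_r - \sigma_{r+1}$. A union bound yields the claimed bound.

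\textbf{Main obstacle.} The hard part will be isolating $\sigma_{r+1}$ (rather than $\sigma_1$) as the leading term of $\norm{M - \hat{\mathbf P} M \hat{\mathbf P}}$: a naive triangle inequality gives only $\norm{M} = \sigma_1$, which would destroy the result when $\sigma_{r+1} \ll \sigma_1$. The clean decomposition requires combining the commutation $[M, P_{(r)}] = 0$ (using symmetry of $M$) with the equidimensional-projector identity for $\sin\Theta$, so that the ``tail'' contribution is routed through $(\Id - P_{(r)}) M (\Id - P_{(r)})$ and not through $M$ itself. Once this bookkeeping is set up, all remaining estimates are routine.
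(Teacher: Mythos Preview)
Your proposal is correct and follows essentially the same approach as the paper. The privacy argument (sensitivity of $\hat{\mathbf U}^\top M\hat{\mathbf U}$ bounded by $\Delta$, then Gaussian mechanism and composition) is identical; for utility the paper writes $\hat{\mathbf P}=P_{(r)}+\mathbf E$ and expands $(P_{(r)}+\mathbf E)M(P_{(r)}+\mathbf E)$, whereas you first block $M$ by $\hat{\mathbf P}/(\Id-\hat{\mathbf P})$ and then substitute $M=P_{(r)}MP_{(r)}+(\Id-P_{(r)})M(\Id-P_{(r)})$ --- these are two orderings of the same four-term expansion, both isolating $\norm{M-P_{(r)}MP_{(r)}}=\sigma_{r+1}$ and bounding the remaining cross terms by $O(\eta\sigma_1)$ via $\norm{\hat{\mathbf P}-P_{(r)}}=\norm{(\Id-\hat{\mathbf P})U_{(r)}}=\eta$.
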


\begin{proof}
    Let $\hat{\mathbf U}\in \R^{d\times r}$ be such that $\hat{\mathbf P}=\hat{\mathbf U}\hat{\mathbf U}^\top$ and $\hat{\mathbf U}\hat{\mathbf U}^\top = \Id_r$. Consider $\mathbf L = \hat{\mathbf U}^\top M  \hat{\mathbf U} \in \R^{r\times r}$. Let us bound the sensitivity of $\mathbf L$:
    \[
    \normf{\hat{\mathbf U}^\top M' \hat{\mathbf U} - \hat{\mathbf U}^\top M  \hat{\mathbf U}} \le \normf{M'- M} \le \Delta\,.  
    \]
    Hence if we use symmetric Gaussian mechanism $\mathbf W$ with scale $\rho_2 \gtrsim \Delta\cdot \frac{\sqrt{\log(1/\delta)}}{\eps}$, we get an $(\e,\delta)$-private output matrix $\mathbf L + \mathbf W$. By the concentration of the spectral norm of Gaussian matrices, with probability at least $1-p/2$,
    \[
    \norm{\mathbf W} \le O\Paren{\rho_2 \sqrt{r + \log\paren{1/p}}}\,.
    \]

    Consider the estimator $\hat{\mathbf{M}}_{(r)} = \hat{\mathbf U} \Paren{\mathbf L+\mathbf W}\hat{\mathbf U}^\top = \hat{\mathbf P}\Paren{M + \mathbf W}\hat{\mathbf P}$. Let $\mathbf E = \hat{\mathbf P} - P$.
    Let us bound the error:
    \begin{align*}   
    \norm{M - \hat{\mathbf{M}}_{(r)}}
        &= \norm{M - \hat{\mathbf P}M\hat{\mathbf P} - \hat{\mathbf U}\mathbf W\hat{\mathbf U}^\top}
        \\&\le \norm{M - \Paren{P+\mathbf E} M\Paren{P+\mathbf E}}+ \norm{\mathbf W}
        \\&\le \norm{M - PMP} + \norm{\mathbf E MP} + \norm{MP \mathbf E } + \norm{\mathbf E M \mathbf E} 
        + O\Paren{\rho_2 \sqrt{r + \log\paren{1/p}}}
        \\&\le \sigma_{r+1} + 2\cdot \norm{\mathbf E}\cdot \norm{M} + \norm{\mathbf E}^2\cdot \norm{M} 
        + O\Paren{\rho_2 \sqrt{r + \log\paren{1/p}}}
        \\&\le \sigma_{r+1} + O\Paren{\sigma_1\rho_1 \sqrt{n}} + O\Paren{\rho_2 \sqrt{r + \log\paren{1/p}}}\,,
    \end{align*}
    where we used that $\norm{E}\le \min\Set{1,O\Paren{\rho_1 \sqrt{n}}}$. Plugging the values of $\rho_1$ and $\rho_2$ into this expression, we get the desired bound.
\end{proof}

\subsection{Bound on the coherence}

To finish the proof of \cref{thm:main-coherence}, we need to show that the coherence of $\mathbf{M}_{(r)}$ is close to the coherence of $M$.
The desired bound on the coherence of our estimator follows from the following theorem:

\begin{theorem}\label{thm:coherence-gaussian}
    Let $A\in \R^{n\times m}$ be a matrix of rank $r$,
    and $\mathbf W = \N(0, \sigma^2)^{n\times m}$ for some $\sigma > 0$. Let $r \le r' \le \rank\paren{A}$ be positive integers. For each $0 < p \le 1$, with probability $1-p$,
    \[
    {{r'}\coh{r'}{A+\mathbf  W}} 
    \le C{r\coh{r}{A}} +C\paren{r' + \log\paren{\paren{n+m}/p}} \,,
    \]
    where $C$ is some large enough absolute constant. 
    
    Furthermore, if $A = U\Sigma V^\top$ is the singular value decomposition of $A$, and $\norm{\paren{\Id_n - \hat{\mathbf P}}U}\le 0.99$ and $\norm{\paren{\Id_m - \hat{\mathbf Q}}V}\le 0.99$, where $\hat{\mathbf P}\in \R^{n\times n}$ and $\hat{\mathbf Q}\in \R^{m\times m}$ are orthogonal projectors onto the spaces spanned by leading $r'$ left and right singular vectors of $A + \mathbf W$, then
    \[
     {{r'}\coh{r'}{A+\mathbf  W}} \ge   \frac{1}{C}{r\coh{r}{A}} - C\paren{r' - \log\paren{\paren{n+m}/p}}\,.
    \]
\end{theorem}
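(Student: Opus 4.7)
}

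The plan is to reduce the coherence bound to a row‐norm bound on the leading singular vectors, and then exploit the rotational invariance of the Gaussian noise restricted to the orthogonal complement of $\mathrm{colspan}(A)$. Concretely, let $\hat{\mathbf U}_{(r')}\in\R^{n\times r'}$ carry the top $r'$ left singular vectors of $A+\mathbf W$, let $P=UU^\top$ be the projector onto $\mathrm{colspan}(A)$ (dimension $r$), and let $Q=\Id_n-P$. Since $r'\coh{r'}{A+\mathbf W}=n\cdot\normi{\hat{\mathbf P}_{(r')}}=n\cdot\normcol{\hat{\mathbf U}_{(r')}}^2$, I split every row as $e_i^\top\hat{\mathbf U}_{(r')}=e_i^\top P\hat{\mathbf U}_{(r')}+e_i^\top Q\hat{\mathbf U}_{(r')}$ and bound the two pieces separately. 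The ``$P$ piece'' is deterministic: $\normcol{P\hat{\mathbf U}_{(r')}}^2\le\normi{P}=r\coh{r}{A}/n$, producing the $r\coh{r}{A}$ contribution. Everything hard is in the ``$Q$ piece''.

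For the $Q$ piece, I use rotational symmetry. Let $\mathbf R$ be a uniformly random orthogonal matrix that is the identity on $\mathrm{colspan}(A)=P\R^n$ and a Haar‐random rotation on $Q\R^n$, drawn independently of $\mathbf W$. Because $\mathbf R A=A$ and $\mathbf R\mathbf W\stackrel{d}{=}\mathbf W$, we have $A+\mathbf W\stackrel{d}{=}\mathbf R(A+\mathbf W)$, so (jointly with $\mathbf R$) the leading left singular vectors obey $\hat{\mathbf U}_{(r')}\stackrel{d}{=}\mathbf R\hat{\mathbf U}_{(r')}$, whence $Q\hat{\mathbf U}_{(r')}\stackrel{d}{=}\mathbf R\,Q\hat{\mathbf U}_{(r')}$. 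Conditioning on $\mathbf W$ now fixes the matrix $M\coloneqq Q\hat{\mathbf U}_{(r')}$ (obeying $\norm{M}\le 1$, $\normf{M}^2\le r'$, columns in $Q\R^n$), and the $i$-th row of $\mathbf R M$ equals $(Qe_i)^\top\mathbf R M=\norm{Qe_i}\cdot\mathbf u^\top M$ where $\mathbf u$ is uniform on the unit sphere of $Q\R^n$, of dimension $n-r$.

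The heart of the argument is then a standard concentration step for $\norm{M^\top\mathbf u}$: writing $\mathbf u=\mathbf g/\norm{\mathbf g}$ with $\mathbf g\sim N(0,Q)$, Hanson–Wright gives $\norm{M^\top\mathbf g}^2\le r'+O(\sqrt{r'\log(1/q)}+\log(1/q))$ and the $\chi^2$ tail gives $\norm{\mathbf g}^2\ge(n-r)/2$, both with probability at least $1-q$. Combining with $\norm{Qe_i}\le 1$ and union-bounding over $i\in[n]$ with $q=p/(n+m)$ yields $\normcol{Q\hat{\mathbf U}_{(r')}}^2\le O\!\paren{(r'+\log((n+m)/p))/n}$. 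Running the analogous argument for $V$ on the right side of $A+\mathbf W$ (Gaussian is left–right rotationally symmetric), taking the max as in \cref{def:coherence}, and combining with the $P$-piece bound gives the upper bound on $r'\coh{r'}{A+\mathbf W}$. The main obstacle I anticipate is carefully handling the conditioning and independence of $\mathbf R$ and $\mathbf W$ — in particular verifying that $\mathbf R$ can be drawn \emph{after} $\mathbf W$ without breaking the distributional identity, which I would resolve by an explicit coupling.

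For the lower bound, the closeness hypothesis $\norm{(\Id_n-\hat{\mathbf P})U}\le 0.99$ implies $\sigma_{\min}(\hat{\mathbf U}_{(r')}^\top U)\ge\sqrt{1-0.99^2}\ge 0.1$. Then for each $i$,
\[
\norm{\hat{\mathbf U}_{(r')}^\top e_i}\ \ge\ \norm{\hat{\mathbf U}_{(r')}^\top U U^\top e_i}-\norm{\hat{\mathbf U}_{(r')}^\top Q e_i}\ \ge\ 0.1\,\norm{U^\top e_i}-\norm{e_i^\top Q\hat{\mathbf U}_{(r')}}.
\]
Squaring with $(a-b)^2\ge\tfrac12 a^2-b^2$, maximizing over $i$, multiplying by $n$, and inserting the row-norm bound on $Q\hat{\mathbf U}_{(r')}$ just proved yields $r'\coh{r'}{A+\mathbf W}\ge\tfrac{1}{C}r\coh{r}{A}-C(r'+\log((n+m)/p))$, as required. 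The symmetric argument on the right side gives the corresponding bound for the right singular vectors. No further delicate step is needed here once the upper-bound machinery is in place.
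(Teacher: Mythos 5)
Your approach matches the paper's in spirit — both split the rows of $\hat{\mathbf U}_{(r')}$ into a deterministic ``$P$-piece'' contributing the $r\coh{r}{A}$ term and a random ``$Q$-piece'' handled by rotational invariance of $\mathbf W$ on $\mathrm{colspan}(A)^\perp$ plus Hanson--Wright-type concentration on the rows, and both obtain the lower bound from the same $Q$-piece estimate once $\norm{(\Id_n - \hat{\mathbf P})U}\le 0.99$ pins down the $P$-piece. The route through the $P$-piece differs cosmetically: you bound $\normcol{P\hat{\mathbf U}_{(r')}}^2\le\normi{P}$ directly from $\norm{Pe_i}^2=P_{ii}$, whereas the paper introduces the alignment $\mathbf O=\hat{\mathbf U}^\top U$ and expands $\hat{\mathbf U}\mathbf O = U + UU^\top\mathbf E + (\Id_n-P)\hat{\mathbf U}\mathbf O$; your version is cleaner and avoids passing through $\normcol{\hat{\mathbf U}\mathbf O}$. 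For the $Q$-piece concentration you write $\mathbf u=\mathbf g/\norm{\mathbf g}$ and apply Hanson--Wright to the Gaussian $\mathbf g$ together with a $\chi^2$ lower tail for $\norm{\mathbf g}^2$, while the paper instead uses Lipschitz concentration on the orthogonal group (Vershynin, Thm.\ 5.2.7) fed into Adamczak's Hanson--Wright for sub-Gaussian-concentrated vectors; the two give the same $(r'+\log((n+m)/p))/n$ scaling and neither is obviously preferable.

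There is one genuine gap you should patch. The step $\norm{\mathbf g}^2\ge(n-r)/2$ with probability $1-q$ only holds when $n-r\gtrsim\log(1/q)$; with $q=p/(n+m)$ this fails whenever $n-r\lesssim\log((n+m)/p)$, which includes any case with $r$ close to $n$. The paper handles this by dispatching the regime $r>n/2$ at the start (``if $r>n/2$ then both coherences are between $1$ and $2$''), and you need the same caveat: when $n-r\lesssim\log((n+m)/p)$, simply note $r'\coh{r'}{A+\mathbf W}\le n = r + (n-r)\le r\coh{r}{A} + O(\log((n+m)/p))$, so the claim is trivially true. With that disclaimer added, the remainder of the argument (union bound over $i\in[n]$ and $i\in[m]$ for the two sides, taking the max in the coherence definition on both sides of the inequality, and the reverse-triangle-inequality lower bound) is correct and complete.
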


We prove this theorem in \cref{sec:deferred-proofs-coherence}.

\section{Privatizing graphs with low coherence}\label{sec:threshold-rank}
In this section we use \cref{thm:main-coherence} to privatize graphs without significantly changing their coherence or perturbing their spectrum. 
Formally, we prove the following theorem. 

%

\begin{theorem}\label{thm:threshold-rank}
    Let $\eps,\delta,\gamma\in [0,1]$  with $\delta \geq 10n^{-100},$ let $r>0$ be an integer. Let $C>0$ be a large enough constant.
    There exists a polynomial time $(\eps,\delta)$-DP algorithm that, given an $n$-vertex  graph $G, \eps,\delta,r$, with probability at least $1-n^{-O(1)}$ returns a symmetric $\rank$-$r$ matrix $\hat{\mathbf A}$ with the following guarantees. If $G$ has 
    \begin{align*}
        d_{\min}\geq C\Paren{\frac{\sqrt{\log(1/\delta)}}{\eps}\cdot \frac{\sqrt{r\cdot\mu_r+\log n}}{\gamma\cdot(\sigma_r(\bar{A}) -\sigma_{r+1}(\bar{A}))}}
    \end{align*}
    then
    \begin{enumerate}[(i)]
        \item $\Norm{\bar{A}_{(r)}-\hat{\mathbf A}}< \gamma$
        \item $\Omega\Paren{\sqrt{\mu_r}-\sqrt{\frac{\log n}{r}}}\leq \sqrt{\coh{r}{\hat{\mathbf A}}}\leq O\Paren{\sqrt{\mu_r}+\sqrt{\frac{\log n}{r}}}$
        \item $\simul_{\sigma_r(\bar{A})}(\hat{\mathbf{A}})\leq \simul_{\sigma_r(\bar{A})-\gamma}(\bar{A})$
    \end{enumerate}
\end{theorem}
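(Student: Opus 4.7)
The plan is to reduce \cref{thm:threshold-rank} to \cref{thm:main-coherence} applied to $M = \bar A$, after paying a small privacy budget to calibrate the noise scale to the data-dependent sensitivity of the normalization map $G \mapsto \bar A(G)$. Guarantees (i)--(iii) will then follow from the spectral and coherence bounds of \cref{thm:main-coherence}, combined with Weyl's inequality.

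\textbf{Step 1 (privately bound the sensitivity of $\bar A$).} Under edge adjacency (\cref{def:graph-adjacency}), altering one edge $(u,v)$ changes $\bar A$ only through the renormalization of the two rows/columns indexed by $u,v$ and (at most) a single new or removed entry. A direct computation shows that for the difference $E$ of two normalized adjacency matrices from adjacent graphs, $\sqrt{\normo{EE^\top}} \le O\paren{1/\sqrt{d_{\min}}}$. Since $d_{\min}$ is itself a private quantity, I first estimate the degree sequence: the map $G \mapsto (d_G(i))_{i\in V}$ has $\ell_2$-sensitivity $\sqrt{2}$ under edge changes, so a single Gaussian-mechanism call with budget $(\e/3,\delta/3)$ yields $\hat{\mathbf d}$ with $\normi{\hat{\mathbf d} - d} \le O\paren{\sqrt{\log n \log(1/\delta)}/\eps}$ with high probability. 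Under the minimum-degree assumption of the theorem, $\hat d_{\min} \in [d_{\min}/2,\, 2 d_{\min}]$, yielding a private upper bound $\Delta = O(1/\sqrt{\hat d_{\min}})$ on the sensitivity of $\bar A$ in the sense of \cref{def:matrix-adjacency}.

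\textbf{Step 2 (apply \cref{thm:main-coherence} and verify guarantees).} Invoke \textsc{PrivateLowRankEstimator} on $\bar A$ with parameters $(\e/3,\delta/3)$, rank $r$, failure probability $p = n^{-\Omega(1)}$, and the $\Delta$ from Step 1; let $\hat{\mathbf A}$ be the returned rank-$r$ symmetric matrix. Basic composition (plus a Propose--Test--Release adjustment to fold in the failure probability of Step 1, exactly as in \cref{alg:private-projector}) gives $(\e,\delta)$-DP overall. The assumed lower bound on $d_{\min}$ is calibrated precisely so that
\[
O\Paren{\sigma_1(\bar A) \cdot \frac{\Delta\sqrt{r\mu_r + \log(1/p)}}{\sigma_r(\bar A) - \sigma_{r+1}(\bar A)} \cdot \frac{\sqrt{\log(1/\delta)}}{\eps}} < \gamma,
\]
using $\sigma_1(\bar A)\le 1$. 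Combined with the identity $\bar A_{(r)} = P \bar A P$ for $P$ the true top-$r$ projector, and the projector-closeness bound from \cref{thm:main-coherence} applied through the expansion $\hat{\mathbf P}\bar A \hat{\mathbf P} - P\bar A P = \mathbf E \bar A P + P\bar A \mathbf E + \mathbf E \bar A\mathbf E$ with $\mathbf E = \hat{\mathbf P} - P$, this yields (i). Claim (ii) is the coherence sandwich of \cref{thm:main-coherence} instantiated at $M = \bar A$ with $p = n^{-\Omega(1)}$. For (iii), Weyl's inequality together with (i) gives that whenever $\sigma_i(\hat{\mathbf A}) \ge \sigma_r(\bar A)$, one has $\sigma_i(\bar A_{(r)}) \ge \sigma_r(\bar A) - \gamma$; since $\bar A_{(r)}$ has rank $r$ with spectrum $\sigma_1(\bar A),\dots,\sigma_r(\bar A)$, this forces $i\le r$, and trivially $r \le \simul_{\sigma_r(\bar A)-\gamma}(\bar A)$.

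\textbf{Main obstacle.} The principal subtlety is the data-dependent sensitivity of $\bar A$, which scales like $1/\sqrt{d_{\min}}$: the Gaussian mechanism cannot be directly calibrated to a quantity computed from the input. The mechanism must therefore first privately certify that $d_{\min}$ is large enough, and use the certified estimate to set the noise scale in Step 2, with the small failure probability of the certification absorbed into $\delta$ by Propose--Test--Release (mirroring the treatment of $\sigma_r - \sigma_{r+1}$ and $\mu_r$ inside \cref{alg:private-low-rank}). A secondary technical point is verifying that the $\sqrt{\normo{EE^\top}}\le O(1/\sqrt{d_{\min}})$ sensitivity bound is tight enough on two rows/columns and does not pick up an extra $\sqrt{n}$ factor, which requires exploiting that only $O(d_{\min})$ nonzero summands appear in each row of $E$.
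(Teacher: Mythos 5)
Your high-level structure matches the paper's: first privatize the degree information, use it to calibrate the sensitivity $\Delta$ of the normalization map $G\mapsto\bar A(G)$ in the sense of \cref{def:matrix-adjacency}, then invoke \cref{thm:main-coherence} on $\bar A$ and compose via a Propose--Test--Release argument. Steps (ii) and (iii) of the claimed guarantees are handled correctly (in fact (iii) is nearly automatic from $\hat{\mathbf A}$ having rank $r$).

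However, there is a concrete quantitative error that breaks the derivation of the degree lower bound, and it is exactly the issue you flag at the end as a ``secondary technical point.'' You assert $\Delta = O(1/\sqrt{d_{\min}})$. The correct bound is $\Delta = O(1/d_{\min})$, as established in \cref{fact:distance-normalized-adjacency-matrices}. The source of the discrepancy is the statement that ``only $O(d_{\min})$ nonzero summands appear in each row of $E$, each of magnitude $O(1/d_{\min}^{3/2})$'' — this drops the $1/\sqrt{d(j)}$ factor present in the genuine entries $E_{aj} \sim \frac{1}{\sqrt{d(j)}\,d(a)^{3/2}}$. Accounting for that factor, the $\ell_1$ norm of the perturbed rows is $\sum_{j\in N(a)}\frac{1}{\sqrt{d(j)}\,d(a)^{3/2}} \le \frac{d(a)}{\sqrt{d_{\min}}\,d(a)^{3/2}} = \frac{1}{\sqrt{d(a)\,d_{\min}}} \le \frac{1}{d_{\min}}$, which together with the single removed-edge entry of magnitude $1/\sqrt{d(a)d(b)}\le 1/d_{\min}$ gives $\normo{E}\le O(1/d_{\min})$, hence $\sqrt{\normo{EE^\top}} \le \normo{E} \le O(1/d_{\min})$ by \cref{fact:adjacency-l1}. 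If you run your argument with $\Delta = O(1/\sqrt{d_{\min}})$, the inequality ``error $<\gamma$'' rearranges to $d_{\min} \gtrsim \frac{1}{\gamma^2}\cdot\frac{r\mu_r+\log n}{(\sigma_r-\sigma_{r+1})^2}\cdot\frac{\log(1/\delta)}{\eps^2}$, which is the \emph{square} of the theorem's hypothesis and thus a strictly weaker result than claimed. Fixing the sensitivity to $O(1/d_{\min})$ restores the linear requirement. (A separate minor point: your derivation of (i) via the expansion $\hat{\mathbf P}\bar A\hat{\mathbf P} - P\bar A P = \mathbf E\bar A P + \cdots$ is in the right direction but omits the additional noise term $\hat{\mathbf U}\mathbf W\hat{\mathbf U}^\top$ of the estimator $\hat{\mathbf M}_{(r)}$ from \cref{alg:private-low-rank}; this term still needs to be bounded, though it is controlled and does not change the conclusion.)
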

To prove \cref{thm:threshold-rank} we will consider the following algorithm:

\begin{algorithmbox}\label{alg:threshold-rank}
    \mbox{}\\
    \textbf{Input:}  Graph $G\,, \eps,\delta,r$\\
    \textbf{Output:} $\hat{A}\in\R^{n\times n}\,.$
    \begin{enumerate}[(1)]
        \item Let $\hat{\mathbf{d}}_G=d_{\min}(G)+\mathbf{w}\sim N\Paren{0,10^6\frac{\log\Paren{\tfrac{4}{\delta}+n}}{\eps^2}}$. If $\hat{\mathbf{d}}_G\notin \Brac{ 4\cdot 10^3\frac{\sqrt{\log \Paren{\tfrac{4}{\delta}}\log\Paren{\tfrac{4}{\delta} + n}}}{\eps}, 2n^2}$ output $\bot$.
        \item Run the algorithm of \cref{thm:main-coherence} on input $\bar{A}(G)$ with parameters $\eps/2,\delta/4$, $r$, and $\bm \Delta= 16/(\lfloor\hat{\mathbf{d}}_G\rfloor-1).$ Return its output $\hat{\mathbf{A}}$.
    \end{enumerate}
\end{algorithmbox}
To study the guarantees of \cref{alg:threshold-rank}, we make use of the following statement which relates the $\ell_1$ distance of the normalized adjacency matrices of neighboring graphs with their minimum degree. 

\begin{fact}\label{fact:distance-normalized-adjacency-matrices}
    Let $G,G'$ be edge-adjacent $n$-vertex graphs.
    Then $\bar{A}(G),\bar{A}(G')$ are $\frac{8}{\min\set{d_{\min}(G),d_{\min}(G')}}$-adjacent per \cref{def:matrix-adjacency}.
\end{fact}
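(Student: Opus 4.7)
Without loss of generality I will assume $G'=G\cup\{u,v\}$ for some edge $(u,v)$, so that $d^{*}:=\min\{d_{\min}(G),d_{\min}(G')\}=d_{\min}(G)$ and every degree in $G$ is at least $d^{*}$. Writing $E:=\bar{A}(G')-\bar{A}(G)$, the first observation is that adding the edge $(u,v)$ affects $\bar A$ only through (a) the single entry $\bar A_{uv}$ and (b) a global rescaling of rows/columns $u$ and $v$ by $\sqrt{d_u/(d_u+1)}$ and $\sqrt{d_v/(d_v+1)}$. Consequently $E_{ij}=0$ whenever $\{i,j\}\cap\{u,v\}=\emptyset$, so $E$ is supported on the ``plus'' formed by rows and columns $u,v$, and in particular its only nonzero rows are the $u$-th and the $v$-th.

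Next I will bound each nonzero row of $E$ entrywise. For $j\in N_G(u)\setminus\{v\}$ a direct calculation gives
\[
|E_{uj}|=\frac{1}{\sqrt{d_j}}\Paren{\frac{1}{\sqrt{d_u}}-\frac{1}{\sqrt{d_u+1}}}\le \frac{1}{2\sqrt{d_j}\,d_u^{3/2}},
\]
using the mean value theorem on $x\mapsto x^{-1/2}$. Summing squares over $j\in N_G(u)\setminus\{v\}$ and bounding each $d_j\ge d^{*}$ produces
\[
\sum_{j\ne u,v}E_{uj}^{2}\le \frac{d_u}{4 d^{*} d_u^{3}}\le \frac{1}{4(d^{*})^{3}}.
\]
Adding the contribution of the new entry $E_{uv}=\tfrac{1}{\sqrt{(d_u+1)(d_v+1)}}\le \tfrac{1}{d^{*}+1}$ yields $\|E_{u\cdot}\|_{2}^{2}\le \tfrac{5}{4(d^{*})^{2}}$, and the symmetric estimate holds for $\|E_{v\cdot}\|_{2}^{2}$.

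Finally, because only two rows of $E$ are nonzero, $EE^{\top}$ is supported in the $\{u,v\}\times\{u,v\}$ block, so
\[
\sum_{i,j}\Abs{(EE^{\top})_{ij}}=\|E_{u\cdot}\|_{2}^{2}+\|E_{v\cdot}\|_{2}^{2}+2|\iprod{E_{u\cdot},E_{v\cdot}}|\le 2\Paren{\|E_{u\cdot}\|_{2}^{2}+\|E_{v\cdot}\|_{2}^{2}}\le \frac{5}{(d^{*})^{2}}
\]
by Cauchy--Schwarz. Taking square roots gives $\sqrt{\normo{EE^{\top}}}\le \sqrt{5}/d^{*}\le 8/d^{*}$, which is the claimed bound per \cref{def:matrix-adjacency}.

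The proof is essentially routine; the only mild subtlety is recognising that all rescaling-type contributions are of order $1/(d^{*})^{3/2}$ per entry so that, after summing $d_u$ such entries, one still gets $O(1/(d^{*})^{2})$ for the row norm squared, matching the contribution of the new entry itself. The support structure of $E$ then makes the passage from row norms to $\normo{EE^{\top}}$ immediate and obviates any need to reason about off-block interactions.
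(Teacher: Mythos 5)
Your proposal contains a genuine gap in the step that reduces $\normo{EE^\top}$ to row norms. After correctly observing that $E$ is supported on the ``plus'' formed by rows \emph{and columns} $u,v$, you then conclude that ``its only nonzero rows are the $u$-th and the $v$-th.'' That conclusion contradicts the plus shape: by symmetry of $E$, any row $i\in N(u)\cup N(v)\setminus\{u,v\}$ has nonzero entries $E_{iu}=E_{ui}$ and/or $E_{iv}=E_{vi}$. Consequently $EE^\top$ is \emph{not} supported in the $\{u,v\}\times\{u,v\}$ block: for $i,j\notin\{u,v\}$ with $i,j\in N(u)$ one has $(EE^\top)_{ij}=E_{iu}E_{ju}+E_{iv}E_{jv}$, which is nonzero in general. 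The step
\[
\sum_{i,j}\Abs{(EE^{\top})_{ij}}=\|E_{u\cdot}\|_{2}^{2}+\|E_{v\cdot}\|_{2}^{2}+2|\iprod{E_{u\cdot},E_{v\cdot}}|
\]
is therefore false as stated, since it silently drops all the off-block contributions.

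The calculation can be salvaged: the off-block entries are controlled because rows $i\notin\{u,v\}$ have at most two nonzero entries ($a_i:=E_{iu}$ and $b_i:=E_{iv}$), and one can bound $\sum_{i,j\notin\{u,v\}}\abs{a_ia_j+b_ib_j}\le(\sum_i\abs{a_i})^2+(\sum_i\abs{b_i})^2$ and $\sum_{j\notin\{u,v\}}\abs{(EE^\top)_{uj}}=\abs{E_{uv}}\sum_j\abs{b_j}$, both of which are $O(1/(d^*)^2)$ by the same per-entry estimates you derived; the total is still $\le 8/d^*$ after taking square roots. But these extra cases have to be accounted for, and as written the proof is incorrect. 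For contrast, the paper takes a simpler route: it bounds the entrywise $\ell_1$ norm $\normo{E}=\sum_{ij}\abs{E_{ij}}$ by $8/d^*$ (splitting into a rescaling term and a new-entry term) and then invokes the general inequality $\sqrt{\normo{EE^\top}}\le\normo{E}$ from \cref{fact:adjacency-l1}, which sidesteps any support analysis of $EE^\top$ entirely.
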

We prove \cref{fact:distance-normalized-adjacency-matrices} in \cref{sec:deferred-proofs}.
Differential privacy of \cref{alg:threshold-rank} is then direct consequence of the composition mechanism \cref{lem:dp-comp}.
\begin{lemma}\label{lem:dp-threshold-rank-alg}
    \cref{alg:threshold-rank} is $(\eps,\delta)$-edge-DP.
    \begin{proof}
        Let $G$ be the input graph and let $t:=6\cdot 10^3\frac{\sqrt{\log \Paren{\tfrac{4}{\delta}}\log\Paren{\tfrac{4}{\delta} + n}}}{\eps}.$ By the Gaussian mechanism, step (1) is $(\eps/2,\delta/3)$-DP.
        By concentration of the univariate Gaussian distribution, with probability at least $1-\delta/3$ it holds that $\Abs{d_{\min}(G)-\hat{\mathbf{d}}_G}\leq 10^3\frac{\sqrt{\log \Paren{\tfrac{4}{\delta}}\log\Paren{\tfrac{4}{\delta} + n}}}{\eps}=:t/4.$ 
        If the algorithm does not fail at step (1), then we have $d_{\min}(G)\geq 3t/4$
        and so $2d_{\min}(G)\geq \hat{\mathbf{d}}_G.$ Therefore, by \cref{fact:distance-normalized-adjacency-matrices}, for any $G'$ adjacent to $G$ we have that $\bar{A}(G),\bar{A}(G')$ are $\Paren{16/(\lfloor\hat{\mathbf{d}}_G\rfloor-1)}$-adjacent per \cref{def:matrix-adjacency}. The algorithm from \cref{thm:main-coherence} is $(\e/2, \delta/3)$-DP as long as $2d_{\min}(G)\geq \hat{\mathbf{d}}_G$, and the estimator of minimal degree is $(\e/2,\delta/3)$-DP. Hence by \cref{lem:dp-comp}, the composition algorithm is $(\e, \delta)$-DP.

    \end{proof}
\end{lemma}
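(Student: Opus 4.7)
The plan is to view \cref{alg:threshold-rank} as a composition of two data-dependent mechanisms and invoke the composition lemma \cref{lem:dp-comp}. First I would show that step (1)---the Gaussian noise addition producing $\hat{\mathbf d}_G$ followed by the post-processing that may output $\bot$---is $(\eps/2,\delta/3)$-DP: since an edge perturbation changes the minimum degree by at most $1$, the Gaussian mechanism with variance $\Theta(\log(1/\delta)/\eps^2)$ suffices, and the test-against-threshold is post-processing. Next I would argue that, conditional on step (1) succeeding, step (2)---which calls the algorithm of \cref{thm:main-coherence} on $\bar{A}(G)$ with the sensitivity parameter $\bm\Delta=16/(\lfloor\hat{\mathbf d}_G\rfloor-1)$ that is itself a function of the privatized $\hat{\mathbf d}_G$---is also $(\eps/2,\delta/3)$-DP with respect to edge-adjacency. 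Composition of the two, together with a $\delta/3$ budget for a concentration failure event below, then yields the claimed $(\eps,\delta)$-DP.

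The hard part will be that the privacy guarantee of \cref{thm:main-coherence} is stated in terms of $\bm\Delta$-adjacency of matrices (\cref{def:matrix-adjacency}), not edge-adjacency of graphs. To bridge this gap I would invoke \cref{fact:distance-normalized-adjacency-matrices}: for edge-adjacent $G,G'$, the normalized adjacency matrices $\bar{A}(G),\bar{A}(G')$ are $\paren{8/\min\set{d_{\min}(G),d_{\min}(G')}}$-adjacent. Thus it suffices to show that, whenever step (1) does not abort, this quantity is at most $\bm\Delta$ uniformly over all $G'$ edge-adjacent to $G$; since $\bm\Delta$ is a deterministic function of the private output of step (1), one can apply the subroutine's guarantee separately for each possible realized value of $\bm\Delta$ and conclude that step (2) is $(\eps/2,\delta/3)$-DP at the edge-adjacency level.

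Finally, I would close the argument with a standard Gaussian tail estimate. Let $t$ be four times the abort threshold prescribed in step (1). With probability at least $1-\delta/3$ the noise $\mathbf w$ satisfies $|\mathbf w|\le t/4$; on this event, if the algorithm proceeds past step (1) then $\hat{\mathbf d}_G$ exceeds the threshold and so $d_{\min}(G)\ge 3t/4$, which in particular gives $2d_{\min}(G)\ge \hat{\mathbf d}_G$. Since any adjacent $G'$ satisfies $d_{\min}(G')\ge d_{\min}(G)-1$, we get $\min\set{d_{\min}(G),d_{\min}(G')}\ge (\lfloor\hat{\mathbf d}_G\rfloor-1)/2$, confirming the required $\bm\Delta$-adjacency. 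Absorbing the $\delta/3$ probability of this exceptional concentration event into the overall privacy budget and applying \cref{lem:dp-comp} finishes the proof.
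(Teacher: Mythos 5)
Your proposal is correct and follows essentially the same route as the paper: bridge edge-adjacency to $\bm\Delta$-adjacency via \cref{fact:distance-normalized-adjacency-matrices}, use a Gaussian tail bound to certify that the released $\hat{\mathbf{d}}_G$ encodes a valid sensitivity parameter with probability at least $1-\delta/3$ conditional on step (1) not aborting, and compose via \cref{lem:dp-comp}. One arithmetic slip worth noting: with $t$ set to four times the abort threshold $T$, the bound $\abs{\mathbf w}\le t/4=T$ together with $\hat{\mathbf{d}}_G\ge T$ only yields $d_{\min}(G)\ge 0$, not $d_{\min}(G)\ge 3t/4$ nor $2d_{\min}(G)\ge\hat{\mathbf{d}}_G$ (take $\hat{\mathbf{d}}_G$ just above $T$ and $\mathbf w$ near $T$); you would need the tighter tail $\abs{\mathbf w}\le T/2$, which still holds with probability at least $1-\delta/3$ since the noise standard deviation is $T/\paren{4\sqrt{\log(4/\delta)}}$, to close the arithmetic as the paper implicitly does.
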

We are ready to analyze the guarantees of \cref{alg:threshold-rank} and prove \cref{thm:threshold-rank}.

\begin{proof}[Proof of \cref{thm:threshold-rank}]
    By \cref{lem:dp-threshold-rank-alg} the algorithm is $(\eps,\delta)$ differentially private.
    So we only need to argue about its guarantees. Let $C>0$ be a large enough constant to be defined later.
    Notice that with probability at least $1-n^{-O(1)}$, we have $\hat{\mathbf{d}}_G\geq d_{\min}(G)-O\Paren{\frac{\sqrt{\log(2/\delta)\log (n)}}{\eps}}\geq \frac{d_{\min}(G)}{2}.$
    It follows by \cref{thm:main-coherence}
    \begin{align*}
        \Norm{\mathbf{A}-\bar{A}_{(r)}}&\leq   O\Paren{\frac{\sqrt{r\cdot\mu_r+\log(n)}}{d_{\min}\cdot(\sigma_r-\sigma_{r+1})}\cdot\frac{\sqrt{\log(1/\delta)}}{\eps}}\\
        &\leq O(\gamma/C)
    \end{align*}
    where we used the fact that $\Norm{\bar{A}}\leq 1,$ the assumption on $d_{\min}$ and \cref{fact:distance-normalized-adjacency-matrices} to bound the sensitivity.
    This implies \textit{(i)} for an appropriate choice of the constant $C.$
    By \cref{thm:main-coherence} \textit{(ii)} follows immediately.
    Finally, by Weyl's inequality we have
    \begin{align*}
        \sigma_{r+1}(\hat{\mathbf{A}})&\leq \sigma_{r+1}(\bar{A}) + \sigma_{1}(\hat{\mathbf A}-\bar{A})\\
        &= \sigma_{r+1}(\bar{A}) + \Norm{\hat{\mathbf{A}}-\bar{A}}\\
       &< \sigma_{r+1}(\bar{A})+\gamma.
    \end{align*}
    This implies \textit{(iii)} . 
\end{proof}

\section{Solving CSPs under differential privacy}\label{sec:applications}
\tom{Reminder for myself: the use of $A'$ and $A$ is now unnecessary since we don't need to preserve coherence anymore. So we should remove it and simplify the proofs.}
We obtain here our differentially private applications. 
The section is organized as follows. In \cref{sec:correlation-rounding} we recap and extend the global correlation rounding framework of \cite{barak2011rounding, guruswami2011lasserre, raghavendra2012approximating} and apply our generalization to \maxcut, \twocsp and \maxbisection. Then we make these algorithms private respectively in \cref{sec:dp-max-cut}, \cref{sec:dp-max-two-csp} and \cref{sec:dp-max-bisection}. 

\subsection{Global correlation rounding}\label{sec:correlation-rounding}
We revisit here the global correlation rounding framework of \cite{barak2011rounding, guruswami2011lasserre, raghavendra2012approximating} obtaining algorithms for \maxcut and \twocsp.
We slightly extend the framework to make it work in the context of privacy. We emphasize that our algorithm remains the same as \cite{barak2011rounding}.

Necessary background on the sum-of-squares framework can be found in \cref{sec:sos-background}.
We index elements in $[n]\times [q]$ by pairs $i\ell$ with $i\in [n]$ and $\ell\in [q].$
Let $\cD_{nq}$ be the set of $nq$-by-$nq$ diagonal matrices with non-negative entries.
Observe that any $D\in \cD_{nq}$ induces a distribution over $[nq]$ where $i\ell$ is sampled with probability $D_{i\ell,i\ell}/\Normo{D}.$ With a slight abuse of notation we denote such distribution by $D.$ 
Similarly, any $A\in \R^{nq\times nq}$ induces a distribution over $[nq]\times [nq]$ where $\set{i\ell,j\ell'}$ us sampled with probability $\Abs{A_{i\ell,j\ell'}}/\Normo{A}.$ We write $\set{i\ell,j\ell'}\sim A$ to denote an entry sampled from the distribution induced by $A.$ 
Consider the following system of polynomial inequalities \(\mathcal{P}_{n,q}\) in variables $x_{11},x_{1q},\ldots,x_{n1},\ldots,x_{nq}:$ 
\begin{align}\label{eq:basic-sdp}
    \Set{
    \begin{aligned}
        &x_{i\ell}^2=x_{i\ell}&\forall i\in [n],\ell\in [q]\\
        &\sum_{\ell\in [q]}x_{i\ell}=1&\forall i\in [n]
    \end{aligned}
    }
\end{align}
We introduce the following well-known definitions.
\begin{definition}[Pseudo-covariance]
    Let $\zeta$ be a degree $2$-pseudo-distribution consistent with \ref{eq:basic-sdp}. The pseudo-covariance of $x_{i\ell},x_{j\ell'}$  is defined as $\widetilde{\Cov}(x_{i\ell},x_{j\ell'})= \tilde{\E}\Brac{x_{i\ell}x_{j\ell'}}-\tilde\E\Brac{x_{i\ell}}\tilde\E\Brac{x_{j\ell'}}.$
\end{definition}
\begin{definition}[Global correlation]
    Let $\zeta$ be a degree $2$-pseudo-distribution consistent with \ref{eq:basic-sdp} and let $D\in\cD_{nq}$. The global correlation of $\zeta$ w.r.t. $D$ is defined as 
    \begin{align*}
        \GC_D(\zeta):=\E_{i \ell,j\ell'\sim D} \Brac{\sum_{\ell,\ell'\in [q]}\widetilde{\Cov}(x_{i\ell}x_{j\ell'})^2}.
    \end{align*}
\end{definition}


\begin{definition}[Local correlation]
    Let $\zeta$ be a degree $2$-pseudo-distribution consistent with \ref{eq:basic-sdp} and let $A\in\R^{nq\times nq}$ be symmetric. The local correlation of $\zeta$ on $A$ is defined as 
    \begin{align*}
        \LC_{A}(\zeta):=\E_{\set{i\ell,j\ell'}\sim A} \Brac{\Abs{\widetilde{\Cov}(x_{i\ell}x_{j\ell'})}}. 
    \end{align*}
\end{definition}
\noindent Given a pseudo-distribution $\zeta$ consistent with \ref{eq:basic-sdp}, the following rounding algorithm is often called independent rounding.

\begin{algorithmbox}\label{alg:independent-rounding}
    \mbox{Independent rounding}\\
    \textbf{Input:} Pseudo-distribution $\zeta$ consistent with \ref{eq:basic-sdp}.\\
    \textbf{Output:} Integral solution $\hat{x}$ to \ref{eq:basic-sdp}.
    \begin{enumerate}
        \item For each $i\in [n]:$ 
        \begin{enumerate}
            \item Sample $\bm \ell\in [q]$ from the distribution induced by $\tilde{\E}_\zeta\Brac{x_{i1}},\ldots,\tilde{\E}_\zeta\Brac{x_{iq}}.$
            \item Set $\hat{\mathbf x}_{i\bm \ell}=1$ and $\hat{\mathbf x}_{i\ell'}=0$ for all $\ell'\neq \bm \ell.$
        \end{enumerate}
        \item Return $\hat{\mathbf{x}}.$
    \end{enumerate}
\end{algorithmbox}

The following statement shows that, when the local correlation is small, then the output of \cref{alg:independent-rounding} will be closed to the quadratic form $\tilde{\E}_\zeta\Brac{\iprod{x, Ax}}.$

\begin{lemma}[Rounding error]\label{lem:rounding-error}
    Let $\zeta$ be a degree $2$-pseudo-distribution consistent with \ref{eq:basic-sdp} and let $A\in\R^{nq\times nq}$ be a symmetric matrix  such that, for any $i\in [n]$ and  for all $\ell,\ell\in [q],$ it holds $A_{i\ell,i\ell'}=0.$
    Then
    \begin{align*}
        \Abs{\tilde{\E}_\zeta\Brac{\iprod{\dyad{x},A}} - \E \Brac{\iprod{\dyad{\hat{\mathbf{x}}},A}}}\leq \Normo{A}\cdot \LC_A(\zeta).
    \end{align*}
    \begin{proof}
        Let $\Sigma$ be the $nq$-by-$nq$ pseudo-covariance matrix given by $\zeta.$
        By direct computation, using independence of the rounding,
        \begin{align*}
            \E \Brac{\iprod{\dyad{\hat{\mathbf{x}}},A}} &=\sum_{\substack{i,j\in [n]\\i\neq j}}\sum_{\ell,\ell'\in [q]}\E\Brac{\hat{\mathbf{x}}_{i\ell}}\E\Brac{\hat{\mathbf{x}}_{j\ell'}}A_{i\ell,j\ell'}\\
            &=\sum_{\substack{i,j\in [n]\\i\neq j}}\sum_{\ell,\ell'\in [q]}\tilde{\E}\Brac{x_{i\ell}}\tilde{\E}\Brac{x_{j\ell'}}A_{i\ell,j\ell'}\\
            &=\sum_{\substack{i,j\in [n]}}\sum_{\ell,\ell'\in [q]}\tilde{\E}\Brac{x_{i\ell}}\tilde{\E}\Brac{x_{j\ell'}}A_{i\ell,j\ell'}\\
            &=\iprod{\dyad{\tilde{\E}\Brac{x}}, A}\\
            &=\iprod{\dyad{\tilde{\E}\Brac{x}}, A} + \tilde{\E}_\zeta\Brac{\iprod{\dyad{x},A}}- \tilde{\E}_\zeta\Brac{\iprod{\dyad{x},A}}\\
            &=\tilde{\E}_\zeta\Brac{\iprod{\dyad{x},A}}- \iprod{\Sigma,A}.
        \end{align*}
        Therefore
        \begin{align*}
            \Abs{\tilde{\E}_\zeta\Brac{\iprod{\dyad{x},A}} - \E \Brac{\iprod{\dyad{\hat{\mathbf{x}}},A}}}\leq\Abs{\iprod{\Sigma,A}}\leq \Normo{A}\cdot\LC_A(\zeta).
        \end{align*}
    \end{proof}
\end{lemma}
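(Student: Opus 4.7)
The plan is to directly expand both pseudo-expectations in the basis of $nq$ coordinates and observe that their difference equals $\iprod{\Sigma, A}$, where $\Sigma$ is the pseudo-covariance matrix. Since the rounding in \cref{alg:independent-rounding} is performed independently across $i \in [n]$, for $i\neq j$ we have $\E[\hat{\mathbf x}_{i\ell}\hat{\mathbf x}_{j\ell'}] = \E[\hat{\mathbf x}_{i\ell}]\E[\hat{\mathbf x}_{j\ell'}] = \tilde\E_\zeta[x_{i\ell}]\tilde\E_\zeta[x_{j\ell'}]$, where the last equality uses that the marginal of the rounding at vertex $i$ is exactly the pseudo-expectation marginal. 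The hypothesis that $A_{i\ell,i\ell'} = 0$ for all $i,\ell,\ell'$ means the diagonal blocks do not contribute, so the sum over $i\neq j$ can be replaced by a sum over all $i,j$ without affecting the value. This lets me rewrite the rounded value as $\iprod{\dyad{\tilde\E_\zeta[x]}, A}$.

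Next, for the pseudo-expectation side, I would use the identity $\tilde\E_\zeta[x_{i\ell}x_{j\ell'}] = \tilde\E_\zeta[x_{i\ell}]\tilde\E_\zeta[x_{j\ell'}] + \widetilde{\Cov}(x_{i\ell},x_{j\ell'})$ (which is the definition of pseudo-covariance) to split $\tilde\E_\zeta[\iprod{\dyad{x}, A}]$ as $\iprod{\dyad{\tilde\E_\zeta[x]}, A} + \iprod{\Sigma, A}$. Subtracting the two expressions, the product term cancels and we obtain exactly $\iprod{\Sigma, A}$.

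Finally, I bound $\abs{\iprod{\Sigma, A}}$ by $\sum_{i\ell,j\ell'} \abs{A_{i\ell,j\ell'}}\cdot \abs{\widetilde{\Cov}(x_{i\ell},x_{j\ell'})}$ via the triangle inequality, and then recognize this as $\Normo{A}$ times the expectation of $\abs{\widetilde{\Cov}(x_{i\ell},x_{j\ell'})}$ under the distribution on entries $\set{i\ell,j\ell'}$ induced by $A$, which is precisely $\LC_A(\zeta)$. There is no real obstacle here: the argument is essentially bookkeeping, and the only subtle point is invoking the diagonal-block vanishing condition at the right place so that the independence of the rounding (which fails within the same vertex block) does not cause issues.
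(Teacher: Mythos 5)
Your proposal is correct and follows essentially the same route as the paper's proof: independence of rounding plus the vanishing of diagonal blocks gives the rounded value as $\iprod{\dyad{\tilde\E_\zeta[x]}, A}$, subtracting yields $\iprod{\Sigma, A}$, and Hölder/triangle inequality with the definitions of $\Normo{A}$ and $\LC_A(\zeta)$ finishes. The only cosmetic difference is that you invoke the covariance identity $\tilde\E[x_{i\ell}x_{j\ell'}]=\tilde\E[x_{i\ell}]\tilde\E[x_{j\ell'}]+\widetilde{\Cov}(x_{i\ell},x_{j\ell'})$ explicitly, whereas the paper adds and subtracts $\tilde\E_\zeta[\iprod{\dyad{x},A}]$; these are the same manipulation.
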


The next result states that it is always possible to find a pseudo-distribution consistent with \ref{eq:basic-sdp} with low global correlation. For a matrix $A$, let $\OPT(A)$ be the objective value of the function $\max \iprod{x, Ax}$ over integral solutions consistent with \ref{eq:basic-sdp}.

\begin{lemma}[Driving down global correlation, \cite{barak2011rounding,raghavendra2012approximating}]\label{lem:driving-down-global-correlation}
    Let $A\in \R^{nq\times nq}$ be symmetric. 
    There exists an algorithm that, given $A$, runs in randomized time $q^{O(1/\eta)}n^{O(1)}$ and returns a degree-$2$ pseudo-distribution $\zeta$, consistent with \ref{eq:basic-sdp} satisfying
    \begin{enumerate}
        \item $\tilde{\E}_\zeta\Brac{\iprod{x, Ax}}\geq \OPT,$
        \item $\GC_{D}(\zeta)\leq \eta.$
    \end{enumerate}
\end{lemma}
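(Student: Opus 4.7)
The plan is to instantiate the global-correlation-rounding framework of~\cite{barak2011rounding,raghavendra2012approximating} in the present setting. I would first solve the degree-$k$ sum-of-squares relaxation of $\maximize_x \iprod{x, Ax}$ subject to~\cref{eq:basic-sdp}, for $k = \Theta(1/\eta)$. Since the point mass on any integer optimizer is a valid degree-$k$ distribution satisfying the constraints, the resulting optimal pseudo-distribution $\zeta_0$ automatically satisfies $\tilde{\E}_{\zeta_0}\Brac{\iprod{x,Ax}} \ge \OPT$. Solving this SDP takes time $q^{O(1/\eta)} n^{O(1)}$, with the effective variable count scaling as $nq$ but the degree budget $k$ being the dominant parameter to track for the purposes of global correlation.

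Next I would iteratively condition on pseudo-samples of random coordinates. At step $t = 1,\dots, T$ with $T = \Theta(1/\eta)$, sample $(\mathbf{i}_t, \bm\ell_t) \sim D$, then sample $\mathbf{b}_t \in \set{0,1}$ from the binary pseudo-marginal of $x_{\mathbf{i}_t \bm\ell_t}$, and set $\zeta_t := \zeta_{t-1} \mid x_{\mathbf{i}_t \bm\ell_t} = \mathbf{b}_t$. A standard calculation shows each such conditioning drops the pseudo-degree by $2$ while preserving consistency with~\cref{eq:basic-sdp}; moreover, by the tower property for pseudo-expectations one has $\E\Brac{\tilde{\E}_{\zeta_t}\Brac{\iprod{x,Ax}}} = \tilde{\E}_{\zeta_0}\Brac{\iprod{x,Ax}} \ge \OPT$ at every step $t$, so the objective is preserved in expectation over the conditioning randomness.

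The heart of the argument is a potential-decay lemma. Define the $D$-weighted pseudo-entropy $\Phi(\zeta) := \E_{i\ell \sim D}\Brac{H_2\Paren{\tilde{\E}_\zeta\Brac{x_{i\ell}}}}$, where $H_2$ is binary entropy, so $\Phi(\zeta) \in [0, \log 2]$. A sum-of-squares analogue of Pinsker's inequality (see~\cite{raghavendra2012approximating}) gives the per-step drop
\begin{align*}
\Phi(\zeta_{t-1}) - \E\Brac{\Phi(\zeta_t) \mid \zeta_{t-1}} \;\ge\; \Omega\Paren{\GC_D(\zeta_{t-1})}\,.
\end{align*}
Telescoping over the $T$ rounds and taking expectations yields $\tfrac{1}{T}\sum_{t=0}^{T-1} \E\Brac{\GC_D(\zeta_t)} = O(1/T) \le \eta$, so by Markov's inequality some $t^\ast < T$ realizes $\GC_D(\zeta_{t^\ast}) \le O(\eta)$ and simultaneously $\tilde{\E}_{\zeta_{t^\ast}}\Brac{\iprod{x,Ax}} \ge \OPT$. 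I would locate such a $t^\ast$ and the corresponding conditioning path by enumerating the $q^T = q^{O(1/\eta)}$ leaves of the conditioning decision tree and evaluating both quantities at each leaf, staying within the claimed $q^{O(1/\eta)} n^{O(1)}$ budget.

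The main obstacle is the pseudo-Pinsker step: classical Pinsker is unavailable because $\zeta$ is not a genuine distribution, and one must replace it by a low-degree SOS proof of the second-moment form relating $H_2(\tilde{\E}[x_{i\ell}]) - H_2(\tilde{\E}[x_{i\ell} \mid x_{j\ell'}])$ to squared pseudo-covariances. This consumes two degree levels per conditioning, which is precisely why we must start from $k = \Theta(1/\eta)$. Once this lemma is in place the remaining bookkeeping --- degree accounting for conditioning, the tower identity for pseudo-expectations, and enumeration over the decision tree --- is routine.
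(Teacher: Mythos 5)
The paper states this lemma without proof, attributing it to~\cite{barak2011rounding,raghavendra2012approximating}; your sketch reconstructs the conditioning-plus-potential argument from those papers, so the general approach is the right one. There is, however, a genuine gap at the final step. You derive $\frac{1}{T}\sum_t\E\bigl[\GC_D(\zeta_t)\bigr]\le O(\eta)$ and, separately, $\E_{\text{path}}\bigl[\tilde{\E}_{\zeta_t}\iprod{x,Ax}\bigr]\ge\OPT$ for each $t$, and then assert via Markov that some leaf of the conditioning tree achieves both $\GC_D\le O(\eta)$ \emph{and} $\tilde{\E}\bigl[\iprod{x,Ax}\bigr]\ge\OPT$. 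This does not follow: the objective bound holds only in expectation, and nothing prevents the low-correlation leaves from being precisely those where the objective drops below $\OPT$, with the average restored by a thin set of high-correlation leaves whose objective is large. The standard repair is to pose the SoS program as a \emph{feasibility} problem that includes the polynomial constraint $\iprod{x,Ax}\ge\theta$, where $\theta$ is the degree-$k$ SDP value (so $\theta\ge\OPT$). Conditioning preserves satisfied polynomial constraints at the appropriate degree, so \emph{every} leaf then has $\tilde{\E}\bigl[\iprod{x,Ax}\bigr]\ge\OPT$ deterministically, and the potential argument together with Markov is needed only to exhibit a leaf with small $\GC_D$.

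A secondary remark: the ``pseudo-Pinsker'' obstacle you flag is not actually an obstacle. A pseudo-distribution of degree at least $4$ consistent with the Boolean constraints induces a genuine joint distribution on $(x_{i1},\ldots,x_{iq},x_{j1},\ldots,x_{jq})$ for any fixed pair $i,j$, so entropy, mutual information, and classical Pinsker's inequality are available directly on these local marginals; no SoS proof of Pinsker is needed. The degree budget of two per round is consumed by the conditioning operation alone, not by the information inequality.
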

We are now ready to introduce our key innovation, which introduces a trade-off between local correlation, global correlation and incoherence.

\begin{lemma}[Local correlation implies global correlation via incoherence \cite{barak2011rounding, raghavendra2012approximating}]\label{lem:local-to-global-incoherence}
    Let $\tau, \rho\in [0,1].$ Let $\zeta$ be a degree $2$-pseudo-distribution consistent with \ref{eq:basic-sdp}, let $A\in \R^{nq\times nq}$  be symmetric and let $D\in \cD_{nq}.$ Suppose that $\simul_\tau(D^{-1/2}AD^{-1/2})=r>0$ and $\Normo{A}\geq\rho \Normo{D}.$ Then
    \begin{align*}
        \sqrt{\GC_{D}(\zeta)}\geq \Paren{\LC_A(\zeta)-\tau}\cdot  \frac{\rho}{\sqrt{r}}.
    \end{align*}
\end{lemma}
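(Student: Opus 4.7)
The plan is to express $\Normo{A}\cdot \LC_A(\zeta)$ as a bilinear form involving the pseudo-covariance matrix $\Sigma\in \R^{nq\times nq}$ with entries $\Sigma_{i\ell,j\ell'} = \widetilde{\Cov}(x_{i\ell}, x_{j\ell'})$, and then exploit the spectral decomposition of $M := D^{-1/2}AD^{-1/2}$. By definition,
\[
\Normo{A}\cdot \LC_A(\zeta) = \sum_{i\ell,j\ell'} \abs{A_{i\ell,j\ell'}}\cdot \abs{\Sigma_{i\ell,j\ell'}}\,.
\]
Writing $M = M_{\le r} + M_{>r}$, where $M_{\le r}$ collects the at most $r$ spectral components of $M$ with singular value above $\tau$ (so $\normop{M_{>r}}\le \tau$), I correspondingly decompose $A = A_1 + A_2$ with $A_\kappa = D^{1/2}M_\kappa D^{1/2}$, so that $A_1$ has rank at most $r$. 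By the entry-wise triangle inequality, it suffices to bound $\sum_{i\ell,j\ell'} \abs{(A_\kappa)_{i\ell,j\ell'}}\abs{\Sigma_{i\ell,j\ell'}}$ for $\kappa\in\{1,2\}$ separately.

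For the rank-$r$ piece, I would apply a weighted Cauchy--Schwarz with weights $d_id_j$ (where $d_i := \sum_{\ell} d_{i\ell}$):
\[
\sum_{i\ell,j\ell'} \abs{(A_1)_{i\ell,j\ell'}}\abs{\Sigma_{i\ell,j\ell'}} \le \sqrt{\sum_{i\ell,j\ell'} \frac{(A_1)^2_{i\ell,j\ell'}}{d_id_j}} \cdot \sqrt{\sum_{i\ell,j\ell'} d_id_j\, \Sigma^2_{i\ell,j\ell'}}\,.
\]
The second factor equals $\Normo{D}\sqrt{\GC_D(\zeta)}$ by the definition of global correlation (after noting that the inner marginalization over $\ell,\ell'$ makes the weight $d_id_j$ appear naturally). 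The first factor equals $\normf{P^{1/2}M_{\le r}P^{1/2}}$ where $P = \diag(d_{i\ell}/d_i)\psdleq \Id$; since this matrix has rank at most $r$ and operator norm at most $\normop{M}\le 1$ in the setting of normalized adjacency matrices to which the lemma is applied, the Frobenius norm is bounded by $\sqrt{r}$.

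For the low-spectrum piece, I would argue that
\[
\sum_{i\ell,j\ell'} \abs{(A_2)_{i\ell,j\ell'}}\abs{\Sigma_{i\ell,j\ell'}} \le \tau\cdot \Normo{A}\,,
\]
using $\normop{M_{>r}}\le \tau$ and the entry-wise bound $\abs{\Sigma_{i\ell,j\ell'}}\le 1$ (which follows from $\Sigma\psdgeq 0$ and $\Sigma_{i\ell,i\ell}\le 1$). This step is the main technical obstacle: entry-wise absolute values do not respect the spectral decomposition of $M_{>r}$, so the bound on $\normop{M_{>r}}$ does not directly control $\Normo{A_2}$. The intended approach is a sign rearrangement $\sum |(A_2)_{i\ell,j\ell'}|\abs{\Sigma_{i\ell,j\ell'}} = \langle \hat S\odot A_2,\Sigma\rangle$ for a sign matrix $\hat S$, followed by a matrix trace inequality that exploits the PSD-ness of $\Sigma$ (so that $\langle \cdot,\Sigma\rangle$ behaves like a trace-weighted bilinear form) together with the density assumption $\Normo{A}\ge \rho\Normo{D}$.

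Combining the two bounds gives $\Normo{A}\cdot \LC_A(\zeta)\le \tau\Normo{A} + \sqrt{r}\cdot \Normo{D}\sqrt{\GC_D(\zeta)}$; dividing by $\Normo{A}$ and invoking $\Normo{D}/\Normo{A}\le 1/\rho$ yields $\LC_A(\zeta)\le \tau + \sqrt{r\,\GC_D(\zeta)}/\rho$, which rearranges to the stated inequality.
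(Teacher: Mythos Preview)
Your proposal has a genuine gap, which you yourself flag: once you pass to the entry-wise absolute-value expression $\sum_{i\ell,j\ell'}|A_{i\ell,j\ell'}|\,|\Sigma_{i\ell,j\ell'}|$ and split $A=A_1+A_2$, the spectral bound $\normop{M_{>r}}\le \tau$ gives you no control over $\sum|{(A_2)}_{i\ell,j\ell'}|\,|\Sigma_{i\ell,j\ell'}|$. Your suggested fix via a sign matrix $\hat S$ and ``a matrix trace inequality'' does not work: $\hat S\odot A_2$ has no useful spectral structure, and there is no trace inequality that converts $\langle \hat S\odot A_2,\Sigma\rangle$ into $\tau\cdot\Normo{A}$. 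This is not a detail to be filled in; the bound you are aiming for in this step is simply false in general.

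The paper sidesteps the obstacle by never introducing entry-wise absolute values. It bounds the \emph{signed} pairing
\[
\iprod{A,\Sigma}=\iprod{\bar A,\,D^{1/2}\Sigma D^{1/2}}
=\iprod{\bar A_{\le r},\,D^{1/2}\Sigma D^{1/2}}+\iprod{\bar A_{>r},\,D^{1/2}\Sigma D^{1/2}}.
\]
For the tail piece, it uses that $D^{1/2}\Sigma D^{1/2}\sge 0$ together with the elementary trace inequality $\iprod{B,X}\le \norm{B}\cdot\Tr X$ for $X\sge 0$, and $\Sigma_{i\ell,i\ell}\le 1$, to get $\iprod{\bar A_{>r},D^{1/2}\Sigma D^{1/2}}\le \tau\cdot\Tr D=\tau\,\Normo{D}$. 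For the top-$r$ piece it applies Cauchy--Schwarz in Frobenius norm directly, $\iprod{\bar A_{\le r},D^{1/2}\Sigma D^{1/2}}\le \sqrt{r}\cdot\Normf{D^{1/2}\Sigma D^{1/2}}$, with no need for your auxiliary weights $d_id_j$ or the conjugation by $P^{1/2}$. Dividing by $\Normo{A}$ and using $\Normo{A}\ge\rho\,\Normo{D}$ then finishes. In short, the PSD-ness of $\Sigma$ is exactly the tool you suspected, but it must be applied to the signed inner product \emph{before} any absolute values are taken; what the proof actually controls is $\iprod{A,\Sigma}/\Normo{A}$, and that is all the downstream applications of the lemma need.
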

\begin{proof}
    Let $\Sigma$ be the pseudo-covariance matrix of $\zeta.$
    Let $\sum_i\sigma_i v_i\transpose{u_i}$ be the singular value decomposition of $\bar{A}=D^{-1/2}AD^{-1/2}.$ Then
    \begin{align*}
        \iprod{A,\Sigma} &= \iprod{\bar{A},D^{1/2}\Sigma D^{1/2}} \\
        &=\sum_{i} \iprod{\sigma_i v_i\transpose{u_i}, D^{1/2}\Sigma D^{1/2}}\\
        &=\sum_{\sigma_i\geq \tau}\iprod{\sigma_i v_i\transpose{u_i}, D^{1/2}\Sigma D^{1/2}} + \sum_{\sigma_i<\tau}\iprod{\sigma_i v_i\transpose{u_i}, D^{1/2}\Sigma D^{1/2}}\\
        &\leq \sum_{\sigma_i\geq \tau}\iprod{v_i\transpose{u_i}, D^{1/2}\Sigma D^{1/2}}+\tau\cdot \Tr D^{1/2}\Sigma D^{1/2}\\
        &\leq \Normf{\sum_{\sigma_i\geq \tau}v_i\transpose{u_i}}\Normf{D^{1/2}\Sigma D^{1/2}}+\tau\cdot \Tr{D}\\
        &\leq \sqrt{r}\Normf{D^{1/2}\Sigma D^{1/2}}+\tau\cdot \Normo{D}\\
        &\leq \sqrt{r}\Normf{D^{1/2}\Sigma D^{1/2}}+\tau\cdot \Normo{D}.
    \end{align*}
    Dividing both sides by $\frac{1}{\Normo{A}}$ we get
    \begin{align*}
        \LC_A(\zeta)-\tau&\leq r\sqrt{\frac{\Normf{D^{1/2}\Sigma D^{1/2}}^2}{\Normo{A}^2}}\\
        &\leq  \sqrt{\frac{r\Normf{D^{1/2}\Sigma D^{1/2}}^2}{\Normo{A}^2}}\\
        &\leq \sqrt{r\frac{\Normf{D^{1/2}\Sigma D^{1/2}}^2}{\rho^2\Normo{D}^2}}\\
        &=\sqrt{\tfrac{r}{\rho^2}\GC_{D}(\zeta)}.
    \end{align*}
\end{proof}

\subsubsection{Maximum cut}\label{sec:max-cut}
We apply here the technology developed above in the context of \maxcut. 
We denote by $\OPT(G)$ the $\maxcut$ value on graph $G.$ 

\begin{theorem}\label{thm:max-cut}
    Let $\gamma\in [0,1].$ There exists an algorithm that, given a $n$-vertex graph $G, \eta\in [0,1], D\in\cD_{n}$ and a symmetric matrix $\tilde{A}\in\R^{n\times n},$ satisfying
    \begin{enumerate}[(i)]
        \item  $\Normo{D}\leq O(\Normo{A(G)})$
        \item $\Norm{\tilde{A}}\leq O(1)$
        \item for any $x\in\set{0,1}^n,\, \Abs{\Iprod{x,\Paren{A(G)-D^{1/2}\tilde{A}D^{1/2}}x}}\leq \Normo{D}\cdot\gamma$
    \end{enumerate}
    returns a bipartition  such that, the total weight of the cut edges is at least
    \begin{align*}
        \Paren{1 - O(\eta+\gamma)}\cdot\OPT,
    \end{align*}
    whenever $\simul_{\eta}(\tilde{A})\leq r.$
    Moreover, the algorithm runs in randomized time $n^{O(1)}\cdot\exp\Set{O\Paren{\frac{r}{\eta^{2}}}}.$
    \begin{proof}
        Consider the label extended $2n$-vertex graph $G'$ with edges $\set{i\ell,j\ell'}$ if and only if $\set{\ij}\in E(G)$ and $\ell\neq\ell'$ (here we index vertices by pairs in $[n]\times [2]$).
        This operation only  exactly doubles the multiplicity of each singular value.
        We may construct similarly a $2n$-by-$2n$ matrix $\tilde{A}'$ from $\tilde{A}.$
        Note that $\coh{2r}{\tilde{A}'}\leq \coh{r}{\tilde{A}}.$
        Let $D'=D\tensor \Id_2.$
        The maximum cut corresponds to the objective value of $\max\iprod{x,A(G')x}$ where the maximum is taken over solutions of \ref{eq:basic-sdp} for $q=2$. 
        
        By \cref{lem:driving-down-global-correlation}, for any $\bar\eta>0,$ we can compute in time $2^{O(1/\bar{\eta})}n^{O(1)}$ a degree-$2$ pseudo-distribution $\zeta$ consistent with \ref{eq:basic-sdp}, with objective value $\OPT$ and satisfying $\GC_{D'}\leq \bar{\eta}.$
        By assumption on the spectral norm of $\tilde{A}'$
        \begin{align*}
            \Normo{D'^{1/2}\tilde{A}'D'^{1/2}}\leq \Norm{\tilde{A}'}\Normo{D'}\leq O(\Normo{D'}).
        \end{align*}
        Hence picking
        \begin{align*}
            \bar{\eta}= C(\eta^2\cdot r)
        \end{align*}
        for some large enough constant $C>0,$
        we get $\LC_{\tilde{A}'}(\zeta)\leq \eta$ by \cref{lem:local-to-global-incoherence}.
        Notice now that for any integral solution  $x\in\set{0,1}^{2n}$ to \ref{eq:basic-sdp}
        \begin{align*}
            \iprod{x,A(G')x}&=\Iprod{x,\Paren{A(G')-D'^{1/2}\tilde{A}'D'^{1/2}+D'^{1/2}\tilde{A}'D'^{1/2}}x}\\
            &=\iprod{x,D'^{1/2}\tilde{A}'D'^{1/2}x} + \Iprod{x,\Paren{A(G')-D'^{1/2}\tilde{A}'D'^{1/2}}x}.
        \end{align*}
        And so
        \begin{align*}
            \Abs{\Iprod{x,\Paren{A(G')-D'^{1/2}\tilde{A}'D'^{1/2}}x}}
            &\leq 2\Normo{D}\cdot\gamma\\
            &\leq O(\gamma)\cdot\OPT
        \end{align*}
        where the first inequality follows by assumption on $\tilde{A}, D$ and the last inequality follows as $\OPT\geq 4\Normo{A(G)}\geq\Omega(\Normo{D}).$
        By \cref{lem:local-to-global-incoherence} this implies $\LC_{A(G')}(\zeta)\leq O(\eta + \gamma).$ By \cref{lem:rounding-error}, the result follows.
    \end{proof}
\end{theorem}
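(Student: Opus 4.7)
My plan is to reduce \maxcut to the $2$-CSP framework of \cref{eq:basic-sdp} with $q=2$, then use \cref{lem:driving-down-global-correlation} to find a pseudo-distribution of low global correlation, convert this to low local correlation using the assumed spectral structure of $\tilde A$, and finally apply \cref{lem:rounding-error} to conclude. The assumption (iii) is used to bridge the rounding guarantee on $D^{1/2}\tilde A D^{1/2}$ to the actual cut value in $A(G)$.

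First I would construct the \emph{label-extended} graph $G'$ on vertex set $[n]\times [2]$, where $\{i\ell, j\ell'\}$ is an edge iff $\{i,j\}\in E(G)$ and $\ell \ne \ell'$, and likewise lift $\tilde A$ to a $2n$-by-$2n$ matrix $\tilde A'$ and $D$ to $D' = D \otimes \Id_2$. The key observation is that $\simul_{\eta}(\tilde A') \le 2\simul_\eta(\tilde A) \le 2r$ and $\Normo{D'} \le 2\Normo{D}$, and \maxcut on $G$ corresponds to $\max \iprod{x, A(G')x}$ over integral solutions to $\mathcal{P}_{n,2}$. I would also need to confirm that the hypothesis (iii) lifts, i.e.\ that $\abs{\iprod{x, (A(G') - D'^{1/2}\tilde A' D'^{1/2})x}} \le O(\Normo{D}\cdot \gamma)$ for integral $x$, which should follow directly from hypothesis (iii) and the product structure.

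Next, I would apply \cref{lem:driving-down-global-correlation} with parameter $\bar\eta = c \cdot \eta^2 / r$ for a small enough constant $c$ to obtain a degree-$2$ pseudo-distribution $\zeta$ consistent with $\mathcal{P}_{n,2}$ satisfying $\tE_\zeta \iprod{x, A(G')x} \ge \OPT$ and $\GC_{D'}(\zeta) \le \bar\eta$. To convert this into control on local correlation against $A(G')$, I would first invoke \cref{lem:local-to-global-incoherence} on $\tilde A' $: using (ii) to bound $\Normo{D'^{1/2}\tilde A' D'^{1/2}} \le \Norm{\tilde A'}\Normo{D'} \le O(\Normo{D'})$, so the density parameter $\rho$ is $\Omega(1)$, and combined with $\simul_\eta(\tilde A') \le 2r$ this yields $\LC_{\tilde A'}(\zeta) \le \eta + O(\sqrt{r\GC_{D'}(\zeta)}) \le O(\eta)$ by the choice of $\bar\eta$.

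Finally, I would translate this back to local correlation against $A(G')$. Write $A(G') = D'^{1/2}\tilde A' D'^{1/2} + R$ where by hypothesis (iii) (lifted) the residual satisfies $\abs{\iprod{x,Rx}} \le O(\Normo{D}\gamma)$ on the integral hypercube. Using $\OPT \ge \tfrac{1}{2}\Normo{A(G)} \gtrsim \Normo{D}$ (by hypothesis (i)) lets me express this residual error as $O(\gamma)\cdot \OPT$. Combining with the $O(\eta)$ local correlation bound on $\tilde A'$ and applying \cref{lem:rounding-error} to the independently rounded solution $\hat{\mathbf x}$ produces an integral assignment with expected value at least $(1-O(\eta+\gamma))\OPT$. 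The main obstacle I anticipate is cleanly propagating hypothesis (iii) --- which is a statement about integral $x\in\{0,1\}^n$ --- through the pseudo-expectation and the rounding step, since \cref{lem:rounding-error} operates with $\LC$ of a fixed matrix rather than a pointwise bound on a quadratic form; the right fix is to bound $\LC_{R}(\zeta)$ crudely by $\Normo{R}/\Normo{A(G')}$ using the sum-of-entries bound that (iii) provides (noting the entries of $R$ have controlled sign structure via the bipartition argument used for any $\pm 1$ test vector).
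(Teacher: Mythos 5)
Your proposal follows the paper's route closely: lift to the label-extended graph $G'$ with $\tilde A' = \tilde A\otimes\bigl(\begin{smallmatrix}0&1\\1&0\end{smallmatrix}\bigr)$ and $D' = D\otimes\Id_2$, invoke \cref{lem:driving-down-global-correlation} to get a pseudo-distribution with small global correlation, convert to local correlation against $D'^{1/2}\tilde A'D'^{1/2}$ via \cref{lem:local-to-global-incoherence}, and then use (iii) to control the residual before applying \cref{lem:rounding-error}. Your parameter choice $\bar\eta = c\eta^2/r$ is the one that makes the running time $n^{O(1)}\exp\{O(r/\eta^2)\}$ as claimed (the paper's displayed choice $\bar\eta = C(\eta^2\cdot r)$ is inconsistent with both the lemma and the stated running time and appears to be a typo), so you are right to flag it.

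However, the "fix" you sketch at the end is not correct. You propose to control the residual contribution by bounding $\LC_R(\zeta)$ via $\Normo{R}/\Normo{A(G')}$, asserting that hypothesis (iii) gives a "sum-of-entries bound" on $R$. It does not: (iii) only says that the \emph{quadratic form} $\iprod{x,Rx}$ is small for $x\in\{0,1\}^n$, and a matrix can satisfy that while having $\Normo{R}$ arbitrarily larger — take $R_{ij}=(-1)^{i+j}$, for which $\iprod{x,Rx}$ has massive cancellation on the cube yet $\Normo{R}=n^2$. Since $\Normo{R}\cdot\LC_R(\zeta)=\sum_{i\ell,j\ell'}\abs{R_{i\ell,j\ell'}}\,\abs{\widetilde\Cov(x_{i\ell},x_{j\ell'})}$, there is no implication from (iii) to this quantity being $O(\gamma)\cdot\OPT$, so this path to invoking \cref{lem:rounding-error} against $A(G')$ does not close. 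The way the paper closes instead is to apply \cref{lem:rounding-error} only against $D'^{1/2}\tilde A'D'^{1/2}$ (whose local correlation is what \cref{lem:local-to-global-incoherence} actually controls), and then to evaluate the residual quadratic form \emph{pointwise at the rounded integral solution} $\hat{\mathbf x}\in\{0,1\}^{2n}$, which is exactly the regime where the lifted (iii) applies — rather than trying to convert (iii) into a bound on $\LC_R$.
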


\subsubsection{Maximum 2-CSP}\label{sec:max-csp}
We extend here the result for \maxcut to \twocsp.
We start establishing some notation. 
A \twocsp instance $\cI$ consists of a graph $G(\cI)=([n], E, w),$ known as the constraint graph , where every edge $\set{i,j}$ is labeled with a  binary relations $R_{\cI}\set{i,j}\subseteq [q]^2.$ Here, $q$ is known as the alphabet size. 
The instance $\cI$ can also be represented through its labeled extended graph.
\begin{definition}[Label extended graph]\label{def:label-extended-graph}
    For a \twocsp instance $\cI$, the label extend graph, denoted by $\Gamma(\cI),$ is the graph with:
    \begin{enumerate}
        \item vertex set $[n]\times [q],$ (we index vertices by pairs)
        \item an edge $\set{i\ell,j\ell'}$ with weight $w\set{\ij}$ if $\set{i,j}\in E$ and $\set{\ell,\ell'}\in R_\cI\set{i,j}.$
    \end{enumerate}
\end{definition}
\noindent With a slight abuse of notation, we often equate $\cI$ with its label extended graph $\Gamma(\cI)$, writing for example $A(\cI)$ in place of $A(\Gamma(\cI)).$ 
And $\sigma_1,\ldots,\sigma_{nq}$ for the singular values of $\Gamma(\cI).$
The value of an assignment $x\in [q]^n$ is given by
\begin{align*}
    \valI{\cI}{x}=\sum_{\set{i,j}\in E(\cI)} w\set{i,j}\cdot \iverson{\set{x_i,x_j}\in R_{\cI}\set{i,j}},
\end{align*}
where $\iverson{\cdot}$ denotes the Iverson brackets.
The optimal value of $\cI$, also called the objective value, is then
\begin{align*}
    \OPT(\cI) =\max_{x\in [q]^n}\valI{\cI}{x}.
\end{align*}
\noindent We prove the following statement.
\begin{theorem}\label{thm:max-2-csp}
    There exists an algorithm that, given a \twocsp instance $\cI$ over $n$ variables and alphabet $[q]$, an integer $r>0,$ $\eta\in[0,1],$ $D\in \cD_{nq}$ and a symmetric matrix $\tilde{A}\in\R^{nq\times nq},$ satisfying
    \begin{enumerate}[(i)]
        \item $\Normo{D}\leq O(\Normo{A(\cI)})$
        \item $\Norm{\tilde{A}}\leq O(1)$
        \item for any integral solution $x\in\set{0,1}^{nq}$ to \ref{eq:basic-sdp}, $\Abs{\Iprod{x,\Paren{A(\cI)-D^{1/2}\tilde{A}D^{1/2}}x}}\leq \Normo{D}\cdot\gamma$
    \end{enumerate}
    returns an assignment with value at least
   \begin{align*}
        \Paren{1- O(\eta+\gamma)}\cdot\OPT,
    \end{align*}
    whenever $\simul_{\eta}(\tilde{A})\leq r$.
    Moreover, the algorithm runs in randomized time $(nq)^{O(1)} \cdot\exp\Set{O\Paren{\frac{r\log q}{\eta^2}}}.$
    \begin{proof}
        For an assignment $x\in [q]^n,$ let $\chi\in \set{0,1}^{nq}$ be the vector with entries (indexed by pairs $i\in[n],\ell\in[q]$)
        \begin{align*}
            \chi_{(i, \ell)} =
            \begin{cases}
                1&\text{ if }x_i=\ell\\
                0&\text{ otherwise.}
            \end{cases}
        \end{align*}
        Note that $\valI{\cI}{x}=\iprod{\chi, \Gamma(\cI)\chi}.$
        By \cref{lem:driving-down-global-correlation} we can compute in time $q^{O\Paren{1/\bar{\eta}}}n^{O(1)}$ a degree-$2$ pseudo-distribution consistent with \ref{eq:basic-sdp} with objective value $\OPT$ and satisfying $\GC_{D}\leq \bar{\eta}$ for any $\bar{\eta}>0.$
        By assumption on the spectral norm of $\tilde{A}$
        \begin{align*}
            \Normo{D^{1/2}\tilde{A}D^{1/2}}\leq \Norm{\tilde{A}D}\Normo{DD}\leq O(\Normo{D}).
        \end{align*}
        Hence picking
        \begin{align*}
            \bar{\eta}= C(\eta^2\cdot r)
        \end{align*}
        for a large enough constant $C>0,$
        we get $\LC_{\tilde{A}}(\zeta)\leq \eta$ by \cref{lem:local-to-global-incoherence}.
        Notice now that for any integral solution  $x\in\set{0,1}^{qn}$ to \ref{eq:basic-sdp}
        \begin{align*}
            \iprod{x,A(\cI)x}&=\Iprod{x,\Paren{A(\cI)-D^{1/2}\tilde{A}D^{1/2}+D^{1/2}\tilde{A}D^{1/2}}x}\\
            &=\iprod{x,D^{1/2}\tilde{A}D^{1/2}x} + \Iprod{x,\Paren{A(\cI)-D^{1/2}\tilde{A}D^{1/2}}x}.
        \end{align*}
        And so
        \begin{align*}
            \Abs{\Iprod{x,\Paren{A(\cI)-D^{1/2}\tilde{A}D^{1/2}}x}}
            &\leq 2\Normo{D}\cdot\gamma\\
            &\leq O(\gamma)\cdot\OPT
        \end{align*}
        where the first inequality follows by assumption on $\tilde{A}, D$ and the last inequality follows as $\OPT\geq 4\Normo{A(G)}\geq\Omega(\Normo{D}).$
        By \cref{lem:local-to-global-incoherence} this implies $\LC_{A(\cI)}(\zeta)\leq O(\eta + \gamma).$ By \cref{lem:rounding-error}, the result follows.
    \end{proof}
\end{theorem}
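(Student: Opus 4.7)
The plan is to mirror the proof of \cref{thm:max-cut}, adapted to the CSP setting via the label-extended graph $\Gamma(\cI)$ (\cref{def:label-extended-graph}). First I would encode an assignment $x\in[q]^n$ as its indicator vector $\chi\in\{0,1\}^{nq}$, which satisfies the basic constraints \ref{eq:basic-sdp} and yields $\valI{\cI}{x}=\iprod{\chi,A(\cI)\chi}$. The task thus reduces to approximately maximizing $\iprod{x,A(\cI)x}$ over integral solutions to \ref{eq:basic-sdp}.

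Next, I would invoke \cref{lem:driving-down-global-correlation} to obtain, in time $q^{O(1/\bar\eta)}n^{O(1)}$, a degree-$2$ pseudo-distribution $\zeta$ consistent with \ref{eq:basic-sdp}, with pseudo-objective at least $\OPT(\cI)$ and $\GC_D(\zeta)\le\bar\eta$. Then \cref{lem:local-to-global-incoherence} converts this into a local correlation bound on the surrogate $D^{1/2}\tilde A D^{1/2}$: since $D^{-1/2}(D^{1/2}\tilde A D^{1/2})D^{-1/2}=\tilde A$ has at most $r$ singular values above $\eta$ by hypothesis, and since assumptions $(i)$--$(ii)$ give $\Normo{D^{1/2}\tilde A D^{1/2}}\le\Norm{\tilde A}\cdot\Normo{D}=O(\Normo{D})$ and hence allow $\rho=\Omega(1)$ in the lemma, the choice $\bar\eta=\Theta(\eta^2/r)$ yields $\LC_{D^{1/2}\tilde A D^{1/2}}(\zeta)\le O(\eta)$.

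The final step is to transfer this to a bound on the rounding error for $A(\cI)$ itself. Writing $A(\cI)=D^{1/2}\tilde A D^{1/2}+(A(\cI)-D^{1/2}\tilde A D^{1/2})$, the first summand is handled above via \cref{lem:rounding-error}. For the second summand, assumption $(iii)$ combined with $\Normo{D}\le O(\Normo{A(\cI)})\le O(\OPT(\cI))$ bounds its contribution by $O(\gamma)\cdot\OPT(\cI)$. The construction of $\Gamma(\cI)$ forbids edges $\{i\ell,i\ell'\}$ between distinct labels of the same variable, so the zero-diagonal-blocks hypothesis of \cref{lem:rounding-error} is satisfied. Applying independent rounding (\cref{alg:independent-rounding}) to $\zeta$ then produces an assignment of value at least $(1-O(\eta+\gamma))\cdot\OPT(\cI)$, and the total running time is dominated by the pseudo-distribution construction with $\bar\eta=\Theta(\eta^2/r)$, giving $(nq)^{O(1)}\cdot\exp(O(r\log q/\eta^2))$.

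The main obstacle is the bridge from $(iii)$ to a pseudo-expectation/pseudo-covariance statement: the hypothesis is literally only for integral $x\in\{0,1\}^{nq}$, whereas \cref{lem:rounding-error} controls the rounding error through $|\iprod{\Sigma,A(\cI)-D^{1/2}\tilde A D^{1/2}}|$ with $\Sigma$ the continuous pseudo-covariance of $\zeta$, and through $\tilde{\E}_\zeta\iprod{x,(A(\cI)-D^{1/2}\tilde A D^{1/2})x}$. Closing this gap cleanly requires either strengthening $(iii)$ to hold for all $x$ in the feasible convex body of \ref{eq:basic-sdp} (equivalently, for pseudo-distributions consistent with it), or leveraging convexity together with the vertex bound $(iii)$ to control pseudo-expectations. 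Once this step is in place, the rest is essentially bookkeeping.
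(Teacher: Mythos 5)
Your proposal follows the same route as the paper's own proof (which in turn mirrors the proof of \cref{thm:max-cut}): encode the assignment as an indicator in $\{0,1\}^{nq}$, invoke \cref{lem:driving-down-global-correlation} to obtain a low-global-correlation degree-$2$ pseudo-distribution, convert to low local correlation on the surrogate $D^{1/2}\tilde A D^{1/2}$ via \cref{lem:local-to-global-incoherence}, and transfer back to $A(\cI)$ using the decomposition $A(\cI) = D^{1/2}\tilde A D^{1/2} + B$ with $B = A(\cI) - D^{1/2}\tilde A D^{1/2}$, the lower bound $\Normo{D} \leq O(\OPT)$, and \cref{lem:rounding-error}.

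The concern you flag at the end is legitimate, and it in fact applies equally to the paper's own write-up. The paper closes the transfer step with the sentence ``By \cref{lem:local-to-global-incoherence} this implies $\LC_{A(\cI)}(\zeta)\leq O(\eta+\gamma)$,'' but that lemma relates global correlation to local correlation on a matrix whose \emph{normalized} form has bounded threshold rank, and assumption $(iii)$ gives no control of the threshold rank of $\bar A(\cI)$, nor of $\Normo{B}$, nor of $\norm{B}$; it only controls the quadratic form $\iprod{x,Bx}$ at integral $x$. Tracing through \cref{lem:rounding-error}, the quantity one ultimately needs to control is $\iprod{\Sigma,B}$ (equivalently $\tilde\E_\zeta\iprod{\dyad{x},B} - \iprod{pp^\top,B}$), and only the second summand is reachable from $(iii)$: by multilinearity of $p\mapsto\iprod{pp^\top,B}$ in the block decomposition $p=(p_1,\ldots,p_n)\in(\Delta_q)^n$, one has $\iprod{pp^\top,B}=\E_{\hat x}\iprod{\dyad{\hat x},B}$, which $(iii)$ bounds by $2\Normo{D}\gamma$. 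The pseudo-covariance contribution $\iprod{\Sigma,B}$ is not reached by $(iii)$ as stated. In the paper's downstream use (\cref{thm:dp-2-csp}) this is benign because the surrogate comes with the stronger \emph{spectral} guarantee $\norm{\hat{\mathbf A}-\hat{\mathbf A}'}\leq O(\sigma_{r+1}+\gamma)$, which bounds $\iprod{\Sigma,B}$ directly via $\abs{\iprod{\Sigma,B}}\leq\Normo{D}\cdot\norm{\hat{\mathbf A}-\hat{\mathbf A}'}$ (using $\Sigma\succeq 0$ and $\Tr\paren{D^{1/2}\Sigma D^{1/2}}\le\Normo{D}$). So the fix you suggest --- strengthening $(iii)$, e.g., to a spectral-norm bound on $D^{-1/2}BD^{-1/2}$, or at least to hold on the feasible convex body --- is exactly the right repair, and is implicit in how the theorem is invoked in this paper. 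Aside from this shared imprecision, your proposal is correct and matches the paper's argument.
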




\subsubsection{Maximum bisection}\label{sec:max-bisection}
The technology developed for \cref{thm:max-cut} and \cref{thm:max-2-csp} immediately extend to settings in which additional global constraints are enforced on feasible solutions. As a proof of concept we extend them to \maxbisection, which is the problem of finding the maximum balanced cut.

\begin{theorem}\label{thm:max-bisection}
    Let $\gamma, p\in [0,1].$ There exists an algorithm that, given a $n$-vertex graph $G, \eta\in [0,1], D\in\cD_{n}$ and a symmetric matrix $\tilde{A}\in\R^{n\times n},$ satisfying
    \begin{enumerate}[(i)]
        \item  $\Normo{D}\leq O(\Normo{A(G)})$
        \item $\Norm{\tilde{A}}\leq O(1)$
        \item for any $x\in\set{0,1}^n,\, \Abs{\Iprod{x,\Paren{A(G)-D^{1/2}\tilde{A}D^{1/2}}x}}\leq \Normo{D}\cdot\gamma$
    \end{enumerate}
    with probability $1-n^{-O(1)},$ returns a bipartition  $(L,R)$ such that, the total weight of the cut edges is at least
    \begin{align*}
        \Paren{1 - O(\eta+\gamma)}\cdot\OPT,
    \end{align*}
    and $\min\set{\card{L},\card{R}}\geq \tfrac{n}{2}-\tilde{O}(\sqrt{n}),$
    whenever $\simul_{\eta}(\tilde{A})\leq r.$
    Moreover, the algorithm runs in time $n^{O(1)}\cdot\exp\Set{O\Paren{\frac{r}{\eta^{2}}}}.$
\end{theorem}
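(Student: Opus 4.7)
The plan is to mimic the proof of \cref{thm:max-cut}, augmenting the polynomial system $\mathcal{P}_{n,2}$ with the balance constraint $\sum_{i\in [n]} x_{i1} = n/2$, whose integer solutions are exactly the bisections. As in \cref{thm:max-cut}, I would construct the label-extended $2n$-vertex graph $G'$ with adjacency matrix $A(G')$, together with the matrices $\tilde{A}', D'$; the max-bisection objective is then $\max \iprod{x, A(G')x}$ over integer solutions of this augmented system.

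First I would invoke the natural extension of \cref{lem:driving-down-global-correlation} to polynomial systems enriched with linear (affine) constraints. The existing conditioning-based proof carries over because degree-$1$ constraints are preserved under conditioning \emph{in expectation over the conditioning variable}, by the law of total expectation. This yields, for any $\bar{\eta}>0$, a degree-$2$ pseudo-distribution $\zeta$ consistent with the augmented system such that $\tilde{\E}_\zeta\iprod{x,A(G')x}\geq \OPT$ and $\GC_{D'}(\zeta)\leq \bar{\eta}$, in time $\exp\set{O(1/\bar{\eta})}n^{O(1)}$.

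Choosing $\bar{\eta}=\Theta(\eta^2/r)$, the argument of \cref{thm:max-cut} carries over almost verbatim: \cref{lem:local-to-global-incoherence} together with assumption~(ii) yields $\LC_{\tilde{A}'}(\zeta)\leq \eta$, and assumption~(iii) then gives $\LC_{A(G')}(\zeta)\leq O(\eta+\gamma)$. Invoking \cref{alg:independent-rounding} and \cref{lem:rounding-error} produces a random assignment $\hat{\mathbf x}$ whose expected value is at least $(1-O(\eta+\gamma))\cdot\OPT$.

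Finally, I would verify the approximate balance. Since $\zeta$ satisfies the balance constraint on average, independent rounding yields $\E[\sum_i \hat{\mathbf x}_{i1}]=n/2$, and conditional on the (finitely many) conditioning outcomes defining $\zeta$, the $\hat{\mathbf x}_{i1}$ are independent Bernoulli variables. Hoeffding's inequality then gives $\Abs{\sum_i \hat{\mathbf x}_{i1}-n/2}\leq O(\sqrt{n\log n})=\tilde{O}(\sqrt{n})$ with probability $1-n^{-\omega(1)}$, which combined via union bound with the cut-value concentration event produces a single bipartition satisfying both requirements. The main obstacle is reconciling the conditioning in \cref{lem:driving-down-global-correlation} with the balance constraint: individual conditioned pseudo-distributions may perturb $\sum_i \tilde{\E}_\zeta[x_{i1}]$ slightly from $n/2$, but the number of conditioning steps is only $O(r/\eta^2)$, and a standard concentration argument in the spirit of Raghavendra--Tan absorbs this deviation into the $\tilde{O}(\sqrt{n})$ balance error.
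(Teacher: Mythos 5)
Your proposal matches the paper's proof in all essential respects: add the balance constraint $\sum_i x_{i1}=n/2$ to the system~\ref{eq:basic-sdp}, invoke the balance-constrained version of the driving-down-global-correlation lemma (the paper cites this directly as \cref{lem:driving-down-global-correlation-2} from \cite{barak2011rounding,raghavendra2012approximating} rather than re-deriving it, and in particular the returned pseudo-distribution already satisfies $\tilde{\E}_\zeta\sum_i x_{i1}=n/2$ exactly, so your worry about the conditioning perturbing the balance mean is unfounded), then reuse the argument of \cref{thm:max-cut} verbatim and apply Hoeffding to the independent rounding to get $\min\{\card{L},\card{R}\}\ge n/2-\tilde{O}(\sqrt n)$. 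The one place your write-up is slightly off is the final step: you invoke a ``cut-value concentration event,'' but \cref{lem:rounding-error} only controls the expectation of the rounded objective, not its tail. The paper instead repeats the rounding $n^{O(1)}$ times and picks the best run; one then combines Hoeffding (for balance, per run) with a Markov-type argument on the loss $\OPT-\iprod{\hat{\mathbf x},A\hat{\mathbf x}}$ to argue that some run simultaneously achieves near-balance and cut value $(1-O(\eta+\gamma))\OPT$ with probability $1-n^{-O(1)}$. Replacing your union bound with this repeat-and-select step closes the only real gap.
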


Because the proof of \cref{thm:max-bisection} is  similar to that of \cref{thm:max-cut}, we defer it to \cref{sec:deferred-proofs}.



\subsection{Maximum cut under differential privacy}\label{sec:dp-max-cut}

We combine here \cref{thm:max-cut} with \cref{thm:threshold-rank}.
We reuse the notation introduce in \cref{sec:correlation-rounding}.

\begin{theorem}[Edge-DP \maxcut]\label{thm:dp-max-cut}
    Let $\eps,\delta,\kappa\in [0,1]$ with $\delta\geq 10n^{-100}.$ 
    Let $C>0$ be a large enough universal constant.
    There exists an $(\eps,\delta)$-DP algorithm that, given a graph $G$, $\eps,\delta,\kappa$ an integer $r>0,$ with probability at least $1-n^{-O(1)}$ returns a bipartition such that the number of cut edges is at least
    \begin{align*}
        \Paren{1 -O(\sigma_{r+1}+\kappa +\gamma)}\cdot\OPT
    \end{align*}
    whenever $G$ has 
    \begin{align*}
        d_{\min}\geq C\Paren{\frac{\sqrt{\log(1/\delta)}}{\eps}\cdot \frac{\sqrt{r\cdot\mu_r+\log n}}{\gamma^2\cdot(\sigma_r -\sigma_{r+1})}},\qquad \sigma_r\geq 2\gamma + 3\sigma_{r+1}.
    \end{align*}
    Moreover, the algorithm runs in randomized time
    \begin{align*}
        n^{ O\Paren{1}}\cdot \exp\Set{O\Paren{\frac{r}{(\sigma_{r+1}^2+\kappa^2)\cdot(\sigma_{r+1}+\gamma)}}}.
    \end{align*}
\end{theorem}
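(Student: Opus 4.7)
The plan is to combine Theorem 6.1 with the MaxCut routine of Section 5.1.1 (Theorem 5.9), treating the entire rounding pipeline as post-processing of the private pair $(\hat{\mathbf D},\hat{\mathbf A})$. First I would add independent $N(0,C^2\log(1/\delta)/\eps^2)$ noise to each diagonal entry of $D(G)$ to obtain $\hat{\mathbf D}$, abort (returning a random bipartition) unless $\hat{\mathbf d}_{\min}$ exceeds a threshold of order $\sqrt{\log(n/\delta)}/\eps$, and then invoke the estimator of Theorem 6.1 on $G$ with slack $\gamma$ and budget $(\eps/2,\delta/2)$, producing a rank-$r$ symmetric matrix $\hat{\mathbf A}$ with $\Norm{\hat{\mathbf A}-\bar A_{(r)}}<\gamma$ and $\simul_{\sigma_r(\bar A)}(\hat{\mathbf A})\le r$. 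Differential privacy then follows from Gaussian-mechanism composition. Next I would call the MaxCut algorithm of Theorem 5.9 with parameters $D=\hat{\mathbf D}$, $\tilde A=\hat{\mathbf A}$, and threshold $\eta=\Theta(\sigma_{r+1}+\kappa)$, but with one crucial modification: the SDP step should maximize the surrogate objective $\langle x,\hat{\mathbf D}^{1/2}\hat{\mathbf A}\hat{\mathbf D}^{1/2}x\rangle$ rather than $\langle x,A(G)x\rangle$, so that the SDP solver never touches raw edges. Since $\hat{\mathbf A}$ has rank exactly $r$, the hypothesis $\simul_\eta(\hat{\mathbf A})\le r$ is automatic. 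Conditions (i) and (ii) of Theorem 5.9 are immediate from $\Normo{\hat{\mathbf D}}=\Theta(\Normo{A(G)})$ (Gaussian concentration together with the degree lower bound) and $\Norm{\hat{\mathbf A}}\le 1+\gamma=O(1)$.

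The main obstacle is verifying condition (iii) of Theorem 5.9, namely $|\langle x,(A(G)-\hat{\mathbf D}^{1/2}\hat{\mathbf A}\hat{\mathbf D}^{1/2})x\rangle|\le \Normo{\hat{\mathbf D}}\cdot\gamma$ for every $x\in\{0,1\}^n$. I would decompose
\begin{align*}
    A(G)-\hat{\mathbf D}^{1/2}\hat{\mathbf A}\hat{\mathbf D}^{1/2}
    = D^{1/2}(\bar A-\hat{\mathbf A})D^{1/2}
    + (D^{1/2}-\hat{\mathbf D}^{1/2})\hat{\mathbf A}D^{1/2}
    + \hat{\mathbf D}^{1/2}\hat{\mathbf A}(D^{1/2}-\hat{\mathbf D}^{1/2})
\end{align*}
and handle each summand separately. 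The first term contributes at most $(\sigma_{r+1}+\gamma)\Normo{D}$ via Theorem 6.1 together with $\Norm{\bar A-\bar A_{(r)}}\le\sigma_{r+1}$. The two cross terms are the delicate part: the bound $\Normi{\hat{\mathbf D}-D}=\tilde O(\sqrt{\log(1/\delta)}/\eps)$ combined with the hypothesis $d_{\min}=\tilde\Omega(\sqrt{\log(1/\delta)}/(\eps\gamma))$ yields $(\sqrt{\hat{\mathbf d}_i}-\sqrt{d_i})^2\le O(\gamma d_i)$ entrywise, after which $\Norm{\hat{\mathbf A}}=O(1)$ and Cauchy--Schwarz deliver the desired $O(\gamma)\cdot\Normo{D}$ bound. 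This is precisely where the extra factor of $1/\gamma$ in the degree lower bound is required; combined with Theorem 6.1's own $1/\gamma$ factor, it produces the $\gamma^2$ appearing in the statement.

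With (i)--(iii) in hand, Theorem 5.9 returns a bipartition whose cut value on $G$ is at least $(1-O(\eta+\gamma+\sigma_{r+1}))\OPT=(1-O(\sigma_{r+1}+\kappa+\gamma))\OPT$, in time $n^{O(1)}\cdot\exp(O(r/\eta^2))$. Since $(\sigma_{r+1}+\kappa)^2\le 2(\sigma_{r+1}^2+\kappa^2)$ and $\sigma_{r+1}+\gamma\le O(1)$, this running time is subsumed by the claimed bound $n^{O(1)}\cdot\exp\paren{O\paren{r/((\sigma_{r+1}^2+\kappa^2)(\sigma_{r+1}+\gamma))}}$. The entire rounding reads only $(\hat{\mathbf D},\hat{\mathbf A})$ together with the synthetic matrix derived from them, so the output is post-processing of the private quantities and the overall algorithm is $(\eps,\delta)$-DP.
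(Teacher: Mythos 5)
Your high-level plan (privatize the degrees and the normalized adjacency matrix, build a synthetic graph from $(\hat{\mathbf D},\hat{\mathbf A})$, and run the non-private correlation-rounding \maxcut solver on it as post-processing) matches the paper's algorithm in spirit, and your analysis of the cross terms via $(\sqrt{\hat{\mathbf d}_i}-\sqrt{d_i})^2$ and the origin of the $\gamma^2$ in the degree lower bound are along the right lines. However, there is a genuine gap: you omit the projection step. The rank-$r$ matrix $\hat{\mathbf A}$ produced by the estimator of \cref{thm:threshold-rank} is obtained by adding Gaussian noise to the projector and reconstructing a rank-$r$ matrix, so in general it has negative entries. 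Consequently $\hat{\mathbf D}^{1/2}\hat{\mathbf A}\hat{\mathbf D}^{1/2}$ is not the adjacency matrix of a weighted graph, and \cref{thm:max-cut} cannot be applied to it: its conclusion is phrased in terms of "total weight of cut edges," and its analysis crucially relies on $\OPT\ge\tfrac14\Normo{A}$ and on identities like $\Normo{A}=\iprod{\mathbb 1, A\mathbb 1}$, both of which need non-negative entries. The paper fixes this by projecting $\hat{\mathbf A}'$ onto the set $\cS_n(\hat{\mathbf A}')$ of non-negative matrices near it in spectral norm (step~(3) of \cref{alg:dp-max-cut}), obtaining $\hat{\mathbf A}$; the synthetic graph uses this projected $\hat{\mathbf A}$, while the rank-$r$ matrix $\hat{\mathbf A}'$ is retained as the $\tilde A$ parameter certifying low threshold rank, since after projection $\hat{\mathbf A}$ is no longer rank $r$ (your sentence "Since $\hat{\mathbf A}$ has rank exactly $r$, the hypothesis $\simul_\eta(\hat{\mathbf A})\le r$ is automatic" only applies before projection, which is exactly why the paper passes both matrices).

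A secondary issue: you set $\eta=\Theta(\sigma_{r+1}+\kappa)$ where $\sigma_{r+1}$ is a singular value of $\bar A(G)$. This quantity is a function of the raw input, so using it as a parameter would break the post-processing argument (the running time of the rounding stage depends on $\eta$ and thus leaks information about $G$). The paper instead sets $\eta=\max\{\sigma_{r+1}(\hat{\mathbf A}),\kappa\}$, computed from the already-private matrix, and then uses $\sigma_{r+1}(\hat{\mathbf A})\le\sigma_{r+1}(\bar A)+\gamma$ to translate back to the stated guarantees. Both of these are fixable without changing your overall structure, but as written the proof does not go through.
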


Note that by setting $\kappa=\gamma=0.001$ we immediately get \cref{thm:dp-max-cut-easy}.
We next present the algorithm behind \cref{thm:dp-max-cut}. To do that we define the following convex set $\cS_{n}(A)$.
\begin{align}\label{eq:projected set}
    \Set{
    \begin{aligned}
        &M_{ij}\geq 0&\forall i,j\in [n]&\\
        &\Norm{M-A}\leq  \Abs{1-\tfrac{1}{\sqrt{n}}\Norm{A\mathbb{1}}}.
    \end{aligned}
    }
\end{align}
The algorithm is the following.
\begin{algorithmbox}\label{alg:dp-max-cut}
    \mbox{}\\
    \textbf{Input:} Graph $G\,, 0<\eps,\delta,\kappa\leq 1,$ integer $r>0$\\
    \textbf{Output:} $\hat{x}\in\set{\pm 1}^n$
    \begin{enumerate}[(1)]
        \item Let $\hat{\mathbf{D}}$ be the diagonal matrix with entries $D(G)_{ii}+\mathbf{w}_i$, where each $\mathbf{w}_i$ is sampled independently from $ N\Paren{0, \frac{4\log(4/\delta)}{\eps^2}}$.  Output $\bot$ if any entry of $\hat{\mathbf{D}}$ is negative.
        \item Run the algorithm of \cref{thm:threshold-rank} on input $\bar{A}(G)$ with parameters $\eps/2,\delta/2,r$.  Let $\hat{\mathbf{A}}'$ be its output.
        \item Project $\hat{\mathbf{A}}'$ onto $\cS_{n}(\hat{\mathbf{A}}').$  Let $\hat{\mathbf{A}}$ be the output.
        \item Run the algorithm of \cref{thm:max-cut} on $\hat{\mathbf D}^{1/2}\hat{\mathbf{A}}\hat{\mathbf D}^{1/2}$ with parameters  $\hat{\mathbf{D}}, \hat{\mathbf{A}}',$ $\bm\eta=\max\set{\sigma_{r+1}(\hat{\mathbf{A}}),\kappa}.$
    \end{enumerate}
\end{algorithmbox}

It is easy to see that \cref{alg:dp-max-cut} is indeed differentially private.

\begin{fact}\label{fact:max-cut-alg-dp}
    \cref{alg:dp-max-cut} is $(\eps,\delta)$-DP.
    \begin{proof}
        By \cref{thm:gaussian-mechanism} step (1) is $(\eps/2,\delta/2)$-DP. By \cref{thm:threshold-rank} step (2) is also $(\eps/2,\delta/2)$-DP. Step (3) and (4) only uses the output of the previous two steps and the public parameter in input $r$. Hence by \cref{lem:dp-post-processing} and \cref{lem:dp-composition} the whole algorithm is $(\eps,\delta)$-DP.
    \end{proof}
\end{fact}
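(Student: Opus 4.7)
The plan is a routine composition argument. The algorithm performs only two data-dependent releases, Steps (1) and (2); Steps (3) and (4) are deterministic functions of those releases together with the public parameters $(\eps,\delta,\kappa,r)$, and are therefore post-processing by \cref{lem:dp-post-processing}. So it suffices to (a) show each of Steps (1), (2) is $(\eps/2,\delta/2)$-DP on its own, and (b) combine them via basic composition.

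For Step (1), I would first bound the $\ell_2$-sensitivity of the diagonal degree vector under edge-adjacency (\cref{def:graph-adjacency}). Adding or removing a single edge changes exactly two degrees by one each, so the $\ell_2$-sensitivity of $D(G)$ viewed as a vector is $\sqrt{2}$. The standard Gaussian mechanism (\cref{thm:gaussian-mechanism}) with the stated coordinate-wise noise variance $4\log(4/\delta)/\eps^2$ is then calibrated to yield $(\eps/2,\delta/2)$-differential privacy (up to the absolute constants appearing in the statement of \cref{thm:gaussian-mechanism}). The decision to output $\bot$ when some coordinate of $\hat{\mathbf D}$ is negative depends only on $\hat{\mathbf D}$, hence is post-processing of the released noisy vector and does not further affect privacy.

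For Step (2), I would directly invoke \cref{thm:threshold-rank}: instantiated with parameters $\eps/2,\delta/2$ on $\bar{A}(G)$, its output $\hat{\mathbf A}'$ is $(\eps/2,\delta/2)$-DP with respect to edge-adjacency of $G$. Note in particular that although the analysis of \cref{thm:threshold-rank} itself internally privatizes a minimum degree, this is subsumed into its DP guarantee, so we may treat Step (2) as a single black-box $(\eps/2,\delta/2)$-DP release.

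Finally, since Steps (1) and (2) are two independent releases on the input graph $G$, basic adaptive composition (\cref{lem:dp-composition}) gives that the joint release $(\hat{\mathbf D},\hat{\mathbf A}')$ is $(\eps,\delta)$-DP. All subsequent computations—the projection onto $\cS_n(\hat{\mathbf A}')$ in Step (3), and the invocation of \cref{thm:max-cut} in Step (4) on inputs $\hat{\mathbf D}^{1/2}\hat{\mathbf A}\hat{\mathbf D}^{1/2}$, $\hat{\mathbf D}$, $\hat{\mathbf A}'$, and $\bm\eta=\max\{\sigma_{r+1}(\hat{\mathbf A}),\kappa\}$—depend only on $(\hat{\mathbf D},\hat{\mathbf A}')$ together with the public parameters, so by \cref{lem:dp-post-processing} the final output remains $(\eps,\delta)$-DP. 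No step presents a genuine obstacle; the only small verification is that the noise variance in Step (1) meets the Gaussian-mechanism requirement for sensitivity $\sqrt{2}$, which is immediate from \cref{thm:gaussian-mechanism}.
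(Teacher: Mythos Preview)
Your proposal is correct and takes essentially the same approach as the paper: establish $(\eps/2,\delta/2)$-DP for each of Steps (1) and (2) via the Gaussian mechanism and \cref{thm:threshold-rank} respectively, then apply composition (\cref{lem:dp-composition}) and post-processing (\cref{lem:dp-post-processing}) for Steps (3)--(4). Your treatment is in fact slightly more careful than the paper's, explicitly computing the $\ell_2$-sensitivity $\sqrt{2}$ of the degree vector and noting the constant-matching for the Gaussian mechanism.
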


To prove \cref{thm:dp-max-cut} we will use the following fact, which bound the  quadratic form of any vector over the difference between the input graph and the privatize graph obtained in step (3). 

\begin{fact}\label{fact:binary-quadratic-form-perturbation}
    Let $G$ be an $n$-vertex graph. Let $\hat{A}$ be a symmetric matrix  satisfying $\Norm{\Bar{A}(G)-\hat{A}}\leq \gamma$ and $\Norm{\hat{A}}\leq \rho.$ Let $0\sle\hat{D}\in\R^{n\times n}$ be a diagonal matrix such that $\Normi{D(G)-\hat{D}}\leq \beta.$
    Then for any $x\in \R^n$
    \begin{align*}
        \Abs{\iprod{x, A(G)x}- \iprod{x, \hat{D}^{1/2}\hat{A}\hat{D}^{1/2}x}}\leq &2(\Normi{x}^2+\Normi{x})\cdot\rho\cdot\beta\cdot n \\
        &+ 2\rho\cdot\Normi{x}\cdot\sqrt{\beta \cdot n\cdot \Normo{D}} \\
        &+\Normi{x}\cdot\gamma\cdot \Normo{D}.
    \end{align*}
\end{fact}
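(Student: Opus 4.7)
The plan is to rewrite $A(G) = D^{1/2}\bar A(G) D^{1/2}$ and insert the "approximate" matrix $\hat D^{1/2}\hat A \hat D^{1/2}$ so that the error cleanly splits into a spectral-norm-controlled piece (from $\bar A - \hat A$) and pieces involving the diagonal degree-error matrix $E := D^{1/2} - \hat D^{1/2}$. The bound then follows from Cauchy--Schwarz together with the spectral bounds $\|\bar A - \hat A\|\le\gamma$, $\|\hat A\|\le \rho$, and a standard operator-norm bound on $E$.

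Concretely, I would first record that $E$ is diagonal with $\|E\|_{\mathrm{op}}\le \sqrt{\beta}$: this follows from the elementary inequality $|\sqrt{a}-\sqrt{b}|\le \sqrt{|a-b|}$ for $a,b\ge 0$ together with $\|D-\hat D\|_{\max}\le \beta$. In particular $\|Ex\|_2^2 \le \beta \sum_i x_i^2 \le \beta n\|x\|_\infty^2$. Writing $D^{1/2}=\hat D^{1/2}+E$ and expanding yields
\[
D^{1/2}\bar A D^{1/2} - \hat D^{1/2}\hat A \hat D^{1/2} \;=\; D^{1/2}(\bar A - \hat A)D^{1/2} + \hat D^{1/2}\hat A E + E\hat A \hat D^{1/2} + E\hat A E,
\]
so it suffices to bound the four corresponding quadratic forms.

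The spectral term gives the last summand of the statement: by Cauchy--Schwarz and $\|\bar A - \hat A\|\le \gamma$,
\[
\bigabs{\iprod{x, D^{1/2}(\bar A - \hat A)D^{1/2}x}} \le \gamma \|D^{1/2}x\|_2^2 \le \gamma \|x\|_\infty \|D\|_1,
\]
where $\sum_i D_{ii} x_i^2 \le \|x\|_\infty \sum_i D_{ii} |x_i| \le \|x\|_\infty \|D\|_1$ was used. For the two cross terms, Cauchy--Schwarz with $\|\hat A\|\le \rho$ gives
\[
\bigabs{\iprod{x, \hat D^{1/2}\hat A E x}} \le \rho \|\hat D^{1/2}x\|_2 \cdot \|Ex\|_2 \le \rho \|x\|_\infty \sqrt{\|D\|_1 + n\beta}\cdot \sqrt{\beta n}\,\|x\|_\infty,
\]
using $\sum_i \hat D_{ii} \le \|D\|_1 + n\beta$. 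Applying $\sqrt{\|D\|_1+n\beta}\le \sqrt{\|D\|_1}+\sqrt{n\beta}$ splits this into the middle term $\rho \|x\|_\infty \sqrt{\beta n\|D\|_1}$ of the statement plus a residual $\rho\beta n \|x\|_\infty^2$. The symmetric cross term produces the factor $2$. Finally $|\iprod{x,E\hat A E x}| \le \rho \|Ex\|_2^2 \le \rho \beta n \|x\|_\infty^2$ contributes another $\rho\beta n$ piece. Summing all contributions yields the claimed inequality.

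The main obstacle is purely bookkeeping: collecting the various $\rho\beta n$ contributions (from $E\hat A E$ and from the slack $\sqrt{\|D\|_1+n\beta}$ in the cross terms) into the form $2(\|x\|_\infty^2+\|x\|_\infty)\rho\beta n$, and verifying that the $\|x\|_\infty$ powers in the other two terms come out as written by using $|x_i|\le \|x\|_\infty$ exactly once in each relevant sum. No additional ingredient beyond Cauchy--Schwarz, the spectral bounds on $\hat A$ and $\bar A - \hat A$, and the operator norm estimate on $E$ is needed.
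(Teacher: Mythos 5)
Your proof is correct and takes essentially the same route as the paper's: rewrite $A(G)=D^{1/2}\bar A D^{1/2}$, insert $\hat A$ to isolate the $\gamma$-term, substitute $D^{1/2}=\hat D^{1/2}+E$, and bound the cross and quadratic $E$-terms via Cauchy--Schwarz together with $\|\hat A\|\le\rho$ and the entrywise bound on $E$ coming from $|\sqrt a-\sqrt b|\le\sqrt{|a-b|}$. The one spot you correctly flag as bookkeeping does deserve a second look: your Cauchy--Schwarz estimate for the cross term naturally yields $\rho\|x\|_\infty^2\sqrt{\beta n\Normo{D}}$ rather than the stated $\rho\|x\|_\infty\sqrt{\beta n\Normo{D}}$, which matches the fact only under $\|x\|_\infty\le 1$; however the paper's own proof carries the same implicit restriction (it uses $\Norm{\hat D^{1/2}x}\le\sqrt{\Normo{\hat D}}$), and the fact is only invoked on vectors in $\{0,1\}^n$ or $\{\pm1\}^n$, so this is harmless in context.
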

We defer the proof of \cref{fact:binary-quadratic-form-perturbation} to \cref{sec:deferred-proofs}.
The next statement shows that if $\hat{\mathbf{A}}'$ is close to $\bar{G}$ in spectral norm, then so is $\hat{\mathbf{A}}.$

\begin{fact}\label{fact:spectral-norm-after-projection}
    Let $G$ be an $n$-vertex graph and $r>0$ an integer. Let $\hat{A}'\in\R^{n\times n}$ be a symmetric matrix satisfying $\Norm{\bar{A}_{(r)}(G)-\hat{A}'}\leq \gamma$ and let $\hat{A}$ be the projection of $\hat{A}'$ onto $\cS_{n}(\hat{A}').$
    Then
    \begin{align*}
        \Norm{\bar{A}(G)-\hat{A}}\leq 2\sigma_{r+1}+2\gamma.
    \end{align*}
    \begin{proof}
        Because $\bar{A}\in \cS_n(\hat{A}'),$ by triangle inequality
        \begin{align*}
             \Norm{\bar{A}-\hat{A}}&=\Norm{\bar{A}-\hat{A}+\hat{A}'-\hat{A}'}\\
             &\leq \Norm{\bar{A}-\hat{A}'}+\Norm{\hat{A}-\hat{A}'}\\
             &\leq 2\Norm{\bar{A}-\hat{A}'}.
        \end{align*}
    \end{proof}
\end{fact}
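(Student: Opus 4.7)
The plan is to use a projection argument: since $\bar{A}(G)$ lies in the convex set $\cS_{n}(\hat{A}')$, and $\hat{A}$ is the projection of $\hat{A}'$ onto this set, $\hat{A}$ is at least as close to $\hat{A}'$ (in spectral norm) as $\bar{A}$ is. A triangle inequality then converts this into the desired bound on $\Norm{\bar{A}(G) - \hat{A}}$.

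First I would verify membership $\bar{A} \in \cS_{n}(\hat{A}')$. Non-negativity of the entries of $\bar{A}$ is immediate because $\bar{A} = D(G)^{-1/2} A(G) D(G)^{-1/2}$ with $A(G)$ entrywise non-negative and $D(G)^{-1/2}$ a non-negative diagonal matrix. For the spectral-norm condition in \eqref{eq:projected set}, I would bound $\Norm{\bar{A} - \hat{A}'}$ via the triangle inequality: by Eckart--Young, $\Norm{\bar{A} - \bar{A}_{(r)}} = \sigma_{r+1}$, and combining with the hypothesis $\Norm{\bar{A}_{(r)} - \hat{A}'} \leq \gamma$ yields $\Norm{\bar{A} - \hat{A}'} \leq \sigma_{r+1} + \gamma$, which I expect to be dominated by the slack $\Abs{1-\tfrac{1}{\sqrt{n}}\Norm{\hat{A}'\mathbb{1}}}$ in the parameter regime relevant to invocations of the fact.

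Once membership is established, the defining property of the projection onto a convex set (in spectral norm) yields $\Norm{\hat{A} - \hat{A}'} \leq \Norm{\bar{A} - \hat{A}'}$. A triangle inequality $\Norm{\bar{A}-\hat{A}} \leq \Norm{\bar{A}-\hat{A}'} + \Norm{\hat{A}'-\hat{A}}$ together with the two bounds then gives $\Norm{\bar{A}-\hat{A}} \leq 2\Norm{\bar{A}-\hat{A}'} \leq 2(\sigma_{r+1} + \gamma)$, which is the desired conclusion.

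The main obstacle is the first step --- confirming that $\bar{A}$ satisfies the spectral-norm constraint in the definition of $\cS_{n}(\hat{A}')$. Modulo this verification, the statement reduces to a one-line application of the projection-onto-convex-set inequality followed by a triangle inequality.
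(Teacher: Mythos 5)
Your proposal is correct and follows the same route as the paper: membership of $\bar{A}$ in $\cS_n(\hat{A}')$, the projection contraction $\Norm{\hat{A}-\hat{A}'}\le\Norm{\bar{A}-\hat{A}'}$, a triangle inequality, and Eckart--Young plus the hypothesis to conclude $\Norm{\bar{A}-\hat{A}'}\le\sigma_{r+1}+\gamma$. You are in fact more explicit than the paper, which states $\bar{A}\in\cS_n(\hat{A}')$ without verification and stops at $2\Norm{\bar{A}-\hat{A}'}$ without supplying the final Eckart--Young step; your flagging of the spectral-norm membership constraint as an obstacle requiring the parameter regime of the downstream theorem is a legitimate observation that the paper leaves implicit.
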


We are finally ready to study the guarantees of \cref{alg:dp-max-cut} and prove the Theorem.
\begin{proof}[Proof of \cref{thm:dp-max-cut}]
    By concentration of the Gaussian distribution, with probability at least $1-n^{-200}$ we have $\max_{i\in [n]}\Abs{\hat{\mathbf{D}}_{ii}-D_{ii}}\leq O\Paren{\frac{\sqrt{\log n}}{\eps}}$ for a large enough hidden constant. We condition the rest of the analysis on this event. We also condition the analysis on the event that the conclusion of \cref{thm:threshold-rank} is verified. All these events happen simultaneously with probability at least $1-n^{-O(1)}.$
    Then by \cref{thm:threshold-rank} $\Norm{\hat{\mathbf{A}}'-\bar{A}_{(r)}}\leq \gamma$ and thus by \cref{fact:spectral-norm-after-projection}
    \begin{align*}
        \Norm{\hat{\mathbf{A}}-\bar{A}}\leq2\sigma_{r+1}+2\gamma.
    \end{align*}
     We also have by assumption on $\gamma,$ $\sigma_{r+1}(\hat{\mathbf{A}})\leq \sigma_{r+1}(\bar{A}).$
    Next observe that for any $x\in\set{0,1}^n$
     \begin{align*}
         \Abs{\iprod{x,\hat{\mathbf D}^{1/2}(\hat{\mathbf{A}}-\hat{\mathbf{A}}')\hat{\mathbf D}^{1/2}x}}\leq \Normo{\hat{\mathbf{D}}}\Norm{\hat{\mathbf{A}}-\hat{\mathbf{A}}'}\leq \Paren{4\sigma_{r+1}+4\gamma}\Normo{A}.
     \end{align*}
     As  $\Norm{\hat{\mathbf{A}}'}\leq 1 + \gamma,$ to apply \cref{thm:max-cut} we now  argue that $\Normo{\hat{\mathbf{D}}}$ is close to $\Normo{\hat{\mathbf D}^{1/2}\hat{\mathbf{A}}\hat{\mathbf D}^{1/2}}.$
     Indeed, we have
     \begin{align*}
            \Abs{\Normo{\hat{\mathbf D}^{1/2}\hat{\mathbf{A}}\hat{\mathbf D}^{1/2}} - \Normo{\hat{\mathbf{D}}}}&\leq \Abs{\Normo{\hat{\mathbf D}^{1/2}\hat{\mathbf{A}}\hat{\mathbf D}^{1/2}}-\Normo{A}} + \Abs{\Normo{A}- \Normo{\hat{\mathbf{D}}}}\\
            &\leq \Paren{4\sigma_{r+1}+\gamma}\Normo{\hat{\mathbf D}^{1/2}\hat{\mathbf{A}}\hat{\mathbf D}^{1/2}}
     \end{align*}
     Applying now \cref{thm:max-cut} we obtain, in expectation, an integral solution of value 
     \begin{align*}
         (1-O(\sigma_{r+1}+\kappa+\gamma))\cdot \OPT(\hat{\mathbf D}^{1/2}\hat{\mathbf{A}}\hat{\mathbf D}^{1/2}).
     \end{align*}
     The required time is 
     \begin{align*}
         n^{O(1)}&\cdot \exp\Set{O\Paren{\frac{r}{(\sigma_{r+1}^2+\kappa^2)\cdot(\sigma_{r+1}+\gamma)}}}.
     \end{align*}
     as the first three steps of the algorithm require polynomial time.
     It remains to argue that any  solution for $\hat{\mathbf D}^{1/2}\hat{\mathbf{A}}\hat{\mathbf D}^{1/2}$ of high objective value is also a good solution for the original graph $G$.
     To this end, Observe now that for every partition $x\in\set{\pm 1}^n$, the weight of cut edges in $G$ is exactly $\frac{1}{2}\Paren{\Normo{A}-\iprod{x,Ax}}.$ 
    By \cref{fact:binary-quadratic-form-perturbation}
    \begin{align*}
        \Abs{\iprod{x,Ax}-\iprod{x,\hat{\mathbf D}^{1/2}\hat{\mathbf{A}}\hat{\mathbf D}^{1/2}x}}&\leq O\Paren{\frac{n\sqrt{\log n}}{\eps} + \sqrt{\frac{\Normo{A}n\sqrt{\log n}}{\eps}}} + \Paren{2\sigma_{r+1}+\frac{\gamma}{16}}\Normo{A}\\
        &\leq  \Paren{2\sigma_{r+1}+\frac{\gamma}{8}}\Normo{A},
    \end{align*}
    where we used the bound on  $d_{\min}(G).$ 
    Similarly, because both matrices have non-negative entries, we also have
    \begin{align*}
        \Abs{\Normo{A}-\Normo{\hat{\mathbf D}^{1/2}\hat{\mathbf{A}}\hat{\mathbf D}^{1/2}}}=\Abs{\iprod{\mathbb{1}, \Paren{A-\hat{\mathbf D}^{1/2}\hat{\mathbf{A}}\hat{\mathbf D}^{1/2}}\mathbb{1}}}\leq \Paren{2\sigma_{r+1}+\frac{\gamma}{8}}\Normo{A}.
    \end{align*}
    Combining the two inequalities we have for any $x\in\set{\pm 1}^n$
    \begin{align}
        \frac{1}{2}\Abs{\Normo{A}-\iprod{x,Ax}- \Normo{\hat{\mathbf D}^{1/2}\hat{\mathbf{A}}\hat{\mathbf D}^{1/2}} + \iprod{x,\hat{\mathbf D}^{1/2}\hat{\mathbf{A}}\hat{\mathbf D}^{1/2}x}}&\leq \Paren{2\sigma_{r+1}+\frac{\gamma}{4}}\Normo{A}\label{eq:generic-bound}\\
        &\leq \Paren{8\sigma_{r+1}+\gamma} \OPT.\notag
    \end{align}
    using the fact that $\OPT\geq \Normo{A}/4.$
    This concludes the proof.
    %
    %
\end{proof}

\subsection{Maximum 2-CSP under differential privacy}\label{sec:dp-max-two-csp}
In this section we extend our DP result for \maxcut to \twocsp.
We reuse the notation introduce in \cref{sec:correlation-rounding}.
We are interested in differential privacy with respect to edge-adjacency over the label extended graph.
That is, we say two \twocsp instances $\cI,\cI'$ are adjacent if the respective label extended graphs are adjacent according to \cref{def:graph-adjacency}.
That is, we assume the alphabet and the set of variables to be \textit{public} knowledge, but \textit{not} the constraint graph and the collection of relations.
Combining \cref{thm:max-2-csp}  with \cref{thm:threshold-rank} we obtain the following theorem.

\begin{theorem}[Edge-DP \twocsp]\label{thm:dp-2-csp}
    Let $\eps,\delta, \kappa\in [0,1]$ with $\delta\geq 10n^{-100}.$ 
    Let $C>0$ be a large enough universal constant.
    There exists an $(\eps,\delta)$-DP algorithm that, given an undirected, \twocsp instance  $\cI$ over $n$ variables and alphabet $[q]$, $\eps,\delta,$ an integer $r>0,$ with probability at least $1-n^{-O(1)}$ returns an assignment of value
    \begin{align*}
        \Paren{1 -O(\sigma_{r+1}+\kappa+\gamma)}\cdot\OPT
    \end{align*}
    whenever $\Gamma(\cI)$ has 
    \begin{align*}
        d_{\min}\geq C\Paren{\frac{\sqrt{\log(1/\delta)}}{\eps}\cdot \frac{ \sqrt{r\cdot\mu_r+\log n}}{\gamma^2\cdot(\sigma_r -\sigma_{r+1})}},\qquad \sigma_r\geq 2\gamma + 3\sigma_{r+1}.
    \end{align*}
    Moreover, the algorithm runs in randomized time 
    \begin{align*}
        n^{O\Paren{1}}\cdot\exp\Set{O\Paren{\frac{r\cdot \log q}{(\sigma_{r+1}^2+\kappa^2)\cdot(\sigma_{r+1}+\gamma)}}}.
    \end{align*}
\end{theorem}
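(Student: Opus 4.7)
The plan is to follow the proof of \cref{thm:dp-max-cut} almost verbatim, with the $nq$-vertex label-extended graph $\Gamma(\cI)$ playing the role of the input graph $G$ and with \cref{thm:max-2-csp} replacing \cref{thm:max-cut} in the final rounding step. The algorithm I have in mind: (1) privatize the diagonal degree matrix $D(\Gamma(\cI))$ via the Gaussian mechanism with variance $O(\log(1/\delta)/\eps^2)$, outputting $\bot$ on any negative entry to obtain $\hat{\mathbf D}$; (2) run \cref{thm:threshold-rank} on $\bar A(\Gamma(\cI))$ with budget $(\eps/2,\delta/2)$ and threshold rank $r$ to obtain a rank-$r$ matrix $\hat{\mathbf A}'$; (3) project $\hat{\mathbf A}'$ onto $\cS_{nq}(\hat{\mathbf A}')$ to obtain a non-negative matrix $\hat{\mathbf A}$; (4) invoke the algorithm of \cref{thm:max-2-csp} on the symmetric matrix $\hat{\mathbf D}^{1/2}\hat{\mathbf A}\hat{\mathbf D}^{1/2}$ together with $\hat{\mathbf D}, \hat{\mathbf A}$ and the parameter $\bm \eta=\max\set{\sigma_{r+1}(\hat{\mathbf A}), \kappa}$, returning the resulting assignment.

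Privacy is immediate from the same argument as in \cref{fact:max-cut-alg-dp}: step~(1) is $(\eps/2,\delta/2)$-DP by the Gaussian mechanism because under edge-adjacency in $\Gamma(\cI)$ the degree vector has bounded $\ell_2$-sensitivity, step~(2) is $(\eps/2,\delta/2)$-DP by \cref{thm:threshold-rank}, and steps~(3)--(4) are post-processing, so composition yields $(\eps,\delta)$-DP. For utility I would repeat the accounting of \cref{thm:dp-max-cut}: Gaussian concentration gives $\Normi{D-\hat{\mathbf D}}\leq O(\sqrt{\log(nq)}/\eps)$, \cref{thm:threshold-rank} provides $\Norm{\hat{\mathbf A}'-\bar A_{(r)}(\Gamma(\cI))}\leq \gamma$ under the given $d_{\min}$ hypothesis (applied to $\Gamma(\cI)$, whose minimum degree and coherence agree with those of $\cI$), and \cref{fact:spectral-norm-after-projection}, applied on the label-extended graph, then yields $\Norm{\hat{\mathbf A}-\bar A(\Gamma(\cI))}\leq 2\sigma_{r+1}+2\gamma$ together with $\Norm{\hat{\mathbf A}}\leq O(1)$. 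Weyl's inequality then gives $\simul_{\bm\eta}(\hat{\mathbf A})\leq r$, which will feed into the running-time bound of \cref{thm:max-2-csp}.

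To check the perturbation hypothesis of \cref{thm:max-2-csp}, I would apply \cref{fact:binary-quadratic-form-perturbation} to the $nq$-vertex graph $\Gamma(\cI)$ and any $\chi\in\set{0,1}^{nq}$, so that
\[
\Abs{\Iprod{\chi, A(\cI)\chi}-\Iprod{\chi, \hat{\mathbf D}^{1/2}\hat{\mathbf A}\hat{\mathbf D}^{1/2}\chi}}\leq (2\sigma_{r+1}+O(\gamma))\cdot \Normo{A(\cI)},
\]
where the degree-noise contributions are absorbed via the $d_{\min}$ assumption exactly as in~\eqref{eq:generic-bound}. Since $\valI{\cI}{x}=\Iprod{\chi,A(\cI)\chi}$ for the indicator $\chi$ of an assignment $x$, and since the hypothesis $\Normo{\hat{\mathbf D}}\leq O(\Normo{A(\cI)})$ follows from $\Normi{D-\hat{\mathbf D}}$ being small against $d_{\min}$, \cref{thm:max-2-csp} with $\eta=\bm\eta$ returns an assignment whose value for $\hat{\mathbf D}^{1/2}\hat{\mathbf A}\hat{\mathbf D}^{1/2}$ is within a factor $(1-O(\bm\eta+\gamma))$ of $\OPT$; transferring back to $\cI$ via the quadratic-form bound above gives the advertised $(1-O(\sigma_{r+1}+\kappa+\gamma))\cdot \OPT$ guarantee, and the running time is $n^{O(1)}\cdot \exp\set{O(r\log q/((\sigma_{r+1}^2+\kappa^2)(\sigma_{r+1}+\gamma)))}$.

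The main obstacle is purely bookkeeping: verifying the two hypotheses of \cref{thm:max-2-csp} (spectral-norm and $\ell_1$ comparability) for the privatized pair $(\hat{\mathbf D}, \hat{\mathbf A})$, and then tracing how the per-entry degree noise $O(\sqrt{\log(nq)}/\eps)$, the spectral error $2\sigma_{r+1}+2\gamma$, and the $\cS_{nq}$-projection error combine in \cref{fact:binary-quadratic-form-perturbation} when $x$ is the indicator of an assignment rather than $\set{\pm 1}^n$. Because $\chi$ still satisfies $\Normi{\chi}=1$, the same bound that drove the \maxcut proof applies without modification, so the extension to \twocsp is essentially an exercise in repeating the three-step decomposition $A(\cI)\leadsto D^{1/2}\bar A D^{1/2}\leadsto \hat{\mathbf D}^{1/2}\hat{\mathbf A}\hat{\mathbf D}^{1/2}$ on the label-extended graph.
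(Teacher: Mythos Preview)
Your approach is essentially the paper's: it too instantiates \cref{alg:dp-max-cut} on the label-extended graph $\Gamma(\cI)$, argues privacy via \cref{fact:max-cut-alg-dp}, obtains $\Norm{\hat{\mathbf A}-\bar A}\leq 2\sigma_{r+1}+2\gamma$ from \cref{thm:threshold-rank} plus \cref{fact:spectral-norm-after-projection}, feeds $(\hat{\mathbf D},\hat{\mathbf A}')$ to \cref{thm:max-2-csp} (you pass $\hat{\mathbf A}$ rather than the rank-$r$ $\hat{\mathbf A}'$, a harmless variant), and transfers the resulting assignment back to $\cI$ via \cref{fact:binary-quadratic-form-perturbation}.

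The one real gap is your claim that \cref{fact:binary-quadratic-form-perturbation} ``applies without modification'' because $\Normi{\chi}=1$. Applied verbatim on the $nq$-vertex graph, that fact produces a leading error term of order $(\sigma_{r+1}+\gamma)\Normo{D(\Gamma(\cI))}=(\sigma_{r+1}+\gamma)\Normo{A(\cI)}$, and converting this to a multiplicative $O(\sigma_{r+1}+\gamma)\cdot\OPT$ loss would require $\OPT\geq \Omega(\Normo{A(\cI)})$; for a general \twocsp a random assignment only guarantees $\OPT\geq \Normo{A(\cI)}/(2q^2)$, so you lose a $q^2$ factor. The paper explicitly flags this point and instead exploits the sparsity of the assignment indicator $\chi$ (exactly one nonzero coordinate per block of $q$): the governing quantity $\Norm{\hat{\mathbf D}^{1/2}\chi}^2=\Normo{\hat{\mathbf D}\chi}=\sum_i \hat d_{\Gamma}(i,x_i)$ is then controlled by the \emph{constraint}-graph quantity $\Normo{A(G(\cI))}$ rather than by the label-extended $\Normo{A(\cI)}$, after which the comparison with $\OPT$ goes through without the $q$ loss. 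This is exactly the ``reprove \cref{fact:binary-quadratic-form-perturbation} for sparse vectors'' step that your final paragraph identifies as bookkeeping but does not actually carry out.
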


The algorithm for \twocsp is similar to  \cref{alg:dp-max-cut}. We state it here for completeness.

\begin{algorithmbox}\label{alg:dp-max-csp}
    \mbox{}\\
    \textbf{Input:} Undirected instance $\cI\,, 0<\eps,\delta,\leq 1,$ integer $>0$\\
    \textbf{Output:} $\hat{x}\in [q]^n$
    \begin{enumerate}[(1)]
        \item Let $\hat{\mathbf{D}}$ be the $nq$-by-$nq$ diagonal matrix with entries $D(\cI)_{i\ell,i\ell}+\mathbf{w}_{i\ell}$, where each $\mathbf{w}_{i\ell}$ is sampled independently from $ N\Paren{0, \frac{4\log(4/\delta)}{\eps^2}}$.  Output $\bot$ if any entry of $\hat{\mathbf{D}}$ is negative.
        \item Run the algorithm of \cref{thm:threshold-rank} on input $\bar{A}(\cI)$ with parameters $\eps/2,\delta/2,r$. Let $\hat{\mathbf{A}}'$ be its output.
        \item Project $\hat{\mathbf{A}}'$ onto $\cS_{nq}(\hat{\mathbf{A}}')$ (as defined in \ref{eq:projected set}).  Let $\hat{\mathbf{A}}$ be the output.
        \item Run the algorithm of \cref{thm:max-2-csp}  on input $\hat{\mathbf D}^{1/2}\hat{\mathbf{A}}\hat{\mathbf D}^{1/2}$ with parameters  $\bm\eta=\max\set{\sigma_{r+1}(\hat{\mathbf{A}}),\kappa},\hat{\mathbf{D}}, \hat{\mathbf{A}}'.$
    \end{enumerate}
\end{algorithmbox}

Next we prove the Theorem.
\begin{proof}[Proof of \cref{thm:dp-2-csp}]
    We use $A$ to denote the adjacency matrix of $\Gamma(\cI)$ and $A(G)$ to denote the adjacency matrix of $G(\cI).$ We have the following relation $\Normo{A(G)}\leq\Normo{A}\leq q^2\Normo{A(G)}.$
    By definition of adjacency of \twocsp instances, $(\eps,\delta)$-differential privacy follows by \cref{fact:max-cut-alg-dp}.
    By concentration of the Gaussian distribution, with probability at least $1-n^{-200}$ we have 
    \begin{align*}
        \max_{i\in [n],\ell\in [q]}\Abs{\hat{\mathbf{D}}_{i\ell,i\ell}-D_{i\ell,i\ell}}\leq O\Paren{\frac{\sqrt{\log (nq)}}{\eps}}\leq O\Paren{\frac{\sqrt{\log (n)}}{\eps}}
    \end{align*}
    for a large enough hidden constant. We condition the rest of the analysis on this event. We also condition the analysis on the event that the conclusion of \cref{thm:threshold-rank} is verified. All these events happen simultaneously with probability at least $1-n^{-O(1)}.$
    Then by \cref{thm:threshold-rank} $\Norm{\hat{\mathbf{A}}'-\bar{A}_{(r)}}\leq \gamma$ and thus by \cref{fact:spectral-norm-after-projection}
    \begin{align*}
        \Norm{\hat{\mathbf{A}}-\bar{A}}\leq 2\sigma_{r+1}+2\gamma.
    \end{align*}
    We also have $\sigma_{r+1}(\hat{\mathbf{A}})<\sigma_r(\bar A).$
    For any assignment $x\in [q]^n,$ let $\chi\in \set{0,1}^{nq}$ be the vector with entries (indexed by pairs $i\in[n],\ell\in[q]$)
    \begin{align*}
        \chi_{(i, \ell)} =
        \begin{cases}
            1&\text{ if }x_i=\ell\\
            0&\text{ otherwise.}
        \end{cases}
    \end{align*}
    Note that $\valI{\cI}{x}=\iprod{\chi, \Gamma(\cI)\chi}.$
    Next observe that for any $x\in\set{0,1}^n$
    \begin{align*}
         \Abs{\iprod{\chi,\hat{\mathbf D}^{1/2}(\hat{\mathbf{A}}-\hat{\mathbf{A}}')\hat{\mathbf D}^{1/2}\chi}}\leq \Normo{\hat{\mathbf{D}}\chi}\Norm{\hat{\mathbf{A}}-\hat{\mathbf{A}}'}\leq \Paren{4\sigma_{r+1}+4\gamma}\Normo{A(G)}
     \end{align*}
     where $A(G)$ is the adjacency matrix of the constrained graph of $\cI.$
    As  $\Norm{\hat{\mathbf{A}}'}\leq 1 + \gamma,$ to apply \cref{thm:max-2-csp} we now  argue that $\Normo{\hat{\mathbf{D}}}$ is close to $\Normo{\hat{\mathbf D}^{1/2}\hat{\mathbf{A}}\hat{\mathbf D}^{1/2}}.$
    Indeed, we have
    \begin{align*}
            \Abs{\Normo{\hat{\mathbf D}^{1/2}\hat{\mathbf{A}}\hat{\mathbf D}^{1/2}} - \Normo{\hat{\mathbf{D}}}}&\leq \Abs{\Normo{\hat{\mathbf D}^{1/2}\hat{\mathbf{A}}\hat{\mathbf D}^{1/2}}-\Normo{A}} + \Abs{\Normo{A}- \Normo{\hat{\mathbf{D}}}}\\
            &\leq \Paren{4\sigma_{r+1}+\gamma}\Normo{\hat{\mathbf D}^{1/2}\hat{\mathbf{A}}\hat{\mathbf D}^{1/2}}
    \end{align*}    
    Applying now \cref{thm:max-2-csp} we obtain, in expectation, an integral solution of value 
     \begin{align*}
         (1-O(\sigma_{r+1}+\gamma))\OPT(\hat{\mathbf D}^{1/2}\hat{\mathbf{A}}\hat{\mathbf D}^{1/2}).
     \end{align*}
     The required time is 
     \begin{align*}
         (nq)^{O(1)}&\cdot \exp\Set{O\Paren{\frac{r\cdot \log q}{(\sigma_{r+1}^2+\kappa^2)\cdot(\sigma_{r+1}+\gamma)}}}\\
         &\leq n^{O\Paren{1}}\cdot\exp\Set{O\Paren{\frac{r\cdot \log q}{(\sigma_{r+1}^2+\kappa^2)\cdot(\sigma_{r+1}+\gamma)}}}
     \end{align*}
     as the first three steps of the algorithm require polynomial time.
     It remains to argue that any  solution of high objective value for the instance $\hat{\bm\cI}$ with label extended graph $\hat{\mathbf D}^{1/2}\hat{\mathbf{A}}\hat{\mathbf D}^{1/2}$  is also a good solution for the original instance $\cI$.
     To this end, observe now that for every $x\in[q]^n$ and corresponding indicator vector $\chi\in\set{0,1}^{nq}$
     \begin{align*}
         \Abs{\valI{\cI}{x}-\valI{\hat{\bm\cI}}{x}}&\leq \Abs{\iprod{\chi, (A -\hat{\mathbf{D}}^{1/2}\hat{\mathbf{A}}\hat{\mathbf{D}}^{1/2})\chi}}
     \end{align*}
    By \cref{fact:binary-quadratic-form-perturbation}
    \tom{Formally we cannot use that fact because we lose a q factor. We need to reprove it for sparse vectors (just a technicality, the inequality is true).}, 
    \begin{align*}
        \Abs{\iprod{\chi,A\chi}-\iprod{\chi,\hat{\mathbf D}^{1/2}\hat{\mathbf{A}}\hat{\mathbf D}^{1/2}\chi}}&\leq O\Paren{\frac{n\sqrt{\log n}}{\eps} + \sqrt{\frac{\Normo{A(G)}n\sqrt{\log n}}{\eps}}} + \Paren{2\sigma_{r+1}+\frac{\gamma}{16}}\Normo{A(G)}\\
        &\leq  \Paren{2\sigma_{r+1}+\frac{\gamma}{8}}\Normo{A(G)}\\
        &\leq O\Paren{\sigma_{r+1}+\gamma}\Normo{A(G)}
    \end{align*}
    where we used the bound on  $d_{\min}(G)$ and the fact that $\OPT\geq\Normo{A(G)}/4.$
    This concludes the proof.
\end{proof}

\subsection{Maximum bisection under differential privacy}\label{sec:dp-max-bisection}
As without privacy we can immediately extend \cref{thm:dp-max-cut} and \cref{thm:dp-2-csp} to settings with global constraints.
To illustrate it we prove the following theorem.

\begin{theorem}[Edge-DP \maxbisection]\label{thm:dp-max-bisection}
    Let $\eps,\delta,\kappa\in [0,1]$ with $\delta\geq 10n^{-100}.$ Let $C>0$ be a large enough constant. There exists an $(\eps,\delta)$-DP algorithm that, given a graph $G$, $\eps,\delta,$ an integer $r>0,$ with probability at least $1-n^{-O(1)}$ returns a bipartition $(L,R)$ such that the total weight of cut edges is at least
    \begin{align*}
        \Paren{1 -O(\sigma_{r+1}+\kappa+\gamma)}\cdot\OPT
    \end{align*}
    and $\min\set{\card{L},\card{R}}\geq \tfrac{n}{2}-\tilde{O}(\sqrt{n}),$
    whenever $G$ has 
    \begin{align*}
        d_{\min}\geq O\Paren{\frac{\sqrt{\log(1/\delta)}}{\eps}\cdot \frac{\sqrt{r\cdot\mu_r+\log n}}{\gamma^2\cdot(\sigma_r -\sigma_{r+1})}},\qquad \sigma_r\geq 2\gamma + 3\sigma_{r+1}.
    \end{align*}
    Moreover, the algorithm runs in  time
    \begin{align*}
        n^{ O\Paren{1}}\cdot \exp\Set{O\Paren{\frac{r}{(\sigma_{r+1}^2+\kappa^2)\cdot(\sigma_{r+1}+\gamma)}}}.
    \end{align*}
\end{theorem}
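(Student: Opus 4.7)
The plan is to reuse essentially the same privatization pipeline as in the proof of \cref{thm:dp-max-cut}, replacing only the final invocation of the non-private \maxcut algorithm by the non-private \maxbisection algorithm of \cref{thm:max-bisection}. Concretely, I would run \cref{alg:dp-max-cut} verbatim up through step~(3), obtaining a privatized degree matrix $\hat{\mathbf D}$ and a rank-$r$ symmetric matrix $\hat{\mathbf A}$ lying in $\cS_n(\hat{\mathbf A}')$; then in step~(4) I would invoke \cref{thm:max-bisection} (rather than \cref{thm:max-cut}) on the synthetic graph with adjacency matrix $\hat{\mathbf D}^{1/2}\hat{\mathbf A}\hat{\mathbf D}^{1/2}$, with the same choice of $\bm\eta=\max\{\sigma_{r+1}(\hat{\mathbf A}),\kappa\}$.

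Differential privacy is immediate: step~(4) is a post-processing of $\hat{\mathbf D}$ and $\hat{\mathbf A}$, so \cref{fact:max-cut-alg-dp} still applies and gives $(\eps,\delta)$-DP. For utility, I would follow the same high-probability conditioning as in the proof of \cref{thm:dp-max-cut}: on the event that the Gaussian noise on degrees is of magnitude $O(\sqrt{\log n}/\eps)$ and that \cref{thm:threshold-rank} succeeds, we obtain $\normop{\hat{\mathbf A}-\bar A}\leq 2\sigma_{r+1}+2\gamma$ and $\sigma_{r+1}(\hat{\mathbf A})\leq \sigma_{r+1}(\bar A)$, exactly as before. The hypotheses required by \cref{thm:max-bisection}—namely $\normop{\hat{\mathbf A}'}\leq O(1)$, $\normo{\hat{\mathbf D}}\leq O(\normo{A(G)})$, and $|\iprod{x,(A(G)-\hat{\mathbf D}^{1/2}\hat{\mathbf A}\hat{\mathbf D}^{1/2})x}|\leq \normo{\hat{\mathbf D}}\cdot O(\sigma_{r+1}+\gamma)$ for all $x\in\set{0,1}^n$—are verified in precisely the same way as in the proof of \cref{thm:dp-max-cut}, using \cref{fact:binary-quadratic-form-perturbation} and \cref{fact:spectral-norm-after-projection}.

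Applying \cref{thm:max-bisection} then yields, with probability $1-n^{-O(1)}$, a bipartition $(L,R)$ whose cut value in the synthetic graph is at least $(1-O(\sigma_{r+1}+\kappa+\gamma))\cdot\OPT(\hat{\mathbf D}^{1/2}\hat{\mathbf A}\hat{\mathbf D}^{1/2})$ and that simultaneously satisfies the balance guarantee $\min\{|L|,|R|\}\geq n/2-\tilde O(\sqrt n)$. The balance bound is inherited \emph{directly} from \cref{thm:max-bisection} and does not require any additional argument on our side—this is the key reason the extension from \maxcut to \maxbisection is essentially ``free''. To transfer the objective value back to $G$, I would use inequality \cref{eq:generic-bound} from the proof of \cref{thm:dp-max-cut} applied to the characteristic vector of $L$, which shows that the cut values in $G$ and in the synthetic graph agree up to an additive error of $O(\sigma_{r+1}+\gamma)\cdot\OPT$. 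The running time bound follows from \cref{thm:max-bisection} with the same $\bm\eta$ as in \cref{thm:dp-max-cut}.

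The only non-routine point is ensuring that the additive perturbation of the cut objective by $O(\sigma_{r+1}+\gamma)\cdot\normo{A(G)}$ does not interact badly with the balance constraint. I expect this to be straightforward: since \cref{thm:max-bisection} produces an output that is near-balanced \emph{regardless} of the precise objective, and since we only need to compare the cut values of a near-balanced bipartition in the original and synthetic graphs, the same quadratic-form comparison used for \maxcut suffices. No new sensitivity analysis or privacy accounting is needed.
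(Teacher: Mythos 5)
Your proposal is correct and takes essentially the same approach as the paper: swap \cref{thm:max-cut} for \cref{thm:max-bisection} in step (4) of \cref{alg:dp-max-cut}, note that privacy and the utility/running-time analysis carry over unchanged from \cref{thm:dp-max-cut}, and inherit the balance guarantee directly from \cref{thm:max-bisection}. Your elaboration (re-checking the hypotheses of \cref{thm:max-bisection}, transferring the objective via \cref{eq:generic-bound}, and noting the balance bound is independent of the objective perturbation) fills in details the paper only sketches but does not deviate from its argument.
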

We defer the proof to \cref{sec:deferred-proofs}.

\phantomsection
\addcontentsline{toc}{section}{Bibliography}
{\footnotesize
\bibliographystyle{amsalpha} 
\bibliography{scholar}
}
\clearpage
\appendix
\section{Deferred proofs}\label{sec:deferred-proofs}
We present here proofs deferred in the main  body of the paper.

\subsection{Deferred proofs of \cref{sec:privatization-mechanism}}\label{sec:deferred-proofs-coherence}
\begin{proof}[Proof of \cref{lem:coherence-sensitivity}]
Let $U, U'\in \R^{d\times r}$ be matrices whose columns are $r$ leading singular vectors of $M$ and $M'=M+E$ respectively. By triangle inequality,
    \begin{align*}
    \sqrt{\coh{r}{M + E}} 
    &= 
    \sqrt{\frac{n}{r}}\norm{U'}_{2\to\infty} 
    \\&\le \sqrt{\frac{n}{r}}\norm{U U^\top U'}_{2\to\infty}
    + \sqrt{\frac{n}{r}}\norm{U' - UU^\top U'}_{2\to\infty}
    \\&\le \sqrt{\frac{n}{r}}\norm{U}_{2\to\infty}
    +\sqrt{\frac{n}{r}}\norm{U' - UU^\top U'}_{2\to\infty}
    \\&=\sqrt{\coh{r}{M}} 
    +\sqrt{\frac{n}{r}}\norm{U' - UU^\top U'}_{2\to\infty}\,,
    \end{align*}
    where we used the fact that $\normcol{AB}\le \normcol{A}\norm{B}$ for all matrices $A,B$.
    By \cref{fact:perturbation},
    \[
    \norm{U' - UU^\top U'}_{2\to\infty} \le \norm{U' - UU^\top U'} \le \frac{4\norm{E U}}{\sigma_r - \sigma_{r+1}} 
    \le \frac{4\normf{E U}}{\sigma_r - \sigma_{r+1}}\,.
    \]
    Bu H\"older's inequality,
    \[
     \normf{E U} = \sqrt{\Iprod{EU, EU}}
     = \sqrt{\Iprod{UU^\top, E^\top E}}
     \le \sqrt{\sum_{ij}\abs{\paren{EE^\top}_{ij}}} \cdot \sqrt{\normi{P}} \le \Delta  \sqrt{\frac{r}{n}\coh{r}{M}} \,.
    \]
    Hence 
    \[
    \sqrt{\coh{r}{M + E}} \le \Paren{1 + O\Paren{\frac{\Delta}{\sigma_{r} - \sigma_{r+1}}}}\coh{r}{M}\,.
    \]
    Since $\sigma_r-\sigma_{r+1} > 2\Delta$, $\Paren{1 + O\Paren{\frac{\Delta}{\sigma_{r} - \sigma_{r+1}}}}^2 \le \Paren{1 + O\Paren{\frac{\Delta}{\sigma_{r} - \sigma_{r+1}}}}$, and we get the desired bound.
\end{proof}

\begin{proof}[Proof of \cref{thm:coherence-gaussian}]

Note that it is enough to show these bounds for $\normi{\hat{\mathbf P}}$ in terms of $\normi{UU^\top}$, and $\normi{\hat{\mathbf Q}}$ in terms of $\normi{VV^\top}$. Without loss of generality, consider the left singular spaces, i.e. $\hat{\mathbf P}$ and $P=UU^\top$.

Observe that if $r > n/2$ then the statement is true: Both $\coh{r'}{A+\mathbf  W}$ and $\coh{r}{A}$  are at most $2$ and at least $1$. Further we assume that $r\le n/2$.

Let $P = UU^\top$ and $\hat{\mathbf P} = \hat{\mathbf U}\hat{\mathbf U}^\top$ such that $\hat{\mathbf U}^\top \hat{\mathbf U} = \Id_{r'}$.
    Denote $\mathbf  O = \hat{\mathbf U}^\top U$ and $\mathbf  E = \hat{\mathbf U}\mathbf O - U $. Hence
    \[
    \hat{\mathbf U}\mathbf O = P\hat{\mathbf U}\mathbf O + \Paren{\Id_n - P}\hat{\mathbf U}\mathbf O = UU^\top\hat{\mathbf U}\mathbf O + \Paren{\Id_n - P}\hat{\mathbf U}\mathbf O 
    = U + UU^\top \mathbf E + \Paren{\Id_n - P}\hat{\mathbf U}\mathbf O \,.
    \]
    Note that 
    $\normi{\hat{\mathbf P}} = \normcol{\hat{\mathbf U}\mathbf O}^2\,,$
    and similarly $\normi{P} = \normcol{U}^2$. Hence
    \begin{align*}
    \Abs{\sqrt{\normi{\hat{\mathbf P}} } - \sqrt{\normi{P}}} 
    &=
    \Abs{\normcol{\hat{\mathbf U}\mathbf O} - \normcol{U}}
    \\&\le \normcol{UU^\top \mathbf E} + \normcol{\Paren{\Id_n - P}\hat{\mathbf U}\mathbf O}
    \\&\le \normcol{U}\norm{U^\top \mathbf E} +  \normcol{\Paren{\Id_n - P}\hat{\mathbf U}}\norm{\mathbf O}
    \\&\le \norm{\mathbf E}{\sqrt{\normi{P}}} + \normcol{\Paren{\Id_n - P}\hat{\mathbf U}}\,.
    \end{align*}
    
    Let us bound $\normcol{\Paren{\Id_n - P}\hat{\mathbf U}}$. Let $U_\perp \in \R^{n\times(n-r)}$ be a matrix such that $\Id_n - P = U_\perp U_\perp^\top$ and $U_\perp^\top U_\perp = \Id_{n-r}$. Let $\mathbf R\in \R^{(n-r) \times (n-r)}$ be a random orthogonal matrix independent of $\mathbf  W$ (and, hence, $\hat{\mathbf U}$), and let $T = P + U_\perp \mathbf R U_\perp^\top$. Note that $\mathbf T$ is orthogonal: $\mathbf T\mathbf T^\top = P + (\Id_n-P) = \Id_n$. Since $\mathbf T A  = A$ and $\mathbf T\mathbf W  $ has the same distribution as $\mathbf W$, $\mathbf  T\hat{\mathbf P}\mathbf T^\top$ has the same distribution as $\hat{\mathbf P}$, and $\mathbf T\hat{\mathbf U}$ has the same distribution as $\hat{\mathbf U}$. Hence it is enough to bound $\normcol{\Paren{\Id_n - P}\mathbf T\hat{\mathbf U}} = \normcol{U_\perp \mathbf R U_\perp^\top\hat{\mathbf U}}$. 
    
    For each $i\in [n]$ consider $F_i : \R^{(n-r)\times(n-r)} \to \R^{r'}$ defined as\footnote{While $F_i$ depend on random variable $\hat{\mathbf U}$, we do not write them in boldface to avoid confusion. We study them as functions of $\mathbf R$, and since $\hat{\mathbf U}$ and $\mathbf R$ are independent, we can treat them as fixed (non-random) functions.} $F_i(X) = \Paren{U_\perp^\top}_i XU_\perp^\top\hat{\mathbf U}$, where $\Paren{U_\perp^\top}_i$ is the $i$-th row of $U_\perp$. Since $\max_{i\in[n]} \norm{F_i(\mathbf R)} = \normcol{U_\perp  \mathbf R U_\perp^\top\hat{\mathbf U}}$, 
    we need to bound $ \norm{F_i(\mathbf R)}$ for each $i\in [n]$. Note that each $F_i$ is linear, and is $1$-Lipschitz since for each $X$ such that $\normf{X} \le 1$, 
    \[
    \norm{F_i(X)} 
    \le \norm{\Paren{U_\perp^\top}_i} \cdot \norm{X}\cdot \norm{U_\perp^\top}\cdot \norm{\hat{\mathbf U}}
    \le 1\,.
    \]
    Below we show that the norms of $F_i(\mathbf R)$ admit a  concentration bound similar to the norm of $N(0, 1/(n-r))^{r'}$.

    Let $\phi: \R^{r'} \to \R$ be an arbitrary $1$-Lipschitz function. The functions $\phi \circ F_i : \R^{(n-r)\times(n-r)} \to \R$ are $1$-Lipschitz, and hence by Theorem 5.2.7 from \cite{vershynin2009high}, for some absolute constant $C'\ge 1$ and all $t \ge 0$,
    \[
    \Pr\Paren{\Abs{\phi(F_i(\mathbf R)) - \E \phi(F_i(\mathbf R))} \ge t} \le 2\exp\Paren{-t^2\paren{n-r}/C'}\,.
    \]
    Hence by Theorem 2.3 from \cite{adamczak2015note}, ${r'}$-dimensional random vectors $F_i(\mathbf R)$ satisfy the Hanson-Wright concentration inequality. That is, for some absolute constant $C''$ and for all $t\ge 0$,
    \[
    \Pr\Paren{\norm{F_i(\mathbf R)}^2 - \E \norm{F_i(\mathbf R)}^2 \ge t}\le 2\exp\Paren{-\min\Set{t^2(n-r)^2/r, t(n-r)}/C''}\,.
    \]
    Therefore, by union bound, with probability at least $1-p$, for all $i\in [n]$, $\norm{F_i(\mathbf R)} \le O\Paren{\sqrt{\frac{r' + \log\paren{n/p}}{n}}}$ (here we used that $r\le n/2$). 
    
    Therefore, we get
    \[
    \Abs{\sqrt{\normi{\hat{\mathbf P}}} - \sqrt{\normi{P}}} \le 
    \norm{\mathbf E}\sqrt{\normi{P}} +O\Paren{\sqrt{\frac{r' + \log\paren{n/p}}{n}}}\,.
    \]
    Since $\norm{\mathbf E}\le 1$, we immediately get the desired upper bound. If $\norm{\paren{\Id_n - \hat{\mathbf P}}U} = \norm{U-\hat{\mathbf U}\mathbf O}\le 0.99$, then $\norm{\mathbf E}\le 0.99$, and, after rearranging, we get the desired lower bound. 
\end{proof}

\subsection{Deferred proofs of \cref{sec:threshold-rank}}
We present here the proof of \cref{fact:distance-normalized-adjacency-matrices},
\begin{proof}[Proof of \cref{fact:distance-normalized-adjacency-matrices}]
    For simplicity let $A,\bar{A}$ and $D$ be respectively the adjacency matrix, the normalized adjacency matrix, and the degree profile of $G.$ Similarly define $A',\bar{A}',D'$ for $G'.$
    Suppose without loss of generality that $G'$ is obtained from $G$ removing edge $ab$ with weight $1.$ 
    We may further assume $d_{\min}(G),d_{\min}(G')\geq 1$ since otherwise the statement is trivially true.
    Now, notice that $\bar{A},\bar{A}'$ differ only in rows $a,b$ and columns $a,b$.
    Therefore it suffices to bound the $\ell_1$-norm of $A-A'.$ To this end
    \begin{align*}
        \Normo{\bar{A}-\bar{A}'}&=\Normo{D^{-1/2}AD^{-1/2}-D'^{-1/2}A'D'^{-1/2}}\\
        &=\Normo{D^{-1/2}AD^{-1/2}-D'^{-1/2}(A'-A+A)D'^{-1/2}}\\
        &\leq \Normo{D^{-1/2}AD^{-1/2}-D'^{-1/2}AD'^{-1/2}} + \Normo{D'^{-1/2}(A'-A)D'^{-1/2}}.
    \end{align*}
    We rewrite the first term as
    \begin{align*}
        \Normo{D^{-1/2}AD^{-1/2}-D'^{-1/2}AD'^{-1/2}} &=2\sum_{\ij \in E(G)} \Abs{\frac{w(ij)}{\sqrt{d(i)d(j)}} -\frac{w(ij)}{\sqrt{d'(i)d'(j)}}}\\
        &=2\sum_{\ij \in E(G)}w(ij)\Abs{\frac{\sqrt{d'(i)d'(j)}-\sqrt{d(i)d(j)}}{\sqrt{d(i)d(j)d'(i)d'(j)}}}\,.
    \end{align*}
    As the two sums only differ in terms corresponding to edges incident to $a$ or $b,$ we bound
    \begin{align*}
        \sum_{j\in N_{G'}(a)}w(ij)\Abs{\frac{\sqrt{(d(a)-1)d(j)}-\sqrt{d(a)d(j)}}{\sqrt{d(a)d(j)^2(d(a)-1)}}} \leq \sum_{j\in N_{G'}(a)}\frac{w(ij)}{\sqrt{d(j)d(a)}}\Abs{\frac{1-\sqrt{1-\frac{1}{d(a)}}}{\sqrt{1-\frac{1}{d(a)}}}}\leq \frac{2}{\sqrt{d(a)d_{\min}(G)}}\leq \frac{2}{d_{\min}(G)}\,,
    \end{align*}
    and so
    \begin{align*}
        \sum_{j\in N_{G}(a)}\Abs{\frac{w(ij)}{\sqrt{d(i)d(j)}} -\frac{w(ij)}{\sqrt{d'(i)d'(j)}}} \leq \frac{2}{d_{\min}(G)} + \frac{1}{\sqrt{d(b)d(a)}}\leq \frac{3}{d_{\min}(G)}.
    \end{align*}
    Repeating the argument for $N_G(b)$, we get $\Normo{D^{-1/2}AD^{-1/2}-D'^{-1/2}AD'^{-1/2}}\leq \frac{6}{d_{\min}(G)}.$
    For the second term we immediately have 
    \begin{align*}
        \Normo{D'^{-1/2}(A'-A)D'^{-1/2}}=\frac{2}{\sqrt{d'(a)d'(b)}} \leq  \frac{2}{d_{\min}(G')}
    \end{align*}
     implying $\Normo{\bar{A}-\bar{A}'}\leq 8/\min\set{d_{\min}(G),d_{\min}(G')}.$
     Finally, applying \cref{fact:adjacency-l1} the statement follows.
\end{proof}

\subsection{Deferred proofs of \cref{sec:applications}}
We prove here \cref{thm:max-bisection}. To do so we state an extension of \cref{lem:driving-down-global-correlation}, again taken from previous work.

\begin{lemma}[Driving down global correlation, \cite{barak2011rounding,raghavendra2012approximating}]\label{lem:driving-down-global-correlation-2}
    Let $A\in \R^{nq\times nq}$ be symmetric. 
    There exists an algorithm that, given $A$, runs in randomized time $q^{O(1/\eta)}n^{O(1)}$ and returns a degree-$2$ pseudo-distribution $\zeta$, consistent with \ref{eq:basic-sdp}$\cup \set{\sum_{i}x_{i1}=\tfrac{n}{2}}$ satisfying
    \begin{enumerate}
        \item $\tilde{\E}_\zeta\Brac{\iprod{x, Ax}}\geq \OPT,$
        \item $\GC_{D}(\zeta)\leq \eta.$
    \end{enumerate}
\end{lemma}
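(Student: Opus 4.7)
My plan is to mimic the standard conditioning-based proof of \cref{lem:driving-down-global-correlation} and observe that the additional linear balance constraint is preserved by the conditioning operation. Concretely, I would start from a degree-$k$ pseudo-distribution $\zeta_0$, with $k = O(1/\eta)$, that is consistent with \ref{eq:basic-sdp} together with the bisection constraint $\sum_i x_{i1} = n/2$. Such a $\zeta_0$ exists and has $\tilde{\E}_{\zeta_0}[\iprod{x,Ax}] \geq \OPT$, since any integral optimal bisection induces such a pseudo-distribution and the degree-$k$ relaxation only weakly dominates it. This is the usual starting point of the Barak--Raghavendra--Steurer/Guruswami--Sinop framework, and can be computed in time $q^{O(k)} n^{O(1)}$ via the SoS SDP.

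Next, I would iteratively condition $\zeta_0$ on randomly chosen variables $x_{j\ell}$. The conditioning step $\zeta \mapsto \zeta_{|x_{j\ell}=1}$ is defined via $\tilde{\E}_{\zeta_{|x_{j\ell}=1}}[f(x)] := \tilde{\E}_\zeta[f(x) \cdot x_{j\ell}]/\tilde{\E}_\zeta[x_{j\ell}]$ and reduces the degree by $2$. After $t = O(1/\eta)$ conditionings, I obtain a degree-$2$ pseudo-distribution. The crux of the argument, exactly as in \cite{barak2011rounding,raghavendra2012approximating}, is the ``variance decay'' lemma: at each step, the global correlation $\GC_D(\zeta)$ is either already $\le \eta$, or conditioning on a random variable drawn from $D$ reduces the total pseudo-variance $\sum_{i\ell} \tilde{\Cov}(x_{i\ell},x_{i\ell})$ by at least $\Omega(\eta)$. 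Since the total pseudo-variance is bounded by $n$ and non-negative, after $O(n/\eta)$ conditionings the correlation must drop; the standard averaging argument across steps gives the running time $q^{O(1/\eta)}n^{O(1)}$ and finds a level at which $\GC_D(\zeta_t) \leq \eta$.

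The one new thing to verify is that the bisection constraint $\sum_i x_{i1} = n/2$ is preserved under conditioning. Since this is a degree-$1$ polynomial equality, it is encoded by the relation $\tilde{\E}_\zeta\bigl[(\sum_i x_{i1} - n/2)\cdot p(x)\bigr] = 0$ for every polynomial $p$ of degree at most $k-1$. For any variable $x_{j\ell}$ with $\tilde{\E}_\zeta[x_{j\ell}] > 0$, taking $p(x) = x_{j\ell}\cdot q(x)$ gives
\[
\tilde{\E}_{\zeta_{|x_{j\ell}=1}}\!\Bigl[\bigl(\textstyle\sum_i x_{i1} - n/2\bigr)\cdot q(x)\Bigr] \;=\; \frac{\tilde{\E}_\zeta\bigl[(\sum_i x_{i1} - n/2)\cdot x_{j\ell}\cdot q(x)\bigr]}{\tilde{\E}_\zeta[x_{j\ell}]} \;=\; 0
\]
for every $q$ of degree $\le k-3$, so the balance constraint continues to hold in $\zeta_{|x_{j\ell}=1}$. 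The basic SDP constraints \ref{eq:basic-sdp} are preserved for the same reason (and this is precisely what was used in the proof of \cref{lem:driving-down-global-correlation}). Finally, the objective bound is preserved in expectation over the random conditioning, so standard concentration (or simply linearity of expectation plus a success amplification step) yields a single $\zeta_t$ that simultaneously satisfies all constraints, has $\tilde{\E}_{\zeta_t}[\iprod{x,Ax}] \geq \OPT$, and satisfies $\GC_D(\zeta_t)\leq \eta$.

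I do not foresee a serious obstacle beyond bookkeeping: the variance-decay mechanism is essentially unchanged from the BRS argument and operates only on the covariance structure of the pseudo-distribution, which is orthogonal to the balance constraint. The only subtle point is to make sure that the degree budget ($k = O(1/\eta)$) is large enough that after the $O(1/\eta)$ conditioning steps we retain enough degrees for both the balance constraint (degree $1$) and the basic SDP axioms (degree $2$), which is immediate.
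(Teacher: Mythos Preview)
The paper does not supply a proof of this lemma; it is stated as a black-box extension of \cref{lem:driving-down-global-correlation}, ``again taken from previous work'' \cite{barak2011rounding,raghavendra2012approximating}. Your sketch is therefore not competing against a proof in the paper but rather reconstructing the standard BRS conditioning argument, and the approach is the right one: the only new ingredient over \cref{lem:driving-down-global-correlation} is the observation that a linear equality constraint such as $\sum_i x_{i1}=n/2$ is preserved under conditioning on a Boolean variable, which you verify correctly.

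One bookkeeping point worth tightening: you write that the total pseudo-variance is bounded by $n$ and hence $O(n/\eta)$ conditionings suffice, but then claim running time $q^{O(1/\eta)}n^{O(1)}$. These do not match. The BRS potential is the \emph{average} (over $i\sim D$) entropy/variance, which is bounded by $O(\log q)$, and an averaging argument over at most $O((\log q)/\eta)$ conditioning steps finds one with $\GC_D\le\eta$; this is what drives the $q^{O(1/\eta)}$ dependence and keeps the $n$-dependence polynomial. With that correction your outline matches the argument the paper is citing.
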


Next we prove the theorem.
\begin{proof}[Proof of \cref{thm:max-bisection}]
    The proof proceeds as for \cref{thm:max-cut} with the difference that we apply \cref{lem:driving-down-global-correlation-2} in place of \cref{lem:driving-down-global-correlation}.
    Because we use \cref{alg:independent-rounding} to round the pseudo-distribution into an integral solution, by standard concentration of measure arguments we get that for the returned partition $(L,R)$ it holds with probability at least $1-n^{-O(1)}$, $\min\set{\card{L},\card{R}}\geq \tfrac{n}{2}-O(\sqrt{n\log n}).$ The result follows repeating the algorithm $n^{O(1)}$ times and picking the best solution.
\end{proof}

We prove \cref{fact:binary-quadratic-form-perturbation}.
\begin{proof}[Proof of \cref{fact:binary-quadratic-form-perturbation}]
    For any $x\in\R^n$ we may rewrite
    \begin{align*}
        \iprod{x,Ax} &= \iprod{x,D^{1/2}\bar{A}D^{1/2}x}\\
        &=\iprod{x,D^{1/2}(\bar{A}-\hat{A}+\hat{A})D^{1/2}x}\\
        &=\iprod{x,D^{1/2}\hat{A}D^{1/2}x} + \iprod{x,D^{1/2}(\bar{A}-\hat{A})D^{1/2}x}.
    \end{align*}
    The second term can be bounded by
    \begin{align*}
        \iprod{x,D^{1/2}(\bar{A}-\hat{A})D^{1/2}x}&\leq \Snormt{D^{1/2}x}\cdot\Norm{\bar{A}-\hat{A}}\leq \gamma\Snormt{D^{1/2}x}\leq \Normi{x}\cdot\gamma\cdot\Normo{D}. 
    \end{align*}
    We rewrite the first term as
    \begin{align*}
        \iprod{x,D^{1/2}\hat{A}D^{1/2}x} &=\iprod{x,(D^{1/2}-\hat{D}^{1/2}+\hat{D}^{1/2})\hat{A}(D^{1/2}-\hat{D}^{1/2}+\hat{D}^{1/2})x}\\
        &= \iprod{x, \hat{D}^{1/2}\hat{A}\hat{D}^{1/2}x} + 2\iprod{x,(D^{1/2}-\hat{D}^{1/2})\hat{A}\hat{D}^{1/2}x}+\iprod{x,(D^{1/2}-\hat{D}^{1/2})\hat{A}(D^{1/2}-\hat{D}^{1/2})x}.
    \end{align*}
    Again we bound each term separately:
    \begin{align*}
        \iprod{x,(D^{1/2}-\hat{D}^{1/2})\hat{A}\hat{D}^{1/2}x} &\leq\Norm{(D^{1/2}-\hat{D}^{1/2})x}\cdot \Norm{\hat{A}}\cdot \Norm{\hat{D}^{1/2}x}\\
        &\leq\rho\cdot\Norm{(D^{1/2}-\hat{D}^{1/2})x}\cdot \Norm{\hat{D}^{1/2}x}\\
        &\leq \rho\cdot\Norm{(D^{1/2}-\hat{D}^{1/2})x}\cdot\sqrt{\Normo{\hat{D}}}\\
        &\leq\rho\cdot\Normi{x}\cdot\Normf{D^{1/2}-\hat{D}^{1/2}}\cdot\sqrt{\Normo{\hat{D}}}\\
        &\leq \rho\cdot\Normi{x}\cdot\sqrt{\beta\cdot n\cdot \Normo{\hat{D}}}\\
        &\leq\rho\cdot \Normi{x}\cdot\sqrt{\beta\cdot n\cdot (\Normo{D}+\beta n)}\\
        &\leq \rho\cdot\Normi{x}\cdot\Paren{\sqrt{\beta\cdot n\cdot \Normo{D}}+\beta\cdot n},
    \end{align*}
    and
    \begin{align*}
        \iprod{x,(D^{1/2}-\hat{D}^{1/2})\hat{A}(D^{1/2}-\hat{D}^{1/2})x} &\leq \Snorm{(D^{1/2}-\hat{D}^{1/2})x}\Norm{\hat{A}}\\
        &\leq \rho\cdot\Normi{x}^2\cdot\Normf{D^{1/2}-\hat{D}^{1/2}}^2\\
        &\leq \rho\cdot\Normi{x}^2\cdot\beta\cdot n.
    \end{align*}
    Putting things together the statement follows.
\end{proof}

Next we prove \cref{thm:dp-max-bisection}.
\begin{proof}[Proof of \cref{thm:dp-max-bisection}]
    The argument proceeds as \cref{thm:dp-max-cut} so we only sketch the proof. We use the same algorithm with the exception that we run the procedure of \cref{thm:max-bisection} in place of \cref{thm:max-cut} in step (4). hence by \cref{fact:max-cut-alg-dp} the algorithm is $(\eps,\delta)$-DP. By the analysis of \cref{thm:dp-max-cut} we obtain the error guarantees and the running time. As in \cref{thm:max-bisection} with high probability we obtain a  nearly balanced partition.
\end{proof}
\section{Background}
\subsection{Differential privacy}\label{sec:dp-background}
We recall here differential privacy and several common privatization mechanisms.
\begin{definition}[Differential privacy]
    An algorithm $\cM\,:\cY\to \cO$ is $(\eps,\delta)$-differentially private for $\eps,\delta>0$ if and only if, for all events $\cE$ in the output space $\cO$ and every adjacent $A,A'\in\cY\,,$
    \begin{align*}
        \bbP\Paren{\cM(A)\in \cE}\leq \exp(\eps)\cdot\bbP\Paren{\cM(A')\in \cE}+\delta.
    \end{align*}
\end{definition}
\noindent When $\cY$ is a set of graphs, differential privacy with respect to \cref{def:graph-adjacency} is often called edge-DP.
Differential privacy is closed under post-processing and composition.

\begin{lemma}[Post-processing]\label{lem:dp-post-processing}
	If $\cM:\cY\rightarrow \cO$ is an $(\eps, \delta)$-differentially private algorithm and $\cM':\cY\rightarrow \cZ$ is any randomized function. Then the algorithm $\cM'\Paren{\cM(Q)}$ is $(\eps, \delta)$-differentially private.
\end{lemma}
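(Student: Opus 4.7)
The plan is to reduce to the case where the post-processor $\cM'$ is deterministic, and then observe that events in the output space $\cZ$ pull back to events in $\cO$, on which the privacy guarantee of $\cM$ can be applied directly.

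First I would write any randomized $\cM' \from \cO \to \cZ$ as $\cM'(o) = \Phi(o, \mathbf{r})$, where $\mathbf{r}$ is an internal random seed drawn from some distribution $\nu$ independently of $\cM$, and $\Phi$ is deterministic. By independence of $\mathbf r$ from $\cM$, we have for any event $\cE \subseteq \cZ$ and any input $A$,
\[
\Pr\Paren{\cM'(\cM(A)) \in \cE} = \int \Pr\Paren{\Phi(\cM(A), r) \in \cE} \mathrm{d}\nu(r).
\]
Thus it suffices to prove the claim for every fixed $r$, since then integrating the inequality
$\Pr(\Phi(\cM(A),r)\in\cE) \leq e^\eps \Pr(\Phi(\cM(A'),r)\in\cE)+\delta$
against $\nu$ yields the desired bound (linearity preserves the additive $\delta$ and the multiplicative $e^\eps$).

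Next I would handle the deterministic case. Fix $r$ and let $f = \Phi(\cdot, r) \from \cO \to \cZ$. For any event $\cE \subseteq \cZ$, its preimage $\cE' \defeq f^{-1}(\cE) = \Set{o \in \cO : f(o) \in \cE}$ is an event in $\cO$. Then for any adjacent inputs $A, A' \in \cY$,
\[
\Pr\Paren{f(\cM(A)) \in \cE} = \Pr\Paren{\cM(A) \in \cE'} \leq e^\eps\cdot\Pr\Paren{\cM(A') \in \cE'} + \delta = e^\eps\cdot\Pr\Paren{f(\cM(A')) \in \cE} + \delta,
\]
where the inequality is exactly the $(\eps,\delta)$-DP guarantee for $\cM$ applied to $\cE'$. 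This closes the deterministic case, and combined with the integration step above, it closes the general case.

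There is no real obstacle here---this is a direct unfolding of the definition. The only minor subtlety is the measurability of $\cE' = f^{-1}(\cE)$ and the Fubini-type exchange implicit in the integration over $\nu$; both are standard and hold under the usual mild measurability assumptions on $\cM, \Phi$, and the underlying $\sigma$-algebras, which we assume throughout the paper.
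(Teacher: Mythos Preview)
The paper states this lemma as standard background without providing a proof. Your argument is correct and is the canonical proof of the post-processing lemma: condition on the post-processor's internal randomness to reduce to the deterministic case, then pull back events along the deterministic map and invoke the $(\eps,\delta)$-DP guarantee of $\cM$. There is nothing to compare against in the paper, and your proof needs no changes (modulo the evident typo in the statement itself, where $\cM'$ should have domain $\cO$ rather than $\cY$, which you correctly interpreted).
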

\noindent In order to talk about composition it is convenient to also consider DP algorithms whose privacy guarantee holds only against subsets of inputs. 

\begin{definition}[Differential Privacy Under Condition]\label{def:dp-under-condition}
	An algorithm $\cM:\cY\rightarrow\cO$ is said to be $(\eps, \delta)$-differentially private under condition $\Psi$ (or $(\eps, \delta)$-DP under condition $\Psi$) for $\eps, \delta >0$ if and only if, for every event $\cE$ in the output space and every neighboring $A,A' \in \cY$ both satisfying $\Psi$ we have
	\begin{align*}
		\bbP \Brac{\cM(A)\in \cE}\leq e^\eps\cdot \bbP \Brac{\cM(A')\in \cE}+\delta\,.
	\end{align*}
\end{definition}

\noindent It is not hard to see that the following composition theorem holds for privacy under condition.

\begin{lemma}[Composition for Algorithm with Halting]\label{lem:dp-composition}
	Let $\cM_1:\cY\rightarrow\cO_1\cup \Set{\bot}\,, \cM_2:\cO_1\times \cY\rightarrow \cO_2\cup \Set{\bot}\,,\ldots\,, \cM_t:\cO_{t-1}\times \cY\rightarrow \cO_t\cup \Set{\bot}$ be algorithms. Furthermore, let $\cM$ denote the algorithm that proceeds as follows (with $\cO_0$ being empty): For $i=1\,\ldots, t$ compute $o_i=\cM_i(o_{i-1}, Y)$ and, if $o_i=\bot$, halt and output $\bot$. Finally, if the algorithm has not halted, then output $o_t$.
	Suppose that:
	\begin{itemize}
		\item For any $1\leq i\leq t$, we say that $Y$ satisfies the condition $\Psi_i$ if running the algorithm on $Y$ does not result in halting after applying $\cM_1,\ldots, \cM_i$.
		\item  $\cM_1$ is $(\eps_1,\delta_1)$-DP.
		\item  $\cM_i$ is $(\eps_i, \delta_i)$-DP (with respect to neighboring datasets in the second argument) under condition $\Psi_{i-1}$ for all $i=\Set{2,\ldots, t}\,.$
	\end{itemize}
	Then $\cM$ is $\Paren{\sum_i \eps_i, \sum_i \delta_i}$-DP. 
\end{lemma}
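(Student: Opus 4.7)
The plan is to proceed by induction on $t$: write $\cM^{(k)}$ for the composition of $\cM_1,\ldots,\cM_k$ under the halting-on-$\bot$ convention, and show that $\cM^{(k)}$ is $\Paren{\sum_{i\le k}\eps_i,\, \sum_{i\le k}\delta_i}$-DP. The base case $k=1$ is immediate from the hypothesis on $\cM_1$.

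For the inductive step, set $\eps_{<k} := \sum_{i<k}\eps_i$ and $\delta_{<k} := \sum_{i<k}\delta_i$, and assume $\cM^{(k-1)}$ is $(\eps_{<k}, \delta_{<k})$-DP. Express $\cM^{(k)}(Y) = \Phi_k(\cM^{(k-1)}(Y), Y)$, where $\Phi_k\colon (\cO_{k-1} \cup \set{\bot}) \times \cY \to \cO_k \cup \set{\bot}$ extends $\cM_k$ by $\Phi_k(\bot, Y) := \bot$; this treats halting as an absorbing state and reduces the argument to a standard two-stage pipeline analysis. Fix adjacent $Y, Y'$ and an event $\cE \subseteq \cO_k \cup \set{\bot}$, and set $f_Y(o) := \Pr\brac{\Phi_k(o,Y) \in \cE}$ (and analogously $f_{Y'}$). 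At $o = \bot$ both functions equal $\mathbf{1}\brac{\bot \in \cE}$, so $f_Y(\bot) \le e^{\eps_k} f_{Y'}(\bot) + \delta_k$ trivially; at $o \in \cO_{k-1}$ the same inequality follows from $\cM_k$ being $(\eps_k, \delta_k)$-DP under $\Psi_{k-1}$, applied to every realization of the prior outputs that is reachable from a non-halting run. Hence $f_Y \le e^{\eps_k} f_{Y'} + \delta_k$ pointwise on the entire range of $\cM^{(k-1)}$.

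Averaging over $\cM^{(k-1)}(Y)$ now gives $\Pr\brac{\cM^{(k)}(Y) \in \cE} \le e^{\eps_k}\, \E\brac{f_{Y'}(\cM^{(k-1)}(Y))} + \delta_k$. To move the remaining expectation from $Y$ to $Y'$, I use the standard coupling reformulation of $(\eps, \delta)$-DP applied to $\cM^{(k-1)}$: there exist bad events of measure at most $\delta_{<k}$ outside of which the densities of $\cM^{(k-1)}(Y)$ and $\cM^{(k-1)}(Y')$ agree up to a factor $e^{\eps_{<k}}$, which transports any bounded $[0,1]$-valued test function (such as $f_{Y'}$) across at multiplicative cost $e^{\eps_{<k}}$ and additive cost $\delta_{<k}$. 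Chaining the two bounds yields the target $(\eps_{\le k}, \delta_{\le k})$-DP guarantee. The main subtlety—and where care is required—is that $\Psi_{i-1}$ is phrased as a deterministic condition on $Y$ even though halting is probabilistic; this is resolved by invoking the conditional DP of $\cM_k$ only on intermediate states $o \in \cO_{k-1}$ that actually arise from a non-halting run on some input, which is precisely what the "DP under condition" hypothesis supplies.
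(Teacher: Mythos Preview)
The paper does not actually prove this lemma; it introduces it with ``It is not hard to see that the following composition theorem holds for privacy under condition'' and then states the result. So there is no paper argument to compare against. Your inductive reduction to a two-stage composition is the standard way to establish such a statement, and the overall architecture (pointwise bound on $f_Y$ followed by transporting the expectation via the inductive DP of $\cM^{(k-1)}$) is correct.

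There is, however, one genuine gap. Your pointwise inequality $f_Y(o)\le e^{\eps_k}f_{Y'}(o)+\delta_k$ at $o\in\cO_{k-1}$ appeals to the conditional DP of $\cM_k$, but by the paper's definition of ``DP under condition'' this guarantee is available only when \emph{both} $Y$ and $Y'$ satisfy $\Psi_{k-1}$. Your closing paragraph does not handle the asymmetric case: you speak of states ``reachable from a non-halting run on some input'', but the hypothesis is about the specific neighboring pair $(Y,Y')$, not an arbitrary witness. Concretely, if $Y$ satisfies $\Psi_{k-1}$ while $Y'$ does not (so $\cM^{(k-1)}(Y')=\bot$ with probability one), then for $o\in\cO_{k-1}$ in the support of $\cM^{(k-1)}(Y)$ you have no bound on $f_Y(o)$ versus $f_{Y'}(o)$, and your averaging step is unjustified. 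The fix is short and should be stated explicitly: in that case the inductive DP of $\cM^{(k-1)}$ gives
\[
\Pr\!\bigl[\cM^{(k-1)}(Y)\ne\bot\bigr]\;\le\; e^{\eps_{<k}}\cdot 0+\delta_{<k}\;=\;\delta_{<k},
\]
whence $\Pr[\cM^{(k)}(Y)\in\cE]\le \mathbf{1}[\bot\in\cE]+\delta_{<k}=\Pr[\cM^{(k)}(Y')\in\cE]+\delta_{<k}$, which is more than enough. With this case added (and its symmetric counterpart), your argument is complete.
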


The following composition theorem is a variant of the Propose–Test–Release paradigm of \cite{dwork2009differential}.

\begin{lemma}\label{lem:dp-comp}[Two-step composition with halting and per-$y$ good outputs]
Let $\cM_1:\cY\to\cO_1\cup\{\bot\}$ be $(\varepsilon_1,\delta_1)$-DP.
For each $y\in\cY$, let $A_y\subseteq\cO_1$ satisfy
\[
\Pr[\cM_1(y)\in A_y \mid \cM_1(y)\neq \bot]\ \ge\ 1-p.
\]
Let $\cM_2:\cO_1\times\cY\to\cO_2\cup\{\bot\}$ be such that for all neighboring
$y,y'$ and all $a\in A_y$,
the map $\cM_2(a,\cdot)$ is $(\varepsilon_2,\delta_2)$-DP.
Define the composition $\cM$ that runs $a\sim\cM_1(y)$, halts with $\bot$ if $a=\bot$,
else outputs $\cM_2(a,y)$. Then $\cM$ is
$(\varepsilon_1+\varepsilon_2,\ \delta_1+\delta_2+p)$-DP.
\end{lemma}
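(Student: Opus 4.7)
The plan is to reduce to standard adaptive composition while isolating the ``validity failure'' of the first-stage output (the event $\cM_1(y) \notin A_y \cup \{\bot\}$) as a separate additive slack of $p$.

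First, I would fix adjacent inputs $y, y' \in \cY$ and a measurable event $E \subseteq \cO_2 \cup \{\bot\}$; set $B_y := A_y \cup \{\bot\}$, and decompose
\[
\Pr[\cM(y) \in E] = \Pr[\cM(y) \in E,\ \cM_1(y) \in B_y] + \Pr[\cM(y) \in E,\ \cM_1(y) \notin B_y].
\]
The second term is bounded by $\Pr[\cM_1(y) \notin B_y] = \Pr[\cM_1(y) \neq \bot]\cdot \Pr[\cM_1(y) \notin A_y \mid \cM_1(y) \neq \bot] \leq p$, which accounts for the additive $p$ in the target guarantee. This cleanly decouples the issue that the hypothesis on $\cM_2$ is only conditional on $a \in A_y$: outside $B_y$ we simply pay the validity failure price.

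For the main (``good'') term, introduce $g_y(a) := \Pr[\cM_2(a, y) \in E]$, with the convention $\cM_2(\bot, y) = \bot$, so that
\[
\Pr[\cM(y) \in E,\ \cM_1(y) \in B_y] = \E_{a \sim \cM_1(y)}\bigl[\mathbb{1}[a \in B_y]\cdot g_y(a)\bigr].
\]
On $B_y$ the hypothesis applies pointwise: for $a \in A_y$ we have $g_y(a) \leq e^{\eps_2}\, g_{y'}(a) + \delta_2$, and for $a = \bot$ both sides equal $\mathbb{1}[\bot \in E]$. Substituting and then invoking the $(\eps_1, \delta_1)$-DP of $\cM_1$ applied to the bounded measurable function $a \mapsto g_{y'}(a)\,\mathbb{1}[a \in B_y] \in [0,1]$---and, at the end, dropping the indicator using $g_{y'}(a)\mathbb{1}[a \in B_y] \leq g_{y'}(a)$---recovers $\Pr[\cM(y') \in E] = \E_{a \sim \cM_1(y')}[g_{y'}(a)]$ with a multiplicative factor of $e^{\eps_1 + \eps_2}$ out front.

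The main technical subtlety is the additive slack: the naive pointwise integration just described yields $e^{\eps_2}\delta_1 + \delta_2$ rather than the claimed $\delta_1 + \delta_2$. The standard sharpening is to invoke the equivalent ``good event'' characterization of approximate DP---that is, for each adjacent pair there exist measurable sets $T_1 \subseteq \cO_1 \cup \{\bot\}$ and $T_2^a \subseteq \cO_2 \cup \{\bot\}$ (depending on $(y,y')$ and, for $T_2^a$, also on $a \in A_y$) of mass at least $1-\delta_1$ and $1-\delta_2$ respectively, on which the two distributions are purely $\eps$-indistinguishable. A union bound pays $\delta_1 + \delta_2$ for leaving the joint good event, and pure $\eps$-DP composition on the good event delivers the multiplicative factor $e^{\eps_1 + \eps_2}$ with no further additive loss. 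Combining this with the $p$ slack from the validity-failure step yields the claimed $(\eps_1 + \eps_2,\ \delta_1 + \delta_2 + p)$-DP guarantee.
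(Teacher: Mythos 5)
Your decomposition into the good event $B_y = A_y \cup \{\bot\}$, the trivial bound on $\Pr[\cM_1(y)\notin B_y]\le p$, and the conditioning identity $\Pr[\cM(y)\in E,\ \cM_1(y)\in B_y]=\E_{a\sim\cM_1(y)}[\mathbb{1}[a\in B_y]g_y(a)]$ all match the paper's proof, and you correctly identify that the naive calculation yields the slack $e^{\eps_2}\delta_1+\delta_2$ instead of $\delta_1+\delta_2$. The gap is in the proposed fix.

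The ``good event characterization'' you invoke is \emph{not} equivalent to $(\eps,\delta)$-DP; it is strictly stronger. The claim is: for each neighboring pair there is a set $T_1$ with $\Pr[\cM_1(y)\in T_1]\ge 1-\delta_1$ on which the two output distributions satisfy the pure $e^{\eps_1}$ ratio. That is sufficient for $(\eps_1,\delta_1)$-DP, but it need not hold. For example, take $\cO_1=\{1,2\}$, $\cM_1(y)$ uniform, and $\cM_1(y')=(1/2-\alpha,\ 1/2+\alpha)$: this pair is $(0,\alpha)$-close, but any $T_1$ on which the ratio $\Pr[\cM_1(y)\in\cdot]/\Pr[\cM_1(y')\in\cdot]\le 1$ holds must exclude the outcome $1$, which has mass $1/2$ under $\cM_1(y)$, so no such $T_1$ of mass $\ge 1-\alpha$ exists for small $\alpha$. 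The precise forms of the simulation/coupling lemma for approximate DP (e.g., the decomposition into a purely $\eps$-close pair plus a $\delta$-mass remainder, with constants like $\delta/(1+e^\eps)$) do exist, but they are not the statement you wrote, and using them carefully typically costs either an $\eps$ relaxation or a constant-factor change in $\delta$. As written, your step does not deliver $\delta_1+\delta_2+p$.

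The paper achieves the tight constant by a direct manipulation that avoids any such characterization: since $g_y(a)\le 1$ as well as $g_y(a)\le e^{\eps_2}g_{y'}(a)+\delta_2$ for $a\in A_y$, one has $g_y(a)\le\min\{1,\ e^{\eps_2}g_{y'}(a)+\delta_2\}\le\min\{1,\ e^{\eps_2}g_{y'}(a)\}+\delta_2$. Taking expectation over $a\sim\cM_1(y)$ pulls the $\delta_2$ out as a pure additive constant \emph{before} the $(\eps_1,\delta_1)$-DP of $\cM_1$ is applied to the bounded test function $\psi(a)=\mathbb{1}[a=\bot]\mathbb{1}_{\{\bot\in S\}}+\mathbb{1}[a\in A_y]\min\{1,e^{\eps_2}g_{y'}(a)\}\in[0,1]$, so $\delta_2$ never gets multiplied by $e^{\eps_1}$. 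That one-line truncation is the missing idea in your proposal.
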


\begin{proof}
Fix neighboring $y,y'$ and a set $S\subseteq\cO_2\cup\{\bot\}$.
For $u\in\{y,y'\}$ and $a\in\cO_1$, write
\[
g_u(a)\;\defeq\Pr[\cM_2(a,u)\in S]\in[0,1].
\]
By conditioning on the output $a$ of $\cM_1(y)$,
\begin{align*}
\Pr[\cM(y)\in S]
&= \E_{a\sim \cM_1(y)}\!\Big[\1[a=\bot]\1_{\{\bot\in S\}}
\;+\;\1[a\in\cO_1]\cdot g_y(a)\Big] \\
&= \E_{a\sim \cM_1(y)}\!\Big[\1[a=\bot]\1_{\{\bot\in S\}}
\;+\;\1[a\in A_y]\,g_y(a)
\;+\;\1[a\in \cO_1\setminus A_y]\,g_y(a)\Big].
\end{align*}
For $a\in A_y$, by the $(\varepsilon_2,\delta_2)$-DP of $\cM_2(a,\cdot)$,
\[
g_y(a)\ \le\ e^{\varepsilon_2}g_{y'}(a)+\delta_2.
\]
For $a\in \cO_1\setminus A_y$, we use the trivial bound $g_y(a)\le 1$.
Therefore,
\begin{align*}
\Pr[\cM(y)\in S]
&\le \E_{a\sim \cM_1(y)}\!\Big[\1[a=\bot]\1_{\{\bot\in S\}}
\;+\;\1[a\in A_y]\min\{1,\,e^{\varepsilon_2}g_{y'}(a)+\delta_2\}\Big]
\;+\;\Pr[\cM_1(y)\in \cO_1\setminus A_y] \\
&\le \E_{a\sim \cM_1(y)}\!\Big[\underbrace{\1[a=\bot]\1_{\{\bot\in S\}}
\;+\;\1[a\in A_y]\min\{1,\,e^{\varepsilon_2}g_{y'}(a)\}}_{=:~\psi(a)\in[0,1]}\Big]
\;+\;\delta_2\;+\;\Pr[\cM_1(y)\in \cO_1\setminus A_y].
\end{align*}
By the definition of $A_y$,
\[
\Pr[\cM_1(y)\in \cO_1\setminus A_y]
= \Pr[\cM_1(y)\neq \bot]\cdot
   \Pr[\cM_1(y)\in \cO_1\setminus A_y \mid \cM_1(y)\neq\bot]
\le p.
\]
Apply $(\varepsilon_1,\delta_1)$-DP of $\cM_1$ to the bounded test function
$\psi\in[0,1]$:
\[
\E_{a\sim \cM_1(y)}[\psi(a)]
\ \le\ e^{\varepsilon_1}\E_{a\sim \cM_1(y')}[\psi(a)]\ +\ \delta_1.
\]
For the expectation under $y'$,
\[
\E_{a\sim \cM_1(y')}[\psi(a)]
\ \le\ \Pr[\cM_1(y')=\bot]\1_{\{\bot\in S\}}
\ +\ e^{\varepsilon_2}\E_{a\sim \cM_1(y')}[g_{y'}(a)].
\]
Since $e^{\varepsilon_1}\le e^{\varepsilon_1+\varepsilon_2}$, we conclude
\begin{align*}
\E_{a\sim \cM_1(y)}[\psi(a)]
&\le e^{\varepsilon_1+\varepsilon_2}\E_{a\sim \cM_1(y')}[g_{y'}(a)]
\ +\ e^{\varepsilon_1}\Pr[\cM_1(y')=\bot]\1_{\{\bot\in S\}}
\ +\ \delta_1 \\
&\le e^{\varepsilon_1+\varepsilon_2}\Big(\E_{a\sim \cM_1(y')}[g_{y'}(a)]
\ +\ \Pr[\cM_1(y')=\bot]\1_{\{\bot\in S\}}\Big)\ +\ \delta_1 \\
&= e^{\varepsilon_1+\varepsilon_2}\Pr[\cM(y')\in S]\ +\ \delta_1.
\end{align*}
Combining the displays,
\[
\Pr[\cM(y)\in S]
\ \le\ e^{\varepsilon_1+\varepsilon_2}\Pr[\cM(y')\in S]\ +\ \delta_1+\delta_2+p,
\]
which is exactly $(\varepsilon_1+\varepsilon_2,\ \delta_1+\delta_2+p)$-DP.
\end{proof}

\noindent The Gaussian  mechanism is among the most widely used mechanisms in differential privacy. 

\begin{definition}[Sensitivity]
    Let $f:\cY\to\R^n$ be a function. Its $\ell_1$-sensitivity and $\ell_2$-sensitivity are
    \begin{align*}
       \Delta_{1,f}:=\max_{\substack{A,A'\in\cY\\A,A\text{ are adjacent}}}\Normo{f(A)-f(A')}\qquad \Delta_{2,f}:=\max_{\substack{A,A'\in\cY\\A,A\text{ are adjacent}}}\Normt{f(A)-f(A')}.
    \end{align*}
    For a real-valued function $f$ the log-sensitivity is $\Delta_{\ell_1,\log f}.$
\end{definition}
\noindent For functions with low $\ell_2$-sensitivity the tool of choice is the Gaussian mechanism.
\begin{theorem}[Gaussian Mechanism]\label{thm:gaussian-mechanism}
    Let $f:\cY\to\R^n$ be any function with $\ell_2$-sensitivity $\Delta_{2,f}.$ Let $0<\eps,\delta\leq 1.$ Then the algorithm that adds $N\Paren{0,\frac{\Delta^2_{2,f}\cdot \log(2/\delta)}{\eps^2}\cdot \Id_n}$ to $f$ is $(\eps,\delta)$-differentially private.
\end{theorem}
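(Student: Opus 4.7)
The plan is to follow the classical privacy-loss-random-variable argument. Fix a measurable set $S \subseteq \R^n$ and two adjacent inputs $A, A' \in \cY$, and let $v := f(A') - f(A)$, so that by definition $\Norm{v} \leq \Delta_{2,f}$. The output distributions of the mechanism are $f(A) + \mathbf{g}$ and $f(A') + \mathbf{g}$ where $\mathbf{g} \sim N(0,\sigma^2 \Id_n)$ with $\sigma^2 = \Delta_{2,f}^2 \log(2/\delta) / \eps^2$. By translating by $-f(A)$ (an operation that preserves the set and commutes with the Gaussian), it suffices to compare $P_0 := N(0, \sigma^2 \Id_n)$ with $P_v := N(v, \sigma^2 \Id_n)$ and show that $P_0(S) \leq e^{\eps} P_v(S) + \delta$ for every measurable $S$.

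The next step is to compute the privacy loss random variable. Letting $\phi_\sigma$ denote the density of $P_0$, a direct expansion gives
\[
L(\mathbf{z}) := \log \frac{\phi_\sigma(\mathbf{z})}{\phi_\sigma(\mathbf{z}-v)} = \frac{\Snorm{\mathbf{z}-v} - \Snorm{\mathbf{z}}}{2\sigma^2} = \frac{-2\Iprod{\mathbf{z},v} + \Snorm{v}}{2\sigma^2}\,.
\]
When $\mathbf{z} \sim P_0$, the inner product $\Iprod{\mathbf{z},v}$ is distributed as $N(0, \sigma^2\Snorm{v})$, and hence $L \sim N\paren{\Snorm{v}/(2\sigma^2),\; \Snorm{v}/\sigma^2}$. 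Decompose $P_0(S) = P_0(S \cap \{L \leq \eps\}) + P_0(S \cap \{L > \eps\})$: on the first event, the density ratio is bounded by $e^\eps$, so $P_0(S \cap \{L \leq \eps\}) \leq e^\eps P_v(S)$. It then remains to bound $\Pr_{\mathbf{z} \sim P_0}[L(\mathbf{z}) > \eps] \leq \delta$.

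This last bound is the core of the argument, and is where the choice of $\sigma$ enters. Using the standard one-dimensional Gaussian tail bound $\Pr[N(0,1) > t] \leq \tfrac{1}{2} e^{-t^2/2}$ on the normalized variable $(L - \Snorm{v}/(2\sigma^2))/(\Norm{v}/\sigma)$ and plugging in $\sigma^2 = \Delta_{2,f}^2 \log(2/\delta)/\eps^2$ with $\Norm{v} \leq \Delta_{2,f}$ yields the desired inequality $\Pr[L > \eps] \leq \delta$ after elementary algebra. The main obstacle here is purely numerical: the standard reference statement uses the constant $\sqrt{2 \log(1.25/\delta)}$, so I would need to double-check that the constant implicit in the excerpt's choice of $\sigma$ suffices; if not, the hypothesis should be read as fixing $\sigma$ up to an absolute constant factor, which is what the rest of the paper actually uses. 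Combining these three steps concludes the proof.
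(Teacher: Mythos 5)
The paper does not actually prove \cref{thm:gaussian-mechanism}; it is stated as standard background in \cref{sec:dp-background} without argument, so there is no paper proof to compare against. Your proposal follows the canonical Dwork--Roth route (privacy-loss random variable, split on the event $\{L \le \eps\}$ versus $\{L > \eps\}$, Gaussian tail bound for the latter), and the structural steps are all correct: the reduction to comparing $N(0,\sigma^2\Id_n)$ with $N(v,\sigma^2\Id_n)$, the computation $L(\mathbf z)=\paren{-2\iprod{\mathbf z,v}+\snorm{v}}/(2\sigma^2)$, the identification $L\sim N\paren{\snorm{v}/(2\sigma^2),\,\snorm{v}/\sigma^2}$, and the decomposition $P_0(S)\le e^\eps P_v(S)+\Pr[L>\eps]$ are all right.

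Your caveat about the constant is not just caution—it is a genuine issue, and worth naming explicitly. With the stated variance $\sigma^2=\Delta_{2,f}^2\log(2/\delta)/\eps^2$, the worst case $\norm{v}=\Delta_{2,f}$, $\eps=1$ gives a tail threshold $t=u-\tfrac{1}{2u}$ with $u=\sqrt{\log(2/\delta)}$, and one can check (e.g., at $\delta=10^{-6}$, where $t\approx 3.68$ and $\Pr[N(0,1)>t]\approx 1.2\times 10^{-4}$) that $\Pr[L>\eps]>\delta$ even after the sharper Balle--Wang correction; asymptotically in $u$ the required inequality fails. The classical statement needs $\sigma^2\ge 2\Delta_{2,f}^2\log(1.25/\delta)/\eps^2$, which differs by roughly a factor of $2$ in the variance. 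So the theorem as literally written in the paper is slightly off; the correct reading is that the noise scale is $\Theta\paren{\Delta_{2,f}\sqrt{\log(1/\delta)}/\eps}$, which is all that the rest of the paper uses. Your proof is correct once the variance is adjusted to the standard constant, and you were right to flag the discrepancy rather than paper over it.
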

\noindent For functions with low $\ell_1$-sensitivity it is common to use the Laplace mechanism.

\begin{definition}[Laplace distribution]\label{def:laplace-distribution}
	The Laplace distribution with mean $q$ and parameter $b>0$, denoted by $\text{Lap}(q, b)$, has PDF $\frac{1}{2b}e^{-\Abs{x-q}/b}\,.$
	Let $\Lap(b)$ denote $\Lap(0,b)$.
\end{definition}

A standard tail bound concerning the Laplace distribution will be useful throughout the paper. 

\begin{fact}[Laplace tail bound]\label{fact:laplace-tail-bound}
	Let $\bm x \sim \text{Lap}(q, b)$. Then,
	\begin{align*}
		\bbP \Brac{\abs{\bm x -q} > t}\leq e^{-t/b}\,.
	\end{align*}
\end{fact}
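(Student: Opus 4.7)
The plan is to compute the tail probability directly from the density. From \cref{def:laplace-distribution}, the random variable $\bm x \sim \text{Lap}(q,b)$ has density $f(x) = \frac{1}{2b}\exp\paren{-\abs{x-q}/b}$, which is symmetric about $q$. I would first translate the problem by setting $\bm y = \bm x - q$, so that $\bm y \sim \text{Lap}(0,b)$, reducing the claim to showing that $\Pr\Brac{\abs{\bm y} > t} \le e^{-t/b}$. For $t \le 0$ the right-hand side is at least $1$, so the inequality is trivial, and I may assume $t > 0$.

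Next, using the symmetry of the density about the origin, I would write
\[
\Pr\Brac{\abs{\bm y} > t} \;=\; 2\Pr\Brac{\bm y > t} \;=\; 2\int_t^\infty \frac{1}{2b}\, e^{-y/b}\, dy \;=\; \Brac{-e^{-y/b}}_t^\infty \;=\; e^{-t/b},
\]
which is the desired bound (in fact with equality). This last step is just elementary antidifferentiation of $e^{-y/b}$ on $[t,\infty)$.

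There is essentially no obstacle here: the claim is a one-line integration once the density is written down, with the only small care being to split the two-sided tail using the symmetry of the density about $q$ before integrating.
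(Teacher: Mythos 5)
Your proof is correct: the translation to $\bm y = \bm x - q$, the reduction by symmetry to a one-sided tail, and the elementary integration all go through cleanly, and you correctly observe that the bound actually holds with equality for $t > 0$. The paper states this as a \cref{fact:laplace-tail-bound} without proof (it is a standard fact), so there is no alternative argument in the paper to compare against; your derivation is exactly the canonical one-line computation that the omission presupposes.
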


The Laplace distribution is useful for the following mechanism.

\begin{lemma}[Laplace mechanism]\label{lemma:laplace-mechanism}
	Let $f:\cY\rightarrow \R^n$ be any function with $\ell_1$-sensitivity at most $\Delta_{f,1}$. Then the algorithm that adds $\text{Lap}\Paren{\frac{\Delta_{f, 1}}{\eps}}^{\otimes n}$ to $f$ is $(\eps,0)$-DP.
\end{lemma}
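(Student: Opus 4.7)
The plan is to prove this by a direct pointwise comparison of output densities for neighboring inputs, which is the canonical route for pure $(\eps,0)$-DP via noise-addition mechanisms. Fix neighboring $A, A' \in \cY$ and write $x = f(A)$, $x' = f(A')$, so by hypothesis $\Normo{x - x'} \le \Delta_{f,1}$. Let $b = \Delta_{f,1}/\eps$, and write $p_A$ and $p_{A'}$ for the densities on $\R^n$ of the mechanism's output under the two inputs. Since the $n$ coordinates of the Laplace noise are independent, these densities factor as
\[
p_A(z) = \prod_{i=1}^n \frac{1}{2b}\exp\!\Paren{-\frac{\abs{z_i - x_i}}{b}}, \qquad p_{A'}(z) = \prod_{i=1}^n \frac{1}{2b}\exp\!\Paren{-\frac{\abs{z_i - x'_i}}{b}}.
\]

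Next I would bound the ratio $p_A(z)/p_{A'}(z)$ pointwise. Taking logs and applying the reverse triangle inequality $\abs{z_i - x'_i} - \abs{z_i - x_i} \le \abs{x_i - x'_i}$ coordinatewise yields
\[
\log\frac{p_A(z)}{p_{A'}(z)} = \frac{1}{b}\sum_{i=1}^n \Paren{\abs{z_i - x'_i} - \abs{z_i - x_i}} \le \frac{1}{b}\sum_{i=1}^n \abs{x_i - x'_i} = \frac{\Normo{x - x'}}{b} \le \frac{\Delta_{f,1}}{b} = \eps.
\]
Hence $p_A(z) \le e^{\eps} p_{A'}(z)$ for every $z \in \R^n$.

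Finally I would lift the pointwise density bound to the event-level DP inequality: for any measurable $\cE \subseteq \R^n$,
\[
\Pr[\cM(A) \in \cE] = \int_{\cE} p_A(z)\,dz \le e^{\eps}\int_{\cE} p_{A'}(z)\,dz = e^{\eps}\Pr[\cM(A') \in \cE],
\]
which is exactly $(\eps, 0)$-DP (with $\delta = 0$). There is no real obstacle here; the only step that requires care is the reverse triangle inequality used coordinatewise, which is what makes the $\ell_1$-sensitivity the right quantity to calibrate the scale parameter $b$ against. The absence of any failure event is what allows $\delta$ to be taken as $0$, so no truncation or concentration argument is needed.
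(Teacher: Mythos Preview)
Your proof is correct and is exactly the standard argument. The paper itself does not prove this lemma; it is stated without proof as a background fact in \cref{sec:dp-background}, so there is nothing to compare against.
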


The following mechanism applies the Laplace mechanism to the logarithm of the given function.

\begin{lemma}\label{lem:privatization-for-log-sensitivity}
    Let $a>0$ and let $f:\R^{n\times n}\to\R_{\geq 0}$ be a  function such that, on adjacent inputs $M,M',$ satisfies $f(M)/f(M)'\leq \Brac{1/a,a}.$ 
    There exists an $(\eps,0)$-DP algorithm that, on any input $M,$ returns $\hat{\bm f}(M)$ satisfying, with probabilty at least $1-p$,
    \begin{align*}
        \Abs{\hat{\bm f}(M)-f(M)}\leq a^{\frac{\log\frac{1}{p}}{\eps}}.
    \end{align*}
    \begin{proof}
        By definition for any pair of adjacent inputs $M,M'$
        \begin{align*}
            \abs{\log f(M)-\log f(M')}=\abs{\log \Paren{f(M)/f(M')}}\leq \log a.
        \end{align*}
        Hence to obtain an $(\eps,0)$-DP estimate of $\log f(M)$ we may apply the Laplace mechanism to $\log f.$ Let $\log \hat{\bm f}(M)$ be the resulting output. Then for a large enough constant $C>0,$ by \cref{fact:laplace-tail-bound}
        \begin{align*}
            \bbP\Paren{\abs{\log \hat{\bm f}(M)-\log f(M)}>\frac{\log a}{\eps}\log \tfrac{1}{p}}\leq p.
        \end{align*}
        Exponentiating the functions
        \begin{align*}
            \bbP\Paren{\abs{\hat{\bm f}(M)- f(M)}> a^{\frac{\log \frac{1}{p}}{\eps}}}\leq p.
        \end{align*}
    \end{proof}
\end{lemma}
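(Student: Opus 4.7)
The plan is to apply the Laplace mechanism to $\log f$ rather than to $f$ itself, and then exponentiate the released value. The hypothesis $f(M)/f(M')\in[1/a,a]$ on adjacent inputs is exactly the statement that $\log f$ has $\ell_1$-sensitivity bounded by $\log a$, which makes the Laplace mechanism directly applicable.

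Concretely, I would define
\[
\hat{\mathbf g}(M) \;:=\; \log f(M) + \mathbf z, \qquad \mathbf z \sim \mathrm{Lap}\!\Paren{\tfrac{\log a}{\eps}},
\]
and then return $\hat{\mathbf f}(M) := \exp(\hat{\mathbf g}(M))$. Privacy follows in two short steps: first, \cref{lemma:laplace-mechanism} gives that $\hat{\mathbf g}$ is $(\eps,0)$-DP because its sensitivity is at most $\log a$ by the hypothesis; second, exponentiation is a deterministic post-processing, so \cref{lem:dp-post-processing} implies that $\hat{\mathbf f}$ is $(\eps,0)$-DP as well.

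For utility, I would invoke the Laplace tail bound \cref{fact:laplace-tail-bound} with $t = \tfrac{\log a}{\eps}\log(1/p)$, giving
\[
\Pr\!\Brac{\abs{\mathbf z} > \tfrac{\log a}{\eps}\log(1/p)} \le p,
\]
hence $\abs{\log \hat{\mathbf f}(M) - \log f(M)} \le \tfrac{\log a}{\eps}\log(1/p)$ with probability at least $1-p$. Exponentiating this inequality yields
\[
\hat{\mathbf f}(M)/f(M) \in \bigl[a^{-\log(1/p)/\eps},\; a^{\log(1/p)/\eps}\bigr],
\]
from which the stated bound on $\abs{\hat{\mathbf f}(M) - f(M)}$ follows after bounding $f(M)$ by a constant in the relevant regime (or, more precisely, multiplying through).

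There is essentially no obstacle here: the only subtlety is the translation from the additive guarantee on $\log f$ to the multiplicative-looking bound in the statement of the lemma. I would therefore present the proof in two clean blocks (privacy via the Laplace mechanism on $\log f$ plus post-processing, and accuracy via the Laplace tail bound followed by exponentiation), exactly as sketched above.
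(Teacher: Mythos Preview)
Your proposal follows exactly the same approach as the paper: apply the Laplace mechanism to $\log f$ (which has $\ell_1$-sensitivity $\log a$), invoke the Laplace tail bound, and exponentiate; you are slightly more explicit about the post-processing step for privacy. The only place you hesitate---translating the multiplicative guarantee $\hat{\mathbf f}(M)/f(M)\in[a^{-\log(1/p)/\eps},\,a^{\log(1/p)/\eps}]$ into the stated additive bound $\abs{\hat{\mathbf f}(M)-f(M)}\le a^{\log(1/p)/\eps}$---is also glossed over in the paper's proof, which simply asserts the final inequality after ``exponentiating the functions'' without further comment.
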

\subsection{Sum-of-squares}\label{sec:sos-background}

We present here necessary background about the sum-of-squares framework.
See \cite{fleming2019semialgebraic} for proofs and more details.

Let $x = (x_1, x_2, \ldots, x_n)$ be a tuple of $n$ indeterminates and let $\R[x]$ be the set of polynomials with real coefficients and indeterminates $x_1,\ldots,x_n$.
In a \emph{polynomial feasibility problem}, we are given a system of polynomial inequalities $\cA = \{f_1 \geq 0, \dots, f_m \geq 0\}$,
and we would like to know if there exists a point $x \in \R^n$ satisfying $f_i(x) \geq 0$ for all $i \in [m]$.
This task is easily seen to be NP-hard.

Given a polynomial system $\cA$, the \emph{sum-of-squares (sos) algorithm} computes a \emph{pseudo-distribution} of solutions to $\cA$ if one exists. Pseudo-distributions are generalizations of probability distributions, therefore the sos algorithm solves a relaxed version of the feasibility problem. The search for a pseudo-distribution can be forzetalated as a semidefinite program (SDP).

There is strong duality between \emph{pseudo-distributions} and \emph{sum-of-squares proofs}: the sos algorithm will either find a pseudo-distribution satisfying $\cA$, or a refutation of $\cA$ inside the sum-of-squares proof system.
When using sos for algorithm design as we do here, we work in the former case and our goal is to design a rounding algorithm that transforms a pseudo-distribution into an actual point $x$ that satisfies or nearly satisfies $\cA$.

The side of the sum-of-squares algorithm which computes a pseudo-distribution is summarized into the following theorem (we will not need the side that computes a sum-of-squares refutation). The full definitions of these objects will be presented momentarily.

\begin{theorem}
\label{fact:running-time-sos}
	Fix a parameter $\ell \in \N$. There exists an $(n+ m)^{O(\ell)} $-time algorithm that, given an explicitly bounded and satisfiable polynomial system $\cA = \{f_1 \geq0, \dots, f_m \geq 0\}$ in $n$ variables with bit complexity $(n+m)^{O(1)}$, outputs a degree-$\ell$ pseudo-distribution that satisfies $\cA$ approximately.
\end{theorem}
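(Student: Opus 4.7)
The plan is to prove this by reducing the problem of finding a degree-$\ell$ pseudo-distribution satisfying $\cA$ to solving a semidefinite program of size $(n+m)^{O(\ell)}$, and then invoking the ellipsoid method (or an interior-point solver) to approximate the feasibility problem in time $(n+m)^{O(\ell)}$.

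First I would write down the SDP explicitly. A degree-$\ell$ pseudo-distribution is determined by its pseudo-expectations $\tilde{\E}[x^\alpha]$ for multi-indices $\alpha$ with $|\alpha|\le \ell$; these form a vector of $\binom{n+\ell}{\ell}=n^{O(\ell)}$ variables. The conditions that define ``pseudo-distribution satisfying $\cA$'' translate into PSD constraints on the moment matrix $M_\ell$ (whose entries are $\tilde{\E}[x^\alpha x^\beta]$ for $|\alpha|,|\beta|\le \ell/2$) and on the localizing matrices $M_{\ell,f_i}$ for each $f_i \in \cA$ (whose entries are $\tilde{\E}[f_i \cdot x^\alpha x^\beta]$ for $|\alpha|,|\beta|\le (\ell-\deg f_i)/2$), plus the normalization $\tilde{\E}[1]=1$. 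All these matrices have dimension $n^{O(\ell)}$ and their entries are linear functions of the $n^{O(\ell)}$ pseudo-moment variables, yielding a standard SDP feasibility instance of size $(n+m)^{O(\ell)}$ whose coefficients are computed from the (polynomially-bounded) bit representation of the $f_i$'s.

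Next I would run an SDP solver on this instance. The ellipsoid method solves feasibility of an SDP in time polynomial in the instance size, provided the feasible set either is empty or contains a ball of non-negligible radius inside a known bounding ball. The ``explicitly bounded'' hypothesis on $\cA$ supplies a radius bound on pseudo-moments (so the search region is a polynomially-large ball), and satisfiability of $\cA$ guarantees a true probability distribution satisfying $\cA$, whose moment vector lies in the feasible region; a small perturbation of this point (convex-combining with the uniform measure on a tiny ball, say) produces a strictly feasible point about which the PSD and linear constraints hold with a quantitative slack, giving the inner-ball condition the ellipsoid method needs.

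The main obstacle is reconciling the approximate nature of SDP solvers with the requirement to output a genuine pseudo-distribution: since exact SDP feasibility is not known to be in P, we can only output pseudo-moments whose associated matrices are near-PSD and whose linear constraints are satisfied up to an inverse-polynomial error. This is exactly the reason the statement guarantees $\cA$ is satisfied only approximately. Quantifying this slack carefully (and ensuring that small perturbations in the pseudo-moments translate to small violations in each constraint of $\cA$) is the one piece of the argument that needs care; but once the explicit boundedness is used to control how perturbations propagate, the $(n+m)^{O(\ell)}$ time bound and the approximate-satisfaction conclusion both follow from standard SDP duality and the ellipsoid-method analysis, with no new ideas required beyond the classical SoS proof template.
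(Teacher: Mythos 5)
Your plan follows the paper's: formulate the degree-$\ell$ pseudo-distribution constraints as an SDP of size $(n+m)^{O(\ell)}$ via moment and localizing matrices, and solve it with the ellipsoid method and a weak separation oracle as in \cite{grotschel2012geometric, grotschel1981ellipsoid}. One small bookkeeping point: the paper's definition of satisfying $\cA$ (\cref{def:constrained-pd}) demands $\tilde{\E}\Brac{h^2\prod_{i\in S}f_i}\ge 0$ for all subsets $S\subseteq[m]$ of admissible total degree, not only singletons, so the SDP must include localizing matrices for these products as well; this does not change the $(n+m)^{O(\ell)}$ count.

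The substantive gap is the inner-ball argument. Convex-combining a point mass at a solution $x_0$ of $\cA$ with uniform measure on a small ball around $x_0$ does \emph{not} yield a strictly feasible point of the exact SDP: if the ball is not contained in $\{f_1\ge0,\dots,f_m\ge0\}$ --- which happens whenever $\cA$ contains equality constraints or has a lower-dimensional solution set, as is typical (e.g., the Boolean constraints $x_i^2 = x_i$ give a finite solution set) --- the localizing matrices for the perturbed measure can fail even to be PSD. The argument that does work is to \emph{relax} the SDP by an exponentially small slack: the relaxed feasibility region contains a small ball around the moment vector of any true distribution supported on solutions of $\cA$, so the ellipsoid method on the relaxed problem finds an approximately-feasible point, and the slack is then absorbed into the approximation tolerance in the theorem's conclusion. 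The word \emph{approximately} in the statement is not merely an artifact of finite-precision arithmetic; it is precisely what makes the inner-ball condition attainable and the ellipsoid step sound.
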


\paragraph{Pseudo-distributions}
We can represent a discrete (i.e., finitely supported) probability distribution over $\R^n$ by its probability mass function $\zeta\from \R^n \to \R$ such that $\zeta \geq 0$ and $\sum_{x \in \mathrm{supp}(\zeta)} \zeta(x) = 1$.
A pseudo-distribution relaxes the constraint $\zeta\ge 0$ and only requires that $\zeta$ passes certain low-degree non-negativity tests.

Concretely, a \emph{degree-$\ell$ pseudo-distribution} is a finitely-supported function $\zeta:\R^n \rightarrow \R$ such that $\sum_{x \in \supp(\zeta)} \zeta(x) = 1$ and $\sum_{x \in \supp(\zeta)} \zeta(x) f(x)^2 \geq 0$ for every polynomial $f$ of degree at most $\ell/2$.
A straightforward polynomial interpolation argument shows that every degree-$\infty$ pseudo-distribution satisfies $\zeta\ge 0$ and is thus an actual probability distribution.

A pseudo-distribution $\zeta$ can be equivalently represented through its \emph{pseudo-expectation operator} $\tilde \E_\zeta$.
For a function $f$ on $\R^n$ we define the pseudo-expectation $\tilde{\E}_{\zeta} f(x)$ as
\begin{equation*}
	\tilde{\E}_{\zeta} f(x) = \sum_{x \in \supp(\zeta)} \zeta(x) f(x) \,\mper
\end{equation*}

We are interested in pseudo-distributions which satisfy a given system of polynomials $\cA$.

\begin{definition}[Satisfying constraints]
\label{def:constrained-pd}
	Let $\zeta$ be a degree-$\ell$ pseudo-distribution over $\R^n$.
	Let $\cA = \{f_1\ge 0, f_2\ge 0, \ldots, f_m\ge 0\}$ be a system of polynomial inequalities.
	We say that \emph{$\zeta$ is consistent with $\cA$ at level $r$}, denoted $\zeta \sdtstile{r}{} \cA$, if for every $S\subseteq[m]$ and every polynomial $h$ with $2\deg h + \sum_{i\in S} \max\{\deg f_i,\, r\}\leq \ell$,
	\begin{displaymath}
		\tilde{\E}_{\zeta} h^2 \cdot \prod _{i\in S}f_i  \ge 0\,.
	\end{displaymath}
	We say $\zeta$ satisfies $\cA$ and write $\zeta\sdtstile{}{} \cA$ if the case $r = 0$ holds.
\end{definition}

We remark that $\zeta \sdtstile{}{} \{1 \geq 0\}$ is equivalent to $\zeta$ being a valid pseudo-distribution, and if $\zeta$ is an actual (discrete) probability distribution, then we have  $\zeta\sdtstile{}{}\cA$ if and only if $\zeta$ is supported on solutions to the constraints $\cA$.

The pseudo-expectations of all polynomials in the variables $x$
with degree at most $\ell$ can be packaged into the list of \emph{pseudo-moments} $\tilde\E_\zeta x^S$ for all monomials $x^S, \, |S| \leq \ell$.
Since we will be entirely concerned with polynomials up to degree $\ell$, as in \cref{def:constrained-pd}, we can treat a degree-$\ell$ pseudo-distribution as being equivalently specified by the list of pseudo-moments up to degree $\ell$.
Thus we will view the output of the degree-$\ell$ sos algorithm as being the list of all pseudo-moments up to degree $\ell$ which has size $O(n^{\ell})$.

To design an algorithm based on sos, our task is to utilize the pseudo-moments in order to find a solution point $x$.
The sos framework extends linear programming and semidefinite programming, which conceptually use only the degree-1 or degree-2 moments respectively.
Taking sos to higher degree enforces additional constraints on all of the moments, coming from higher-degree sum-of-squares proofs as we will see next.

\paragraph{Sum-of-squares proofs}
We say that a polynomial $p\in \R[x]$ is a \emph{sum-of-squares (sos)} if there are polynomials $q_1,\ldots,q_r \in \R[x]$ such that $p=q_1^2 + \cdots + q_r^2$.
Let $f_1, f_2, \ldots, f_m, g \in \R[x]$.
A \emph{sum-of-squares proof} that the constraints $\{f_1 \geq 0, \ldots, f_m \geq 0\}$ imply the constraint $\{g \geq 0\}$ consists of  sum-of-squares polynomials $(p_S)_{S \subseteq [m]}$ such that
\begin{equation*}
	g = \sum_{S \subseteq [m]} p_S \cdot \Pi_{i \in S} f_i
	\mper
\end{equation*}
We say that this proof has \emph{degree $\ell$} if for every set $S \subseteq [m]$, the polynomial $p_S \Pi_{i \in S} f_i$ has degree at most $\ell$.
When a set of inequalities $\cA$ implies $\{g \geq 0\}$ with a degree $\ell$ SoS proof, we write:
\begin{equation*}
	\cA \sststile{\ell}{}\{g \geq 0\}
	\mper
\end{equation*}
A sum-of-squares \emph{refutation} of $\cA$ is a proof $\cA \sststile{\ell}{} \{-1 \geq 0\}$.

\paragraph{Duality}
Degree-$\ell$ pseudo-distributions and degree-$\ell$ sum-of-squares proofs exhibit strong duality.
In proof theoretic terms, degree-$\ell$ sum-of-squares proofs are sound and complete when degree-$\ell$ pseudo-distributions are taken as models.

Soundness, or weak duality, states that every sum-of-squares proof enforces a constraint on every valid pseudo-distribution.

\begin{fact}[Weak duality/soundness]
	\label{fact:sos-soundness}
	If $\zeta \sdtstile{r}{} \cA$ for a degree-$\ell$ pseudo-distribution $\zeta$ and there exists a sum-of-squares proof $\cA \sststile{r'}{} \cB$, then $\zeta \sdtstile{r\cdot r'+r'}{} \cB$.
\end{fact}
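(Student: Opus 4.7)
The plan is to verify the defining inequality of $\zeta \sdtstile{r\cdot r'+r'}{} \cB$ for an arbitrary subset $T$ of the constraints of $\cB=\{g_1\ge 0,\ldots,g_k\ge 0\}$ and an arbitrary polynomial $h$ respecting the prescribed degree bound, by \emph{substituting} each SoS representation of the $g_j$'s coming from the hypothesis $\cA \sststile{r'}{} \cB$, expanding the product, and then reducing each resulting term to the definition of $\zeta \sdtstile{r}{} \cA$. Concretely, for each $j$ the hypothesis gives polynomials $(p_{S,j})_{S\subseteq[m]}$ that are sums of squares with $\deg\bigl(p_{S,j}\prod_{i\in S}f_i\bigr)\le r'$ and $g_j=\sum_S p_{S,j}\prod_{i\in S}f_i$. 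Writing $p_{S,j}=\sum_\alpha q_{S,j,\alpha}^2$, the product $h^2\prod_{j\in T} g_j$ becomes, after full expansion, a nonnegative linear combination of terms of the form
\begin{equation*}
\Bigl(h\cdot\prod_{j\in T}q_{S_j,j,\alpha_j}\Bigr)^{\!2}\cdot\prod_{j\in T}\prod_{i\in S_j} f_i\mper
\end{equation*}

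The core of the argument is then to check that each such term is nonnegative under $\tilde\E_\zeta$. The product $\prod_{j\in T}\prod_{i\in S_j} f_i$ is a product of constraints of $\cA$ (with possible repetitions, but those only reduce the set of indices involved after grouping), so it is exactly an object of the form appearing in \cref{def:constrained-pd} applied to $\cA$ at level $r$. For this I would invoke the definition of $\zeta\sdtstile{r}{}\cA$ with the test polynomial $h\prod_{j\in T}q_{S_j,j,\alpha_j}$, which yields nonnegativity provided the degree budget of $\zeta$ is respected. Summing over the (nonnegative) coefficients produced by the expansion gives $\tilde\E_\zeta h^2\prod_{j\in T} g_j\ge 0$, which is exactly the required inequality.

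The main technical obstacle, and the reason for the unusual combination $r\cdot r'+r'$ in the level, is the degree bookkeeping. I would argue as follows: the constraint level for $\zeta\sdtstile{r\cdot r'+r'}{}\cB$ allows $2\deg h + \sum_{j\in T}\max\{\deg g_j,\,r\cdot r'+r'\}\le \ell$. After substitution, the effective test polynomial has degree at most $\deg h + \tfrac12\sum_{j\in T}\deg p_{S_j,j}$, and the product of $\cA$-constraints attached to it has degree at most $\sum_{j\in T}\deg\prod_{i\in S_j}f_i$. Using $\deg(p_{S_j,j}\prod_{i\in S_j}f_i)\le r'$ and the fact that $\zeta$ satisfies $\cA$ at level $r$ (so each $\cA$-constraint contributes $\max\{\deg f_i, r\}$ to its budget), the worst case is when every $g_j$ is certified using up to $r'$ of the $\cA$-constraints, each at level $r$, giving a total contribution bounded by $r\cdot r'$ per $g_j$, plus $r'$ for the square piece. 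This is exactly the bound $r\cdot r'+r'$ appearing in the statement, so the substituted inequality falls within the degree budget of $\zeta\sdtstile{r}{}\cA$, completing the verification.

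In summary, the entire proof is a single expansion-and-substitution argument. The \emph{idea} is trivial once one observes that both sides are expressible as pseudo-expectations of polynomials; the \emph{work} is entirely in the degree accounting, which is why I would present it as one lemma-style computation rather than splitting it into substeps.
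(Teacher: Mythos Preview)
The paper does not prove this fact; it is stated in the sum-of-squares background section as a standard result, with a blanket citation to \cite{fleming2019semialgebraic} for proofs. Your expansion-and-substitution argument is exactly the standard proof, and your sketch is correct in outline and in spirit.

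One point in your degree accounting deserves to be made explicit. When you say ``every $g_j$ is certified using up to $r'$ of the $\cA$-constraints,'' you are using the bound $|S_j|\le r'$, which follows from $\deg\bigl(p_{S_j,j}\prod_{i\in S_j}f_i\bigr)\le r'$ only under the assumption $\deg f_i\ge 1$ for every $f_i$ appearing in the proof. This is a harmless convention (degree-$0$ constraints are either vacuous or absorbed into the SoS multipliers), but it should be stated. With that in hand, your bound per $g_j$ of $r\cdot r'$ for the constraint contributions plus $r'$ for the remaining degree of the term is exactly the computation that justifies the level $r\cdot r'+r'$.
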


Although we will not need it in our analysis, strong duality a.k.a (refutational) completeness conversely shows that for a given set of axioms, there always exists either a degree-$\ell$ pseudo-distribution or a degree-$\ell$ sos refutation. 

\begin{fact}[Strong duality/refutational completeness]
	\label{fact:sos-completeness}
	Suppose $\cA$ is a collection of polynomial constraints such that $\cA \sststile{\ell - r}{} \{ \sum_{i = 1}^n x_i^2 \leq B\}$ for some finite $B$.
    If there is no degree-$\ell$ pseudo-distribution $\zeta$ such that $\zeta \sdtstile{r}{} \cA\,$, then there is a sum-of-squares refutation $\cA \sststile{\ell - r}{}\{-1 \geq 0\}$.
\end{fact}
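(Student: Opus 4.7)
The plan is to obtain strong duality between degree-$\ell$ pseudo-distributions satisfying $\mathcal A$ and degree-$(\ell{-}r)$ sum-of-squares refutations from $\mathcal A$ via finite-dimensional conic separation, using the boundedness hypothesis as the Archimedean-type ingredient that eliminates the usual duality gap in positivity problems.

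First, I would work in the finite-dimensional vector space $V = \mathbb{R}[x]_{\le \ell}$ and introduce the convex cone
\[
C \ :=\ \mathrm{cone}\Bigl\{\, h^2\cdot\prod_{i\in S} f_i \;:\; S\subseteq[m],\; h\in\mathbb R[x],\; 2\deg h+\sum_{i\in S}\max\{\deg f_i, r\}\le \ell \,\Bigr\}\ \subseteq\ V.
\]
By \Cref{def:constrained-pd}, a degree-$\ell$ pseudo-expectation $\tilde{\mathbb E}\in V^*$ witnesses $\zeta\sdtstile{r}{}\mathcal A$ precisely when $\tilde{\mathbb E}$ is non-negative on $C$. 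Adding the normalization $\tilde{\mathbb E}[1]=1$, the nonexistence of $\zeta$ asserts that the affine hyperplane $\{L\in V^*:L(1)=1\}$ misses the dual cone $C^\circ=\{L\in V^*:L|_C\ge 0\}$. An equivalent formulation: the constant polynomial $1$ does not lie in $-C$, even up to shifts that remain non-negative on $C$.

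Second, I would invoke the finite-dimensional Hahn–Banach/Farkas separation theorem for convex cones. Applied to the disjoint pair $\{1\}$ and $-C$ in the finite-dimensional space $V$, it yields $-1\in \overline{C}$, i.e.\ the existence of a sequence of exact identities
\[
-1 \ =\ \lim_{k\to\infty}\ \sum_{S\subseteq[m]} p^{(k)}_S \prod_{i\in S} f_i,
\]
with each $p^{(k)}_S$ a sum of squares and each summand of bounded degree. This already gives an approximate SoS refutation; the remaining task is to upgrade it to an exact one with the degree bound $\ell-r$ required by \Cref{fact:sos-completeness}.

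Third, and this is the delicate step, I would show that $C$ is already closed in $V$ and that membership in $C$ can be realized by representations whose summands have the reduced total degree $\ell-r$. This is where the hypothesis $\mathcal A \sststile{\ell-r}{}\{\sum_i x_i^2 \le B\}$ enters: it is the degree-bounded analogue of the Archimedean/Putinar property. Concretely, I would use the ball certificate of degree $\le \ell-r$ to normalize any approximate representation so that the polynomials $h^{(k)}_j$ occurring in each $p^{(k)}_S=\sum_j (h^{(k)}_j)^2$ have coefficient vectors bounded in terms of $B$ and $\ell$. A Bolzano–Weierstrass argument in the finite-dimensional coefficient space then extracts a subsequence converging to an honest SoS decomposition $-1=\sum_S p_S \prod_{i\in S} f_i$, and the way the ball certificate is folded into the derivation reduces the degree of each $p_S\prod_{i\in S}f_i$ from $\ell$ down to $\ell-r$, matching the conclusion.

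The main obstacle is the closedness/normalization in the third step: without the boundedness hypothesis $C$ can genuinely fail to be closed, leaving only approximate refutations (the well-known SoS duality gap). Verifying that the ball certificate of degree $\ell-r$ is strong enough both to close $C$ at degree $\ell$ and to produce a refutation of the sharper degree $\ell-r$ requires careful bookkeeping of how the Archimedean argument interacts with the $\max\{\deg f_i, r\}$ weighting in \Cref{def:constrained-pd}; once this is in place, the rest is routine finite-dimensional separation.
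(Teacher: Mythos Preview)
The paper does not provide a proof of \cref{fact:sos-completeness}; it is stated as a background fact in the sum-of-squares appendix with a reference to \cite{fleming2019semialgebraic} for proofs and details. So there is no paper proof to compare against.

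Your sketch follows the standard route to Putinar-style strong duality adapted to the degree-bounded setting: realize pseudo-distributions as linear functionals nonnegative on the cone $C$ of constraint-weighted squares, separate $-1$ from $C$ via finite-dimensional conic duality, and use the Archimedean-type hypothesis $\cA\sststile{\ell-r}{}\{\sum_i x_i^2\le B\}$ to close the cone and eliminate the duality gap. This is indeed the correct high-level architecture. The one place where your outline is underspecified is the degree-accounting claim in the third step---that the ball certificate lets you push the refutation degree down from $\ell$ to $\ell-r$. In the standard references this is handled not by ``folding in'' the ball certificate after the fact, but by setting up the cone $C$ from the start so that its generators already have total degree $\le \ell-r$ (the $r$ slack in $\max\{\deg f_i,r\}$ is what allows the pseudo-distribution side to live at degree $\ell$ while the proof side lives at degree $\ell-r$); the boundedness certificate at degree $\ell-r$ is then used directly to show this smaller cone is closed. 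If you intend to turn this into a full proof, that is the bookkeeping you would need to make precise.
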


\paragraph{Implementation of sos}
The sum-of-squares algorithm can be implemented as a semidefinite program (SDP) which can then be solved using, for example, the ellipsoid method.
Associated with a degree-$\ell$ pseudo-distribution $\zeta$ is the \emph{moment tensor} which is the tensor $\tilde{\E}_{\zeta} (1,x_1, x_2,\ldots, x_n)^{\otimes \ell}$.
When $\ell$ is even, this tensor can be flattened into the \emph{moment matrix}, which has rows and columns indexed by zetaltisets of $[n]$ with size at most $\ell/2$ and whose $(I,J)$ entry is $\tilde \E_\zeta x^I x^J$.
Moment matrices can now be characterized as positive semidefinite matrices with simple symmetry constraints from flattening.

\begin{fact}
    A matrix $\Lambda$ with rows and columns indexed by zetaltisets of $[n]$ with size at most $\ell$ is a moment matrix of a degree-$2\ell$ pseudo-distribution if and only if:
    \begin{enumerate}[(i)]
        \item $\Lambda \sge 0$
        \item $\Lambda_{I,J} = \Lambda_{I', J'}$ whenever $I \cup J = I' \cup J'$ as zetaltisets
        \item $\Lambda_{\{\}, \{\}} = 1$
    \end{enumerate}
\end{fact}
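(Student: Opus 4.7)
The plan is to prove both directions of the characterization by translating between the moment matrix and the pseudo-expectation operator it represents.

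For the forward direction, I would start from a degree-$2\ell$ pseudo-distribution $\zeta$ and let $\Lambda_{I,J} = \tilde\E_\zeta x^I x^J$ for multisets $I,J$ of size at most $\ell$. Condition (iii) is immediate from $\sum_{x\in\supp(\zeta)} \zeta(x) = 1$. Condition (ii) holds because $x^I x^J = x^{I\cup J}$ as polynomials, so $\Lambda_{I,J}$ depends only on the multiset union $I \cup J$. For condition (i), given any real vector $v = (v_I)$ indexed by multisets of size at most $\ell$, I would form the polynomial $f(x) = \sum_I v_I x^I$, which has degree at most $\ell$, and compute
\[
v^\top \Lambda v = \sum_{I,J} v_I v_J \tilde\E_\zeta\bigl[x^I x^J\bigr] = \tilde\E_\zeta f(x)^2 \geq 0
\]
by the defining non-negativity property of pseudo-distributions applied to the square of a degree-$\ell$ polynomial.

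For the reverse direction, suppose $\Lambda$ satisfies (i)–(iii). The first step is to use (ii) to define a linear functional $L$ on polynomials of degree at most $2\ell$: for a monomial $x^S$ with $|S| \leq 2\ell$, pick any decomposition $S = I \cup J$ with $|I|,|J|\leq \ell$ and set $L(x^S) := \Lambda_{I,J}$; condition (ii) guarantees this is independent of the decomposition. Extend $L$ linearly. Now $L(1) = 1$ by (iii), and for any degree-$\ell$ polynomial $f = \sum_I c_I x^I$,
\[
L(f^2) = \sum_{I,J} c_I c_J \Lambda_{I,J} = c^\top \Lambda c \geq 0
\]
by (i). More generally, every degree-$\ell$ SoS polynomial is a nonnegative combination of squares, so $L$ is nonnegative on all such SoS polynomials.

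The one step that requires some care is realizing $L$ as a \emph{finitely supported function} $\zeta:\R^n \to \R$ rather than merely a linear functional, since the paper's definition of pseudo-distribution is stated that way. I would handle this by choosing a finite set of points $x^{(1)},\ldots,x^{(N)} \in \R^n$ in sufficiently general position that the evaluation map $\Phi: \R[x]_{\leq 2\ell} \to \R^N$, $f \mapsto (f(x^{(i)}))_i$, is surjective (for example, $N = \binom{n+2\ell}{2\ell}$ generic points suffice, by Vandermonde-type nondegeneracy). Then by surjectivity of $\Phi$, its transpose is injective, so there exist real coefficients $\zeta(x^{(1)}),\ldots,\zeta(x^{(N)})$ with $L(f) = \sum_i \zeta(x^{(i)}) f(x^{(i)})$ for every $f$ of degree at most $2\ell$. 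Setting $\zeta$ to be zero outside this finite support gives a finitely supported function satisfying $\sum_i \zeta(x^{(i)}) = L(1) = 1$ and $\sum_i \zeta(x^{(i)}) f(x^{(i)})^2 = L(f^2) \geq 0$ for every $f$ of degree at most $\ell$, which is exactly the definition of a degree-$2\ell$ pseudo-distribution. Finally, the moment matrix of this $\zeta$ recovers $\Lambda$ by construction.

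The main (minor) obstacle is the representation step: asserting that a PSD linear functional on the finite-dimensional space $\R[x]_{\leq 2\ell}$ arises from a finitely supported signed function. This is pure linear algebra, but it is the only place where one leaves the purely combinatorial characterization of moment matrices and uses a genericity argument about evaluation functionals.
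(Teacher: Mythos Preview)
The paper does not prove this fact; it is stated as standard background from the sum-of-squares literature, with a reference to \cite{fleming2019semialgebraic} for proofs. Your argument is correct and is the standard one.

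One small wording slip in the reverse direction: to conclude that the coefficients $\zeta(x^{(i)})$ exist you need $\Phi^\top$ to be \emph{surjective}, which follows from $\Phi$ being \emph{injective}; you wrote the implication the other way around. Since you take $N = \dim \R[x]_{\le 2\ell}$ and generic points make $\Phi$ a bijection, both directions hold and the conclusion is unaffected.
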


The above characterization of pseudo-distributions in terms of the cone of positive semidefinite matrices is a forzetalation of the sos algorithm as an SDP.

We can deduce \cref{fact:running-time-sos} from the general theory of convex optimization \cite{grotschel2012geometric}.
The above fact leads to an $n^{O(\ell)}$-time weak separation oracle
for the convex set of all moment tensors of degree-$\ell$ pseudo-distributions over $\R^n$.
By the results of \cite{grotschel1981ellipsoid}, we can optimize over the set of pseudo-distributions in time $n^{O(\ell)}$, assuming numerical conditions.

The first numerical condition is that the bit complexity of the input to the sos algorithm is polynomial.
The second numerical condition is that we assume an upper bound on the norm of feasible solutions. This is guaranteed
if the input polynomial system $\cA$ is \emph{explicitly bounded},
meaning that it contains a constraint of the form $\|x\|^2 \leq M$ for some $M \geq 0$ with polynomial bit length,
or if $\cA \sststile{\ell}{} \{\|x\|^2 \leq M\}$. For example, Boolean constraints satisfy this since $\{x_i^2 = x_i\}_{i \in [n]} \sststile{2}{} \{\|x\|^2 \leq n\}$.

Due to finite numerical precision, the output of the sos algorithm can only be computed approximately, not exactly. For a pseudo-distribution $\zeta\,$, we say that $\zeta\sdtstile{r}{}\cA$ holds \emph{approximately} if the inequalities in \cref{def:constrained-pd} are satisfied up to an error of $2^{-n^\ell}\cdot \norm{h}\cdot\prod_{i\in S}\norm{f_i}$, where $\norm{\cdot}$ denotes the Euclidean norm of the coefficients of a polynomial in the monomial basis.\footnote{The choice of norm is not important here because the factor $2^{-n^\ell}$ swamps the effect of choosing another norm.} In our analysis, the approximation error is so minuscule that it can be ignored and we will simply assume that the pseudo-distribution $\zeta$ computed by the sos algorithm satisfies $\cA$ without error.

\subsection{Matrix perturbation theory}\label{sec:perturbation}

\begin{theorem}[Wedin's Theorem, \cite{stewart1990matrix}]\label{thm:wedin}
Let $M, M'\in\R^{m\times n}$ and  let \( M = U \Lambda V^\top + U_\perp \Lambda_\perp V_\perp^\top \) and \( M' = \tilde{U} \tilde{\Lambda} \tilde{V}^\top + \tilde{U}_\perp \tilde{\Lambda}_\perp \tilde{V}_\perp^\top \) 
be their singular value decompositions such that $U,\tilde{U}\in \R^{m\times r}$, $V,\tilde{V}\in \R^{n\times r}$. 
If \( \sigma_{\min}(\Lambda) > \alpha + \gamma\) and 
\( \sigma_{\max}(\tilde{\Lambda}_\perp ) \le \alpha\)
for some \( \alpha, \gamma > 0 \), then
\[
\normf{\Paren{\Id_m - \tilde{U}\tilde{U}^\top}U}^2 + \normf{\Paren{\Id_n - \tilde{V}\tilde{V}^\top}V}^2\le \frac{\normf{\Paren{M'-M}V}^2 + \normf{U^\top\Paren{M'-M}}^2}{\gamma^2}\,,
\]
and
\[
\max\Set{\norm{\Paren{\Id_m - \tilde{U}\tilde{U}^\top}U}, \norm{\Paren{\Id_n - \tilde{V}\tilde{V}^\top}V}}
\le
\frac{\max\Set{\norm{\Paren{M'-M}V}, \norm{U^\top\Paren{M'-M}}}}{\gamma}\,.
\]
\end{theorem}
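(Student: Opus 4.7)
The plan is to derive two coupled Sylvester-type identities for $\Phi := \tilde{U}_\perp^\top U$ and $\Psi := \tilde{V}_\perp^\top V$ and then solve them using the spectral gap. Since $\tilde{U}_\perp$ and $\tilde{V}_\perp$ have orthonormal columns, $\norm{(\Id_m - \tilde{U}\tilde{U}^\top)U} = \norm{\Phi}$ and $\norm{(\Id_n - \tilde{V}\tilde{V}^\top)V} = \norm{\Psi}$ (and likewise in Frobenius norm), so the task reduces to bounding these four quantities.

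First I would derive the identities. Writing $M = M' - E$ with $E := M' - M$ and left-multiplying $MV = U\Lambda$ by $\tilde{U}_\perp^\top$, the SVD of $M'$ yields $\tilde{U}_\perp^\top M' = \tilde{\Lambda}_\perp \tilde{V}_\perp^\top$, so
\[\Phi \Lambda = \tilde{\Lambda}_\perp \Psi - \tilde{U}_\perp^\top E V.\]
An analogous computation starting from $M^\top U = V\Lambda$ and left-multiplying by $\tilde{V}_\perp^\top$ gives
\[\Psi \Lambda = \tilde{\Lambda}_\perp^\top \Phi - \tilde{V}_\perp^\top E^\top U.\]

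For the operator-norm bound, I would take operator norms in both identities and use $\sigma_{\min}(\Lambda) > \alpha + \gamma$, $\norm{\tilde{\Lambda}_\perp} \le \alpha$, and the projection contraction $\norm{\tilde{U}_\perp^\top E V}\le \norm{EV}$ to obtain $(\alpha+\gamma)\norm{\Phi} \le \alpha\norm{\Psi} + \norm{EV}$ and symmetrically $(\alpha+\gamma)\norm{\Psi} \le \alpha\norm{\Phi} + \norm{U^\top E}$. Without loss of generality $\norm{\Phi}\ge \norm{\Psi}$; the first inequality then yields $\gamma\norm{\Phi} \le \norm{EV}\le \max(\norm{EV},\norm{U^\top E})$, which is the desired spectral bound.

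For the Frobenius bound I would stack the two identities into the single Sylvester equation
\[\begin{pmatrix}\Phi \\ \Psi\end{pmatrix}\Lambda - K\begin{pmatrix}\Phi \\ \Psi\end{pmatrix} = -\begin{pmatrix}\tilde{U}_\perp^\top E V \\ \tilde{V}_\perp^\top E^\top U\end{pmatrix}, \qquad K := \begin{pmatrix} 0 & \tilde{\Lambda}_\perp \\ \tilde{\Lambda}_\perp^\top & 0 \end{pmatrix}.\]
The block matrix $K$ is symmetric with $\norm{K} = \sigma_{\max}(\tilde{\Lambda}_\perp) \le \alpha$, while $\Lambda$ is diagonal with eigenvalues exceeding $\alpha + \gamma$, so the spectra of $\Lambda$ and $K$ are separated by at least $\gamma$. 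Diagonalizing $K$ and rotating into its eigenbasis decouples the equation entrywise; each entry of the solution picks up a factor of at most $1/\gamma$, and summing squares together with the Frobenius projection contraction yields $\normf{\Phi}^2 + \normf{\Psi}^2 \le (\normf{EV}^2 + \normf{U^\top E}^2)/\gamma^2$. The main obstacle here is precisely this coupling step: eliminating $\Psi$ from a single identity to obtain $\Phi\Lambda^2 - \tilde{\Lambda}_\perp\tilde{\Lambda}_\perp^\top \Phi = \cdots$ would produce only the spectral gap $(\alpha+\gamma)^2 - \alpha^2 = \gamma(2\alpha+\gamma)$, which is too weak when $\alpha \gg \gamma$; stacking through the symmetric block $K$ is what preserves the sharp linear-in-$\gamma$ separation.
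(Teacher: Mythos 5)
The paper does not prove this statement; it cites Wedin's theorem from Stewart and Sun's \emph{Matrix Perturbation Theory}, so there is no in-paper argument to compare against. Your proof is correct and is, up to notation, the classical textbook argument. The two linearized relations $\Phi\Lambda = \tilde{\Lambda}_\perp\Psi - \tilde{U}_\perp^\top E V$ and $\Psi\Lambda = \tilde{\Lambda}_\perp^\top\Phi - \tilde{V}_\perp^\top E^\top U$ are derived correctly from $MV=U\Lambda$, $M^\top U=V\Lambda$ and the SVD of $M'$; the reduction $\normf{(\Id_m-\tilde U\tilde U^\top)U}=\normf{\Phi}$ is valid because $\Id_m-\tilde U\tilde U^\top=\tilde U_\perp\tilde U_\perp^\top$ with $\tilde U_\perp$ having orthonormal columns; and the stacked Sylvester equation with the symmetric dilation $K$ is exactly the right mechanism, since diagonalizing $K$ (eigenvalues $\pm\sigma_i(\tilde\Lambda_\perp)$, so bounded in modulus by $\alpha$) decouples the equation entrywise against the diagonal $\Lambda$ (entries $>\alpha+\gamma$), giving the clean separation $\Lambda_{jj}-D_{ii}>\gamma$ regardless of the sign of $D_{ii}$. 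Your closing remark is also on point: squaring to decouple, e.g.\ writing $\Phi\Lambda^2-\tilde\Lambda_\perp\tilde\Lambda_\perp^\top\Phi=\cdots$, would only yield a separation $(\alpha+\gamma)^2-\alpha^2$, which degrades when $\alpha\gg\gamma$; the symmetric dilation is what keeps the bound linear in $1/\gamma$. The operator-norm branch (assume WLOG $\norm\Phi\ge\norm\Psi$, substitute into the first inequality to cancel the $\alpha$-terms) is also sound, using $\sigma_{\min}(\Lambda)\norm\Phi\le\norm{\Phi\Lambda}$ via $\norm\Phi\le\norm{\Phi\Lambda}\,\norm{\Lambda^{-1}}$. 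No gaps.
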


\begin{theorem}[Weyl's inequality for singular values, \cite{stewart1990matrix}]\label{thm:weyl}
    Let $M, M'\in\R^{m\times n}$, then for all $1\le k\le \min\set{\rank(M), \rank(M')}$,
    \[
    \abs{\sigma_k(M') - \sigma_k(M)} \le \sigma_1(M'-M)\,.
    \]
\end{theorem}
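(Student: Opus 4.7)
The plan is to deduce Weyl's inequality for singular values from the Courant--Fischer min--max characterization, in the standard way. Recall that for any matrix $N \in \R^{m\times n}$ and any $1 \le k \le \min\{m,n\}$, one has the variational formula
\[
\sigma_k(N) \;=\; \max_{\substack{S \subseteq \R^n \\ \dim S = k}} \; \min_{\substack{x \in S \\ \|x\|=1}} \|Nx\|\,.
\]
I will take this as the starting point (it is equivalent to the spectral theorem applied to $N^\top N$).

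First I would fix a $k$-dimensional subspace $S \subseteq \R^n$ achieving the max for $M$, so that $\min_{x \in S, \|x\|=1} \|Mx\| = \sigma_k(M)$. For every unit $x \in S$, the triangle inequality gives
\[
\|M'x\| \;\ge\; \|Mx\| - \|(M'-M)x\| \;\ge\; \sigma_k(M) - \sigma_1(M'-M)\,,
\]
where the last step uses $\|(M'-M)x\| \le \sigma_1(M'-M)\|x\|$. Taking the minimum over unit $x \in S$ and then the maximum over all $k$-dimensional subspaces in the variational formula for $\sigma_k(M')$ yields
\[
\sigma_k(M') \;\ge\; \sigma_k(M) - \sigma_1(M'-M)\,.
\]

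The second step is to exchange the roles of $M$ and $M'$ in the identical argument; since $\sigma_1(M-M') = \sigma_1(M'-M)$, this gives $\sigma_k(M) \ge \sigma_k(M') - \sigma_1(M'-M)$, and combining the two bounds produces $|\sigma_k(M') - \sigma_k(M)| \le \sigma_1(M'-M)$, as desired. There is no real obstacle here: the only subtle point is that the Courant--Fischer identity above is valid only for $k \le \min\{m,n\}$, which is ensured by the hypothesis $1 \le k \le \min\{\rank(M),\rank(M')\}$ in the statement.
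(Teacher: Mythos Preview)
Your proof is correct and is the standard Courant--Fischer argument. The paper does not prove this theorem at all: it is quoted as a background result with a citation to \cite{stewart1990matrix}, so there is no ``paper's own proof'' to compare against.
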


\begin{fact}\label{fact:perturbation}
Let $M, M'\in\R^{n\times n}$ and let \( M = U \Lambda V^\top + U_\perp \Lambda_\perp V_\perp^\top \) and \( M' = \tilde{U} \tilde{\Lambda} \tilde{V}^\top + \tilde{U}_\perp \tilde{\Lambda}_\perp \tilde{V}_\perp^\top \) 
be their singular value decompositions such that $U,\tilde{U}\in \R^{n\times r}$, $V,\tilde{V}\in \R^{n\times r}$. 
Suppose in addition that $M$ is symmetric.
If \( \sigma_{\min}(\Lambda) - \sigma_{\max}(\Lambda_\perp) > \sigma_{\max}(M'-M) + \gamma\), then
\[
\normf{UU^\top- \tilde{U}\tilde{U}^\top}^2 = 2\cdot \normf{\Paren{\Id_n - \tilde{U}\tilde{U}^\top}U}^2 \le 2\cdot \frac{\normf{\Paren{M'-M}U}^2 + \normf{\Paren{M'-M}^\top U}^2}{\gamma^2}\,,
\]
and
\[
\norm{UU^\top- \tilde{U}\tilde{U}^\top}=\norm{\Paren{\Id_n - \tilde{U}\tilde{U}^\top}U} \le \frac{\max\Set{\norm{\Paren{M'-M}U},\norm{\Paren{M'-M}^\top U}}}{\gamma}\,.
\]
\end{fact}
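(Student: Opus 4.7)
The proof reduces to combining two ingredients: (i) clean identities relating the projector distance $\|UU^\top - \tilde U \tilde U^\top\|$ to the ``sine-$\Theta$'' distance $\|(\Id_n - \tilde U \tilde U^\top)U\|$; and (ii) a direct invocation of Wedin's theorem (\cref{thm:wedin}), with the symmetry of $M$ used to rewrite singular-vector expressions for $V$ in terms of $U$.

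The plan is to first establish the two projector identities stated in the fact. For the Frobenius version I would decompose $UU^\top - \tilde U \tilde U^\top = UU^\top(\Id_n - \tilde U \tilde U^\top) - (\Id_n - UU^\top)\tilde U \tilde U^\top$ and observe that these two pieces have orthogonal ranges/supports, so that their Frobenius norms add. A short trace computation using $U^\top U = \Id_r = \tilde U^\top \tilde U$ shows both terms equal $r - \Tr(UU^\top \tilde U \tilde U^\top)$, which gives the identity $\|UU^\top - \tilde U \tilde U^\top\|_F^2 = 2\|(\Id_n - \tilde U \tilde U^\top)U\|_F^2$. For the operator-norm version I would invoke the standard equal-rank projector identity $(UU^\top - \tilde U\tilde U^\top)^2 = UU^\top(\Id_n-\tilde U\tilde U^\top)UU^\top + (\Id_n-UU^\top)\tilde U\tilde U^\top(\Id_n-UU^\top)$ (whose two PSD summands have orthogonal ranges), combined with the fact that for equal-rank orthogonal projectors $\|(\Id_n-\tilde U\tilde U^\top)U\|=\|(\Id_n-UU^\top)\tilde U\|$ (both equal $\sin\theta_{\max}$ in the CS decomposition).

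Next I would exploit that $M$ is symmetric: every SVD of $M$ has $U$ and $V$ spanning the same top-$r$ eigenspace (in absolute value), so $V = US$ for some orthogonal matrix $S \in \R^{r \times r}$ (a sign diagonal when singular values are simple, a block-orthogonal on coincident-$|\lambda|$ blocks otherwise). Consequently, for any matrix $A$ one has $\|AV\|_F = \|AU\|_F$ and $\|AV\| = \|AU\|$, and likewise $\|U^\top B\|_\bullet = \|B^\top U\|_\bullet$ for both norms.

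Finally I would apply \cref{thm:wedin} with $\alpha := \sigma_{\max}(\Lambda_\perp) + \sigma_{\max}(M'-M)$, which satisfies $\sigma_{\max}(\tilde\Lambda_\perp) \le \alpha$ by Weyl's inequality (\cref{thm:weyl}); the hypothesis $\sigma_{\min}(\Lambda) - \sigma_{\max}(\Lambda_\perp) > \sigma_{\max}(M'-M) + \gamma$ then gives $\sigma_{\min}(\Lambda) > \alpha + \gamma$, so Wedin yields
\[
\|(\Id_n - \tilde U\tilde U^\top)U\|_F^2 + \|(\Id_n - \tilde V\tilde V^\top)V\|_F^2 \le \frac{\|(M'-M)V\|_F^2 + \|U^\top(M'-M)\|_F^2}{\gamma^2},
\]
together with the analogous operator-norm bound. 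Dropping the second (non-negative) term on the left and applying the symmetry-induced substitutions $V \mapsto U$, $U^\top \mapsto U$ on the right yields the stated bounds on $\|(\Id_n - \tilde U\tilde U^\top)U\|_{F/\mathrm{op}}$, and multiplying by the factor from step~(i) produces the claim for $\|UU^\top - \tilde U\tilde U^\top\|_{F/\mathrm{op}}$.

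The only non-routine step is the projector identity in step~(i) — in particular, justifying the operator-norm equality $\|UU^\top - \tilde U \tilde U^\top\| = \|(\Id_n-\tilde U\tilde U^\top)U\|$ rigorously (via the PSD-decomposition argument or a direct appeal to the CS decomposition for equal-rank projectors). Everything else is a clean bookkeeping exercise combining Wedin, Weyl, and the symmetry $VV^\top = UU^\top$.
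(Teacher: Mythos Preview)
Your proposal is correct and follows essentially the same route as the paper: set $\alpha = \sigma_{\max}(\Lambda_\perp) + \sigma_{\max}(M'-M)$, invoke Weyl to verify Wedin's gap hypothesis, use the symmetry of $M$ to replace $V$ by $U$ on the right-hand side, and appeal to the standard equal-rank projector identities (which the paper simply cites as Theorem~I.5.5 of Stewart--Sun rather than rederiving). Your handling of the symmetry step via $V = US$ with $S$ orthogonal is in fact slightly more careful than the paper's ``$V_i = \pm U_i$'' phrasing, which tacitly assumes simple singular values.
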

\begin{proof}
    Let $\alpha =  \sigma_{\max}(\Lambda_\perp) + \sigma_{\max}(M'-M) $. By \cref{thm:weyl},  $\sigma_{\max}(\tilde{\Lambda}_\perp ) \le \alpha$, and hence we can apply \cref{thm:wedin}.
    Since $M$ is symmetric, each column $V_i$ of $V$ is either $U_i$ or $-U_i$. 
    Hence the entries of $\Paren{M'-M}V$ have the same absolute values as the entries of $\Paren{M'-M}U$, and we get the desired bounds on the norms of $\Paren{\Id_n - \tilde{U}\tilde{U}^\top}U$. The equalities follow from Theorem I.5.5 from \cite{stewart1990matrix}.
\end{proof}

\section{Relations between notions of matrix adjacency}\label{sec:adjacency}

A recent work \cite{nicolas2024differentially} used the following notion of adjacency: $A,A'\in\R^{n\times n}$ are adjacent, if $\sqrt{\sum_{k=1}^n \Paren{\sum_{l=1}^n \abs{E_{kl}}}^2}$, where $E=A'-A$. The next proposition shows that our adjacency notion (\cref{def:matrix-adjacency}) is strictly more general:

\begin{fact}\label{fact:adjacency-general}
For all symmetric matrices $E\in\R^{n\times n}$,
    \[
     \sqrt{\sum_{1\le i,j\le n} \abs{\paren{EE^\top}_{ij}}}\le \sqrt{\sum_{k=1}^n \Paren{\sum_{l=1}^n \abs{E_{kl}}}^2}\,.
    \]
Furthermore, for each $n\in \N$, there exists a matrix $E\in \R^{2n\times 2n}$ such that 
\[
\sqrt{\sum_{1\le i,j\le n} \abs{\paren{EE^\top}_{ij}}} \le \frac{C}{\sqrt{n}}\cdot 
\sqrt{\sum_{k=1}^n \Paren{\sum_{l=1}^n \abs{E_{kl}}}^2}\,,
\]
where $C$ is some absolute constant.
\end{fact}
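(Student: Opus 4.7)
The first inequality reduces to a single application of the triangle inequality followed by swapping the order of summation. The plan is to expand $(EE^\top)_{ij} = \sum_l E_{il} E_{jl}$, bound each entry entrywise, and then sum over $i,j$ to obtain
\[
\sum_{i,j} \abs{(EE^\top)_{ij}} \;\le\; \sum_{l} \Paren{\sum_i \abs{E_{il}}}\Paren{\sum_j \abs{E_{jl}}} \;=\; \sum_{l} c_l^{2},
\]
where $c_l$ is the $\ell_1$-norm of the $l$-th column of $E$. Symmetry of $E$ then identifies $c_l$ with the $\ell_1$-norm of row $l$, and taking square roots yields the claim. Symmetry is used only in this final identification of row and column sums; in particular, the same argument yields a general (non-symmetric) inequality in which the row sums on the right are replaced by column sums.

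For the tight example, I would exhibit a normalized Hadamard construction. Let $m$ be the largest power of $2$ with $m \le n$ (so $m \ge n/2$) and let $H \in \set{\pm 1}^{m\times m}$ be a Hadamard matrix. Define the symmetric block matrix
\[
E \;=\; \tfrac{1}{\sqrt{m}}\begin{pmatrix} 0 & H & 0 \\ H^\top & 0 & 0 \\ 0 & 0 & 0 \end{pmatrix} \;\in\; \R^{2n\times 2n}.
\]
Since $HH^\top = m\,\Id_m$, the matrix $EE^\top$ is the orthogonal projector onto the first $2m$ coordinates, so the left-hand side equals $\sqrt{2m}$. Each of the $2m$ nonzero rows of $E$ has exactly $m$ entries of modulus $1/\sqrt{m}$, hence $\ell_1$-norm $\sqrt{m}$, and the right-hand side equals $m\sqrt{2}$. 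The ratio $\sqrt{2m}/(m\sqrt{2}) = 1/\sqrt{m} \le \sqrt{2}/\sqrt{n}$ yields the desired constant $C = \sqrt{2}$.

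The only mild obstacle will be the nonexistence of Hadamard matrices at arbitrary dimensions; the plan is to handle this by rounding down to a power of $2$ and zero-padding the remaining rows and columns, which costs at most a constant factor absorbed into $C$. Everything else is a direct calculation using $HH^\top = m\,\Id_m$ and the fact that rows of a rescaled Hadamard matrix saturate Cauchy--Schwarz between $\ell_1$ and $\ell_2$ norms, which is precisely the source of the $\sqrt{n}$ gap between the two notions of adjacency.
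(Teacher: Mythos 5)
Your proof of the first inequality is essentially identical to the paper's: expand $(EE^\top)_{ij}=\sum_l E_{il}E_{jl}$, apply the triangle inequality, swap the order of summation, and then use symmetry of $E$ to convert the column $\ell_1$-sums into row $\ell_1$-sums. Both derivations are correct and there is nothing to distinguish them.

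For the tightness example, your construction and the paper's are the same idea realized differently. The paper takes $E=\begin{psmallmatrix}0&\mathbf R\\\mathbf R^\top&0\end{psmallmatrix}$ with $\mathbf R$ a \emph{random} rotation, so that $EE^\top=\Id_{2n}$ (hence LHS $=\sqrt{2n}$) while each row has $\ell_1$-norm $\Omega(\sqrt n)$ with overwhelming probability (hence RHS $=\Omega(n)$). You instead take $\mathbf R = H/\sqrt m$ for a Hadamard matrix $H\in\{\pm 1\}^{m\times m}$, which is a deterministic orthogonal matrix whose rows saturate the Cauchy--Schwarz gap between $\ell_1$ and $\ell_2$ exactly. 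This trades the probabilistic anticoncentration argument for an explicit example with a clean constant $C=\sqrt 2$, at the minor cost of the zero-padding step to handle dimensions $m$ not a power of two. Both constructions are correct; yours is arguably cleaner since it avoids any appeal to concentration of measure and gives an explicit constant, while the paper's is shorter to state. One small check worth making explicit (which you implicitly rely on): your block matrix $E$ is symmetric, since the $(1,2)$ block is $H$ and the $(2,1)$ block is $H^\top$, so $E^\top=E$; this is needed because the fact is stated only for symmetric $E$.
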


\begin{proof}
    \[
    \sum_{1\le i,j\le n} \abs{\paren{EE^\top}_{ij}} = \sum_{1\le i,j\le n} \abs{\sum_{k=1}^n E_{ik}E_{jk}}
    \le \sum_{1\le i,j\le n} {\sum_{k=1}^n \abs{E_{ik}}\abs{E_{jk}}} = \sum_{a=1}^n\sum_{1\le i,j\le n} { \abs{E_{ki}}\abs{E_{kj}}} = \sum_{k=1}^n \Paren{\sum_{l=1}^n \abs{E_{kl}}}^2\,.
    \]
    The example is the following matrix $E\in \R^{2n\times 2n}$ 
    \[
E = \begin{bmatrix}
0 & \mathbf R \\
\mathbf R^\top & 0
\end{bmatrix} \in \mathbb{R}^{2n\times 2n}\,,
\]
    where $\mathbf R \in \R^{n\times n}$ is a random rotation. Then $EE^\top = \Id_{2n}$, so $\sqrt{\sum_{1\le i,j\le n} \abs{\paren{EE^\top}_{ij}}} = \sqrt{2n}$, and each row of $E$ has $\ell_1$ norm $\Omega(\sqrt{n})$ with overwhelming probability, so 
    $\sqrt{\sum_{k=1}^n \Paren{\sum_{l=1}^n \abs{E_{kl}}}^2} \ge \Omega(n)$. 
\end{proof}

The following fact shows that our notion of adjacency is more general then the $\ell_1$ adjacency:

\begin{fact}\label{fact:adjacency-l1}
    Let $E\in \R^{n\times n}$ be a symmetric matrix. Then
    \[
    \sqrt{\sum_{1\le i,j\le n} \abs{\paren{EE^\top}_{ij}}} = \sqrt{\normo{EE^\top}} \le \normo{E} = \sum_{1\le i,j \le n} \abs{E_{ij}}\,.
    \]
\end{fact}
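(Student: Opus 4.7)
The plan is to bound $\sum_{i,j}|(EE^\top)_{ij}|$ by the square of the entrywise $\ell_1$ norm of $E$ via two elementary inequalities, and then take square roots. Symmetry of $E$ plays no essential role in the argument; it only reflects the setting in which the fact is applied.

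First, I would expand the entries of $EE^\top$ and apply the triangle inequality:
\[
\sum_{1\le i,j\le n}\abs{(EE^\top)_{ij}} \;=\; \sum_{i,j}\abs[\Big]{\sum_k E_{ik}E_{jk}} \;\le\; \sum_{i,j,k}\abs{E_{ik}}\abs{E_{jk}} \;=\; \sum_{k}\Bigl(\sum_{i}\abs{E_{ik}}\Bigr)\Bigl(\sum_{j}\abs{E_{jk}}\Bigr) \;=\; \sum_{k} a_k^2,
\]
where $a_k \defeq \sum_i \abs{E_{ik}}$ is the $\ell_1$ norm of the $k$-th column of $E$ (all nonnegative reals).

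Next, for nonnegative reals $a_1,\ldots,a_n$ we have the trivial inequality $\sum_k a_k^2 \le \bigl(\sum_k a_k\bigr)^2$, because expanding the right-hand side yields $\sum_k a_k^2 + 2\sum_{k<\ell} a_k a_\ell$, and the cross terms are nonnegative. Combining with the previous display gives
\[
\sum_{i,j}\abs{(EE^\top)_{ij}} \;\le\; \Bigl(\sum_{k} a_k\Bigr)^2 \;=\; \Bigl(\sum_{i,k}\abs{E_{ik}}\Bigr)^2.
\]
Taking square roots yields the claimed bound $\sqrt{\normo{EE^\top}} \le \normo{E}$. There is no substantive obstacle here; the main thing to be careful about is the order of summation in the first step and the fact that $a_k \ge 0$ is needed to drop the cross terms in the second step.
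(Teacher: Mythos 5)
Your proof is correct and follows essentially the same route as the paper: the paper bounds $\sum_{i,j}|(EE^\top)_{ij}|$ by $\sum_k(\sum_l|E_{kl}|)^2$ via the triangle inequality (this is \cref{fact:adjacency-general}, though the paper's proof of \cref{fact:adjacency-l1} misprints the reference as a self-citation), and then observes $\sum_k(\sum_l|E_{kl}|)^2\le(\sum_{i,j}|E_{ij}|)^2$. You inline both steps into a single computation and correctly note, as a minor sharpening, that symmetry of $E$ is not actually needed once you group the triple sum by $k$ into column $\ell_1$ norms rather than reindexing via $E_{ik}=E_{ki}$.
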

\begin{proof}
Note that
\[
\sum_{k=1}^n \Paren{\sum_{l=1}^n \abs{E_{kl}}}^2\le \Paren{\sum_{1\le i,j \le n} \abs{E_{ij}}}^2\,,
\]
hence the desired bound follows from \cref{fact:adjacency-l1}.
\end{proof}

\begin{fact}\label{fact:adjacency-frob}
    Let $E\in \R^{n\times n}$ be a symmetric matrix. Then
    \[
    \normf{E}\le \sqrt{\sum_{1\le i,j\le n} \abs{\paren{EE^\top}_{ij}}}\,.
    \]
\end{fact}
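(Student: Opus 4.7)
The plan is very short: I would reduce the claim to the simple observation that the trace of $EE^\top$ picks up only the diagonal entries, while the right-hand side sums the absolute values of \emph{all} entries of $EE^\top$.

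Concretely, I would first note the standard identity
\[
\normf{E}^2 \;=\; \Tr(E^\top E) \;=\; \Tr(EE^\top) \;=\; \sum_{i=1}^n (EE^\top)_{ii}.
\]
Since $EE^\top$ is positive semidefinite, every diagonal entry $(EE^\top)_{ii}$ is non-negative, so $(EE^\top)_{ii} = |(EE^\top)_{ii}|$. Therefore
\[
\normf{E}^2 \;=\; \sum_{i=1}^n \abs{(EE^\top)_{ii}} \;\le\; \sum_{1 \le i,j \le n} \abs{(EE^\top)_{ij}}.
\]
Taking square roots yields the claimed inequality.

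There is essentially no obstacle: symmetry of $E$ is not even needed for this direction (only positive semidefiniteness of $EE^\top$ is used), and the bound is an immediate consequence of non-negativity of the diagonal of a Gram matrix. The only thing worth remarking is that the inequality can be strict (and often is), which is consistent with the preceding \cref{fact:adjacency-general} and \cref{fact:adjacency-l1} showing that $\sqrt{\normo{EE^\top}}$ sits strictly between $\normf{E}$ and the cruder entrywise $\ell_1$ norm of $E$.
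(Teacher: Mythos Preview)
Your proof is correct and essentially the same as the paper's: the paper writes it compactly as $\normf{E}^2 = \normn{EE^\top} \le \normo{EE^\top}$, which for the positive semidefinite matrix $EE^\top$ unpacks to exactly your argument that the trace (sum of the non-negative diagonal entries) is bounded by the sum of absolute values of all entries. Your version is arguably more self-contained since it does not rely on the general inequality $\normn{A}\le\normo{A}$.
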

\begin{proof}
\[
\normf{E}^2 = \normn{EE^\top} \le \normo{EE^\top} = \sum_{1\le i,j\le n} \abs{\paren{EE^\top}_{ij}}\,.
\]
\end{proof}
\section{Single spike principal component analysis}\label{sec:wishart}
\tom{@Gleb link this section where you make the claim in the introduction! Also can you give a more informative title to the section?}
In this section we prove the statements about PCA in Wishart model claimed in \cref{sec:introduction}. 
Recall that
\[
M = \sqrt{\beta} \cdot u \mathbf{g}^\top + \mathbf{W},
\]
where $u \in \mathbb{R}^n$ is a delocalized unit signal vector (i.e., the entries of $u$ are at most $\tilde{O}(\sqrt{1/n})$), $\mathbf{g} \sim N(0,1)^m$, and $\mathbf{W} \sim N(0,1)^{n \times m}$ are independent.

It is known that in the large-sample regime $m \gg n$, if $\beta = C\sqrt{n/m}$ with a sufficiently large constant $C$, the top left singular vector of $M$ is highly correlated with $u$ with high probability \cite{johnstone2001distribution, berthet2013optimal, d2020sparse}. 
Let us show that the spectral gap is $\Theta\Paren{\beta\sqrt{n}}$. Consider $MM^\top$:
\[
\beta m uu^\top + \sqrt{\beta} \cdot u \mathbf{g}^\top\mathbf{W}^\top + \sqrt{\beta}\mathbf{W}\mathbf{g}\cdot u^\top
+ \mathbf{W}\mathbf{W}^\top\,.
\]
Let $v$ be the top eigenvector of $M$. Since it has correlation at least $0.99$ with $u$,
\[
v^\top MM^\top v \ge 0.99\beta m - O\Paren{\sqrt{\beta m n}} + m - O\Paren{\sqrt{m n}}\,.
\]
For sufficiently large $C$, this value is at least $m + 0.9\beta m$. For every vector $v_\perp$ orthogonal to $v$, its correlation with $u$ is at most $0.1$, hence
\[
v_\perp^\top MM^\top v_\perp \le 0.1\beta m  - O\Paren{\sqrt{\beta m n}} + m - O\Paren{\sqrt{m n}}\le 0.2\beta n\,.
\]
Hence $\sigma_1^2 - \sigma_2^2 \ge 0.7\beta m$. Note that for each vector $x$, 
\[
m - O\Paren{\sqrt{mn}} \le x^\top MM^\top x \le \beta m + m - O\Paren{\sqrt{mn}}\,,
\]
Hence $\sigma_1^2 - \sigma_2^2 \le O\Paren{\beta m}$. Finally, since $\sigma_1 + \sigma_2 = \Theta\Paren{\sqrt{m}}$,
\[
\sigma_1 - \sigma_2 = \frac{\sigma_1^2 - \sigma_2^2}{\sigma_1 + \sigma_2} = \Theta(\beta\sqrt{m})\,.
\]

The bound $\coh{1}{M} \le O\Paren{\log(n+m)}$ follows from \cref{thm:coherence-gaussian}.

\end{document}